\newcommand{\thesistitle}{Principled Approximation Methods for Efficient and Scalable Deep Learning}
\newcommand{\thesisauthor}{Pedro Savarese}
\date{August, 2025}
\title{\thesistitle}
\author{\thesisauthor}
\let\ORG@hyper@linkstart\hyper@linkstart
\protected\def\hyper@linkstart#1#2{%
  \lowercase{\ORG@hyper@linkstart{#1}{#2}}}
\def\Vhrulefill{\leavevmode\leaders\hrule height 0.7ex depth \dimexpr0.4pt-0.7ex\hfill\kern0pt}
\newcommand{\pb}[1]{\cellcolor{gray!15}{\textcolor{gray}{\textbf{#1}}}}
\newcommand{\bO}{\mathcal O}
\newcommand{\norm}[1]{\left\lVert{#1}\right\rVert}
\newcommand{\R}{\mathbb{R}}
\newcommand{\N}{\mathbb{N}}
\DeclareMathOperator\erf{erf}
\newcommand{\ie}{\textit{i.e.,} }
\newcommand{\eg}{\textit{e.g.,} }
\newtheorem{theorem}{Theorem}[chapter]
\newtheorem*{theorem*}{Theorem}
\newtheorem{lemma}{Lemma}[chapter]
\newtheorem{corollary}{Corollary}[chapter]
\newtheorem*{corollary*}{Corollary}
\newtheorem{proposition}{Proposition}[chapter]
\newtheorem{definition}{Definition}[chapter]
\newcommand{\operator}[1]{\mathop{\mathchoice
{\vcenter{\hbox{\large $#1$}}}
{\vcenter{\hbox{$#1$}}}{#1}{#1}}\displaylimits}
\newcommand{\E}{\operator{\mathbb{E}}}
\newcommand{\expec}[2]{~\smashoperator{\E_{#1}}~\mleft[ {#2} \mright]}
\renewcommand*{\i}[1]{\csname iindex\romannumeral#1\endcsname}
\renewcommand*{\j}[1]{\csname jindex\romannumeral#1\endcsname}
\newcommand{\thmref}[1]{Theorem~\ref{#1}}
\renewcommand{\eqref}[1]{(\ref{#1})}
\newcommand{\currw}{w_t}
\newcommand{\nextw}{w_{t+1}}
\newcommand{\w}{w}
\newcommand{\fs}{f_s}
\newcommand{\prevg}{g_{t-1}}
\newcommand{\currg}{g_t}
\newcommand{\currgi}{g_{t,i}}
\newcommand{\prevm}{m_{t-1}}
\newcommand{\currm}{m_t}
\newcommand{\prevv}{v_{t-1}}
\newcommand{\currv}{v_t}
\newcommand{\currvi}{v_{t,i}}
\newcommand{\preve}{\eta_{t-1}}
\newcommand{\curre}{\eta_t}
\newcommand{\currei}{\eta_{t,i}}
\newcommand{\prevei}{\eta_{t-1,i}}
\newcommand{\curra}{\alpha_t}
\newcommand{\normed}[1]{\left\lVert {#1} \right\rVert}
\newcommand{\btwo}{\beta_2}
\newcommand{\bone}{\beta_1}
\newcommand{\btwot}{\beta_{2,t}}
\newcommand{\bonet}{\beta_{1,t}}
\newcommand{\smooth}{L}
\newcommand{\lowinf}{L_\infty}
\newcommand{\lowinft}{L_{t, \infty}}
\newcommand{\highinf}{H_\infty}
\newcommand{\highinft}{H_{t,\infty}}
\newcommand{\gradb}{G_2}
\newcommand{\supp}{\mathcal S}
\newcommand{\algrand}{\substack{s \sim \dist^T \\ t \sim \mathcal P(t|s)}}
\newcommand{\dist}{\mathcal D}
\renewcommand{\eqref}[1]{(\ref{#1})}
\newcommand{\fst}{f_{s_t}}
\newcommand{\gradbtwo}{G_2}
\begin{document}

\maketitle
\abstract
{
Recent progress in deep learning has been driven by increasingly larger models. However, their computational and energy demands have grown proportionally, creating significant barriers to their deployment and to a wider adoption of deep learning technologies. This thesis investigates principled approximation methods for improving the efficiency of deep learning systems, with a particular focus on settings that involve discrete constraints and non-differentiability.

We study three main approaches toward improved efficiency: architecture design, model compression, and optimization. For model compression, we propose novel approximations for pruning and quantization that frame the underlying discrete problem as continuous and differentiable, enabling gradient-based training of compression schemes alongside the model's parameters. These approximations allow for fine-grained sparsity and precision configurations, leading to highly compact models without significant fine-tuning. In the context of architecture design, we design an algorithm for neural architecture search that leverages parameter sharing across layers to efficiently explore implicitly recurrent architectures. Finally, we study adaptive optimization, revisiting theoretical properties of widely used methods and proposing an adaptive optimizer that allows for quick hyperparameter tuning.

Our contributions center on tackling computationally hard problems via scalable and principled approximations. Experimental results on image classification, language modeling, and generative modeling tasks show that the proposed methods provide significant improvements in terms of training and inference efficiency while maintaining, or even improving, the model's performance.
}
\clearpage
\begin{center}
    \thispagestyle{empty}
    \vspace*{\fill}
    To my parents.
    \vspace*{\fill}
\end{center}
\clearpage
\acknowledgments
{
I am extremely grateful to my advisors, Michael Maire and David McAllester, whose mentorship was invaluable throughout this journey of learning how to do research. Michael taught me how to aim for precise and controlled research, carefully accounting for the many potential confounding factors that arise when studying deep learning, and to favor simple, general, and principled ideas. I am also thankful for the intellectual freedom I was given, which enabled me to pursue a variety of problems and to explore both theoretical and empirical research directions. I'm also grateful for all the discussions with David, from which I learned enormously through his unique perspectives on machine learning.

I would also like to thank Greg Shakhnarovich for his advice and thoughtful conversations, which influenced how I approach research, not only from a technical perspective but also how I view research as a central activity in a scholar's life. I am also thankful to Yanjing Li, Suriya Gunasekar, Nati Srebro, Jason Lee, and Daniel Soudry for their mentorship on various research projects in which I was fortunate to be involved.

I'm grateful to my colleagues at TTIC and UChicago: Rachit Nimavat, Qingming Tang, Sudarshan Babu, Omar Montasser, Ankita Pasad, Xin Yuan, David Yunis, Mrinalkanti Ghosh, Shubham Toshniwal, for their camaraderie.

I would like to thank my friends: Daniel Specht, Mauricio Mota, Hugo Sant'Anna, Min Jeong Kang, and Bernardo Soares, as well as my cats Zelda and Totoro, for their patience and companionship throughout this process.

Finally, I am deeply grateful to my family in Brazil, including my aunts, uncles, and cousins. Most importantly, I owe everything to my parents, whose love, patience, and unshakeable support made this PhD possible.
}


\tableofcontents
\listoffigures
\listoftables

\mainmatter

\chapter{Introduction}

\section{Overview}

Deep neural networks have become widely adopted in fields such as computer vision (CV) and natural language processing (NLP) due to their unprecedented performance. In CV, increasing a network's depth from tens to hundreds of layers led to major accuracy improvements in problems such as image classification and object detection, turning applications like autonomous driving and automated imaging diagnostics more reliable and closer to industry-level adoption. In NLP, the design of recurrent neural networks (RNNs) and transformers -- models specialized for sequence-to-sequence problems -- led to significant progress in language modeling and conversational agents, which are now being used to automate tasks such as customer service and content creation.


Nonetheless, deep learning faces significant obstacles in both theory and practice. Training deep networks is notoriously difficult due to the non-convex nature of the underlying optimization problem and remains poorly-understood in theory. While sophisticated optimization methods with desirable theoretical properties have been proposed and studied, empirical studies have shown that their theoretical qualities rarely translate into practice: first-order, gradient-based methods like SGD and Adam remain the backbone of neural network training.

Moreover, state-of-the-art models are becoming increasingly larger and more resource-intensive, posing additional challenges due to higher computational costs. Training such models typically requires numerous GPUs/TPUs, extensive time, and significant energy consumption, raising concerns about the technology's applicability and accessibility. Addressing these challenges is fundamental to democratize the field and broaden its use in real-world applications. Possible solutions include advances in hardware design and production, use of more efficient network architectures, and faster training methods.

One approach to improve the efficiency of deep networks aims to reduce the size of overparameterized models in order to decrease their resource requirements when deployed. Recently, compression techniques such as pruning and quantization have proved successful in producing competitive models that can be deployed on edge devices, allowing for a broader range of applications to benefit from advances in the field.

Most model compression techniques \eg pruning and quantization, introduce a discrete optimization sub-problem into the network's training pipeline. These emerge from the need to make binary decisions, such as whether to remove a parameter, that can significantly impact the model's performance and efficiency. Addressing such discrete sub-problems correctly can substantially reduce a model's size and computational requirements while preserving, or even even enhancing, its performance on the underlying task. However, optimization problems of discrete nature are generally hard to solve optimally and cannot be tackled with gradient-based methods: discrete decisions, such as whether to prune or quantize a parameter, are not only non-differentiable but discontinuous.

Moreover, since the number of possible configurations increases exponentially with the dimensionality of the discrete variables, the computational complexity of the optimization problem generally follows the same intractable rate. This is particularly prohibitive in fine-grained compression of large models: for example, the complexity of parameter-wise pruning is exponential in the total number of parameters, which can be in the order of billions for large overparameterized models.

Among other challenges that arise when introducing discrete variables to a neural network's training, one that is particularly aggravating arises from the dynamic nature of the optimization problem. More specifically, the optimal configuration for the discrete variables depends on the model's weights, which are constantly changing during training.  This poses a major obstacle to using model compression techniques to reduce training costs: compressing a partially-trained network yields configurations that are typically suboptimal for the weights' final values, hence the configuration of the discrete variables needs to be revisited after further training.

Designing efficient methods to approximately solve discrete optimization problems has been an active research topic across various fields, resulting in an extensive literature of methods that have been analyzed and evaluated thoroughly. Many classical approaches, like greedy and linear approximations, are general and flexible enough that they can be readily applied to network compression. However, these general methods have been largely surpassed by newer ones tailored specifically for deep learning, which can frequently achieve high compression ratios while preserving the model's performance.

Although state-of-the-art neural network compression methods are often still costly, they can be \emph{unreasonably} effective at producing small models via aggressive pruning and quantization, in some cases removing up to 99\% of a trained network's weights while preserving its accuracy. Coupled with SGD's inability to train small models from scratch, these observations raise key questions about the role of overparameterization in deep learning. Is the main role of overparameterization facilitating training with SGD? Are there optimization methods that do not require such aid and can successfully train small networks from scratch?

\section{Thesis Goal}

This thesis aims to explore methods to improve neural network training and inference efficiency through model compression and novel network training approaches. In particular, we study and design methods that have access to the network's structure \ie its computational graph, differing from most approaches in the literature. For model compression, we discuss sparsification/pruning, quantization, and parameter re-use, focusing on designing approximations that allow the compression scheme to be trained jointly with the model's parameters via gradient-based methods. For network training, we study alternatives to SGD and widely-used adaptive methods like Adam, specifically optimizers that leverage the model's architecture to generate better parameter updates.

Formally, we consider the problem of optimizing network parameters $\theta \in \Theta$ to minimize a loss $\mathcal L : \Theta \to \R$. In particular, we assume that the model can be naturally expressed as a composition of intermediate functions 
\begin{equation}
    h^{(l)} = f_l(h^{(1)}, \dots, h^{(l-1)}, \theta) \,,
    \label{eq-one}
\end{equation}
which we call \emph{layers} or \emph{nodes}. In this case, $h^{(L)}$ denotes the model's output, where $L$ is the model's number of layers/nodes, or its depth, and $f_l$ is the function computed by node $l$ \eg convolution, batch normalization.

Distinct optimization obstacles can arise in \eqref{eq-one}: for example, $f_l$ might be non-differentiable w.r.t. $\theta$, or discontinuous w.r.t. $h^{(l-1)}$, and so on. Each setting explored in this thesis contains an optimization obstacle whose root lies in \eqref{eq-one} and which is treated in a unique fashion. Therefore, the content is split into sections, each discussing a different setting and a particular technique used to address its optimization challenges.
\section{Organization and Contributions}

The thesis is organized as follows:

\begin{itemize}
    \item \textbf{Chapter 2 (Preliminaries)} provides a more detailed discussion on the goal of the thesis, delving deeper into the connection between overparameterization, model compression, and optimization. It also contains a guideline on how readers can modify or combine different approaches to tackle new problems. 

    Additionally, this chapter introduces key definitions and the notation adopted throughout the thesis. Finally, it reviews prior works that are related to at least two of the settings covered in each section. For example, it covers works on discrete optimization and on neural network training, which are common topics across different sections and settings.

    \item \textbf{Chapter 3 (Neural Architecture Search via Parameter Sharing)}
    considers the task of searching for efficient and well-performing neural network architectures in an automated fashion, in contrast to manual design performed by researchers which includes significant trial and error.

    Neural Architecture Search (NAS) approaches architecture design by searching over a pool of candidates (search space) and selecting those that maximize some objective, such as a trade-off between accuracy and the model's size. A naive approach would require fully training at least one model for each possible architecture, leading to prohibitive computational costs as the search space grows.
    
    Designing cheap approximations for this problem is an active research area, and methods often rely on two key components to reduce the computational cost of NAS. First, reusing parameter configurations across different architectures, which enables different models to be trained jointly instead of independently and from scratch. Second, using partially-trained networks to perform evaluation and selection, resulting in sigificant computational cost savings.
    
    Early methods adopted
    reinforcement learning and genetic algorithms to explore the search space and select a final architecture, as they can naturally tackle discrete decisions. However, these are computationally expensive since they do not employ continuous approximations for the discrete selection problem. Recent works offer significantly faster NAS by performing search with gradient-based methods -- an approach that is made possible by adopting a continuous relaxation to the discrete problem.
    
    
    The majority of NAS papers considers a search space with a rigid structure, where architectures consist of a pre-defined number of operations, each with a small number of possible configurations. For example, networks consisting of convolutional layers that can be either be depth-wise separable or not, and whose kernel size can be either 5 or 3. In contrast to prior works, we present a novel search space by incorporating recurrent connections to NAS, allowing networks to become better-suited for sequential tasks.
    
    In particular, this chapter focuses on implicit recurrence in CNNs, allowing for CNN-RNN hybrids to emerge from the architecture search procedure and hence going beyond the standard static and feedforward search spaces. Instead of using parameter sharing for efficiency as in prior works, our method, originally published as \cite{implicitrecurrent}, frames the problem of searching for recurrent connections as one of learning parameter sharing schemes.
    
    We approximate the discrete selection problem by employing a linear relaxation -- a classical and flexible approach to approximate discrete optimization problems -- where a variable's discrete domain is extended by taking linear combinations of its possible values. In this setting, different architectures can be linearly combined which yields well-defined gradients w.r.t.~the architecture itself. However, our method contains significant improvements over standard linear relaxation by leveraging information of the architectures to be searched.
    
    
    
    Experiments on image classification show that our method can achieve significant parameter reduction while maintaining or even improving the original model's performance on the CIFAR and ImageNet datasets. Furthermore, it can create hybrids between RNNs and CNNs with new and potentially beneficial architectural biases. A synthetic algorithmic task is designed and used to study these hybrid models, showing that recurrent connections emerge naturally during training and lead to faster adaptation and improved performance.

    \item \textbf{Chapter 4 (Sparsification)} 
    discusses the process of removing unnecessary weights or neurons from a neural network, with the goal of creating a more efficient, sparse subnetwork whose performance is comparable to that of the original model.
    
    The discrete nature of the problem comes from the binary decision of whether a parameter or neuron should be removed from the network. The number of possible configurations grows exponentially with the network size, making exact solutions intractable to compute.
    
    Methods to approximate the search for sparse subnetworks often rely on heuristics to select which parameters to remove, for example removing weights with the smallest magnitude -- a technique named \emph{magnitude pruning}. Another family of techniques employ stochastic approximations, where the binary decisions are sampled from a distribution whose parameters are trained with gradient descent.
    

    
    While heuristic methods like magnitude pruning rely on strong assumptions \eg that a weight's magnitude is a reliable proxy for its importance, they have proved successful in finding extremely sparse and efficient subnetworks. On the other hand, stochastic approaches aim to solve the original problem, but can suffer from training instabilities due to high variance caused by the additional stochasticity.
    
    This chapter presents a simple and novel approach for sparsification that relies on a continuous and deterministic approximation, rather than using heuristics or stochastic proxies. Inspired by continuation methods, our approach, first published as \cite{ContinuousSparsification}, creates a homotopy between the original, intractable problem, and a smooth loss that can be optimized with gradient-based methods.
    
    This homotopy can be seen as a path connecting a smooth loss functional -- similar to standard network training objectives -- to the computationally hard sparsification problem. It includes a temperature which is annealed during training, in such way that the objective is initially smooth and well-suited for SGD, and gets closer to the original problem as training progresses, effectively decreasing the approximation error while making optimization increasingly more difficult.
    
    The discrete variables, which represent the removal of a parameter or neuron, are relaxed and re-parameterized into real-valued variables which can be trained jointly with the network's weights. Since we adopt a fully deterministic approximation, no additional variance is introduced to the training dynamics, resulting in more stable and hence faster optimization.
    
    The proposed method, Continuous Sparsification (CS), continuously sparsifies the network throughout training, and the removal of low importance weights is performed seamlessly by gradient descent. This is in contrast to prior approaches, where weights are either removed at pre-defined intervals (magnitude pruning) or only once the network has been fully trained (stochastic approximations), allowing for potential training cost reduction given specialized hardware.
    
    Empirical studies on CIFAR and ImageNet show that CS produces networks that have been aggressively sparsified without noticeable impact to their performance. Moreover, its computational cost is significantly lower than competing methods, especially in the task of finding sparse subnetworks that can be successfully trained from scratch (ticket search).

    \item \textbf{Chapter 5 (Quantization)}
    studies the technique of limiting the number of bits used to represent a neural network's parameters and intermediate activations, leading to immediate reductions to memory footprints and, given specialized hardware, faster training and inference times.
    
    In neural network quantization, weights are typically stored in fixed-point representation with 8 bits or less, in contrast to standard, full-precision models with 32-bit floating point weights. The problem involves finding low precision values to represent the network, and training its quantized weights to maximize the performance under the precision constraints.
    
    The discrete nature of the problem is two-fold: the number of bits used to represent the network is a positive integer, and the quantized weights can only assume a limited number of possible values -- more specifically, $2^p$ possible configurations for a weight represented with $p$ many bits. Once again, the number of possible configurations for the discrete variables grows exponentially with the model's size, hence finding optimal solutions is generally intractable.
    
    Optimizing the precision values poses additional challenges compared to training the quantized weights, which the chapter discusses in detail. Because of this, most approaches assume a pre-defined and fixed precision for all parameters in a neural network, and focus on designing strategies to circumvent optimization obstacles caused by quantization constraints.
    
    A simple yet effective approach to train quantized weights is to optimize real-valued weights using gradients w.r.t.~their quantized forms \ie using the straight-through estimator, a strategy that is commonly adopted in the quantization literature. Numerous works aim to further decrease the total precision required to represent a network by introducing architectural changes, such as carefully-designed full-precision pathways, that make the model more robust to quantization.
    
    However, works that optimize the precision values used to represent a network's weights remain scarce, especially in settings where parameter groups can be represented with different precision levels. Allowing for fine-grained precision assignments offers additional flexibility to possible compression schemes and can result in significant energy savings on specialized hardware. Moreover, the few works that fall in this category either rely on heuristics to assign precisions or employ costly strategies such as reinforcement learning, and are ill-suited to allocate a different precision to each weight -- the finest level of granularity which offers the highest compression rates.
    
    In contrast to prior works, the technique presented in this chapter is able to optimize precision values for a network's weights and intermediate activations at any level of granularity -- from all weights and activations being represented with the same number of bits, to the most fine-grained setting where distinct precisions are used to represent each weight and activation throughout the network. Additionally, the precisions are optimized jointly with the network's weights via gradient descent, offering a simple yet modular approach to quantization.
    
    The designed algorithm, Searching for Mixed-Precisions by Optimizing Limits for Perturbations (SMOL), was originally published as \cite{smol} and relies on a fundamental connection between quantization error and tolerance to random perturbations. In particular, the more a weight can be randomly perturbed without harming the network's performance, the more aggressively it can be quantized by representing it with a lower precision. This correspondence is used to derive an approximation for the intractable precision assignment problem which is differentiable w.r.t.~the precision variables and hence can be tackled with gradient descent.
    
    Experimental evaluations on image classification, image generation, and machine translation show that SMOL can produce smaller, high-performing networks \ie that require lower total precision to be represented while offering comparable or superior accuracy. Ablation experiments highlight potential benefits of fine-grained precision assignment and show that weights from the same layer can vastly differ in terms of quantization tolerance.

    \item \textbf{Chapter 6 (Parameter Structure)} is the last chapter of this thesis and focuses on more efficient optimizers for neural networks instead of compression methods. Designing better training algorithms for deep learning is an orthogonal and equally crucial approach to address the high resource costs of state-of-the-art models. This approach focuses directly on improving the efficiency of training, in contrast to compression methods which are better suited to reduce inference costs.

    The chapter focuses on adaptive methods, a family of first-order optimization methods that is widely used in deep learning, often being necessary to successfully train models like RNNs and transformers. Unlike SGD, these methods designate dynamic learning rates for each parameter based on zeroth and first-order statistics, which can result in significantly fewer iterations required for training.
    
    Although model compression techniques and adaptive methods are highly unrelated topics, they share similarities which are relevant in the context of this thesis. In particular, approximations are also a key component of adaptive optimization, however this is mostly restricted to a conceptual level: adaptive methods can be seen as approximations to higher-order optimizers.
    
    Second-order methods offer superior convergence rates compared to gradient descent and are widely adopted when solving smaller optimization problems in real-world applications. However, the use of second-order information \eg computing and inverting the Hessian matrix, is computationally unfeasible in deep learning, where models rarely have less than a million parameters. The dynamic learning rates computed by optimizers such as Adam involve the use of the gradient's second moment, and can be seen as approximating the curvature adjustments provided by second-order methods while avoiding the computational costs of computing and inverting the Hessian.

    Despite their wide use, adaptive methods offer worse generalization compared to SGD in some settings, and designing optimizers that are able to train complex networks while achieving comparable or better generalization has received significant attention. Moreover, adaptive optimizers typically do not take parameter structure into account when computing dynamic learning rates: their update equations do not consider each layer's number of parameters, hence ignoring inherent differences between layers in a network.

    This chapter discusses novel methods and results originally published as \cite{avagrad}, and revisits theoretical properties of adaptive methods such as Adam, identifying limitations in terms of convergence guarantees and proposing modifications to overcome them. From a refined analysis of Adam's convergence rate, we design AvaGrad, a novel adaptive method with better rates and which facilitates hyperparameter tuning. 

    An extensive empirical comparison between different adaptive methods, including tasks such as image classification with CNNs, language modeling with LSTMs, and image generation with GANs, shows that AvaGrad is capable of training models as fast as Adam while inducing generalization performance comparable to SGD's. Moreover, we assess how easily each adaptive method can be tuned via hyperparameter optimization procedures and observe that AvaGrad can significantly reduce hyperparameter tuning costs compared to Adam.


\end{itemize}
\newpage
\chapter{Preliminaries}

\section{Notation}

\textbf{Notation.} For vectors $a = [a_1, a_2, \dots], b = [b_1, b_2, \dots] \in \R^d$ we use $\frac1{a} = [\frac1{a_1}, \frac1{a_2}, \dots]$ for element-wise division, $\sqrt a = [\sqrt{a_1}, \sqrt{a_2}, \dots]$ for element-wise square root, and $a \odot b = [a_1 b_1, a_2 b_2, \dots]$ for element-wise multiplication.
$\normed{a}$ denotes the $\ell_2$-norm, while other norms are specified whenever used.

The subscript $t$ is used to denote a vector
related to the $t$-th iteration of an
algorithm, while $i, j, k$ are used for coordinate indexing. Superscripts are always written in parentheses, and typically denote the layer index of a model's activations or weights, \eg $W^{(l)}$ represents the parameters associated with the model's $l$-th layer. When used together, $t$ precedes coordinate-indexing subscripts: $W^{(l)}_{t,i} \in \R$
denotes the $i$-th coordinate of $W^{(l)}_t \in \R^d$.

For any $k \in \N$, we let $[k] = \{1, 2, 3, \dots, k\}$. We also denote the indicator function by
\begin{equation}
    \mathbbm 1 \{ P \} \coloneqq
        \begin{cases}
            1, \text{if $P$ is true} \\
            0, \text{otherwise} \,.
        \end{cases}
\end{equation}
Additionally, we refer to a  $\R^d \to \R^d$ function $f$ as element-wise when for all $\theta \in \R^d$ and $i \in [d]$, $f(\theta)_i = f(\theta_i)$. Finally, for any $\theta \in \Theta$, we let $\mathcal D_\theta$ be any distribution such that $\mathrm{supp}(\mathcal D_\theta) \subseteq \Theta$, and $\mathcal U(S)$ denotes the uniform distribution over the set $S$.

\section{Background}

This section provides a broader context for the techniques explored throughout the thesis by discussing the core themes common to all sections in this thesis. While the specific chapters focus on distinct challenges such as neural architecture search, pruning, quantization, and adaptive optimization, they are all motivated by the central goal of improving the efficiency of deep learning models without harming their performance. Here, we elaborate on three recurring themes: efficiency, model compression, and parameter optimization.

\subsection{Efficiency}

The parameter counts of state-of-the-art models now commonly exceed hundreds of millions or even billions. While such overparameterized models often achieve superior performance, their computational and energetic costs have become major obstacles. As a result, designing techniques that can reduce these costs while preserving model performance has emerged as a key area of research.

In this context, efficiency refers to reducing resources such as memory, computation time, or power consumption required for training and inference. Approaches to improve efficiency vary significantly: from reducing the number of parameters or operations via sparsification or quantization to improving convergence rates during training through better optimizers. While these strategies differ in implementation, they share a common goal of decoupling model performance from brute-force scaling.

Throughout this thesis, we approach efficiency as a principled approximation problem. Rather than viewing performance and efficiency as opposites, we demonstrate that carefully designed approximations can yield models that are both compact and effective.

\subsection{Model Compression}

Model compression techniques aim to reduce the size and computational costs of deep networks by producing smaller, more efficient variants. The main forms of compression are sparsification, where unnecessary parameters are removed from the network, and quantization, where parameters are represented with few bits. Both of these strategies introduce discrete decisions into the overall optimization problem, such as whether to remove a weight or how many bits to allocate to a parameter.

The key challenge is that these discrete decisions are inherently non-differentiable and induce a combinatorial search space: evaluating  all possible discrete configurations is unfeasible for large-scale models. Moreover, the optimal configuration is typically strongly coupled to the network's parameters configurations which keep changing during training. Therefore, traditional approaches usually perform compression as a post-processing step, applied to a fully pre-trained network. This limits the extent to which compression schemes can contribute in terms of efficiency.

In contrast, this thesis investigates continuous approximations to these discrete problems, allowing the compression scheme itself to be trained jointly with the network parameters via gradient-based optimization. These methods are not only computationally cheaper but also enable dynamic compression throughout training.

\subsection{Parameter Optimization}

Another important approach for improving efficiency in deep learning is the development of better optimization methods. Training large neural networks is often constrained by the substantial number of iterations required by gradient-based methods, making the optimizer's convergence speed and stability critical factors in reducing training costs.

Stochastic gradient descent (SGD) and its adaptive variants such as Adam have served as the backbone for training deep networks. However, these methods can be sensitive to hyperparameter configurations and often require extensive tuning across tasks. Moreover, recent works have shown that adaptive methods might fail to convergence due to suboptimal hyperparameter choices.

In this thesis, we study strategies for improving the convergence rates of adaptive methods while reducing their hyperparameter sensitivity. The goal is to reduce the computational costs associated with training and hyperparameter tuning, thereby making optimization a more efficient component of deep learning.
\newpage
\chapter{Neural Architecture Search via Parameter Sharing}

\section{Introduction}
\label{sec-nas-intro}

\subsection{Motivation \& Strategy}
\label{sec-nas-intro-motivation}

This chapter introduces the first of several strategies in this thesis for enhancing the efficiency of deep learning systems. In particular, we study the design of more efficient neural architectures with the goal of maintaining performance while minimizing computational cost. Here, efficiency primarily refers to reductions in inference time and number of parameters -- key factors in the deployment of large models.

Traditionally, architecture design has been a manual process based on trial and error \citep{xception, howard2017mobilenets, densenet}, requiring technical expertise and often being time-consuming. More recently, Neural Architecture Search (NAS)~\citep{nas} has been proposed as a strategy to automate this process by framing it as an optimization problem: out of a large pre-defined \emph{search space} of candidate architectures, choose one that meets a target performance level while minimizing resource requirements. However, the NAS objective is inherently discrete and combinatorial: choosing from all possible architectures is computationally infeasible, as each candidate requires a complete training cycle before it can be evaluated. Therefore, most existing NAS approaches focus on designing methods to reduce the costs of architecture search -- these are discussed in Section \ref{sec-nas-intro-related}.

We first formalize this computationally hard optimization problem in Section \ref{sec-nas-method-formal}, discussing how the combinatorial nature of NAS impacts efficiency. Section \ref{sec-nas-method-reframing} re-frames the original problem in an equivalent form that is more amenable to approximations. Then, in Section \ref{sec-nas-method-approx}, we present an efficient approximation that forms the basis of our NAS method and further elaborate on how we can produce architectures with backward connections. Subsequent sections outline our experimental setup, report empirical results, and offer a detailed analysis of our method, including a discussion of its contributions and impacts.

\subsection{Related Work}
\label{sec-nas-intro-related}

\paragraph{Architecture Search.}

Early efforts to design efficient neural architectures typically relied on manual design and experimentation~\citep{squeezenet, sparsenet}. For example, \cite{xception} shows that separable convolutions can reduce computational costs significantly while matching the performance of standard convolutions. Subsequently, architectures like MobileNet \citep{howard2017mobilenets, mobilenetv2} and ShuffleNet \citep{zhang2018shufflenet} employ factorizations or groupings over convolutions to improve the inference time of CNNs.

Newer approaches aim to automate architecture design, a task referred to as Neural Architecture Search (NAS). \cite{nas} and \cite{nasnet} use reinforcement learning to search over large spaces, discovering architectures that outperform hand-designed ones. Although the produced networks offer superior efficiency, the search phase is computationally expensive and requires thousands of GPU hours. More recent works adopt continuous relaxations that yield more efficient search algorithms, such as DARTS~\citep{darts}, ENAS~\citep{enas}, and SNAS~\citep{snas}.

Both manual design approaches and automated NAS strategies share the common goal of optimizing a trade-off between accuracy and efficiency. The method proposed in this chapter advances this research topic by providing a novel approximation to search over a new and complex search space.

\paragraph{CNN-RNN Hybrids.}

Combining convolutional neural networks (CNNs) and recurrent neural networks (RNNs) has been explored at a broad structural level, where a hybrid but fixed architecture is hand-designed to operate on a sequence of spatial features. Such hybrids have proven successful in multiple settings, including scene labeling~\citep{pinheiro}, image captioning with attention~\citep{showattendtell}, and video understanding~\citep{ltrcnn}. Generic hybrid models, such as convolutional LSTMs and feedback networks, have also been proposed as versatile architectures that can be applied in a plug-and-play fashion on diverse domains. Even for non-sequential tasks like image classification, enforcing layer reuse throughout a CNN allows for parameter reduction, although at the cost of some performance degradation~\citep{cnnlayerreuse}.

Studies on residual networks~\citep{resnet1} suggest that introducing residual connections to CNNs results in intriguing parallels to RNNs. In particular, \cite{unrolled} show that residual blocks may refine hidden representations through iterative processes, similar to unrolled loops that RNNs represent. \cite{sharesnet} propose residual architectures where different layers explicitly reuse parameters, effectively inducing a rigid recurrence mechanism. Similarly, \cite{resnet_iter} explore layer reuse and iterative processes within residual networks, further establishing connections to recurrent architectures.

\paragraph{Hypernetworks and Algorithmic Architectures.}

Some lines of research explore the idea of hypernetworks~\citep{hypernetworks}, where one network (the `hypernetwork') generates or modifies the weights of another model (the `trunk network'), leading to more flexible or dynamic architectures. This decoupling of parameters from layers in the trunk network facilitates parameter sharing across modules or tasks, and opens up new possibilities for meta-learning and memory footprint reduction. Our proposed mechanism can be seen as a minimal hypernetwork in which weights are generated via a simple linear combination.

Another relevant body of work attempts to design more algorithmic or `program-like' architectures. Neural Turing Machines (NTMs)~\citep{ntm} and Differentiable Neural Computers (DNCs)~\citep{dnc} introduce ways to write and read information from an external memory, enabling models to present algorithmic behavior such as copying and sorting data. A key, shared component among these architectures is the repeated call of modules in a loop, which resemble subroutine calls in classical programming.

In contrast to previous NAS works, our method adopts a search space where architectures are allowed to have backward connections. This results in networks whose layers can be `executed' multiple times, much like a computer program that invokes the same function repeatedly. Although our approach adopts simple weight sharing across layers, it yields models with a more structured and algorithmic behavior, in a similar flavor to NTMs and DNCs.



\section{Method}
\label{sec-nas-method}

\subsection{Formalizing the Architecture Search Problem}
\label{sec-nas-method-formal}

The search problem we address involves selecting an architecture from a pool of candidates that minimizes a given metric while meeting a set of constraints. We refer to this pool as the \emph{search space}. By design, it does not assume any structure or impose constraints on the architectures beyond matching the input and output shapes prescribed by the dataset.

Throughout this chapter, we assume a fixed dataset $D$ that is chosen a-priori, which may contain both samples and labels in a supervised setting or just samples in an unsupervised scenario. In either case, our formulation of the architecture search problem and our proposed method's details do not depend on the nature of the underlying task.

We let $f : \theta_f \to f(\theta_f)$ denote a neural architecture and $\Theta_f$ be its weight space, such that for any $\theta_f \in \Theta_f$, $f(\theta_f)$ represents a network (the architecture $f$ equipped with weight values $\theta_f$ that align with $f$'s structure). For any sample in $D$, $f(\theta_f)$ produces an output that aligns with the task at hand (\eg a class label or a reconstruction). We further define $\mathcal L: f(\theta_f) \to \mathcal L(f(\theta_f)) \in \R$ as the loss function that evaluates the network $f(\theta_f)$ on the dataset $D$ (omitting $D$ in the notation for brevity). The search space $\mathcal F$ is a discrete set of architectures consistent with the task, in terms of input and output dimensions.

We can then formalize the architecture search problem as follows:
\begin{definition}[Architecture Search Problem]
    \label{def-nas-formal}
    Let $\mathcal F$ be a discrete set of architectures, and for each $f \in \mathcal F$, let $\Theta_f$ be its weight space. For any $\theta_f \in \Theta_f$, $\mathcal L(f(\theta_f)) \in \R$ denotes the loss incurred by $f(\theta_f)$ on the fixed dataset $D$. Then, the \emph{architecture search problem} is:
    \begin{equation}
        \min_{f \in \mathcal F}
        ~
        \min_{\theta_f \in \Theta_f}
        \quad
        \mathcal L(f(\theta_f))
        \,.
    \end{equation}
\end{definition}

Two constrained variants of the problem above are (1) minimizing $\mathcal L(f(\theta_f))$ subject to a constraint on the network cost $C(f(\theta_f)) \leq \tau$, and (2) minimizing $C(f(\theta_f))$ (to find the most efficient network) subject to $\mathcal L(f(\theta_f)) \leq \tau$. Although these formulations are natural ways to include cost constraints in architecture search, most works focus on the unconstrained problem, typically assuming that all architectures in $\mathcal F$ are sufficiently efficient so that explicit cost constraints become redundant.

\paragraph{Search Spaces.}

The general architecture search problem, as framed above, leaves limited scope for practical approximations or efficiency improvements unless we specify additional structure for the search space. Leveraging similarities or redundancies among potential architectures allows us to reduce the combinatorial cost that is inherent in the original problem.

In most NAS methods, $\mathcal F$ follows a rigid, feedforward structure: the architectures have a fixed number of blocks $L \in \N$, and each block is chosen from a set $\mathcal H = (h^{(1)}, \dots, h^{(k)})$ of candidate functionals (or `block types').

Here, each block type $h \in \mathcal H$ is typically a short sequence of common layers in deep learning (\eg convolution + batch normalization + non-linear activation). In this setting, the instantiation of distinct architectures comes from specifying which block type from $\mathcal H$ occupies each position in the network.

\begin{definition}[Functional Search Space]
    \label{def-nas-space1}
    Given $L,k \in \N$, let $\mathcal H = (h^{(j)})_{j=1}^k$ be a set of functionals associated with a weight space $\mathcal W$. Suppose any $h,h' \in \mathcal H$ can be composed when equipped with weights $W,W' \in \mathcal W$. We define the \emph{functional search space} $\mathcal F_F$ as
    \begin{equation}
        \mathcal F_F = \left\{ {(W^{(l)})}_{l=1}^L \mapsto h^{(\alpha_L)}(W^{(L)}) \circ \dots \circ h^{(\alpha_1)}(W^{(1)}) ~\Big|~ \alpha \in [k]^L \right\} \,,
    \end{equation}
    where the weight space of each architecture $f \in \mathcal F_F$ is $\Theta_f = \mathcal W^L$.
\end{definition}

As a concrete example, consider:
\begin{equation}
    \mathcal H = 
    \Big\{
        W \mapsto \text{Conv}_{3 \times 3}(W), 
        W \mapsto \text{Conv}_{5 \times 5}(W), 
        W \mapsto \text{Id}
     \Big\} \,,
\end{equation}
where Id is the identity mapping. For a specified depth $L \in \N$, there are $3^L$ architectures in the induced search space $\mathcal F_F$. These include, for instance, the identity mapping at all $L$ blocks, an all $3 \times 3$ convolution stack, or an alternating sequence of $3 \times 3$ and $5 \times 5$ convolutions.

We propose a different search space that supports backward connections and layer reuse. Instead of defining $L$ unique blocks, we allow architectures to compose a smaller set of $k \in \N$ layer configurations in various sequences, resulting in $k$ total weight tensors $W^{(1)}, \dots, W^{(k)}$. While the network depth may still be $L$, any of the $k$ layer configurations can appear multiple times in the network. In a sense, our search space allows for the architecture's topology to be learned.

\begin{definition}[Topological Search Space]
    \label{def-nas-space2}
    Given $L,k \in \N$, let $(h^{(l)})_{l=1}^L$ be a sequence of functionals associated with a weight space $\mathcal W$. Suppose any $h^{(l)}, h^{(l-1)}$ can be composed when equipped with weights $W,W' \in \mathcal W$. We define the \emph{topological search space} $\mathcal F_T$ as
    \begin{equation}
        \mathcal F_T = \left\{ {(W^{(l)})}_{l=1}^k \mapsto h^{(L)}(W^{(\alpha_L)}) \circ \dots \circ h^{(1)}(W^{(\alpha_1)}) ~|~ \alpha \in [k]^L \right\} \,,
    \end{equation}
    where the weight space of each architecture $f \in \mathcal F_T$ is $\Theta_f = \mathcal W^k$.
\end{definition}

This `topological' search space incorporates one aspect of programmatic modularity, allowing the architecture to reuse a layer configuration at arbitrary depths. Performing architecture search over our search space amounts to effectively optimizing `loop lengths' (\ie how many times a configuration is repeated), and the content of these loop-like constructs (layer configurations). This enables architectures with a more `program-like' structure to be produced, which already brings notable gains compared to feedforward baselines.

Though recurrent neural networks (RNNs) do incorporate a loop-like structure by design, their loop length is \emph{fixed} rather than learned, leading to potential mismatches with the underlying task. In contrast, our approach allows coexistence of loops and feed-forward layers in the same architecture. For example, a search over 50-layer architectures might produce a two-layer loop that repeats five times between layers 10 and 20, a three-layer loop that repeats four times from layers 30 to 42, and otherwise assign independent weights to the remaining layers. By integrating feed-forward and recurrent architectures under the same space, our method forms a natural \emph{hybrid}, allowing for richer topologies that can more flexibly adapt to the underlying task.

\subsection{Re-framing the Architecture Search Problem}
\label{sec-nas-method-reframing}

The functional and topological search spaces presented in Definitions \ref{def-nas-space1} and \ref{def-nas-space2} have a useful property: each architecture is uniquely determined by a configuration $\alpha \in [k]^L$. In the functional space $\mathcal F_F$, $\alpha$ induces an architecture by specifying which block type $h \in \mathcal H$ appears in each of the $L$ positions. In the topological space $\mathcal F_T$, $\alpha$ instead dictates which of the $k$ weight tensors is used at each step in the forward pass. Formally, we denote by $f_\alpha$ the architecture corresponding to a configuration $\alpha \in [k]^L$.

Another important observation is that, once $L$ and $k$ are fixed, all architectures in these search spaces share the same weight space $\Theta$. In other words, for any pair of architectures $f, f' \in \mathcal F$, we have $\Theta_f = \Theta_{f'} = \Theta$. This follows from the fact that each block type (or layer module) is defined on a uniform parameter domain, even for functions like the identity mapping.

By leveraging these two observations, we can recast the original architecture search problem (Definition \ref{def-nas-formal}) in an equivalent, but more structured, form. Instead of optimizing over $f \in \mathcal F$ and $\theta_f \in \Theta_f$, we optimize over integer architectural identifiers $\alpha \in [k]^L$ and a single parameter configuration $\theta \in \Theta$:
\begin{definition}[Re-framed Architecture Search Problem]
    \label{def-nas-re1}
    Let $\mathcal F$ be either the functional or topological search space from Definitions \ref{def-nas-space1} and \ref{def-nas-space2}. Given fixed $L,k \in \N$, define for each $\alpha \in [k]^L$ the unique architecture $f_\alpha = \mathcal F(\alpha) \in \mathcal F$. For any $\theta \in \Theta$, $\mathcal L(f(\theta)) \in \R$ denotes the loss incurred by $f(\theta)$ on the fixed dataset $D$. Then, the architecture search problem can be rewritten as
    \begin{equation}
        \min_{\alpha \in [k]^L}
        ~
        \min_{\theta \in \Theta}
        \quad
        \mathcal L( \mathcal F(\alpha)(\theta))
        \,,
    \end{equation}
    which is equivalent to the problem in Definition \ref{def-nas-formal}.
\end{definition}

Although restating the search space in terms of `architectural variables' $\alpha \in [k]^L$ and a shared weight tensor $\theta \in \Theta$ makes the structure more explicit, it remains a discrete (integer) optimization problem and is as computationally hard as the original. However, this formulation is naturally suited to approximation techniques for integer programs. In the next section, we leverage these ideas to relax the discrete domain and obtain an efficient approximation.

\subsection{Approximating the Architecture Search Problem}
\label{sec-nas-method-approx}

In the previous section, we re-framed the architecture search problem by introducing an architectural configuration $\alpha \in [k]^L$ that induces the network topology. While this recasting makes the task more structured, it remains an integer optimization problem and hence intractable. Now, we approximate the problem by relaxing the discrete domain of $\alpha$, thus rendering the overall optimization problem continuous and amenable to gradient-based methods.

Recall from Definition \ref{def-nas-space2} that in the topological search space $\mathcal F_T$, each block in the network implements a function $h(W^{(\alpha_l)})$, where $\alpha_l \in [k]$ dictates which of the $k$ weight tensors to use. We now define a relaxed version of $\mathcal F_T$, denoted $\overline{\mathcal F}_T$, in where we replace the discrete weight selection with a linear combination of all weights.

\begin{definition}[Relaxed Topological Search Space]
    \label{def-nas-space2lax}
    Given $L,k \in \N$, let $(h^{(l)})_{l=1}^L$ be a sequence of functionals associated with a weight space $\mathcal W$. Suppose any $h^{(l)}, h^{(l-1)}$ can be composed when equipped with weights $W,W' \in \mathcal W$. We define the \emph{relaxed topological search space} $\overline{\mathcal F}_T$ as
    \begin{equation}
        \overline{\mathcal F}_T = \left\{ {(T^{(i)})}_{i=1}^k \mapsto h^{(L)} \left(\sum_{i=1}^k \alpha^{(L)}_i T^{(i)} \right) \circ \dots \circ h^{(1)} \left(\sum_{i=1}^k \alpha^{(1)}_i T^{(i)} \right) ~\Big|~ \alpha \in (\R^k)^L \right\} \,,
    \end{equation}
    where $\alpha = (\alpha^{(l)})_{l=1}^L$ and the weight space of each architecture $f \in \overline{\mathcal F}_T$ is $\Theta_f = \mathcal W^k$.
\end{definition}

Here, each $\alpha^{(l)}$ is a real-valued vector that combines the $k$ weight templates $(T^{(i)})_{i=1}^k$ in a linear fashion. As before, each configuration $\alpha \in (\R^k)^L$ uniquely identifies an architecture $f_\alpha = \overline{\mathcal F}_T(\alpha) \in \overline{\mathcal F}_T$. In this setting, we denote the shared weight templates as $T$ rather than $W$ to emphasize their distinct role: they are not directly used as \emph{effective weights} by the layers, but rather serve as components that are linearly combined -- via the mixing coefficients $\alpha$ -- to generate the effective weights $W$.

Note that the original topological search space $\mathcal F_T$ is recovered as a special case if each $\alpha^{(l)}$ is constrained to belong to the standard $k$-dimensional basis $(e^{(j)})_{j=1}^k \subset \R^k$, where $e^{(j)}$ is the all-zeros vector except for a 1 in position $j$. In this setting, the linear combination acts as a hard selection since $\sum_{i=1}^k e^{(j)}_i T^{(i)} = T^{(j)}$, and hence $\overline{\mathcal F}_T(\alpha) \in \mathcal F_T$.

More formally, denoting the standard $k$-dimensional basis by
\begin{equation}
    \mathbb B^k = (e^{(j)})_{j=1}^k \subset \R^k \,,
\end{equation}
we have that
\begin{equation}
    \left\{ \overline{\mathcal F}_T(\alpha) ~\big|~ \alpha \in (\mathbb B^k)^L \subset (\R^k)^L \right\} = \mathcal F_T \,,
\end{equation}
and thus $\mathcal F_T \subset \overline{\mathcal F}_T$.

Therefore, we can use $\overline{\mathcal F}_T$ to express the original search problem over $\mathcal F_T$ in an equivalent way -- but still with \emph{discrete} variables $\alpha$:
\begin{equation}
    \min_{ \substack{\alpha \in (\mathbb B^k)^L \\ \theta \in \Theta}}
    \quad
    \mathcal L( \overline{\mathcal F}_T(\alpha)(\theta))
    \,.
    \label{eq-nas-re2}
\end{equation}

This version simply restates Definition \ref{def-nas-re1} using the relaxed topological search space $\overline{\mathcal F}_T$, which behaves exactly like $\mathcal F$ under the above domain constraints on the variables $\alpha$.

Finally, we employ our approximation by relaxing the domain of each $\alpha^{(l)}$ to $\text{span}\big( \mathbb B^k \big) = \R^k$ (or some other continuous subset, such as $\text{hull}\big( \mathbb B^k \big)$ or the $k$-simplex). Formally,

\begin{definition}[Approximate Architecture Search Problem]
    \label{def-nas-approx}
    Let $\overline{\mathcal F}_T$ be the relaxed topological search space from Definition \ref{def-nas-space2lax}. Given fixed $L,k \in \N$, define for each $\alpha \in (\R^k)^L$ the unique architecture $f_\alpha = \overline{\mathcal F}_T(\alpha) \in \overline{\mathcal F}_T$. For any $\theta \in \Theta$, $\mathcal L(f(\theta)) \in \R$ denotes the loss incurred by $f(\theta)$ on the fixed dataset $D$. Then, the \emph{approximate architecture search problem} is
    \begin{equation}
        \min_{ \substack{\alpha \in (\R^k)^L \\ \theta \in \Theta}}
        \quad
        \mathcal L( \overline{\mathcal F}_T(\alpha)(\theta))
        \,.
    \end{equation}
\end{definition}

Here, each block's effective weights become a learnable linear combination of the weight templates $\theta = (T^{(i)})_{i=1}^k$. Because $\alpha$ and $\theta$ are both continuous and real-valued, the entire system can be trained end-to-end with gradient-based methods such as SGD. This yields a continuous approximation to the original architecture search problem.

\begin{figure}[t]
\centering
\includegraphics[width=.99\linewidth]{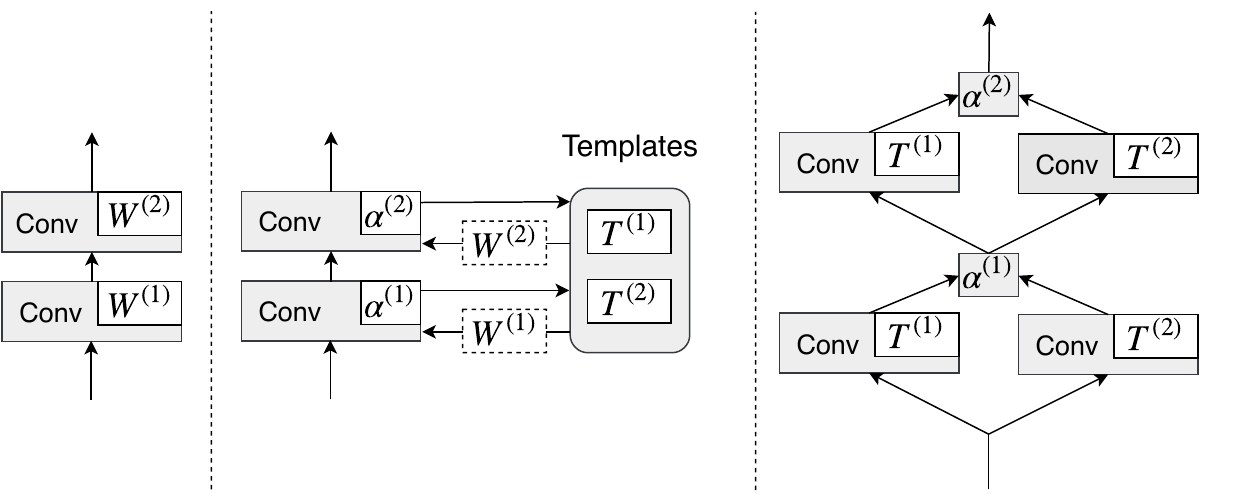}
    \caption{
          Illustration of our soft parameter sharing scheme.
          \emph{\textbf{Left:}}
             A two-layer feedforward CNN.
          \emph{\textbf{Middle:}}
             Our parameter sharing scheme applied to the same two-layer CNN..
          \emph{\textbf{Right:}}
             Equivalent view of the middle diagram, where template layers are used and combined instead of template weights.
        }
        \label{fig-nas-sharing}
\end{figure}

\subsection{The Proposed Architecture Search Method}
\label{sec-nas-method-method}

In practice, we treat $\alpha = (\alpha^{(l)})_{l=1}^L$ and $\theta = (T^{(i)})_{i=1}^k$ as joint parameters of a `super architecture' $\overline{f}: (\alpha, \theta) \mapsto \overline{\mathcal F}_T(\alpha)(\theta)$. We then train them simultaneously using standard gradient-based optimization methods (\eg Adam, SGD). By doing so, we effectively learn:
\begin{itemize}
    \item Which `mix' of weight templates each layer uses ($\alpha^{(l)}$), and
    \item The weight templates themselves ($T^{(i)}$)
\end{itemize}

Although the relaxed formulation no longer forces a discrete template selection at each layer, architectures from $\mathcal F_T$ can still emerge in practice if each $\alpha^{(l)}$ converges to a one-hot vector. Concretely, one may view each layer $l$ as still having its own weights $W^{(l)}$, but rather than being free parameters, they are given by a linear combination of weight templates:
\begin{equation}
    W^{(l)} = \sum_{i=1}^k \alpha^{(l)}_i T^{(i)} \,.
\end{equation}

Hence, our approximation can also be seen as a soft-parameter sharing scheme in which each layer has access to all weight templates $(T^{(i)})_{i=1}^k$, and the sharing mechanism is governed by the soft selection given by $(\alpha^{(l)})_{l=1}^L$.

Figure~\ref{fig-nas-sharing} (left) depicts a two-layer feedforward CNN where each convolutional layer $l$ has its own weight tensor $W^{(l)}$. In contrast, Figure~\ref{fig-nas-sharing} (middle) illustrates our soft parameter sharing scheme for the same CNN. Here, the weight templates $T^{(1)}, T^{(2)}$ are shared globally, and each layer $l$ has only a two-dimensional parameter vector ${\alpha}^{(l)}$. The effective weights $W^{(l)}$ (shown with dotted boxes to indicate they are no longer free parameters) are then generated by combining the templates $T^{(1)}, T^{(2)}$ with ${\alpha}^{(l)}$.

Additionally, this approach decouples the number of weight templates $k$ from the network's depth $L$. Specifically, an $L$-layer CNN trained with soft-sharing has $O(kC^2 K^2)$ total parameters, compared to $O(L C^2 K^2)$ without sharing. As we will see, one application of this method is to reduce the parameter count of deep models.

\paragraph{Interpretation.}

Because convolution and matrix multiplication are both linear operations, one can also view our scheme as learning template layers that are shared across a network in a soft manner. Formally, if $u^{(l)}$ denotes the output of the $l$-th layer, then
\begin{equation}
    \begin{split}
    u^{(l)}(u^{(l-1)}) = W^{(l)} u^{(l-1)} = \left( \sum_{i=1}^k \alpha^{(l)}_i T^{(i)} \right) u^{(l-1)} &= \sum_{i=1}^k \alpha^{(l)}_i \left( T^{(i)} u^{(l-1)} \right) \\
    & = \sum_{i=1}^k \alpha^{(l)}_i \tilde u^{(i)}(u^{(l-1)}) \,,
    \label{eq-nas-templatelayer}
    \end{split}
\end{equation}
where $T^{(i)} u^{(l-1)}$ can be interpreted as the output of a template layer $\tilde u^{(i)}$ with individual parameters $T^{(i)}$ and input $u^{(l-1)}$. These template layers $\tilde u^{(1)}, \dots, \tilde u^{(k)}$ act as global feature extractors, and the coefficients $\alpha^{(l)}$ specify which features to use at the $l$-th computation. Figure~\ref{fig-nas-sharing} (right) depicts this new view for a two-layer CNN: two template convolutions extract features that are combined at each step via $\alpha^{(l)}$.

This perspective reveals a direct connection between $\alpha$ and the network's topology. For instance, if $\alpha^{(l)} = \alpha^{(l')}$ for some $l<l'$, then layers $l$ and $l'$ are functionally equivalent (they implement the same mapping). In such case, we can rewire the network by sending the output of layer $l'-1$ to layer $l$, and then passing the output of layer $l$ (on its second execution) to layer $l'+1$. This results in an equivalent network and allows for the removal of layer $l'$. Unlike standard models, this `folded' network has backward connections, and each layer's output may feed into different layers based on its execution step.

\paragraph{Network Folding.}

The folding process can also be applied approximately, when two layers are only similar rather than identical, however at risk of performance degradation. Since functional similarity between layers depends on their effective weights $W^{(l)}$, we can directly compare the low-dimensional vectors $\alpha^{(l)}$ instead of induced mappings. Specifically, we can build an $L \times L$ layer similarity matrix (LSM) $S$, where $S_{l,l'}$ is the similarity between the coefficients $\alpha^{(l)}$ and $\alpha^{(l')}$.

When batch normalization~\citep{bn} (or other normalization layer) is used, a layer's output becomes invariant to rescaling of its weights. Thus, if $\alpha^{(l)} = c \cdot \alpha^{(l')}$ for $c \in \R_{>0}$, layers $l$ and $l'$ will turn out to be functionally equivalent. Even if $c \in \R_{<0}$, we can still merge the layers by negating its input at the appropriate step.

A natural metric that is invariant to rescaling is the absolute cosine similarity:
\begin{equation}
    S_{l,l'} = \frac{| \langle \alpha^{(l)}, \alpha^{(l')} \rangle |} {\lVert \alpha^{(l)} \rVert \lVert \alpha^{(l')} \rVert} \,.
\end{equation}

\begin{figure}[t]
\centering
\includegraphics[width=.99\linewidth]{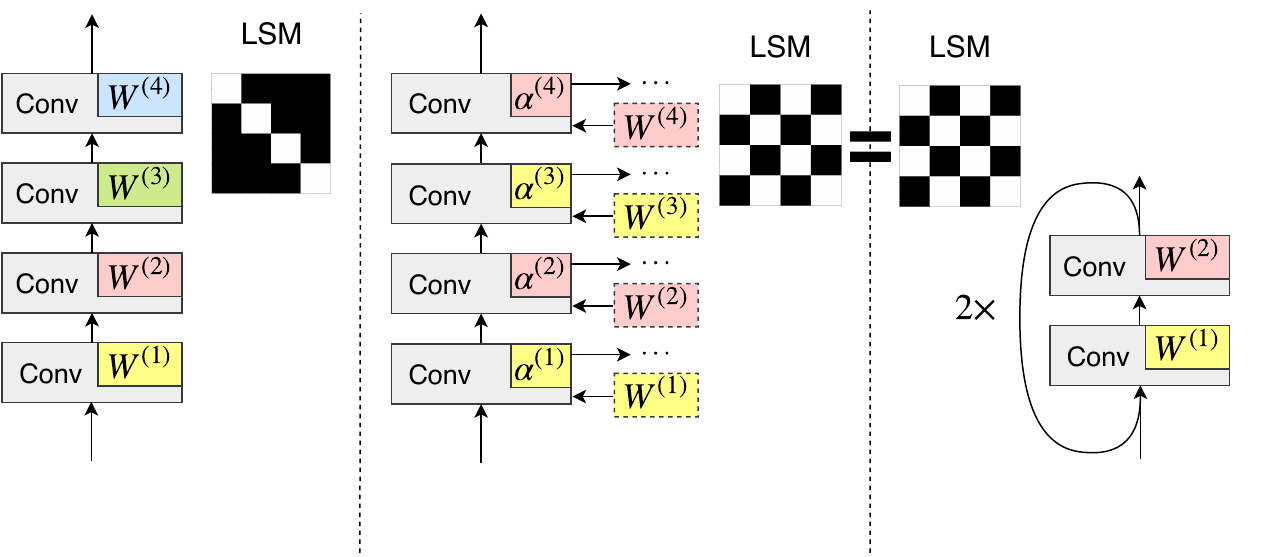}
   \caption{
      LSM when training with and without parameter sharing, and its connection to the network's topology. White and black entries correspond to maximum and minimum similarities.
      \emph{\textbf{Left:}}
         LSM of networks trained without parameter sharing.
      \emph{\textbf{Middle:}}
         LSM of networks trained with our soft parameter sharing mechanism.
      \emph{\textbf{Right:}}
         Using the LSM, we can fold the network to produce a smaller model with backward connections.
      }
      \label{fig-nas-implicitrecurrence}
\end{figure}

Figure~\ref{fig-nas-implicitrecurrence} shows examples of the LSM:
\begin{itemize}
    \item (Left) A network without parameter sharing shows little to no similarity across layer weights.
    \item (Middle) With soft parameter sharing, layers often end up functionally similar, indicated by the shared color of their coefficients and effective weights
    \item (Right) By folding these layers, we produce a more compact architecture featuring backward connections and self-loops.
\end{itemize}

\paragraph{Reparameterization.}

Although we have relaxed our problem such that each $\alpha^{(l)} \in \R^k$, there might be cases where we want produce an architecture from the original topological search space $\mathcal F_T$. Recall that, if each $\alpha^{(l)} \in \mathbb B^k$ (\ie is one of the standard basis vectors of $\R^k$), then $\overline{\mathcal F}_T(\alpha) \in \mathcal F_T$, so we can produce $f \in \mathcal F$ by simply `rounding' each $\alpha^{(l)} \in \R^k$ to a standard basis vector.

If done naively, this rounding is likely to cause significant performance degradation since learned vectors $\alpha^{(l)}$ are typically far from one-hot vectors. However, unlike most approximated integer optimization problems, our setup allows for a reparameterization that can decrease the rounding error significantly.

Specifically, given a set of trained variables $(\alpha^{(l)})_{l=1}^L, (T^{(i)})_{i=1}^k$, we can \emph{reparameterize} them while preserving each layer's effective weights and hence the behavior of the network:
\begin{proposition}
    Let $(\alpha^{(l)})_{l=1}^L \in (\R^k)^L$ and $(T^{(i)})_{i=1}^k \in \mathcal W^k$ be mixing coefficients and weight templates of a model trained with soft parameter sharing. Let $\alpha \in \R^{k \times L}$ and $T \in \R^{k \times d}$ be the matrices containing all variables. Given a $k \times k$ invertible matrix $B$, let
    \begin{equation}
        \begin{split}
            T' &= B T \\
            \alpha' &= (B^T)^{-1} \alpha \,,
        \end{split}
    \end{equation}
    then the network with coefficients $\alpha'$ and templates $T'$ is equivalent to the original one, as in the effective weights of each layer $l$ are the same.
\end{proposition}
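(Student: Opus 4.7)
The plan is to show the claim by direct computation, working in matrix form to avoid index bookkeeping. First I would stack the data: write $T \in \R^{k \times d}$ as the matrix whose $i$-th row is the (flattened) template $T^{(i)}$, and $\alpha \in \R^{k \times L}$ as the matrix whose $l$-th column is $\alpha^{(l)}$. Then the effective weights from Equation~(\ref{eq-nas-templatelayer}), stacked as rows of a matrix $W \in \R^{L \times d}$, admit the compact expression
\begin{equation}
    W = \alpha^T T \,.
\end{equation}
This is just a rewriting of $W^{(l)} = \sum_{i=1}^k \alpha^{(l)}_i T^{(i)}$, but in this form the reparameterization becomes transparent.

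Second, I would plug in $T' = BT$ and $\alpha' = (B^T)^{-1} \alpha$ and compute the reparameterized effective weight matrix
\begin{equation}
    W' = (\alpha')^T T' = \bigl((B^T)^{-1} \alpha\bigr)^T (BT) \,.
\end{equation}
Using the identity $\bigl((B^T)^{-1}\bigr)^T = B^{-1}$ (which follows from $B$ being invertible, since transpose and inverse commute), this collapses to
\begin{equation}
    W' = \alpha^T B^{-1} B T = \alpha^T T = W \,.
\end{equation}
Reading off the $l$-th row of this identity recovers the per-layer statement that $W'^{(l)} = W^{(l)}$ for every $l \in [L]$.

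Third, I would briefly observe why this preserves network behavior: each layer's forward mapping in the relaxed topological space $\overline{\mathcal F}_T$ depends on $\alpha$ and $T$ only through the effective weight $W^{(l)}$ fed to $h^{(l)}$, so equality of effective weights at every layer implies the two networks compute the same function on every input. Invertibility of $B$ is used exactly once, to guarantee $B^{-1}B = I_k$; nothing else about $B$ is needed. There is no real obstacle here since the argument is a one-line matrix cancellation; the only subtlety worth being explicit about is the transpose-inverse swap and the choice of $(B^T)^{-1}$ rather than $B^{-1}$ in the definition of $\alpha'$, which is precisely what makes the two transposes line up so that $B^{-1}$ and $B$ sit adjacent and annihilate.
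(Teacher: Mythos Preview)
Your proof is correct and is exactly the natural direct computation; the paper in fact states this proposition without proof, presumably because the one-line cancellation $(\alpha')^T T' = \alpha^T B^{-1} B T = \alpha^T T$ is elementary. Your write-up is already more careful than what the paper provides.
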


One strategy to leverage the reparameterization above is as follows:
\begin{enumerate}
    \item After training, compute the LSM matrix and group together layers that are sufficiently similar. Let $g_l \in \N$ denote the group of the $l$-th layer and $n$ the total number of groups.
    \item Construct new coefficients $\alpha'$, setting ${\alpha'}^{(l)} = e^{(g_l)}$ for each $l \in [L]$.
    \item Compute the transposed reparameterization matrix $B^T = \alpha {\alpha'}^T (\alpha' {\alpha'}^T)^{-1}$.
    \item Create new weight templates $T' = B T$.
\end{enumerate}

Note that if $n < k$, then we only keep the first $n$ rows of $\alpha'$, making it a $n \times L$ matrix. This results in $B$ being $n \times k$, and $T'$ containing only $n<k$ templates, as expected, since only $n$ templates are needed to fully represent the network.

As a concrete example, consider, for $L=5$ and $k=4$,
\begin{equation}
    \alpha =
    \begin{bmatrix}
    0.8  & 0.1  & -1.2 & 0.2  & -0.4  \\
    -0.2 & 0.6  & 0.3  & 1.2  & -0.3 \\
    0.6  & -0.2 & -0.9 & -0.4 &  0.7 \\
    0.2  & 1.2  & -0.3 & 2.4  & -0.1 \\
    \end{bmatrix} \,,
\end{equation}
which yields the LSM (up to 2 decimals)
\begin{equation}
    LSM =
    \begin{bmatrix}
    1    & 0.05 & 1    & 0.05 & 0.15  \\
    0.05 & 1    & 0.05 & 1    & 0.40  \\
    1    & 0.05 & 1    & 0.05 & 0.15  \\
    0.05 & 1    & 0.05 & 1    & 0.40  \\
    0.15 & 0.40 & 0.15 & 0.40 & 1     \\
    \end{bmatrix} \,.
\end{equation}

Then, we can create group 1 containing layers 1 and 3, group 2 containing layers 2 and 4, and group 3 containing layer 5. Since we ended up with $3 < 4 = k$ groups, the new coefficient matrix will be $3 \times 4$:
\begin{equation}
    \alpha' =
    \begin{bmatrix}
    1 & 0 & 1 & 0 & 0 \\
    0 & 1 & 0 & 1 & 0 \\
    0 & 0 & 0 & 0 & 1
    \end{bmatrix} \,.
\end{equation}

Next, we compute the reparameterization matrix:
\begin{equation}
    B = (\alpha {\alpha'}^T (\alpha' {\alpha'}^T)^{-1})^T =
    \begin{bmatrix}
    -0.2 & 0.05 & -0.15 & -0.05 \\
    0.15 & 0.9 & -0.3 & 1.8 \\
    -0.4 & 0.3 & 0.7 & -0.1 \\
    \end{bmatrix} \,,
\end{equation}
and, finally, create the new templates $T' = BT$. This procedure preserves the network functionality and transforms its coefficients $\alpha$ such that it can be directly mapped to an architecture from the original topological search space $\mathcal F_T$.

\paragraph{Recurrence Regularizer.}

While layer similarities might emerge naturally, it may be desirable to encourage explicit recurrencies. One way to achieve this is by introducing a regularizer to the loss, pushing parameters to configurations that result in more loops. For example, we can penalize the sum of the LSM's elements:
\begin{equation}
    \mathcal L_{R}(\alpha, \theta) = \mathcal L(\alpha, \theta) - \lambda \sum_{l,l'} S_{l,l'}(\alpha) \,,
\end{equation}
where $\lambda$ controls the regularizer's strength. As $\lambda$ grows, the entries of $S$ are driven closer to 1. In the extreme, every pair of layers becomes functionally equivalent, effectively collapsing the network into a single layer and a self-loop.

\section{Experimental Setup}
\label{sec-nas-expsetup}

In this section, we describe the datasets, models, training protocols, and evaluation metrics used in our experiments. Our primary goal is to compare the accuracy and parameter efficiency of models trained with soft parameter sharing against standard architectures and established architecture search methods.

\subsection{Datasets}
\label{sec-nas-expsetup-data}

\paragraph{CIFAR-10 and CIFAR-100.}

The CIFAR-10 and CIFAR-100 datasets~\citep{cifar} consist of $32 \times 32$ color images. CIFAR-10 includes 50,000 training images and 10,000 test images, split across 10 classes, while CIFAR-100 has 100 classes but retains the same training/test splits. We apply standard channel-wise normalization using statistics computed from the training set, followed by the data augmentation procedure of~\cite{resnet1}, which involves random crops and horizontal flips.

\paragraph{ImageNet.}

For ImageNet~\citep{imagenet}, we use the ILSVRC 2012 subset containing approximately 1.28 million training images and 50,000 validation images across 1,000 different object classes. We use single $224 \times 224$ center-crop images during both training and validation.
Following~\cite{gross}, our data augmentation includes random scaling (extracting crops of varying aspect ratios/sizes and upsampling with bicubic interpolation), photometric distortions (random brightness, contrast, and saturation changes), lighting noise, and horizontal flips. We also apply channel-wise normalization using statistics from a sampled subset of the training set.

\paragraph{Synthetic Shortest-Paths Task.}

We additionally design a synthetic shortest-paths task to evaluate the extrapolation capacity of implicitly recurrent models. Each sample is a $32 \times 32$ grid containing two randomly placed query points and several obstacles, encoded in two binary channels: the first channel indicates query points, and the second indicates obstacles. Except for the two query points, each pixel has $10\%$ probability of containing an obstacle. The objective is to label all pixels that belong to at least one shortest path between the two query points. We generate five sets of 5,000 samples each, forming a curriculum that increases the maximum distance between the two query points by increments of 2, and starting from an initial distance of 2 (including diagonal movements).

\subsection{Models}
\label{sec-nas-expsetup-models}

\paragraph{Baseline CNN Architectures.}

We use Wide ResNets (WRNs)~\citep{wide} as our main baselines for CIFAR-10, CIFAR-100, and ImageNet. WRNs extend pre-activation ResNets~\citep{resnet2} by scaling the number of channels by a “widen” factor. We denote these models as WRN-L-w, where \(L\) is the number of layers and \(w\) is the widen factor. In addition, we include ResNeXts~\citep{resnext} and DenseNets~\citep{densenet} as stronger baselines.

\paragraph{Soft-Shared Wide ResNet (CIFAR).}

Directly sharing parameters across all convolutions in a WRN is not possible because layers have varying input/output channel dimensions. Instead, we group together convolutional layers that share the same input/output channel dimensions, enabling them to share a common set of weight templates. This ensures the shapes of the weight templates remain compatible with the convolution's input/output dimensions.

CIFAR WRNs begin with an initial $3\times3$ convolution followed by three stages. Each stage has an initial depth-increasing residual block (which contains two sequential $3\times3$ convolutions and a convolutional residual connection), and several depth-preserving residual blocks (each with two $3\times3$ convolutions).  

Since the first block in each stage differs in input/output depth, we exclude it from parameter sharing. The remaining convolutions in each stage have matching input/output depth, hence we group them together to share parameters. Formally, if there are $\frac{L-1}{3}$ convolutions per stage, excluding the first block leaves $\frac{L-1}{3}-3 = \frac{L-10}{3}$ convolutions in each of the three groups. We let SWRN-L-w-k denote a WRN with $L$ layers, widen factor $w$, and $k$ weight templates shared per group. If $k = \frac{L-10}{3}$, then we have as many shared weights as convolutions per group, so there is no parameter reduction relative to a standard WRN. In this case, we omit the $k$ value and simply write SWRN-L-w. We use WRN-28 as the base architecture for most CIFAR experiments, which has six convolutions per group.

\paragraph{Soft-Shared Wide ResNet (ImageNet).}

Wide ResNets on ImageNet feature four stages, each composed of bottleneck residual blocks with three convolutions: a depth-decreasing, a depth-preserving, and a depth-increasing layer. To share parameters across this structure, we divide each stage into three groups, one for each type of convolution (depth-decreasing, depth-preserving, depth-increasing), while again excluding the first block of each stage from sharing --  this yields a total of 12 groups across the four stages.

For our ImageNet experiments, we use WRN-50, which has 3, 4, 6, and 3 bottleneck blocks in each of its four stages, respectively. Our variant with parameter sharing allocates as many weight templates for each group as its number of convolutional layers, hence there is no parameter reduction. Any accuracy improvements would primarily arise from faster/better training dynamics or architectural bias.

\paragraph{NAS Baselines.}

We also compare our approach against established NAS methods that adopt a standard feed-forward search space, including NASNet, AmoebaNet, DARTS, SNAS, and ENAS. This highlights how our implicit recurrence search method fares in relation to established NAS strategies.

\paragraph{CNN for Synthetic Task.}

For the synthetic shortest-paths task, we employ a 22-layer CNN: one $1 \times 1$ convolution mapping 2 input channels to 32, followed by 20 depth-preserving $3 \times 3$ convolutions (each followed by batch normalization~\citep{bn} and ReLU~\citep{relu}), and a final $1 \times 1$ convolution mapping 32 channels to 1. All $3 \times 3$ convolutions have a skip connection. For the soft-shared variant, we assign 20 weight templates across the 20 $3 \times 3$ convolutional layers.

\subsection{Training}
\label{sec-nas-expsetup-training}

\paragraph{CIFAR-10 and CIFAR-100.}

For our CIFAR experiments, we train each model for 200 epochs using SGD with a Nesterov momentum of 0.9. The initial learning rate is set to 0.1 and is reduced by a factor of 5 at epochs 60, 120, and 160. We use a batch size of 128 on a single GPU, applying a weight decay of 0.0005 to all parameters except the soft-sharing coefficients $\alpha$. Convolutional filters and weight matrices are initialized using Kaiming initialization.

\paragraph{ImageNet.}

Following \citet{gross} and \citet{wide}, we use SGD with a Nesterov momentum of 0.9 and train for 100 epochs. The learning rate starts at 0.1 and is decayed by a factor of 10 at epochs 30, 60, and 90. We use a total batch size of 256, distributed evenly across 4 GPUs. As with CIFAR, Kaiming initialization is applied to all convolutional filters, and no weight decay is applied to the coefficients $\alpha$. For the remaining parameters, we use a weight decay of 0.0001.

\paragraph{Synthetic Shortest-Paths Task.}

We use Adam~\citep{adam} with a learning rate of 0.01, training for 50 epochs at each of the 5 curriculum phases. In each phase, the maximum distance between the two query points increases, making the task more challenging. Convolutional filters are again initialized via Kaiming's scheme, and we do not apply weight decay in these experiments.

\subsection{Evaluation}
\label{sec-nas-expsetup-eval}

\paragraph{Test Accuracy and Top-1/Top-5 Accuracy.}

For the CIFAR datasets, we measure and report the accuracy on the 10,000 test samples. On ImageNet, we use the top-1 and top-5 accuracy on the 50,000 validation images. Top-5 accuracy is the fraction of samples for which the true label appears among the top five predicted classes.

\paragraph{Number of Parameters.}

We track the total number of parameters in each model to gauge memory efficiency, highlight potential parameter savings, and compare storage costs among different approaches.

\paragraph{Architecture Search Time.}

When comparing against NAS methods, we track the total training time to perform architecture search and output a final, trained model.

\paragraph{F1 Score.}

For the synthetic shortest-paths task, we adopt the F1 score as metric instead of accuracy due to the significant class imbalance (most cells in the grid do not belong to at least one shortest path). It is defined as
\begin{equation}
    F_1(y, \hat y) = \frac{\sum_i \mathbbm 1 \{y_i = \hat y_i = 1\}}{\sum_i \mathbbm 1\{y_i=1\} + \mathbbm 1\{\hat y_i=1\}} \,,
\end{equation}
where $y, \hat y$ are the vectors containing the true and predicted label for a set of samples and $\mathbbm 1 \{\cdot\}$ is the indicator function.

\section{Results}
\label{sec-nas-results}

\subsection{Improving Network Efficiency}
\label{sec-nas-results-efficiency}

\begin{figure}[t]
   \centering
   \includegraphics[width=0.75\textwidth]{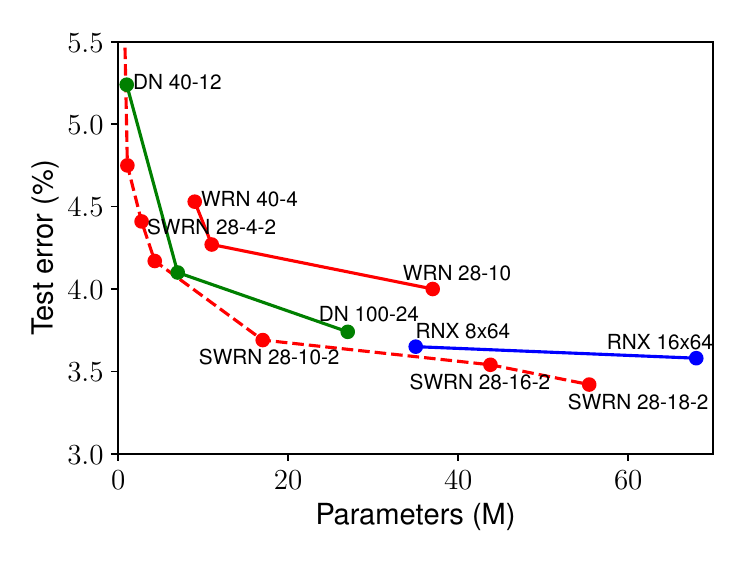}
   \caption{
      Parameter efficiency for different models on CIFAR-10.
   }
   \label{fig-nas-c10}
\end{figure}

\begin{table}[h]
\centering
   \begin{tabular}{@{}l|c|c|c|c|c@{}}
      \textbf{CIFAR-10} & Templates / & \multirow{2}{*}{Dropout} &  \multirow{2}{*}{Params (M)} & Compression       & Test          \\
      Architecture      & Group size  &                          &                              &  Ratio ($\times$) & Error ($\%$)  \\ \hline
      WRN 28-10         &             &   0.0                    &  36                          &                   & 4.00          \\
      WRN 28-10         &             &   0.3                    &  36                          &                   & 3.89          \\ \hline
      ResNeXt-29 16x64  &             &   0.0                    &  68                          &                   & 3.58          \\
      DenseNet 100-24   &             &   0.0                    &  27                          &                   & 3.74          \\
      DenseNet 190-40   &             &   0.3                    &  26                          &                   & 3.46          \\ \hline
      SWRN 28-10-1      & 1 / 6       &   0.0                    &  12                          & 3.0               & 4.01          \\
      SWRN 28-10-2      & 2 / 6       &   0.3                    &  17                          & 2.1               & 3.75          \\ 
      SWRN 28-10        & 6 / 6       &   0.0                    &  36                          & 1.0               & 3.74          \\
      SWRN 28-10        & 6 / 6       &   0.3                    &  36                          & 1.0               & 3.88          \\ \hdashline
      SWRN 28-14-2      & 2 / 6       &   0.3                    &  33                          & 2.1               & 3.69          \\ 
      SWRN 28-14        & 6 / 6       &   0.3                    &  71                          & 1.0               & 3.67          \\ \hdashline
      SWRN 28-18-2      & 2 / 6       &   0.3                    &  55                          & 2.1               & \underline{\textbf{3.43}} \\
      SWRN 28-18        & 6 / 6       &   0.3                    &  118                         & 1.0               & 3.48          \\
   \end{tabular}
      \caption{
      Performance and parameter count of different models on CIFAR-10. * indicates models trained with dropout $p=0.3$ \citep{dropout}. Best WRN/SWRN result is in bold, and best overall performance is underlined. Results are average of 5 runs.
    }
    \label{tab-nas-c10-wrn}
\end{table}

\begin{figure}[h]
   \centering
   \includegraphics[width=0.75\textwidth]{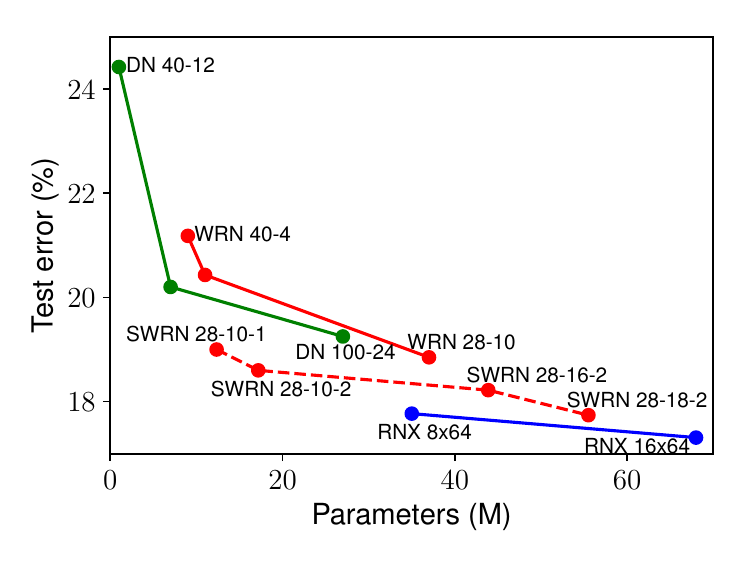}
   \caption{
      Parameter efficiency for different models on CIFAR-100.
   }
   \label{fig-nas-c100}
\end{figure}

\begin{table}[h]
\centering
   \begin{tabular}{@{}l|c|c|c|c|c@{}}
      \textbf{CIFAR-100}& Templates / & \multirow{2}{*}{Dropout} &  \multirow{2}{*}{Params (M)} & Compression       & Test          \\
      Architecture      & Group size  &                          &                              &  Ratio ($\times$) & Error ($\%$)  \\ \hline
      WRN 28-10         &             &   0.0                    &  36                          &                   & 19.25         \\
      WRN 28-10         &             &   0.3                    &  36                          &                   & 18.85         \\ \hline
      ResNeXt-29 16x64  &             &   0.0                    &  68                          &                   & 17.31         \\
      DenseNet 100-24   &             &   0.0                    &  27                          &                   & 19.25          \\
      DenseNet 190-40   &             &   0.3                    &  26                          &                   & \underline{17.18}          \\ \hline
      SWRN 28-10-1      & 1 / 6       &   0.0                    &  12                          & 3.0               & 19.73          \\
      SWRN 28-10-2      & 2 / 6       &   0.3                    &  17                          & 2.1               & 18.66          \\ 
      SWRN 28-10        & 6 / 6       &   0.0                    &  36                          & 1.0               & 18.78          \\
      SWRN 28-10        & 6 / 6       &   0.3                    &  36                          & 1.0               & 18.43          \\ \hdashline
      SWRN 28-14-2      & 2 / 6       &   0.3                    &  33                          & 2.1               & 18.37          \\ 
      SWRN 28-14        & 6 / 6       &   0.3                    &  71                          & 1.0               & 18.25          \\ \hdashline
      SWRN 28-18-2      & 2 / 6       &   0.3                    &  55                          & 2.1               & 17.75          \\
      SWRN 28-18        & 6 / 6       &   0.3                    &  118                         & 1.0               & \textbf{17.43} \\
   \end{tabular}
      \caption{
      Performance and parameter count of different models on CIFAR-100. * indicates models trained with dropout $p=0.3$ \citep{dropout}. Best WRN/SWRN result is in bold, and best overall performance is underlined. Results are average of 5 runs.
    }
    \label{tab-nas-c100-wrn}
\end{table}

\begin{table}[h]
\centering
   \begin{tabular}{@{}l|c|c|c@{}}
      \textbf{CIFAR-10} &
      Params (M) & Training Time (GPU days) & Test error (\%) \\ \hline
      NASNet-A     &      3.3    &  1800        &     2.65 \\
      NASNet-A     &     27.6    &  1800        &     2.40  \\
      AmoebaNet-B  &      2.8    &  3150        &     2.55 \\
      AmoebaNet-B  &     13.7    &  3150        &     2.31 \\
      AmoebaNet-B  &     26.7    &  3150        &     2.21 \\
      AmoebaNet-B  &     34.9    &  3150        &     2.13 \\
      DARTS        &      3.4    &     4        &     2.83 \\
      SNAS         &      2.8    &     1.5      &     2.85 \\
      ENAS         &      4.6    &     0.45      &     2.89 \\
      \hline

      \vspace{-4pt}WRN 28-10              & \multirow{2}{*}{36.4} & \multirow{2}{*}{0.4} & \multirow{2}{*}{3.08} \\
      \scriptsize{(baseline with cutout)} &                       &                      &                       \\ \hline
      SWRN 28-4-2   &    2.7    &      0.12      &     3.45  \\
      SWRN 28-6-2   &    6.1    &      0.25      &     3.00  \\
      \hdashline
      SWRN 28-10   &     36.4    &     0.4      &     2.70  \\
      SWRN 28-10-2 &     17.1    &     0.4      &     2.69 \\
      \hdashline
      SWRN 28-14   &     71.4    &     0.7      &     2.55 \\
      SWRN 28-14-2 &     33.5    &     0.7      &     2.53 \\
   \end{tabular}
   \caption{
      Performance of models found via neural architecture search (NAS) on CIFAR-10 (all trained with cutout). 
   }
   \label{tab-nas-c10-nas}
\end{table}

\paragraph{CIFAR:}

Tables~\ref{tab-nas-c10-wrn} and~\ref{tab-nas-c100-wrn} present results on CIFAR. Networks trained with our method yield superior performance in the setting with no parameter reduction: SWRN 28-10 presents $6.5\%$ and $2.5\%$ lower relative test errors compared to the base WRN 28-10 model on CIFAR-10 and CIFAR-100, suggesting that our method aids optimization since both models have the same capacity.

With fewer templates than layers, SWRN 28-10-1 (all 6 layers of each group perform the same operation), performs virtually the same as the base WRN 28-10 network, while having $\frac{1}{3}$ of its parameters and less capacity. On CIFAR-10, parameter reduction ($k=2$) is beneficial to test performance: the best performance is achieved by SWRN 28-18-2 ({3.43\% test error}), outperforming the ResNeXt-29 16x64 model~\citep{resnext}, while having fewer parameters (55M against 68M) and no bottleneck layers.

Figures~\ref{fig-nas-c10} and \ref{fig-nas-c100} show that our parameter sharing scheme uniformly improves accuracy-parameter efficiency on CIFAR-10 and CIFAR-100. Comparing the WRN model family (solid red) to our SWRN models (dotted red), we see that SWRNs are significantly more efficient than WRNs. DN and RNX denotes DenseNet and ResNeXt, respectively, and are plotted for illustration: both models employ orthogonal efficiency techniques, such as bottleneck layers.

Table~\ref{tab-nas-c10-nas} presents a comparison between our method and neural architecture search (NAS) techniques \citep{nas,snas,darts,enas,amoeba} on CIFAR-10 -- results differ from Table~\ref{tab-nas-c10-wrn} solely due to cutout~\citep{cutout}, which is commonly used in NAS literature; NAS results are quoted from their respective papers.  Our method outperforms architectures discovered by recent NAS algorithms, such as DARTS~\citep{darts}, SNAS~\citep{snas} and ENAS~\citep{enas}, while having similarly low training cost.  We achieve $2.69\%$ test error after training less than $10$ hours on a single NVIDIA GTX 1080 Ti.  This accuracy is only bested by NAS techniques which are several orders of magnitude more expensive to train. Being based on Wide ResNets, our models do, admittedly, have more parameters.

Comparison to recent NAS algorithms, such as DARTS and SNAS, is particularly interesting as our method, though motivated differently, bears some notable similarities.  Specifically, all three methods are gradient-based and use an extra set of parameters (architecture parameters in DARTS and SNAS) to perform some kind of soft selection (over operations/paths in DARTS/SNAS; over templates in our method). As Section \ref{sec-nas-results-search} will show, our learned template coefficients $\alpha$ can often be used to fold our networks into an explicitly recurrent form - a discovered CNN-RNN hybrid.

To the extent that our method can be interpreted as a form of architecture search, it might be complementary to standard NAS methods.  While NAS methods typically search over operations (\emph{e.g.} activation functions; $3\times3$ or $5\times5$ convolutions; non-separable, separable, or grouped filters; dilation; pooling), our soft parameter sharing can be seen as a search over recurrent patterns (which layer processes the output at each step).  These seem like orthogonal aspects of neural architectures, both of which may be worth examining in an expanded search space.  When using SGD to drive architecture search, these aspects take on distinct forms at the implementation level: soft parameter sharing across layers (our method) vs hard parameter sharing across networks (recent NAS methods).

\paragraph{ImageNet:}
Without any change in hyperparameters, the network trained with our method outperforms the base model and also deeper models such as DenseNets (though using more parameters), and performs close to ResNet-200, a model with four times the number of layers and a similar parameter count. See Table~\ref{tab-nas-imagenet}.

\begin{table}[t]
\centering
   \begin{tabular}{@{}l|c|c|c@{}}
   \textbf{ImageNet}  &  Params (M)   & Top-1 error ($\%$)   & Top-5 error ($\%$) \\ \hline
         WRN 50-2     &  69      & 22.00    & 6.05 \\
         DenseNet-264 &  33      & 22.15    & 6.12 \\
         ResNet-200   &  65      & 21.66    & 5.79 \\ \hline
         SWRN 50-2    &  69      & 21.74    & 5.95 \\
   \end{tabular}
   \caption{
      Performance of different models on ImageNet.
   }
   \label{tab-nas-imagenet}
\end{table}

\subsection{Architecture Search}
\label{sec-nas-results-search}

\begin{figure}[t]
    \centering
    \includegraphics[trim={0 0.3cm 0cm 0cm},clip,width=0.85\textwidth]{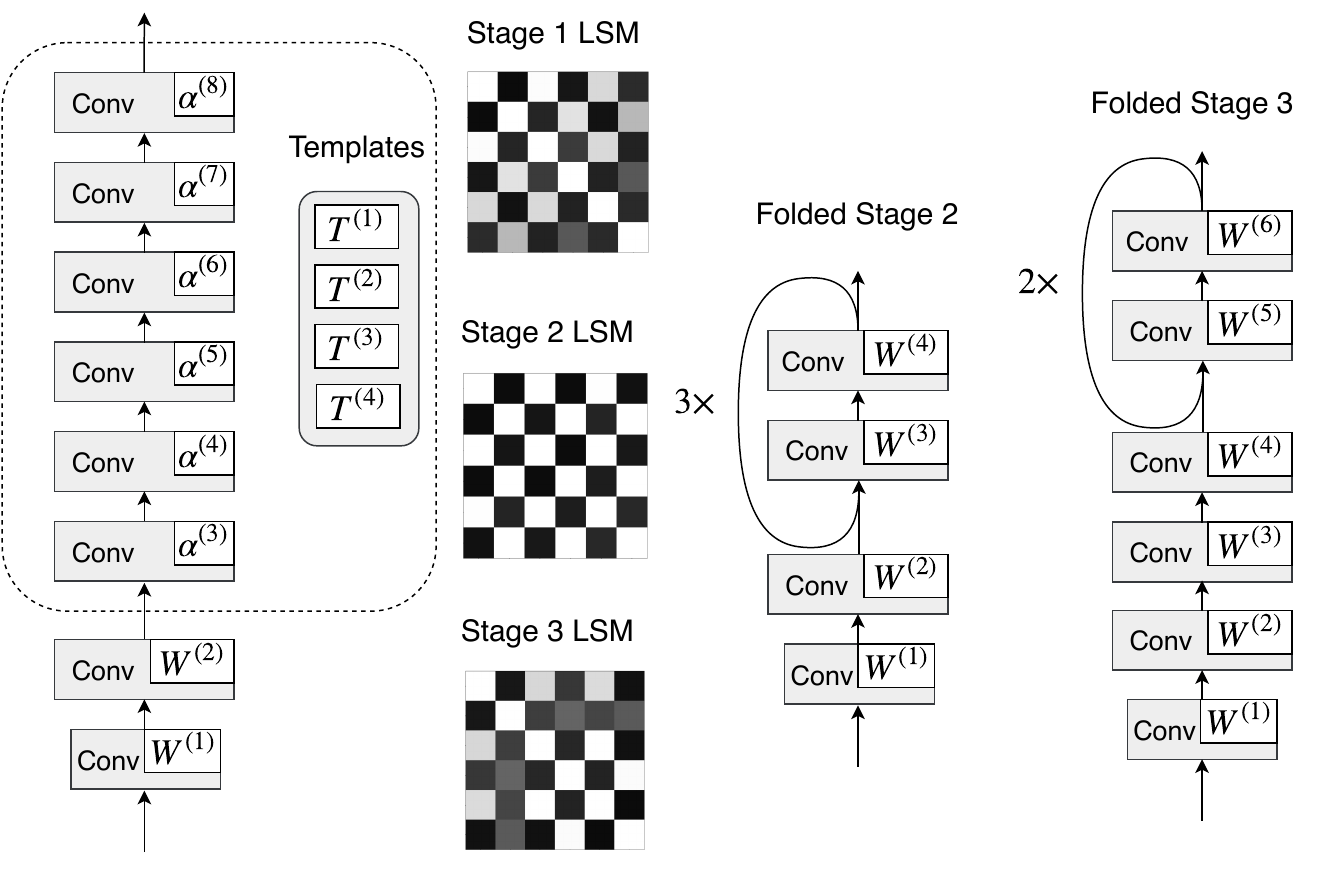}
    \caption{
      Folding a SWRN 28-10-4 trained on CIFAR-10.
      \emph{\textbf{Left:}}
         Illustration of its stages.
      \emph{\textbf{Middle:}}
         LSM for each stage after training.
      \emph{\textbf{Right:}}
         Folding stages 2 and 3.
      }
    \label{fig-nas-folding1}
\end{figure}

\begin{figure}[t!]
   \centering
   \includegraphics[width=0.75\textwidth]{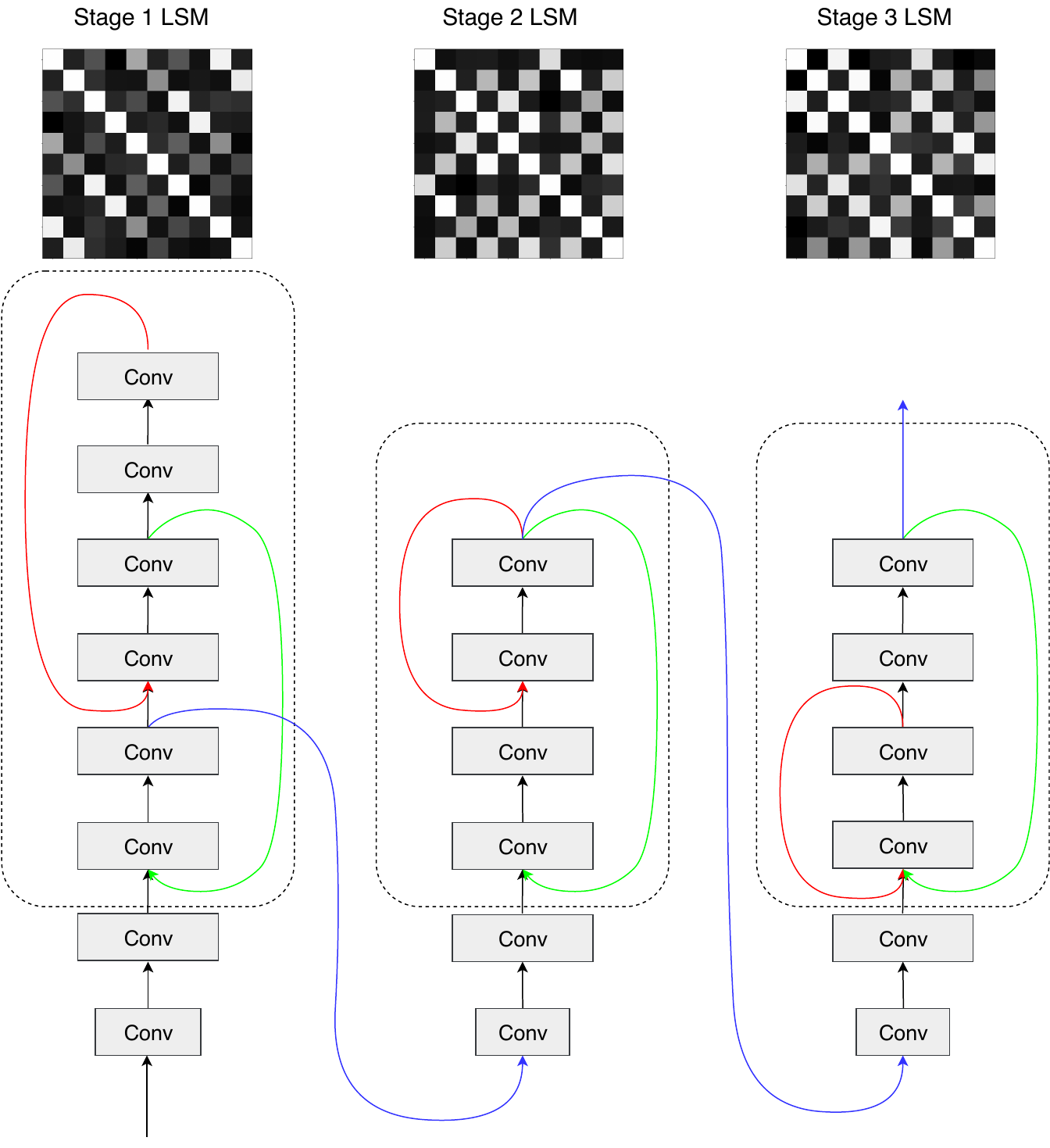}
   \caption{1
      Folding a SWRN 40-8-8 trained on CIFAR-10. Red paths are taken before green, which are taken before blue.
   }
   \label{fig-nas-folding2}
\end{figure}

\begin{figure}[t]
\begin{center}
      \begin{minipage}[t]{0.85\linewidth}
         \begin{center}
            \includegraphics[width=1.0\linewidth]{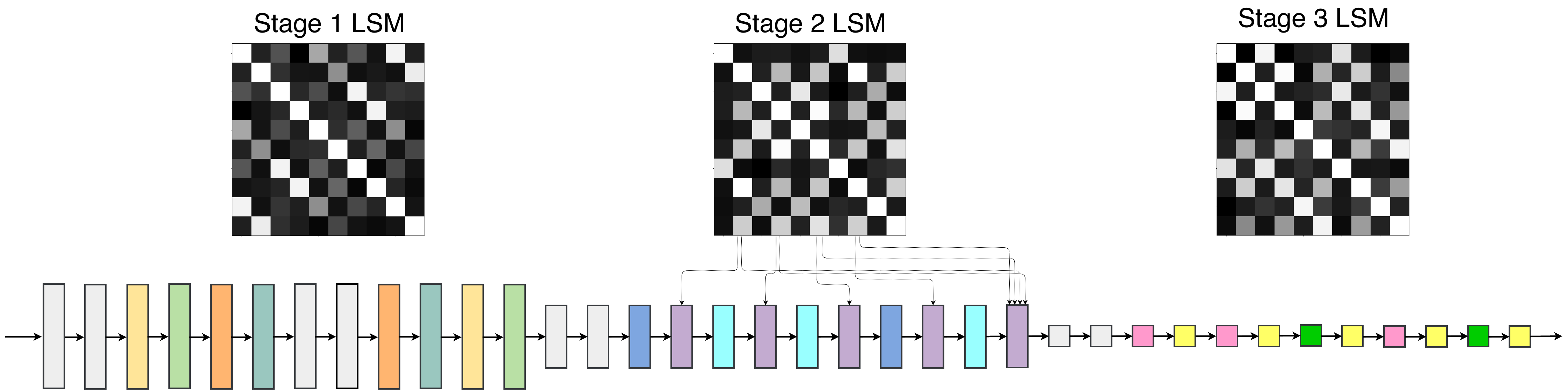}
         \end{center}
      \end{minipage}\\
      \begin{minipage}[t]{0.375\linewidth}
         \vspace{0pt}
         \begin{center}
            \includegraphics[width=1.0\linewidth]{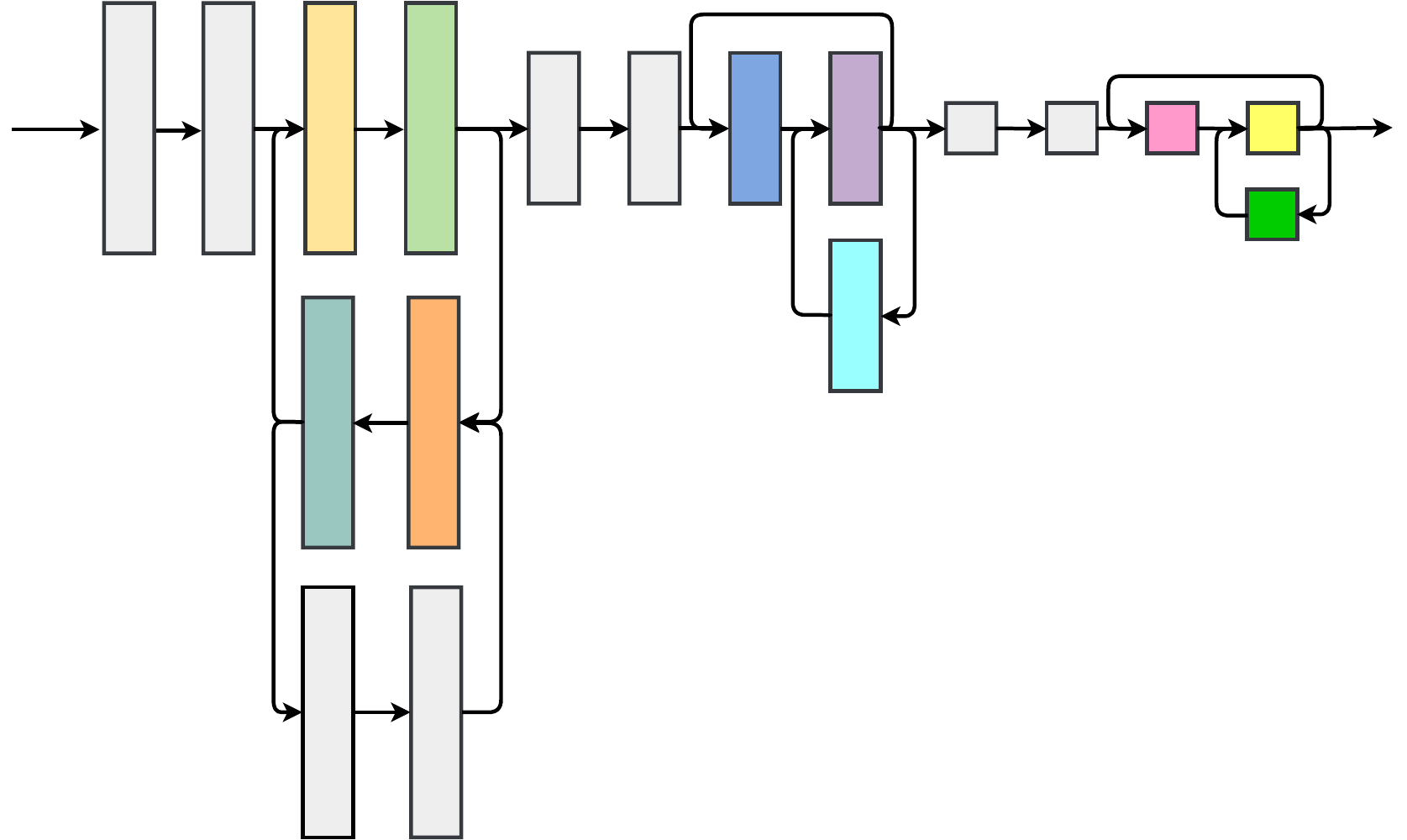}
         \end{center}
      \end{minipage}
\end{center}
    \caption{
    Folding a SWRN 40-8-8 trained on CIFAR-10.
    }
    \label{fig-nas-folding3}
\end{figure}

Results on CIFAR suggest that training networks with few parameter templates $k$ in our soft sharing scheme results in performance comparable to the base models, which have significantly more parameters.  The lower $k$ is, the larger we should expect the layer similarities to be: in the extreme case where $k=1$, all layers in a sharing scheme have similarity $1$, and can be folded into a single layer with a self-loop.

For the case $k > 1$, there is no trivial way to fold the network, as layer similarities depend on the learned coefficients.  We can inspect the model's layer similarity matrix (LSM) and see if it presents implicit recurrences: a
form of recurrence in the rows/columns of the LSM.  Surprisingly, we observe that rich structures emerge naturally in networks trained with soft parameter sharing, \emph{even without the recurrence regularizer}.

Figure~\ref{fig-nas-folding1} shows the per-stage LSM for CIFAR-trained SWRN 28-10-4. Here, the six layers of its stage-2 block can be folded into a loop of two layers, leading to an error increase of only $0.02\%$:
\begin{itemize}
    \item (Left) Illustration of the stages of a SWRN-28-10-4 (residual connections omitted for clarity).  The first two layers contain individual parameter sets, while the other six share four templates. All 3 stages of the network follow this structure.
    \item (Middle) LSM for each stage after training on CIFAR-10, with many elements close to $1$.  Hard sharing schemes can be created for pairs with large similarity by tying their coefficients (or, equivalently, their effective weights).
    \item (Right) Folding stages 2 and 3 leads to self-loops and a CNN with backward connections -- LSM for stage 2 is a repetition of 2 rows/columns, and folding decreases the number of parameters.
\end{itemize}

Figure~\ref{fig-nas-folding2} shows a SWRN 40-8-8 (8 parameter templates shared among groups of $\frac{40-10}{3} = 10$ layers) trained with soft parameter sharing on CIFAR-10.  Each stage (originally with 12 layers -- the first two do not participate in parameter sharing) can be folded to yield blocks with complex recurrences.  For clarity, we use colors to indicate the computational flow: red takes precedence over green, which in turn has precedence over blue.  Colored paths are only taken once per stage.

Although not trivial to see, recurrences in each stage's folded form are determined by row/column repetitions in the respective Layer Similarity Matrix.  For example, for stage 2 we have $S_{5,3} \approx S_{6,4} \approx 1$, meaning that layers 3, 4, 5 and 6 can be folded into layers 3 and 4 with a loop (captured by the red edge).  The same holds for $S_{7,1}$, $S_{8,2}$, $S_{9,3}$ and $S_{10,4}$, hence after the loop with layers 3 and 4, the flow returns to layer 1 and goes all the way to layer 4, which generates the stage's output.  Even though there is an approximation when folding the network (in this example, we are tying layers with similarity close to $0.8$), the impact on the test error is less than $0.3\%$.  Also note that the folded model has a total of 24 layers (20 in the stage diagrams, plus 4 which are not shown, corresponding to the first layer of the network and three $1 \times 1$ convolutions in skip-connections), instead of the original 40.

Finally, Figure~\ref{fig-nas-folding3} shows the same SWRN 40-8-8 more aggressively folded, where a lower threshold for the required coefficient similarity results in two fewer layers. Residual connections are omitted for simplicity, and layers are colored the same if their coefficients are sufficiently similar for them to be approximated as a single layer. Grey is used to depict layers with independent weights, which either didn't participate in parameter sharing (the first two layers of each stage) or that didn't present sufficient similarity to any other layer in order to be folded. Excluding the first convolution and the shortcut connections, the final folded network has a total of 18 layers, compared to 36 of the original network.

\subsection{Algorithmic Task}
\label{sec-nas-results-recurrenttask}

\begin{figure}[t]
\centering
   \includegraphics[width=0.4\linewidth]{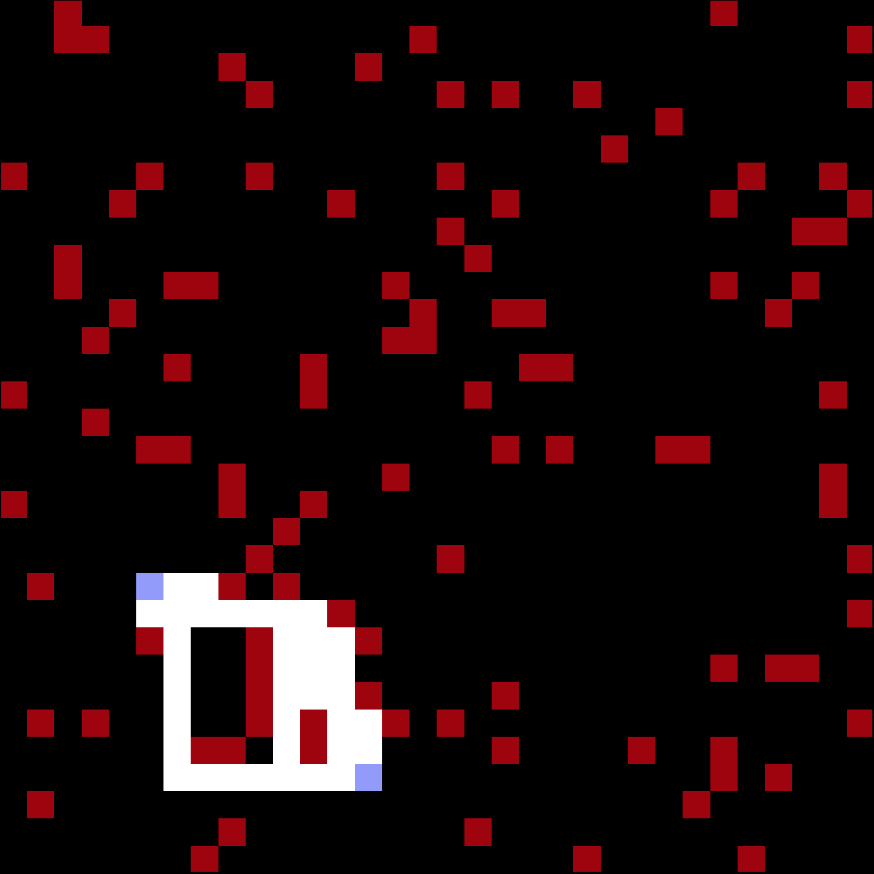}
   \caption{
      Example of the synthetic shortest paths task.
   }
  \label{fig-nas-synthdata}
\end{figure}

\begin{figure}[t]
\centering
   \includegraphics[width=0.65\linewidth]{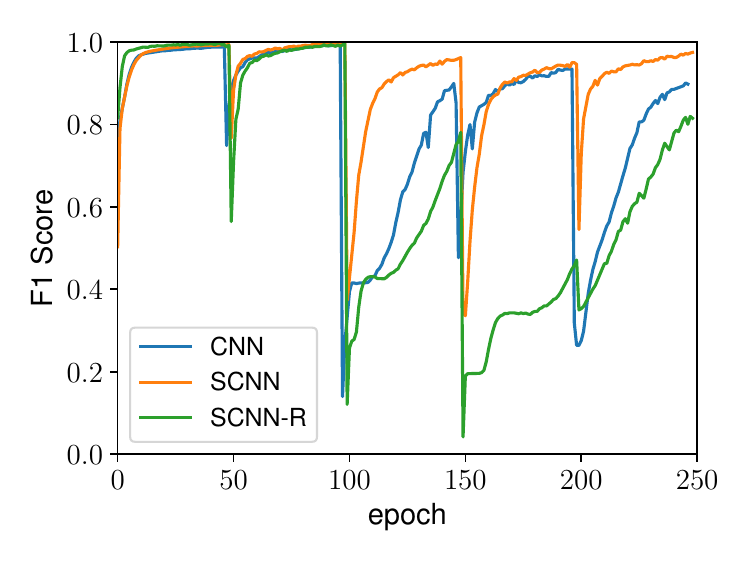}
   \caption{
      Training curves for the shortest paths task.
   }
  \label{fig-nas-synthcurves}
\end{figure}

While the propensity of our parameter sharing scheme to encourage learning of recurrent networks is a useful parameter reduction tool, we would also like to leverage it for qualitative advantages over standard CNNs.  On tasks for which a natural recurrent algorithm exists, does training CNNs with soft parameter sharing lead to better extrapolation?

We train a CNN, a CNN with soft parameter sharing and one template per layer (SCNN), and an SCNN with recurrence regularizer $\lambda_R = 0.01$.  Each model trains for 50 epochs per phase with Adam \citep{adam} and a fixed learning rate of $0.01$.  As classes are heavily unbalanced and the balance itself changes during phases, we compare $F_1$ scores instead of classification error.

Each model starts with a $1 \times 1$ convolution, mapping the 2 input channels to 32 output channels.  Next, there are 20 channel-preserving $3 \times 3$ convolutions, followed by a final $1 \times 1$ convolution that maps 32 channels to 1.  Each of the 20 $3 \times 3$ convolutions is followed by batch normalization~\citep{bn}, a ReLU non-linearity~\citep{relu}, and has a 1-skip connection.

Figure~\ref{fig-nas-synthdata} shows one example from our generated dataset: blue pixels indicate the query points; red pixels represent obstacles, and white pixels are points in a shortest path (in terms of Manhattan distance) between query pixels. The task consists of predicting the white pixels (shortest paths) from the blue and red ones (queries and obstacles).

Figure~\ref{fig-nas-synthcurves} shows training curves for the 3 trained models: the SCNN not only outperforms the CNN, but adapts better to harder examples at new curriculum phases.  The SCNN is also advantaged over a more RNN-like model: with the recurrence regularizer $\lambda_R = 0.01$, all entries in the LSM quickly converge $1$, as in a RNN.  This leads to faster learning during the first phase, but presents issues in adapting to difficulty changes in latter phases.

\section{Experimental Analysis}
\label{sec-nas-analysis}

\begin{figure}[h]
   \centering
   \includegraphics[width=0.75\textwidth]{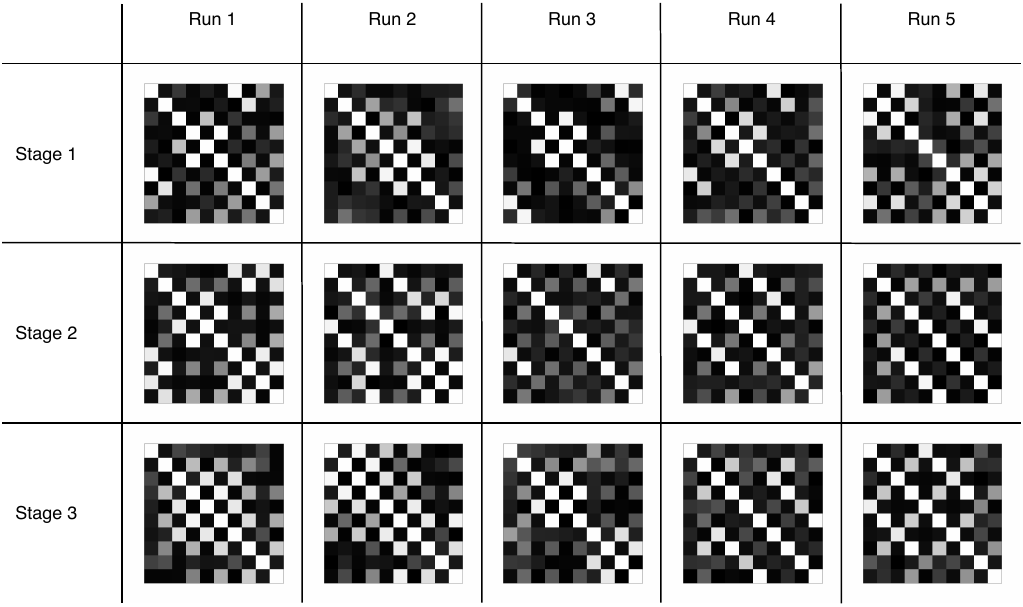}
   \caption{
      LSMs of a SWRN 40-8-8 over different runs.
   }
   \label{fig:appendix_several}
\end{figure}

\begin{figure}[h!]
   \centering
   \includegraphics[width=0.75\textwidth]{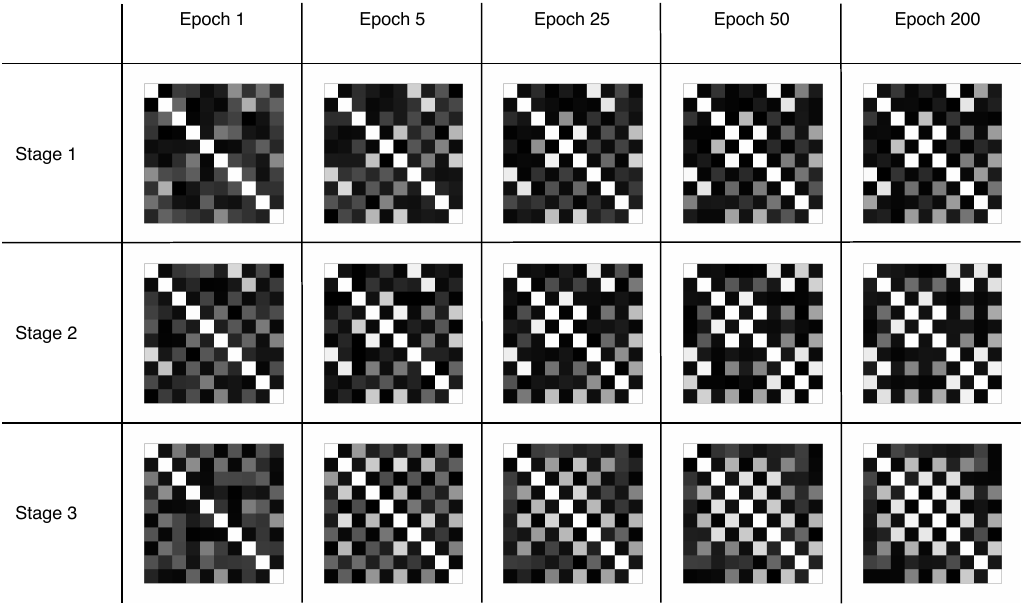}
   \caption{
      LSMs of a SWRN 40-8-8 over different epochs of the same run.
   }
   \label{fig:appendix_evolution}
\end{figure}

\begin{figure}[h]
    \centering
     \includegraphics[trim={0 0 0 0},clip,width=0.27\linewidth]{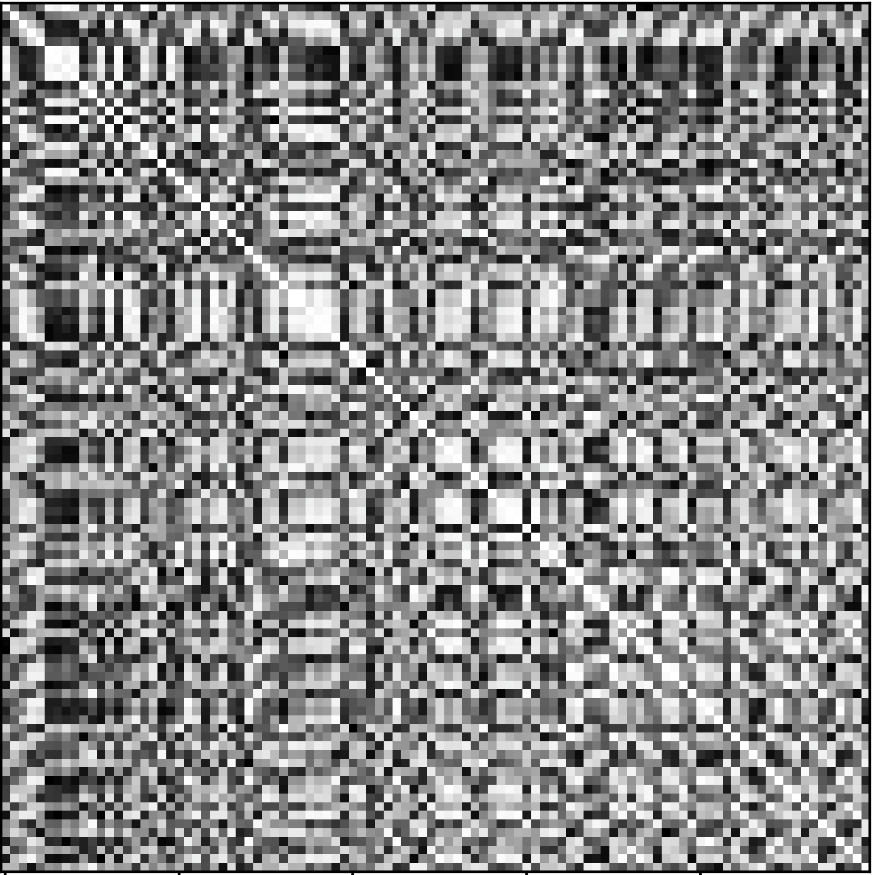}
     \includegraphics[trim={0 0 0 0},clip,width=0.27\linewidth]{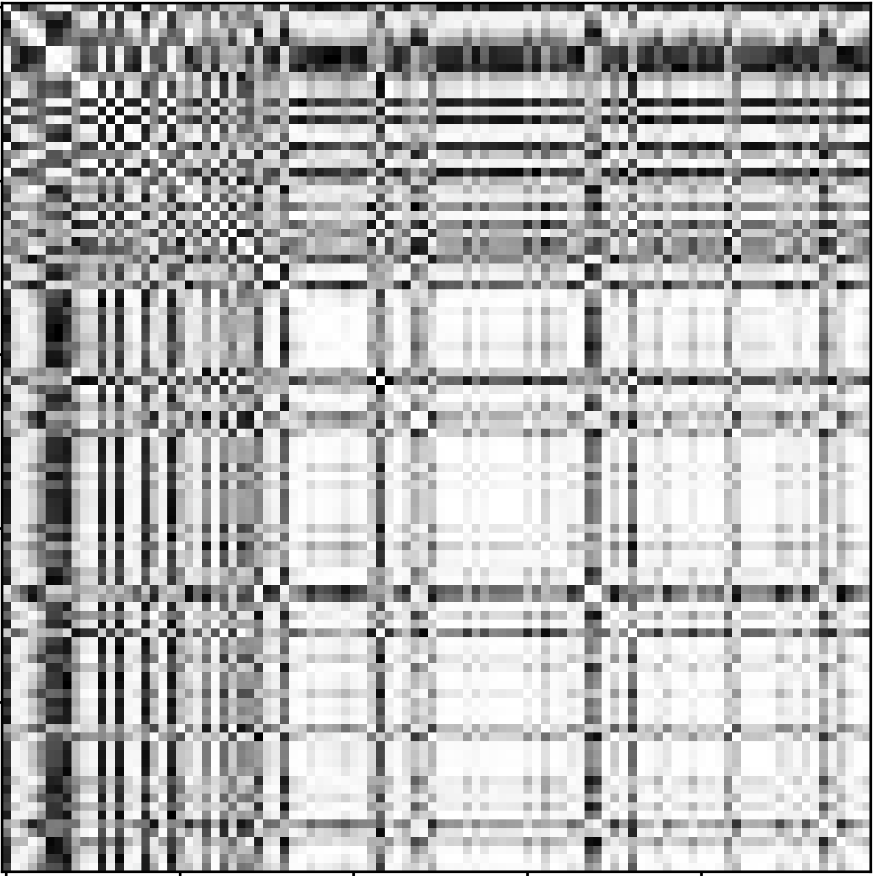}
     \includegraphics[trim={0 0 0 0},clip,width=0.27\linewidth]{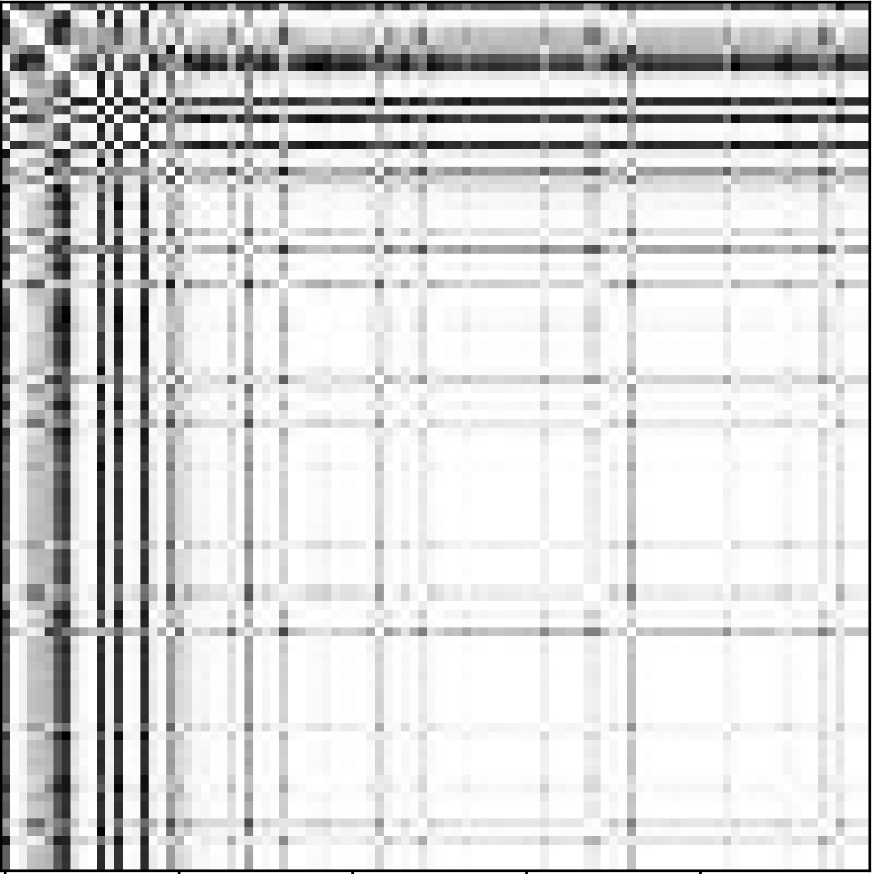}
     \caption{
      LSM evolution for 100-layer CNN trained on synthetic task (0, 20, 40 epochs).
     }
     \label{fig:appendix_evolution2}
\end{figure}

Figure \ref{fig:appendix_several} shows LSMs of a SWRN 40-8-8 (composed of 3 stages, each with 10 layers sharing 8 templates) trained on CIFAR-10 for 5 runs with different random seeds. Although the LSMs differ across different runs, hard parameter sharing can be observed in all cases (off-diagonal elements close to 1, depicted by white), characterizing implicit recurrences which would enable network folding.  Moreover, the underlying structure is similar across runs, with hard sharing typically happening among layers $i$ and $i+2$, leading to a ``chessboard'' pattern.

Figure \ref{fig:appendix_evolution} shows LSMs of a SWRN 40-8-8 (composed of 3 stages, each with 10 layers sharing 8 templates) at different epochs during training on CIFAR-10.  The transition from an identity matrix to the final LSM happens mostly in the beginning of training: at epoch 50, the LSM is almost indistinguishable from the final LSM at epoch 200, and most of the final patterns are observable already at epoch 25.

Figure \ref{fig:appendix_evolution2} shows the LSM evolution of a 100-layer CNN trained on our synthetic task. Note that, after training, a large group of layers have become functionally similar (white rectangular blocks in the LMS), while most of the remaining ones form a second group. Therefore, the network can be folded to compute long loops involving a single layer, where a secondary layer is employed between these loop calls.

\section{Discussion}
\label{sec-nas-discussion}

In this chapter, we presented a novel approach to neural architecture search based on soft parameter sharing, transforming a combinatorial NAS problem into one that is more tractable for gradient-based optimization. This work contributes to the broader goal of designing efficient deep learning systems by reducing both inference and training costs without sacrificing performance.

\subsection{Contributions}
\label{sec-nas-discussion-contribs}

This chapter makes several key contributions:
\begin{itemize}
    \item We introduce a new search space for NAS which enables architectures to have backward connections, or, equivalently, reuse layer configurations across the network. This effectively decouples the network's depth and its parameter count, makes recurrence patterns learnable, and allows for feedforward-recurrent hybrids to emerge.
    \item We propose a continuous approximation for NAS under our search space, enabling the use of gradient-based methods to optimize the network's topology and weights jointly. Additionally, we show that the architectural coefficients can be transformed along with the weight templates to reduce rounding errors when discretizing the learned soft parameter sharing scheme.
    \item Our strategy is capable of drastically reducing the parameter count of residual networks trained on CIFAR while preserving or even improving performance. Our soft parameter sharing scheme also improves classification accuracy on ImageNet, and boosts both generalization and learning speed on algorithmic tasks.
\end{itemize}

\subsection{Research Directions}
\label{sec-nas-discussion-directions}

Several promising research directions emerge from our work. While we focus on CNNs trained on computer vision tasks, our methodology might be extended to other model families and domains, such as RNNs and Transformers for natural language processing, or policy networks for reinforcement learning. Additionally, adopting more complex mappings from templates to effective weights, akin to employing deeper hypernetworks, could lead to further parameter reductions by reducing the dimensionality of the weight templates (something that our adopted mapping does not allow). Finally, there is significant potential in designing dynamic search spaces that evolve during training, allowing architectures to more easily learn nested loops and multi-level recurrences.

\subsection{Impact}
\label{sec-nas-discussion-impact}

Since its introduction, our soft parameter sharing approach has inspired several follow-up studies. Researchers have extended the framework to cover a broader range of architectural components, achieving improvements in both training speed and inference efficiency \citep{deeplyshared,latephase,superweights,gsharp,mixturegrowth,sharelstm, acdc}. Others have generalized architecture search under our search space~\citep{npas}, and applied our methodology to other problems such as meta-learning~\citep{metalearnhypernets}. Collectively, these efforts underscore the practical relevance of our approach and its potential to influence the design of more efficient deep learning systems.

\newpage

\chapter{Sparsification}

\section{Introduction}
\label{sec-cs-intro}

\subsection{Motivation \& Strategy}
\label{sec-cs-intro-motivation}

In the previous chapter, we addressed efficiency by automating the design of neural architectures via NAS. In this chapter, we tackle efficiency from a different perspective: instead of searching for efficient architectures, we improve a model's efficiency by removing unnecessary or redundant modules.

This strategy -- commonly referred to as \emph{pruning} or \emph{sparsification} -- aims to remove parameters from a pre-trained model without compromising its performance. By reducing the number of parameters, sparsification directly decreases both inference time and memory usage. Although sparsification is typically performed after training, it can also be applied gradually during training to reduce training time costs as well. Similar to NAS, sparsification involves a combinatorial optimization problem, as one must select which parameters to eliminate -- a process that is computationally hard. We review previous efforts to approximate network sparsification in Section \ref{sec-cs-intro-related}.

The remainder of the chapter is organized as follows. In Section \ref{sec-cs-method-formal}, we formally state the sparsification problem formally and discuss its computational challenges and impact on model efficiency. Next, in Section \ref{sec-cs-method-reframing}, we re-frame the original problem multiple times to arrive at a form that can be easily approximated. Section \ref{sec-cs-method-approx} presents our approximation of the underlying sparsification problem, which is the core component of our novel sparsification method. Section \ref{sec-cs-method-method} introduces our final algorithm and describes how it can be used to perform ticket search. The subsequent sections outline our experimental setup, report empirical result, and provide a detailed analysis of our approach, including a discussion of its contributions and impacts.

\subsection{Related Work}
\label{sec-cs-intro-related}

\paragraph{Sparsification.}

Numerous approaches have been proposed to reduce the number of parameters in deep networks without sacrificing performance. Early methods rely on heuristics to identify and remove unimportant weights from a fully-trained model~\citep{braindamage}. These methods use a statistic, such as the magnitude of each weight, as a proxy for its importance, and then prune the weights deemed to have the least impact on the model performance~\citep{magnitudepruning}. This two-phase process, where a model is first trained then pruned, can be repeated iteratively to further enhance sparsity and compression~\citep{stateofsparsity, gmp}.

An alternative to heuristic-based pruning is to directly incorporate $\ell_0$ regularization into the training process. Several works propose stochastic approximations that learn a parametric distribution over binary masks, optimized jointly with the network's weights~\citep{l0bernoulli, sparsityl0}. Gradients for the distribution's parameters are typically estimated via the straight-through estimator~\citep{straightthrough}, although this introduces additional variance.

While heuristic methods depend critically on having a well-trained model prior to pruning, approaches that approximate $\ell_0$ regularization enable dynamic sparsification even during early training epochs. However, these methods can be challenging when, at aggressive sparsity levels, weights are updated infrequently, requiring longer training times.

\paragraph{Ticket Search.}

Pruning a fully trained network often yields a sparse model that, when retrained from scratch, fails to recover the performance of the original dense network. However, \cite{lth} revealed that, if reinitialized with the same random seed as the original network, these sparse subnetworks can be trained to match or even exceed the dense model's performance.

To find these subnetworks -- called \emph{winning tickets} -- \cite{lth} propose Iterative Magnitude Pruning (IMP), which alternates between training, magnitude pruning, and rewinding the remaining weights to their initial values (or an early state). This procedure, known as \emph{ticket search}, has gathered significant attention not only because it reveals the presence of highly efficient subnetworks, but also because winning tickets can often be transferred across tasks and training pipelines~\citep{transfertickets, transtickets2, generaltickets}. A variant of this task -- non-retroactive ticket search, where the weights remain fixed at initialization while the pruning mask is optimized -- has also been explored by \cite{deconstructing}.

Despite its potential, the standard ticket search method, IMP, is computationally expensive due to the need for multiple rounds of training and pruning, motivating the search for more efficient sparsification strategies.

\section{Method}
\label{sec-cs-method}

\subsection{Formalizing the Sparsification Problem}
\label{sec-cs-method-formal}

The sparsification problem we address consists of removing redundant or unimportant parameters from a neural network in order to reduce resource requirements -- such as inference time and memory cost -- while preserving performance. As in the previous chapter, we assume the dataset $D$ is chosen a-priori and remains fixed, which contains samples (and, when applicable, labels), although our approach is independent of the nature of task.

Let $f: \theta \mapsto f(\theta)$ denote a neural architecture with an associated weight space $\Theta \subseteq \R^d$, where any $\theta \in \Theta$ equips $f$ to produce a network $f(\theta)$: a function mapping samples from $D$ to outputs consistent with the underlying task. Furthermore, define the loss $\mathcal L : f(\theta) \mapsto \mathcal L(f(\theta)) \in \R$ which evaluates the performance of a network $f(\theta)$ on the fixed dataset $D$.

Given a network architecture $f$, our goal in sparsification is to find a weight configuration $\theta \in \Theta$ that has as few nonzero components as possible, while ensuring that the induced network $f(\theta)$ performs comparably to the original dense model.

Unlike many pruning approaches that use heuristics to select weights based on proxies of importance~\citep{magnitudepruning,gmp,dnw}, our method relies on approximating the formal sparsification objective. This approach establishes a clear trade-off between sparsity and performance, thus eliminating the need for ad-hoc criteria to determine when and which weights to remove.

\begin{definition}[Sparsification Problem]
    \label{def-cs-formal}
    Let $f$ be a neural architecture with weight space $\Theta \subseteq \R^d$. For any $\theta \in \Theta$, $\mathcal L(f(\theta)) \in \R$ denotes the loss incurred by $f(\theta)$ on the fixed dataset $D$. Then, the \emph{sparsification problem} is:
    \begin{equation}
         \min_{ \substack{\theta \in \Theta}}
        \quad
        \mathcal L(f(\theta)) + \lambda \| \theta \|_0
        \,,
    \end{equation}
    where $\lambda \geq 0$ controls the trade-off between model performance and sparsity, and the $\ell_0$ pseudonorm is given by
    \begin{equation}
        \|\theta\|_{0} = \sum_{i=1}^d \mathbbm 1 \{\theta_i \neq 0 \}
    \end{equation}
\end{definition}

The $\ell_0$ pseudonorm counts the number of nonzero parameters, effectively turning the problem into a discrete selection task over which weights to zero-out. This leads to $2^d$ possible subsets, rendering the problem inherently combinatorial and intractable for most networks.

Although the $\ell_0$ regularizer is differentiable almost everywhere, its derivative is zero at any nonzero weight. In practice, unless a weight is exactly zero (which is unlikely if weights are randomly initialized), the gradient contribution from the $\ell_0$ term is zero. Consequently, during gradient-based training, the regularizer does not contribute to the update, effectively causing the optimization method to ignore the regularizer completely.

\paragraph{Structured Sparsification.} In practice, removing individual weights, known as \emph{unstructured sparsification},  may reduce memory usage, but most hardware (\eg GPUs) cannot effectively leverage irregular sparsity patterns for speed gains. In contrast, \emph{structured sparsification} enforces sparsity under specific structural constraints -- for example, by removing entire neurons from fully-connected layers or whole output channels from convolutional layers. This structured approach effectively reduces layer width and enables practical speedups on conventional hardware.

The structured sparsification problem can be formulated as in Definition~\ref{def-cs-formal}, but with a modified $\ell_0$ pseudonorm that reflects the desired structure. For example, consider an $L$-layer fully-connected network with parameters $\theta = (W^{(l)})_{l=1}^L$, where $W^{(l)} \in \R^{d_l \times d_{l-1}}$. To remove the $i$-th neuron of layer $l$, we set all its incoming weights (\ie the $i$-th row of $W^{(l)}$, $W^{(l)}_i$) to zero. In this setting, defining the $\ell_0$ pseudonorm as
\begin{equation}
    \| (W^{(l)})_{l=1}^L \|_0 = \sum_{l=1}^L \sum_{i=1}^{d_l} \mathbbm 1 \{ W^{(l)}_i \neq \vec 0 \} \,,
\end{equation}
enforces the removal of entire neurons (or channels) rather than individual weights. Regardless of the imposed structure, the problem remains combinatorial and is typically intractable for large models.

\subsection{Re-framing the Sparsification Problem}
\label{sec-cs-method-reframing}

In this Section, we reformulate the sparsification problem in three progressive steps to arrive at a form that is amenable to the approximation we will employ.

First, we decouple the selection of weights from their values by introducing binary mask $m \in \{0,1\}^d$. Instead of characterizing the removal of the $i$-th component of $\theta$ by setting $\theta_i = 0$, we now indicate weight removal by $m_i = 0$, regardless of $\theta_i$. The effective weights are given by the element-wise product
\begin{equation}
    (m \odot \theta)_i = m_i \theta_i =
    \begin{cases}
        \theta_i, ~\text{if } m_i = 1 \\
        0, ~\text{if } m_i = 0 \,,
    \end{cases}
\end{equation}
so that the number of retained weights is $\sum_{i=1}^d m_i$. Since $m$ is binary, we can express this sum as $\|m\|_p^p = \sum_{i=1}^d m_i^p$ for any $p > 0$. This reformulation leads to the following equivalent problem:
\begin{definition}[Re-framed Sparsification Problem (Version 1)]
    \label{def-cs-re1}
    Let $f$ be a neural architecture with weight space $\Theta \subseteq \R^d$. For any $\theta \in \Theta$, $\mathcal L(f(\theta)) \in \R$ denotes the loss incurred by $f(\theta)$ on the fixed dataset $D$. Then, for any $p>0$, the sparsification problem can be rewritten as
    \begin{equation}
        \min_{ \substack{m \in \{0,1\}^d \\ \theta \in \Theta}}
        \quad
        \mathcal L(f(m \odot \theta)) + \lambda \| m \|_p^p
        \,,
    \end{equation}
    which is equivalent to the problem in Definition \ref{def-cs-formal}.
\end{definition}

This version highlights the combinatorial nature of the problem, now condensed in the discrete domain of $m$.



Next, we reparameterize the binary mask $m$ by expressing it as the output of a surjective binary function $b : \R \to \{0,1\}$. Specifically, we define $m = b(s)$, where $b$ is applied element-wise and $s \in \R^d$ becomes the new optimization variable. This changes the problem to one of optimizing over continuous variables $s$ instead of binary $m$:
\begin{definition}[Re-framed Sparsification Problem (Version 2)]
    \label{def-cs-re2}
    Let $f$ be a neural architecture with weight space $\Theta \subseteq \R^d$ and let $b : \R \to \{0,1\}$ be a surjective binary function applied element-wise. For any $\theta \in \Theta$, $\mathcal L(f(\theta)) \in \R$ denotes the loss incurred by $f(\theta)$ on the fixed dataset $D$. Then, for any $p>0$, the sparsification problem can be rewritten as
    \begin{equation}
        \min_{ \substack{s \in \R^d \\ \theta \in \Theta}}
        \quad
        \mathcal L \big( f(b(s) \odot \theta) \big) + \lambda \| b(s) \|_p^p
        \,,
    \end{equation}
    which is equivalent to the problem in Definition \ref{def-cs-formal}.
\end{definition}

Although the optimization variables are now continuous, the function $b$ (\eg step function) remains discontinuous with zero derivative almost everywhere, posing the same challenges as the original $\ell_0$ regularizer.

To address this, we replace $b$ with a family of smooth functions that converge point-wise to $b$. Let $\hat b : \R \times \R \to \R$ be such that, for all $s \in \R$,
\begin{equation}
    \lim_{\beta \to \infty} \hat b(\beta, s) = b(s) \,,
\end{equation}

hence for any continuous function $g$,
\begin{equation}
    \lim_{\beta \to \infty} g( \hat b(\beta, z) ) = g(b(z)) \,.
\end{equation}



Then, the final re-formulation is:
\begin{definition}[Re-framed Sparsification Problem (Version 3)]
    \label{def-cs-re3}
    Let $f$ be a neural architecture with weight space $\Theta \subseteq \R^d$. For any $\theta \in \Theta$, $\mathcal L(f(\theta)) \in \R$ denotes the loss incurred by $f(\theta)$ on the fixed dataset $D$. Moreover, let $\hat b : \R \times \R \to \R$ satisfy
    \begin{equation}
        \forall s \in \R, \quad \lim_{\beta \to \infty} \hat b(\beta, s) = b(s),
    \end{equation}
    where $b : \R \to \{0,1\}$ is surjective.
    Then, if $\mathcal L \circ f$ is continuous, for any $p>0$ the sparsification problem can be rewritten as
    \begin{equation}
        \min_{ \substack{s \in \R^d \\ \theta \in \Theta}}
        \quad
        \lim_{\beta \to \infty} ~ \mathcal L \big(f(\hat b(\beta, s) \odot \theta) \big) + \lambda \| \hat b(\beta, s) \|_p^p
        \,,
    \end{equation}
    which is equivalent to the problem in Definition \ref{def-cs-formal}.
\end{definition}

At first glance, this formulation may not seem advantageous. However, note that $\hat b(\beta, \cdot)$ is allowed to be smooth for any finite $\beta$, thus being amenable to gradient-based optimization. The inherent difficulty of the original problem is then encapsulated in the limit $\beta \to \infty$.

\subsection{Approximating the Sparsification Problem}
\label{sec-cs-method-approx}


In Version 3 of our re-framed sparsification problem (Definition~\ref{def-cs-re3}), the objective involves taking the limit $\beta \to \infty$ to recover the binary mask $b$ from its smooth approximation $\hat b(\beta, \cdot)$. Our approximation consists of swapping the order of the minimization and the limit:
\begin{definition}[Approximate Sparsification Problem]
    \label{def-cs-approx}
    Let $f$ be a neural architecture with weight space $\Theta \subseteq \R^d$. For any $\theta \in \Theta$, $\mathcal L(f(\theta)) \in \R$ denotes the loss incurred by $f(\theta)$ on the fixed dataset $D$. Moreover, let $\hat b : \R \times \R \to \R$ be a smooth function that satisfies
    \begin{equation}
        \forall s \in \R, \quad \lim_{\beta \to \infty} \hat b(\beta, s) = b(s),
    \end{equation}
    where $b : \R \to \{0,1\}$ is surjective.
    Then, if $\mathcal L \circ f$ is continuous, for any $p>0$, the \emph{approximate sparsification problem} is
    \begin{equation}
        \lim_{\beta \to \infty}~ \min_{ \substack{s \in \R^d \\ \theta \in \Theta}}
        \quad
        \mathcal L \big(f(\hat b(\beta, s) \odot \theta) \big) + \lambda \| \hat b(\beta, s) \|_p^p
        \,.
    \end{equation}
\end{definition}

In this setting, the objective of the minimization problem,
\begin{equation}
    \mathcal L_\beta(\theta, s) \coloneqq \mathcal L \big(f(\hat b(\beta, s) \odot \theta) \big) + \lambda \| \hat b(\beta, s) \|_p^p \,,
    \label{eq-cs-softloss}
\end{equation}
is continuous and amenable to gradient-based optimization since $\hat b(\beta, \cdot)$ is smooth for any finite $\beta$. In practice, we iteratively update $\theta$ and $s$ to minimize $\mathcal L_\beta(\theta, s)$ while gradually annealing $\beta$ during training. Note that increasing $\beta$ makes the problem progressively harder, as $\hat b(\beta, \cdot)$ becomes sharper and $\mathcal L_\beta$ less smooth in the optimization sense.

There are two sources of error in our approximation. First, by swapping the minimization and the limit, we do not generally obtain an equivalent problem. In fact, for a sequence of real-valued functions $(g_n)_{n \in \N}$ defined on a set $X$ that converges point-wise to $g$, we have
\begin{equation}
    \min_{x \in X} \lim_{n \to \infty} g_n(x) \geq \lim_{n \to \infty} \min_{x \in X} g(x) \,.
    \label{eq-cs-minlim}
\end{equation}
Thus, our approximation effectively optimizes a lower bound of the original problem. In the framework of continuation methods, we typically consider additional assumptions -- such as $\Gamma$-convergence and equi-coercivity of the family $(\mathcal L_\beta)_{\beta}$ -- to ensure that there is no gap in the limit.

Second, as $\beta$ increases during training, the soft mask $\hat b(\beta, \cdot)$ necessarily becomes increasingly sharp. Consequently, $\mathcal L_\beta$ becomes less smooth and requires more training iterations to be effectively minimized. In practice, however, we do not increase the number of training iterations as $\beta$ grows in order to keep the method fast and efficient.

\begin{algorithm}[t]
    \textbf{Input:} Architecture $f$, $s_{init} \in \R$, $\lambda \geq 0$, $T \in \N$, $\beta_{final}>0$
    \caption{Continuous Sparsification}
    \begin{algorithmic}[1]
    \State Initialize $\theta \sim \dist_\theta$, $\beta \gets 1$, $s \gets \vec s_{init}$
    \For{$t = 1$ to $T$}
        \State Compute $\mathcal L_\beta(\theta, s) = \mathcal L(f(\sigma(\beta s) \odot \theta)) + \lambda \|\sigma(\beta s)\|_1$ and $\nabla_{\theta, s} \mathcal L_\beta(\theta,s)$
        \State Update $\theta$ and $s$ using $\nabla_\theta \mathcal L_\beta(\theta,s)$ and $\nabla_s \mathcal L_\beta(\theta,s)$
        \State Update $\beta \gets \beta \cdot \beta_{final}^{1/T}$ 
    \EndFor
    \State Compute $m = H(s) = \mathbbm 1 \{ s \geq 0 \}$ applied element-wise
    \State (Optional) Fine-tune $\theta$ to decrease $\mathcal L(f(m \odot \theta))$
    \State Output $f(m \odot \theta)$
    \end{algorithmic}
\label{alg-cs-csprune}
\end{algorithm}

\subsection{The Proposed Sparsification Method}
\label{sec-cs-method-method}

For our method, we instantiate the binary function $b$ as the Heaviside step function $b(s) = H(s) = \mathbbm 1\{s \geq 0\}$ and approximate it using the sigmoid function. Specifically, we define
\begin{equation}
    \hat b(\beta, s) = \sigma(\beta s) = \frac{1}{1 + e^{-\beta s}}\,.
\end{equation}

We also set the norm parameter $p=1$ so that the sparsity regularizer is simply $\| \hat b(\beta, s) \|_1 = \|\sigma(\beta s) \|_1$. In terms of sparsification, every negative component of $s$ will drive the corresponding element of the effective weight $\theta \odot \sigma(\beta s)$ towards zero as $\beta$ increases. Although analytically $\sigma(\beta s)$ is never exactly zero for any finite $\beta$, limited numerical precision ensures that, beyond a certain threshold, the weights are effectively pruned during training.

Our overall approach consists of jointly optimizing the network weights $\theta$ and the mask parameters $s$ to minimize $\mathcal L_\beta$
using gradient descent. We anneal $\beta$ during training following an exponential schedule defined by $\beta$ starting as $1$ and ending with a pre-determined value $\beta_{final}$ after $T$ time steps, which amounts to scaling $\beta$ by $\beta_{final}^{1/T}$ at the end of each iteration. This schedule allows the mask $\sigma(\beta s)$ to transform from smooth values to an approximation of a binary mask as training progresses. At the end of training, when $\beta$ is sufficiently large, we can either use the learned mask $\sigma(\beta s)$ directly (which, due to numerical precision, may be mostly binary), or explicitly round it to obtain a binary mask $m = H(s)$.

Our final method, Continuous Sparsification (CS), is given in Algorithm~\ref{alg-cs-csprune}. Although we included an optional fine-tuning step for the weights, our empirical results indicate that this step is typically unnecessary when $\beta_{final}$ is large (\eg around 500 or above).

Note that many alternative choices for the smooth function $\hat b(\beta, \cdot)$ are valid. In particular, any sigmoidal function may be used in place of $\sigma$. Although we have not explored these alternatives in our experiments, investigating different forms for $\hat b$ could potentially lead to performance improvements. For example:
\begin{itemize}
    \item $\hat b(\beta, s) = \frac12 (1 + \erf(\beta s))$
    \item $\hat b(\beta, s) = \frac12 \left( 1 + \frac{s}{ \left( \beta^{-1} + |s|^k \right)^{\frac{1}{k}}} \right)$ for any $k>0$
    \item $\hat b(\beta, s) = \frac12 \left( 1 + \frac{\pi}{2} \arctan(\beta s) \right)$
\end{itemize}

\paragraph{Ticket Search.}

\begin{algorithm}[t]
    \textbf{Input:} Architecture $f$, $s_{init} \in \R$, $\lambda \geq 0$, $T \in \N$, $R \in \N$, $k \in \N$, $\beta_{final}>0$
    \caption{Ticket Search with Continuous Sparsification}
    \begin{algorithmic}[1]
    \State Initialize $\theta \sim \dist_\theta$, $s \gets \vec s_{init}$
    \For{$r = 1$ to $R$}
        \State If $r>1$: set $s \gets \min(\beta \cdot s, s_{init}), \beta \gets 1$
        \For{$t = 1$ to $T$}
            \State Compute $\mathcal L_\beta(\theta, s) = \mathcal L(f(\sigma(\beta s) \odot \theta)) + \lambda \|\sigma(\beta s)\|_1$ and $\nabla_{\theta, s} \mathcal L_\beta(\theta,s)$
            \State Update $\theta$ and $s$ using $\nabla_\theta \mathcal L_\beta(\theta,s)$ and $\nabla_s \mathcal L_\beta(\theta,s)$
            \State If $r=1$ and $t=k$: store $\hat \theta \gets \theta$
            \State Update $\beta \gets \beta \cdot \beta_{final}^{1/T}$ 
        \EndFor
    \EndFor
    \State Compute $m = H(s) = \mathbbm 1 \{ s \geq 0 \}$ applied element-wise
    \State Output $f(m \odot \hat \theta)$
    \end{algorithmic}
\label{alg-cs-csticket}
\end{algorithm}

\begin{algorithm}[t]
    \textbf{Input:} Architecture $f$, $\tau \in (0,1)$ $T \in \N$, $R \in \N$, $k \in \N$
    \caption{Ticket Search with Iterative Magnitude Pruning \citep{lth2}}
    \begin{algorithmic}[1]
    \State Initialize $\theta \sim \dist$, $m \gets \vec 1^d$
    \For{$r = 1$ to $R$}
        \State If $r>1$: set $\theta \gets \theta_k$
        \For{$t = 1$ to $T$}
            \State Compute $\mathcal L(\theta,m) = \mathcal L(f(m \odot \theta))$ and $\nabla_\theta \mathcal L(\theta, m)$
            \State Update $\theta$ using $\nabla_\theta \mathcal L(\theta, m)$
            \State If $r=1$ and $t=k$: store $\hat \theta \gets \theta$
        \EndFor 
        \State For each $\theta_i$ that is among the $\tau$ fraction with smallest magnitude, set \(m_i \gets 0\)
    \EndFor
    \State Output $f(m \odot \hat \theta)$
    \end{algorithmic}
\label{alg-cs-impticket}
\end{algorithm}

Ticket search is the task of identifying sparse subnetworks (winning tickets) that can be trained from scratch to achieve performance comparable to that of the full, overparameterized model. Discovering such subnetworks not only challenges the conventional belief that overparameterization is essential for state-of-the-art performance, but also offers the potential to drastically reduce computational and memory requirements during both training and inference. This reduction in resource costs can enable more efficient deployment on hardware-constrained devices and lower the overall energy footprint of deep learning systems.

To design a ticket search strategy from our sparsification method, we follow Iterative Magnitude Pruning (IMP)~\citep{lth} and operate in rounds. At the start of each round, we reset $\beta$ to 1 to allow further sparsification, and then gradually increase $\beta$ again as the round progresses. Additionally, we reset the soft mask parameters $s$ for weights that have not been suppressed, ensuring that the pruning process remains dynamic. Specifically, we set $s \gets \min(\beta \cdot s, s_{init})$ when starting a new round. Algorithm~\ref{alg-cs-csticket} presents our method for ticket search, which, in contrast to IMP, does \textbf{not} rewind weights between rounds.

In our experiments, we compare Continuous Sparsification method with IMP, the algorithm used in~\cite{lth} to find winning tickets. At each round, IMP removes a fixed percentage of weights with the smallest magnitudes by setting their corresponding mask entries to zero, followed by rewinding the remaining weights to an earlier state (Algorithm~\ref{alg-cs-impticket}). While IMP has been shown successful when performing ticket search, its reliance on heuristic, magnitude-based pruning and the need for multiple rounds of training make it computationally expensive and potentially suboptimal. In contrast, our method integrates sparsification directly into the training process through a deterministic approximation of $\ell_0$ regularization, leading to a potentially more efficient approach for finding winning tickets.

\section{Experimental Setup}
\label{sec-cs-expsetup}

~

\subsection{Datasets}
\label{sec-cs-expsetup-data}

\paragraph{CIFAR-10.}

The CIFAR-10 dataset~\citep{cifar} consists of $32 \times 32$ color images split into 50,000 and 10,000 training and test samples, each belonging to one out of 10 classes. We pre-process the data by applying channel-wise normalization to all images using statistics computed from the training set. We use the standard data augmentation pipeline from~\cite{resnet1}, which includes random crops and horizontal flips.

\paragraph{ImageNet.}

We employ the ILSVRC 2012 subset of ImageNet~\citep{imagenet}, which contains roughly 1.28 million training images and 50,000 validation images belonging to 1,000 different object classes. We use single $224 \times 224$ center-crop images for both training and validation.
We follow~\cite{gross} for pre-processing and data augmentation, which consists of scale augmentation (random crops of different sizes and aspect ratios are rescaled back to the original size with bicubic interpolation), photometric distortions (random changes to brightness, contrast, and saturation), lighting noise, and horizontal flips. Channel-wise normalization is employed using statistics from a random subset of the training data.

\subsection{Models}
\label{sec-cs-expsetup-models}

\paragraph{VGG-16.} For CIFAR-10, we adopt a variant of VGG-16, a deep network characterized by a uniform CNN architecture built from multiple convolutional and pooling layers and followed by fully connected layers. In our implementation, the convolutions are followed by ReLU and batch normalization, which helps stabilize training and improve convergence. The relatively straightforward and homogeneous design of VGG-16 makes it an excellent candidate for analyzing the effects of sparsification, as any removal of parameters can be directly related to changes in performance.

\paragraph{ResNet-20.} Also for CIFAR-10, we employ ResNet-20—a residual network that incorporates skip connections to facilitate the training of deeper architectures. ResNet-20 is constructed from a series of residual blocks, each comprising two convolutions with batch normalization and ReLU activations. This architecture serves as a representative residual model to evaluate how sparsification impacts both parameter reduction and performance in more modern networks.

\paragraph{ResNet-50.} For experiments on ImageNet, we use ResNet-50, a widely adopted benchmark architecture that achieves a strong balance between depth, computational efficiency, and performance. ResNet-50 is built from bottleneck residual blocks -- each block consists of a $1 \times 1$ depth-reducing convolution, a $3 \times 3$ dept-preserving convolution, and a $1 \times 1$ depth-increasing convolution, each followed by batch normalization and ReLU activations. Our ResNet-50 experiments are designed to test the scalability of sparsification methods to large-scale networks and complex datasets.

\paragraph{6-layer CNN for Supermask Search.} In addition to the above models, we include a small 6-layer CNN for supermask search experiments on CIFAR-10. This network is constructed with three blocks, each containing two resolution-preserving $3 \times 3$ convolutional layers followed by $2 \times 2$ max-pooling. The number of channels increases across the blocks (64, 128, and 256 channels), and the convolutional layers are immediately followed by ReLU activations. The network concludes with a series of fully connected layers (256, 256, and 10 neurons). The simplicity of this architecture allows us to isolate the effects of learning binary masks on randomly initialized weights, providing valuable insights into the supermask formulation without the added complexity of deeper networks.

\subsection{Training}
\label{sec-cs-expsetup-training}

For CS, we set hyperparameters $\lambda=10^{-8}$ and $\beta_{final} = 200$, based on analysis in Section~\ref{sec-cs-analysis}, which studies how $\lambda$, $s_{init}$, and $\beta_{final}$ affect the sparsity of produced subnetworks. We observe that $s_{init}$ has a major impact on sparsity levels, while $\lambda$ and $\beta_{final}$ require little to no tuning.

\paragraph{Sparse Networks on CIFAR.} We train VGG-16 and ResNet-20 on CIFAR-10 for 200 epochs, with a initial learning rate of $0.1$ which is decayed by a factor of $10$ at epochs 80 and 120. The subnetwork is produced at epoch 160 and is then fine-tuned for 40 extra epochs with a learning rate of $0.001$. More specifically, at epoch 160 the subnetwork structured is fixed: AMC, MP, GMP and Slim zero-out elements in the binary matrix $m$ for the last time, while CS fixes $m = H(s)$ and $s$ is no longer trained.

\paragraph{Ticket Search on CIFAR.} We follow the setup from \citet{lth}: in each round, we train with SGD, a learning rate of $0.1$, and a momentum of $0.9$, for a total of $85$ epochs, using a batch size of $64$ for VGG and $128$ for ResNet. We decay the learning rate by a factor of $10$ at epochs $56$ and $71$, and utilize a weight decay of $0.0001$.

For CS, we do not apply weight decay to the mask parameters $s$, since they are already suffer $\ell_1$ regularization. Sparsification is performed on all convolutional layers, excluding the two skip-connections of ResNet-20 that have $1 \times 1$ kernels: for IMP, their parameters are not pruned, while for CS their weights do not have an associated learnable mask.

IMP performs global pruning at a per-round rate, removing $20\%$ of the remaining parameters with smallest magnitude. We run IMP for $30$ iterations, yielding $30$ tickets with varying sparsity levels ($80\%, 64\%, \dots$). To produce tickets of differing sparsity with CS, we vary $s_{init}$ across $11$ values from $-0.3$ to $0.3$, performing a run of 5 rounds for each setting. We repeat experiments 3 times, with different random seeds.

\paragraph{ImageNet.} Following \citet{lth2}, we train the network with SGD for 90 epochs, with an initial learning rate of $0.1$ that is decayed by a factor of 10 at epochs $30$ and $60$. We use a batch size of 256 distributed across 4 GPUs and a weight decay of $0.0001$. We run CS for a single round due to the high computational cost of training ResNet-50 on ImageNet, with $s_{init} \in \{0.0, -0.01, -0.02, -0.03, -0.05\}$ yielding 5 subnetworks with varying sparsity levels.

\paragraph{Supermask Search and Ticket Search on 6-layer CNN.} All parameters are trained using Adam and a learning rate of $3 \times 10^{-4}$, excluding the mask parameters $s$ for Stochastic Sparsification, for which we adopted SGD with a learning rate of $100$ following \cite{deconstructing}, and for Iterative Stochastic Sparsification, where $s$ is trained with SGD and a learning rate of $20$.

\subsection{Evaluation}
\label{sec-cs-expsetup-eval}

\paragraph{Test Accuracy and Top-1/Top-5 Accuracy.}

For the CIFAR datasets, we measure and report the accuracy on the 10,000 test samples. On ImageNet, we use the top-1 and top-5 accuracy on the 50,000 validation images. Top-5 accuracy is the fraction of samples for which the true label appears among the top five predicted classes.

\paragraph{Sparsity.}

We compare methods on the tasks of pruning and finding matching subnetworks. We quantify the performance of ticket search by focusing on two specific subnetworks produced by each method:
\begin{itemize}
    \item \textbf{Sparsest matching subnetwork}: the sparsest subnetwork that, when trained in isolation from an early iterate, yields performance no worse than that achieved by the trained dense counterpart.
    \item \textbf{Best performing subnetwork}: the subnetwork that achieves the best performance when trained in isolation from an early iterate, regardless of its sparsity.
\end{itemize}

We also measure the efficiency of each method in terms of total number of epochs to produce subnetworks, \emph{given enough parallel computing resources}. As we will see, CS is particularly suited for parallel execution since it requires relatively few rounds to produce subnetworks regardless of sparsity. On the other hand, CS offers no explicit mechanism to control the sparsity of the found subnetworks, hence producing a subnetwork with a pre-defined sparsity level can require multiple runs with different hyperparameter settings. For this use case, IMP is more efficient by design, since a single run suffices to produce subnetworks with varying, pre-defined sparsity levels.
\section{Results}
\label{sec-cs-results}

\begin{table}[t]
\centering
\begin{tabular}{@{}lrrrrrc@{}}
\toprule
 & \citet{sparsityl0} & AMC & MP & GMP & NetSlim & CS \\ \cmidrule{2-7}
VGG-16 & 18.2\% & 86.0\% & 97.5\% & 98.0\% & 99.0\% & \textbf{99.6\%}  \\
ResNet-20 & 13.6\% & 50.0\% & 80.0\% & 86.0\% & 85.0\% & \textbf{94.4\%} \\ \bottomrule
\end{tabular}
\caption{Sparsity (\%) of the sparsest subnetwork within $2\%$ test accuracy of the original dense model, for different pruning methods on CIFAR.}
\label{tab-cs-sparsecifar}
\end{table}

\begin{figure}[t]
      \centering
      \includegraphics[width=0.8\linewidth]{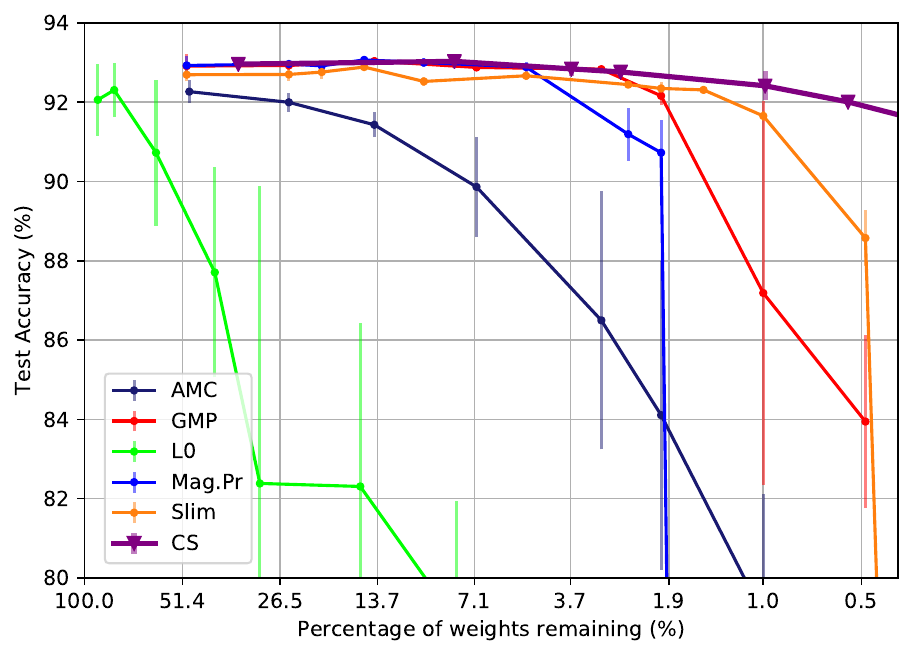}
    \caption{Performance of different methods when performing one-shot pruning on VGG-16, measured in terms of test accuracy and sparsity of produced subnetworks after fine-tuning.}
    \label{fig-cs-sparsevgg}
\end{figure}

\begin{figure}[t]
      \centering
      \includegraphics[width=0.8\linewidth]{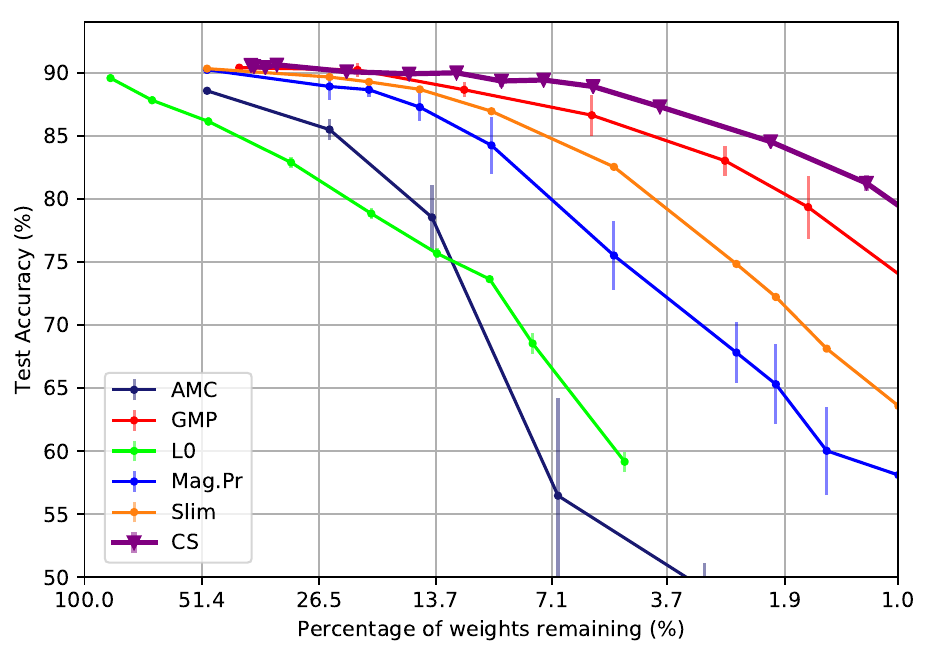}
    \caption{Performance of different methods when performing one-shot pruning on ResNet-20, measured in terms of test accuracy and sparsity of produced subnetworks after fine-tuning.}
    \label{fig-cs-sparseresnet}
\end{figure}

\subsection{Sparsification on CIFAR}
\label{sec-cs-results-sparse}

First, we evaluate CS as a general-purpose sparsification method. We compare it against the prominent pruning methods AMC~\cite{amc}, magnitude pruning (MP)~\cite{magnitudepruning}, GMP~\cite{gmp}, and Network Slimming (Slim)~\cite{slim}, along with the $\ell_0$-based method of \cite{sparsityl0} (referred to as ``$\ell_0$''), which, in contrast to ours, adopts a stochastic approximation for $\ell_0$ regularization.

Adopting the inference behavior suggested in \cite{sparsityl0} for $\ell_0$, \ie using the expected value of the uniform distribution to generate hard concrete samples, leads to poor results, including accuracy akin to random guessing at sparsity above $90\%$; this is also reported by \cite{stateofsparsity}. Instead, at epoch 160, we sample different masks and commit to the one that performs the best -- this strategy results in drastic improvements at high sparsity levels. This suggests that the gap between training and inference behavior introduced by stochastic approaches can be an obstacle. Although our modification improves results for $\ell_0$, the method still performs poorly compared to alternatives.

Moreover, some methods required modifications as they were originally designed to perform structured pruning. For AMC, Slim, and $\ell_0$ we replace a filter-wise mask by one that acts over weights. Since Network Slimming relies on the filter-wise scaling factors of batch norm, we introduce weight-wise scaling factors which are trained jointly with the weights. We observe that applying both $\ell_1$ and $\ell_2$ regularization to the scaling parameters, as done by \cite{slim}, yields inferior performance, which we attribute to over-regularization. A grid search over the penalty of each norm regularizer shows that only applying $\ell_1$ regularization with a strength of $\lambda_1 = 10^{-5}$ for ResNet-20 and $\lambda_1 = 10^{-6}$ for VGG-16 improves results.

Figures~\ref{fig-cs-sparsevgg} and~\ref{fig-cs-sparseresnet} display one-shot pruning results. On VGG, only CS and Slim successfully prune over $98\%$ of the weights without severely degrading the performance of the model, while on ResNet the best results are achieved by CS and GMP.

Table~\ref{tab-cs-sparsecifar} shows the percentage of weights that each method can remove while maintaining a performance within $2\%$ of the original, dense model. CS is capable of removing significantly more parameters than all competing methods on both networks: on ResNet-20, the pruned network found by CS contains $60\%$ less parameters than the one found by GMP, when counting prunable parameters only. CS not only offers significantly superior performance compared to the prior $\ell_0$-based method of \cite{sparsityl0}, but also comfortably outperforms all other methods, providing a new state-of-the-art for network pruning.

\begin{figure}[t]
    \centering
    \includegraphics[width=0.8\linewidth]{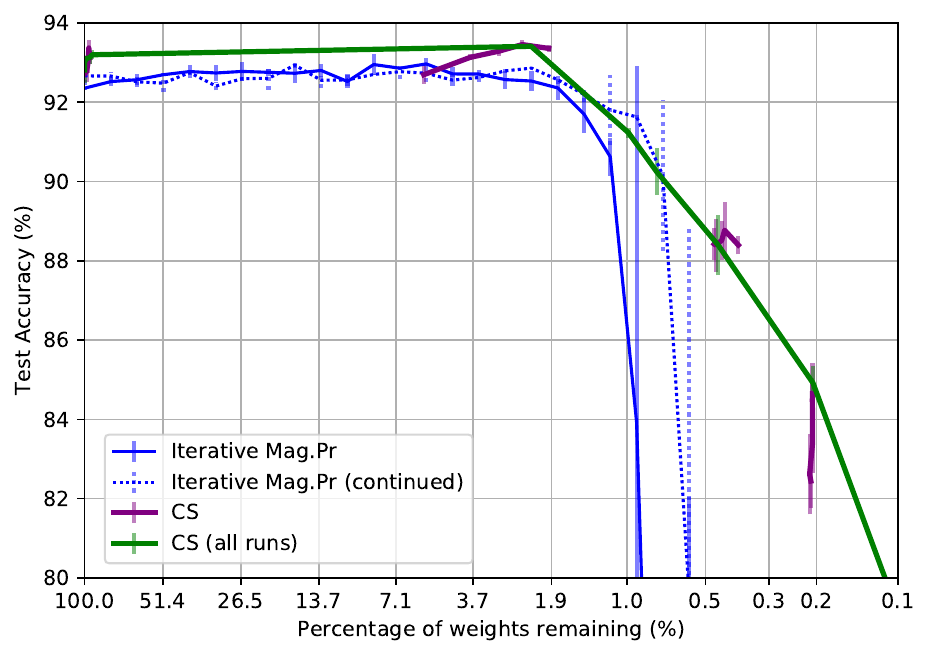}
    \caption{VGG-16 on CIFAR-10. Test accuracy and sparsity of subnetworks produced by IMP and CS after re-training from weights of epoch 2. Purple curves show individual runs of CS, while the green curve connects tickets produced after 5 rounds of CS with varying $s^{(0)}$. Iterative Magnitude Pruning (continued) refers to IMP without rewinding between rounds. Error bars depict variance across 3 runs.}
    \label{fig-cs-ticketvgg}
\end{figure}

\begin{figure}[t]
    \centering
    \includegraphics[width=0.8\linewidth]{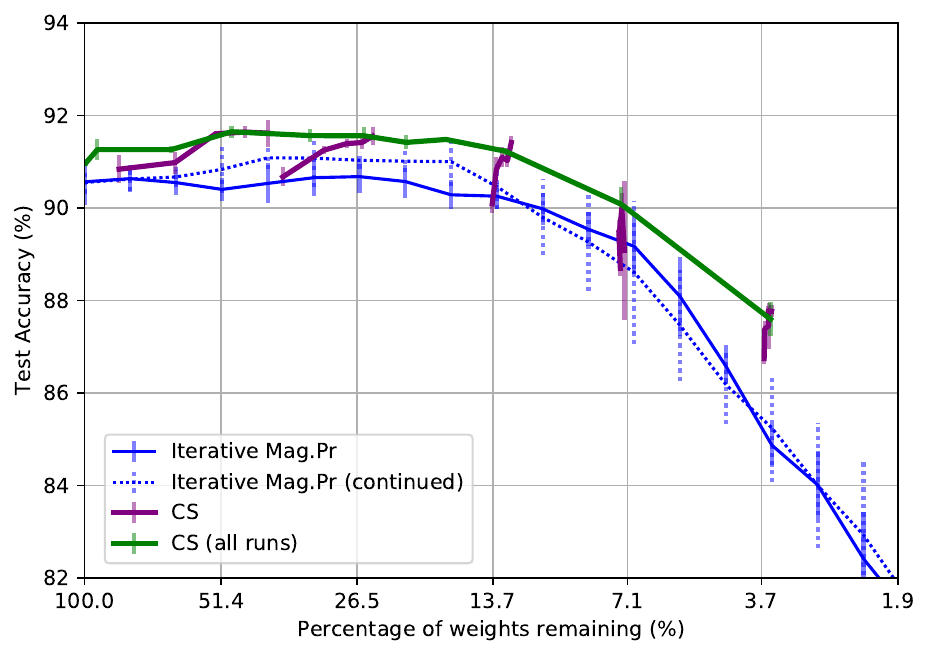}
    \caption{ResNet-20 on CIFAR-10. Test accuracy and sparsity of subnetworks produced by IMP and CS after re-training from weights of epoch 2. Purple curves show individual runs of CS, while the green curve connects tickets produced after 5 rounds of CS with varying $s^{(0)}$. Iterative Magnitude Pruning (continued) refers to IMP without rewinding between rounds. Error bars depict variance across 3 runs.}
    \label{fig-cs-ticketresnet}
\end{figure}

\begin{table}[t]
\centering
\begin{tabular}{@{}clcrrrcrrr@{}}
\toprule
\multicolumn{2}{c}{} && \multicolumn{3}{c}{VGG-16} && \multicolumn{3}{c}{ResNet-20}\\[-2pt]
\cmidrule{4-6} \cmidrule{8-10}
\multicolumn{2}{c}{\multirow{2}{*}{Method}} && \multirow{2}{*}{Round} & \multicolumn{1}{c}{Test}     & Weights   && \multirow{2}{*}{Round} & \multicolumn{1}{c}{Test}     & Weights\\[-1pt]
\multicolumn{2}{c}{}                        &&                        & \multicolumn{1}{c}{Accuracy} & Remaining &&                        & \multicolumn{1}{c}{Accuracy} & Remaining\\
\midrule
\multicolumn{2}{c}{Dense Network} &&  1 & 92.35\%          & 100.0\%         && 1 & 90.55\%          & 100.0\%\\
\\[-9pt]\hdashline\\[-8pt]
\multirow{1}{*}{Sparsest~}   & ~IMP   && 18 & 92.36\%          & 1.8\%           && 7 & 90.57\%          & 20.9\% \\
\multirow{1}{*}{Matching~}   & ~IMP-C && 18 & 92.56\%          & 1.8\%           && 8 & 91.00\%          & 16.7\% \\
\multirow{1}{*}{Subnetwork~}     & ~CS    &&  5 & 93.35\%          & \textbf{1.7\%}  && 5 & 91.43\%          & \textbf{12.3\%}\\
\\[-9pt]\hdashline\\[-8pt]
\multirow{1}{*}{Best~}       & ~IMP   && 13 & 92.97\%          & 5.5\%           && 6 & 90.67\%          & 26.2\% \\
\multirow{1}{*}{Performing~} & ~IMP-C && 12 & 92.77\%          & 6.9\%           && 4 & 91.08\%          & 40.9\% \\
\multirow{1}{*}{Subnetwork~}     & ~CS    &&  4 & \textbf{93.45\%} & 2.4\%           && 5 & \textbf{91.54\%} & 16.9\% \\[-2pt]
\bottomrule
\end{tabular}
\caption{Test accuracy and sparsity of the sparsest matching and best performing subnetworks produced by CS, IMP, and IMP-C (IMP without rewinding) for VGG-16 and ResNet-20 trained on CIFAR-10.}
\label{tab-cs-ticket}
\end{table}

\subsection{Finding Winning Tickets}
\label{sec-cs-results-ticket}


Next, we evaluate how IMP and CS perform on the task of ticket search for VGG-16 \citep{vgg} and ResNet-20\footnote{We used the same network as \citet{lth} and \citet{lth2}, who refer to it as ResNet-18.} \citep{resnet1} trained on the CIFAR-10 dataset, a setting where IMP can take over $10$ rounds (850 epochs given 85 epochs per round~\citep{lth2}) to find sparse subnetworks.  We evaluate produced subnetworks by initializing their weights with the iterates from the end of epoch 2, similarly to \citet{lth2}, followed by re-training.

Figures~\ref{fig-cs-ticketvgg} and~\ref{fig-cs-ticketresnet} show the performance and sparsity of tickets produced by CS and IMP, including IMP without rewinding (continued). Purple curves show individual runs of CS for different values of $s_{init}$, each consisting of 5 rounds, and the green curve shows the performance of subnetworks produced with different hyperparameters. Plots of individual runs are available in Section~\ref{sec-cs-analysis}, but have been omitted here for the sake of clarity. Given a search budget of 5 rounds (\ie $5 \times 85 = 425$ epochs), CS successfully finds subnetworks with diverse sparsity levels. Notably, IMP produces tickets with superior performance when weight rewinding is not employed between rounds.

Table~\ref{tab-cs-ticket} summarizes the performance of each method when evaluated in terms of the sparsest matching and best performing subnetworks. IMP-C denotes IMP without rewinding, \ie IMP (continued) from Figures~\ref{fig-cs-ticketvgg} and~\ref{fig-cs-ticketresnet}. Sparsest matching subnetworks produced by CS are sparser than the ones found by IMP and IMP-C, while also delivering higher accuracy. CS also outperforms IMP and IMP-C when evaluating the best performing produced subnetworks. In particular, CS yields highly sparse subnetworks that outperform the original model by approximately $1\%$ on both VGG-16 and ResNet-20.

If all runs are executed in parallel, producing all tickets presented in Figure~\ref{fig-cs-sparseresnet}  takes CS a total of $5 \times 85=425$ training epochs, while IMP requires $30 \times 85 = 2550$ epochs instead. Note that our re-parameterization results in approximately $10\%$ longer training times on a GPU due to the mask parameters $s$, therefore wall-clock time for CS is $10\%$ higher per epoch. Sequential search takes $5 \times 11 \times 85 = 4675$ epochs for CS to produce all tickets, while IMP requires $30 \times 85 = 2550$ epochs, hence CS is faster given sufficient parallelism, but slower if run sequentially. Section~\ref{sec-cs-analysis} shows preliminary results of a variant of CS designed for sequential search.




\subsection{Residual Networks on ImageNet}

We perform pruning and ticket search for ResNet-50 trained on ImageNet~\cite{imagenet}. Once the round is complete, we evaluate the performance of the produced subnetwork when fine-tuned (pruning) or re-trained from an early iterate (ticket search).


\Needspace{0.45\textheight} 
\noindent
\begin{minipage}[t]{0.53\textwidth}
Table~\ref{tab-cs-imagenet} summarizes the results achieved by CS, IMP, and current state-of-the-art pruning methods GMP~\cite{gmp}, STR~\cite{softweight}, and DNW~\cite{dnw}. A $^\dagger$ superscript denotes results of a re-trained, rather than fine-tuned, subnetwork. Differences in each technique's methodology -- for example, the adopted learning rate schedule and number of epochs -- complicate the comparison.

CS produces subnetworks that, when re-trained, outperform the ones found by IMP by a comfortable margin (compare CS$^\dagger$ and IMP$^\dagger$). Moreover, when evaluated as a pruning method, CS outperforms all competing approaches, especially in the high-sparsity regime. Therefore, our method provides state-of-the-art results whether the network is fine-tuned (pruning) or re-trained (ticket search).
\end{minipage}\hfill
\begin{minipage}[t]{0.4\textwidth}
\renewcommand{\arraystretch}{0.75}
\centering
\captionsetup{type=table}
\captionof{table}{Performance of sparsification on ResNet-50 trained on ImageNet. $^\dagger$ denotes performance of ticket search.}
\label{tab-cs-imagenet}
\begin{tabular}{@{}lrr@{}}
\toprule
 Method & \begin{tabular}[x]{@{}r@{}}Top-1\\Acc.\end{tabular} & Sparsity \\ \midrule
 GMP & 73.9\% & 90.0\%  \\ 
 DNW & 74.0\% & 90.0\%  \\ 
 STR & 74.3\% & 90.2\%  \\ \hdashline
 
 IMP$^\dagger$ & 73.6\% & 90.0\%  \\
 CS$^\dagger$ & \textbf{75.5\%} &  91.8\% \\ \midrule
 
 GMP & 70.6\% & 95.0\%  \\ 
 DNW & 68.3\% & 95.0\%  \\ 
 STR & 70.4\% & 95.0\%  \\ 
 CS & \textbf{72.4\%} &  95.3\% \\\hdashline
 
 IMP$^\dagger$ & 69.2\% & 95.0\%  \\ 
 CS$^\dagger$ & \textbf{71.1\%} &  95.3\% \\ \midrule
 
 STR & 67.2\% & 96.5\%  \\
 CS & \textbf{71.4\%} &  97.1\% \\ \hdashline
 
 CS$^\dagger$ & 69.6\% &  97.1\% \\ \midrule
 
 GMP & 57.9\% & 98.0\%  \\ 
 DNW & 58.2\% & 98.0\%  \\ 
 STR & 61.5\% & 98.5\%  \\
 CS & \textbf{70.0\%} &  98.0\% \\ \hdashline
 
 CS$^\dagger$ & 67.9\% &  98.0\% \\ \midrule
 
 GMP & 44.8\% & 99.0\%  \\ 
 STR & 54.8\% & 98.8\%  \\
 CS & \textbf{66.8\%} &  98.9\% \\ \hdashline
 
 CS$^\dagger$ & 64.9\% &  98.9\% \\ \bottomrule
\end{tabular}
\renewcommand{\arraystretch}{1.0}
\end{minipage}
\section{Experimental Analysis}
\label{sec-cs-analysis}

\subsection{Hyperparameter Analysis}

\paragraph{Continuous Sparsification.}

In this section, we study how the hyperparameters of CS affect its behavior in terms of sparsity and performance of the produced tickets. More specifically, we consider the following hyperparameters:

\begin{itemize}
    \item Final temperature $\beta_{final}$: the final value for $\beta$, which controls how close to the original $\ell_0$-regularized problem the proxy objective $L_\beta(w,s)$ is.
    \item $\ell_1$ penalty $\lambda$: the strength of the $\ell_1$ regularization applied to the soft mask $\sigma(\beta s)$, which promotes sparsity.
    \item Mask initial value $s_{init}$: the value used to initialize all components of the soft mask $m = \sigma(\beta s)$, where smaller values promote sparsity.
\end{itemize}

Our setup is as follows. To analyze how each of the 3 hyperparameters impact the performance of Continuous Sparsification, we train ResNet-20 on CIFAR-10, varying one hyperparameter while keeping the other two fixed. To capture how hyperparameters interact with each other, we repeat the described experiment with different settings for the fixed hyperparameters.

Since different hyperparameter settings naturally yield vastly distinct sparsity and performance for the found tickets, we report relative changes in accuracy and in sparsity.

In Figure~\ref{fig:hyper1}, we vary $\lambda$ between $0$ and $10^{-8}$ for three different $(s_{init}, \beta_{final})$ settings: $(s_{init}=-0.2, \beta_{final}=100)$, $(s_{init}=0.05, \beta_{final}=200)$, and $(s_{init}=-0.3,\beta_{final}=100)$. As we can see, there is little impact on either the performance or the sparsity of the found ticket, except for the case where $s_{init}=0.05$ and $\beta_{final}=200$, for which $\lambda=10^{-8}$ yields slightly increased sparsity.

\begin{figure}[t]
    \centering
     \includegraphics[width=\linewidth]{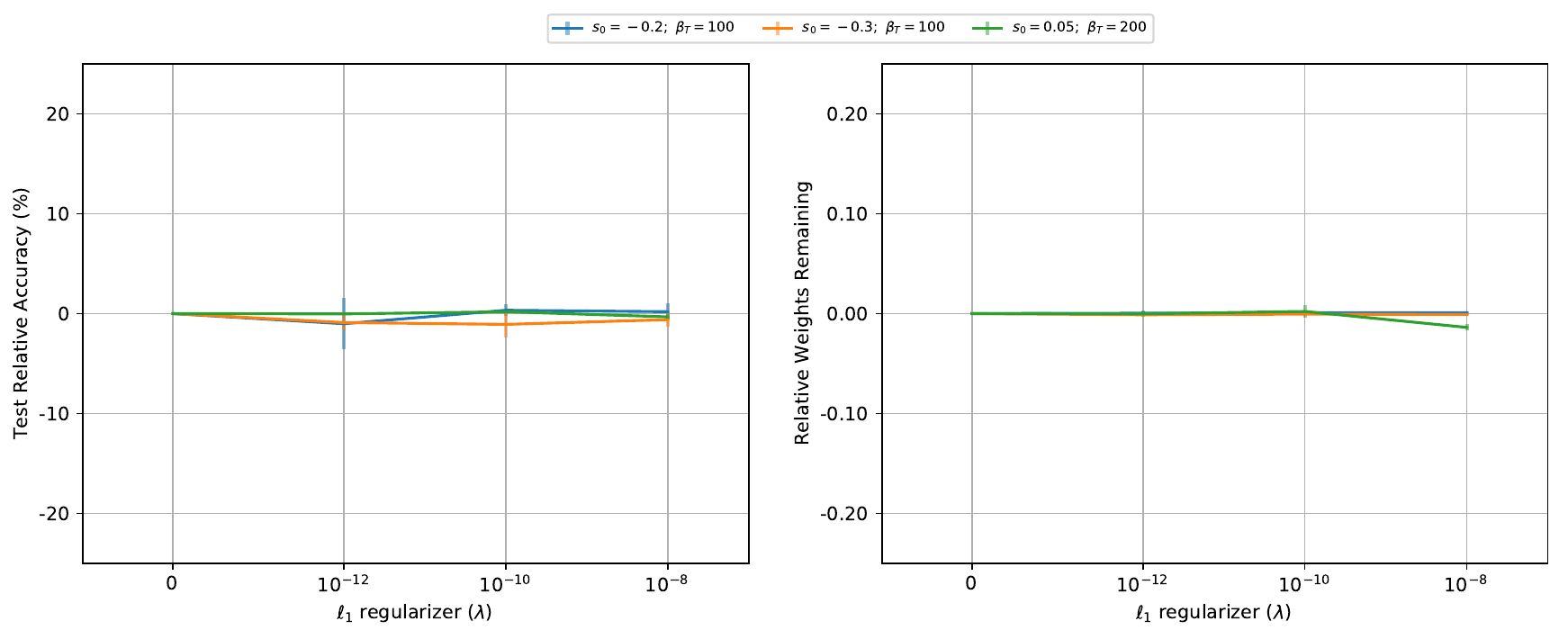}
    \caption{Impact on relative test accuracy and sparsity of tickets found by CS in a ResNet-20 trained on CIFAR-10, for different values of $\lambda$ and fixed settings for $\beta_{final}$ and $s_{init}$.}
    \label{fig:hyper1}
\end{figure}

\begin{figure}[t]
    \centering
     \includegraphics[width=\linewidth]{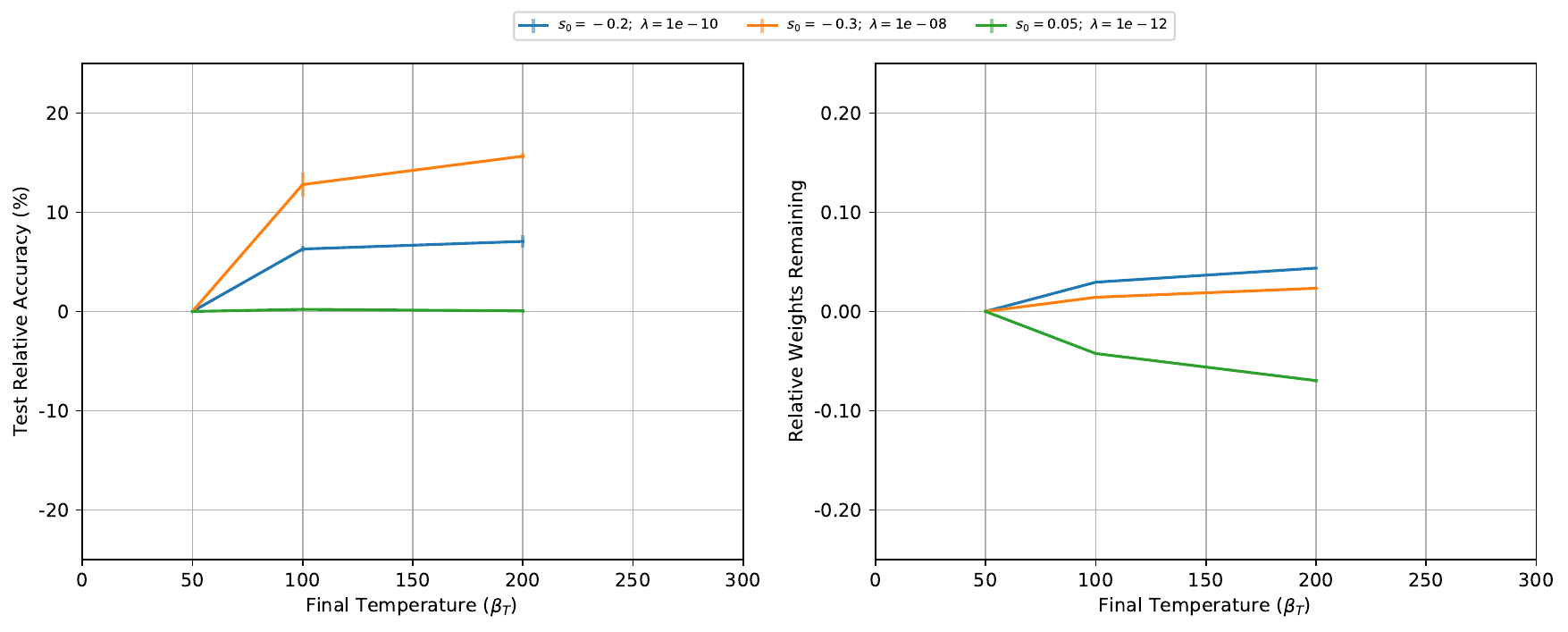}
    \caption{Impact on relative test accuracy and sparsity of tickets found by CS in a ResNet-20 trained on CIFAR-10, for different values of $\beta_{final}$ and fixed settings for $\lambda$ and $s_{init}$.}
    \label{fig:hyper2}
\end{figure}

\begin{figure}[t]
    \centering
     \includegraphics[width=\linewidth]{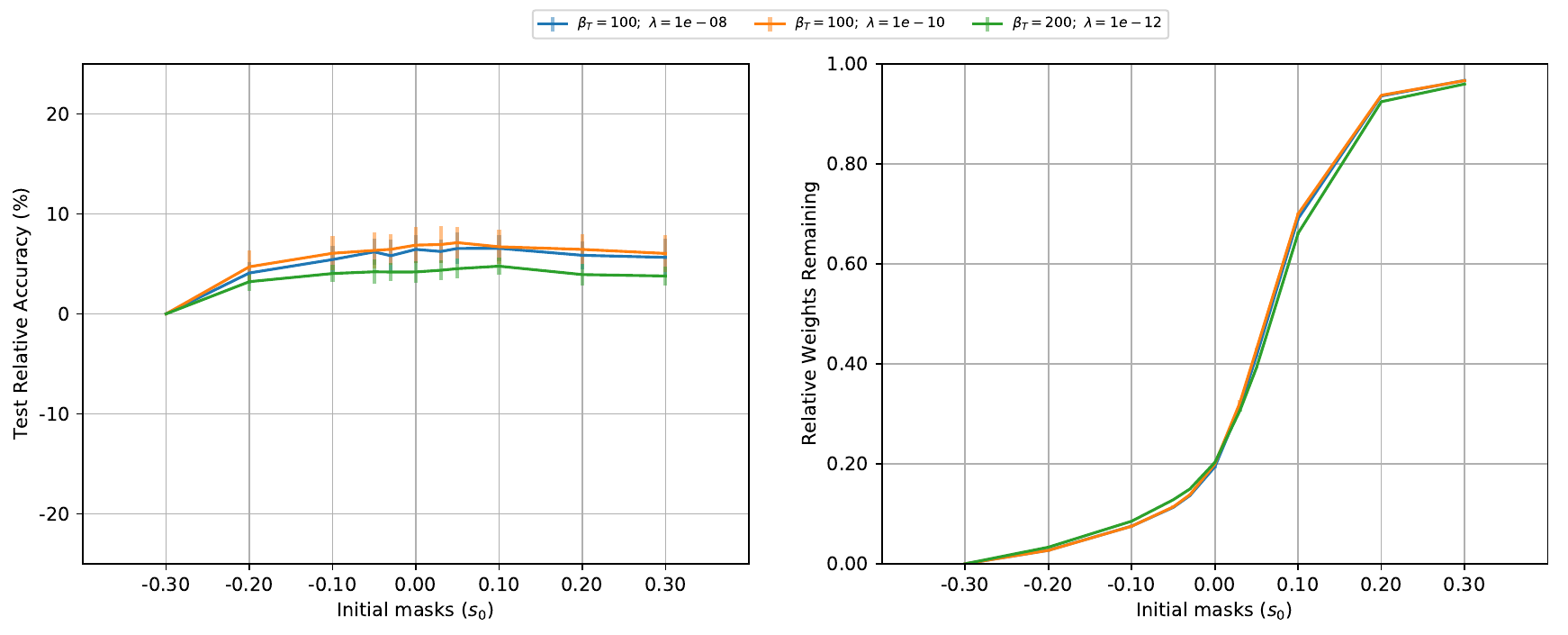}
    \caption{Impact on relative test accuracy and sparsity of tickets found by CS in a ResNet-20 trained on CIFAR-10, for different values of $s_{init}$ and fixed settings for $\beta_{final}$ and $\lambda$.}
    \label{fig:hyper3}
\end{figure}

Next, we consider the fixed settings $(s_{init}=-0.2, \lambda=10^{-10})$, $(s_{init}=0.05, \lambda=10^{-12})$, $(s_{init}=-0.3,\lambda=10^{-8})$, and proceed to vary the final inverse temperature $\beta_{final}$ between 50 and 200. Figure~\ref{fig:hyper2} shows the results: in all cases, a larger $\beta$ of $200$ yields better accuracy. However, it decreases sparsity compared to smaller temperature values for the settings $(s_{init}=-0.2, \lambda=10^{-10})$ and $(s_{init}=-0.3,\lambda=10^{-8})$, while at the same time increasing sparsity for $(s_{init}=0.05, \lambda=10^{-12})$. While larger $\beta$ appear beneficial and might suggest that even higher values should be used, note that, the larger $\beta_{final}$ is, the earlier in training the gradients of $s$ will vanish, at which point training of the mask will stop. Since the performance for temperatures between 100 and 200 does not change significantly, we recommend values around 150 or 200 when either pruning or performing ticket search.

Lastly, we vary the initial mask value $s_{init}$ between $-0.3$ and $+0.3$, with hyperpameter settings $(\beta_{final}=100, \lambda=10^{-10})$, $(\beta_{final}=200, \lambda=10^{-12})$, and $(\beta_{final}=100, \lambda=10^{-8})$. Results are given in Figure~\ref{fig:hyper3}: unlike the exploration on $\lambda$ and $\beta_{final}$, we can see that $s_{init}$ has a strong and consistent effect on the sparsity of the found tickets. For this reason, we suggest proper tuning of $s_{init}$ when the goal is to achieve a specific sparsity value. Since the percentage of remaining weights is monotonically increasing with $s_{init}$, we can employ search strategies over values for $s_{init}$ to achieve pre-defined desired sparsity levels (\eg binary search). In terms of performance, lower values for $s_{init}$ naturally lead to performance degradation, since sparsity quickly increases as $s_{init}$ becomes more negative.

\begin{figure}[t]
    \centering
    \includegraphics[width=0.8\linewidth]{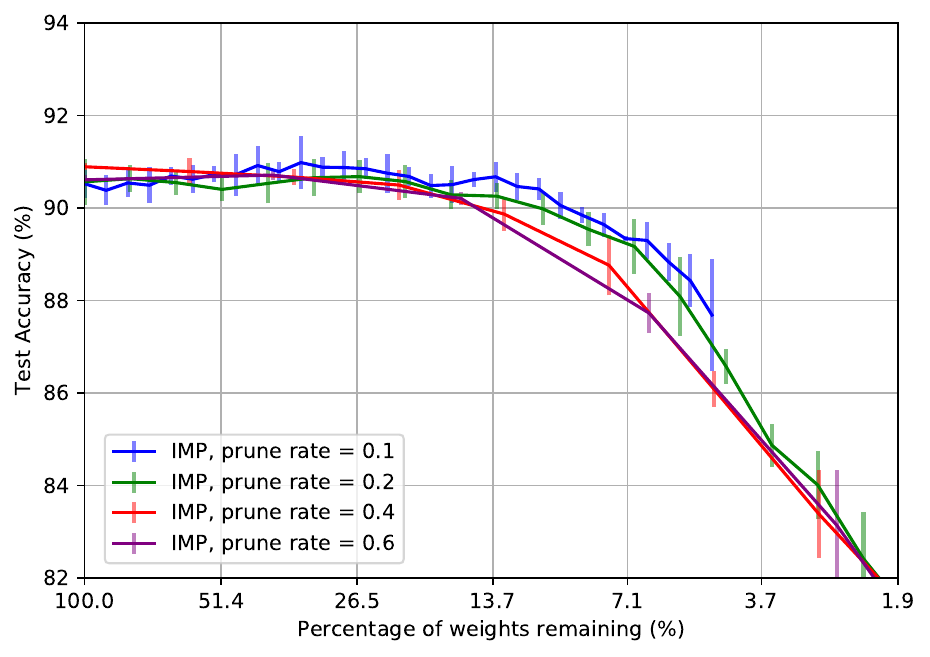}
    \caption{Performance of tickets found by Iterative Magnitude Pruning in a ResNet-20 trained on CIFAR-10, for different pruning rates.}
    \label{fig:hyper4}
\end{figure}

\paragraph{Iterative Magnitude Pruning.}

Here, we assess whether the running time of Iterative Magnitude Pruning can be improved by increasing the amount of parameters pruned at each iteration. The goal of this experiment is to evaluate if better tickets (both in terms of performance and sparsity) can be produced by more aggressive pruning strategies.

Following the same setup as the previous section, we train ResNet-20 on CIFAR-10. We run IMP for 30 iterations, performing global pruning with different pruning rates at the end of each iteration. Figure~\ref{fig:hyper4} shows that the performance of tickets found by IMP decays when the pruning rate is increased to $40\%$. In particular, the final performance of found tickets is mostly monotonically decreasing with the number of remaining parameters, suggesting that, in order to find tickets which outperform the original network, IMP is not compatible with more aggressive pruning rates.

\subsection{Iterative Stochastic Sparsification}

\begin{algorithm}[t]
    \caption{Iterative Stochastic Sparsification (inspired by \citet{deconstructing})}
    \textbf{Input:} Architecture $f$, $s_{init} \in \R$, $\lambda \geq 0$,  $T \in \N$, $R \in \N$, $k \in \N$
    \label{alg-cs-isp}
    \begin{algorithmic}[1]
    \State Initialize $\theta \sim \dist_\theta$, $s \gets \vec s_{init}$
    \For{$r = 1$ to $R$}
        \State If $r>1$: set $s \gets -\infty$ for components where $s < s_{init}$
        \For{$t = 1$ to $T$}
            \State Estimate $\mathcal L(\theta, s) = \expec{m \sim \text{Ber}(\sigma(s))}{\mathcal L(f(m \odot \theta))} + \lambda \|\sigma(\beta s)\|_1$ and $\nabla_{\theta, s} \mathcal L(\theta,s)$
            \State Update $\theta$ and $s$ using $\nabla_\theta \mathcal L(\theta,s)$ and $\nabla_s \mathcal L(\theta,s)$
            \State If $r=1$ and $t=k$: store $\hat \theta \gets \theta$
        \EndFor
    \EndFor
    \State Sample $m \sim \text{Ber}(\sigma(s))$
    \State Output $f(m \odot \hat \theta)$
    \end{algorithmic}
\end{algorithm}

Besides comparing our proposed method to Iterative Magnitude Pruning (Algorithm~\ref{alg-cs-impticket}), we also design a baseline method, Iterative Stochastic Sparsification (ISS, Algorithm~\ref{alg-cs-isp}), motivated by the procedure in~\cite{deconstructing} to find a binary mask $m$ with gradient descent in an end-to-end fashion. More specifically, ISS uses a stochastic re-parameterization $m \sim \text{Bernoulli}(\sigma(s))$ with $s \in \R^d$, and trains $w$ and $s$ jointly with gradient descent and the straight-through estimator~\citep{straightthrough}. Note that the method is also similar to the one proposed by~\cite{l0bernoulli} to prune networks. The goal of this baseline and comparisons is to evaluate whether the deterministic nature of CS's re-parameterization is advantageous when performing sparsification through optimization methods.

When run for multiple iterations, all components of the mask parameters $s$ which have decreased in value from initialization are set to $-\infty$, such that the corresponding weight is permanently removed from the network. While this might look arbitrary, we observe empirically that ISS was unable to remove weights quickly without this step unless $\lambda$ is chosen to be large -- in which case the model's performance degrades due to increased sparsity.

We also observe that the mask parameters $s$ require different settings in terms of optimization to be successfully trained. In particular, \cite{deconstructing} use SGD with a learning rate of 100 when training $s$, which is orders of magnitude larger than the one used when training CNNs. Our observations are similar, in that typical learning rates on the order of 0.1 cause $s$ to be barely updated during training, which is likely a side-effect of using gradient estimators to obtain update directions for $s$. The following sections present experiments that compare IMP, CS and ISS on ticket search tasks.

\subsection{Searching for Supermasks}

\begin{figure}[t]
    \centering
      \includegraphics[width=0.8\linewidth]{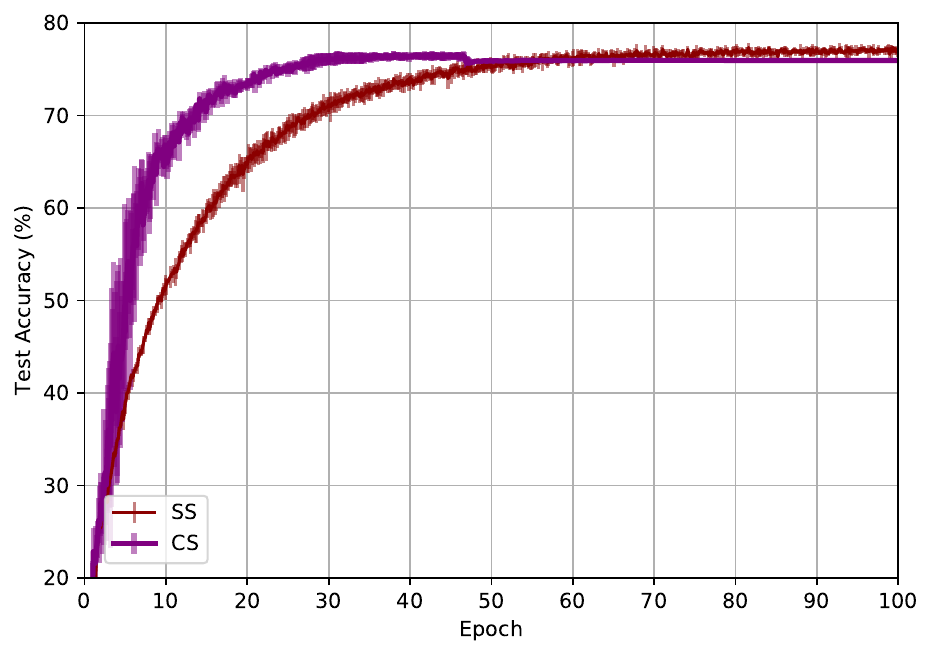}
    \caption{Learning a binary mask with weights frozen at initialization with Stochastic Sparsification (SS, Algorithm \ref{alg-cs-isp} with one iteration) and Continuous Sparsification (CS), on a 6-layer CNN on CIFAR-10. Training curves with hyperparameters for which masks learned by SS and CS were both approximately $50\%$ sparse. CS learns the mask significantly faster while attaining similar early-stop performance.}
    \label{fig:supermaskconv6}
\end{figure}

\begin{figure}[t]
    \centering
      \includegraphics[width=0.8\linewidth]{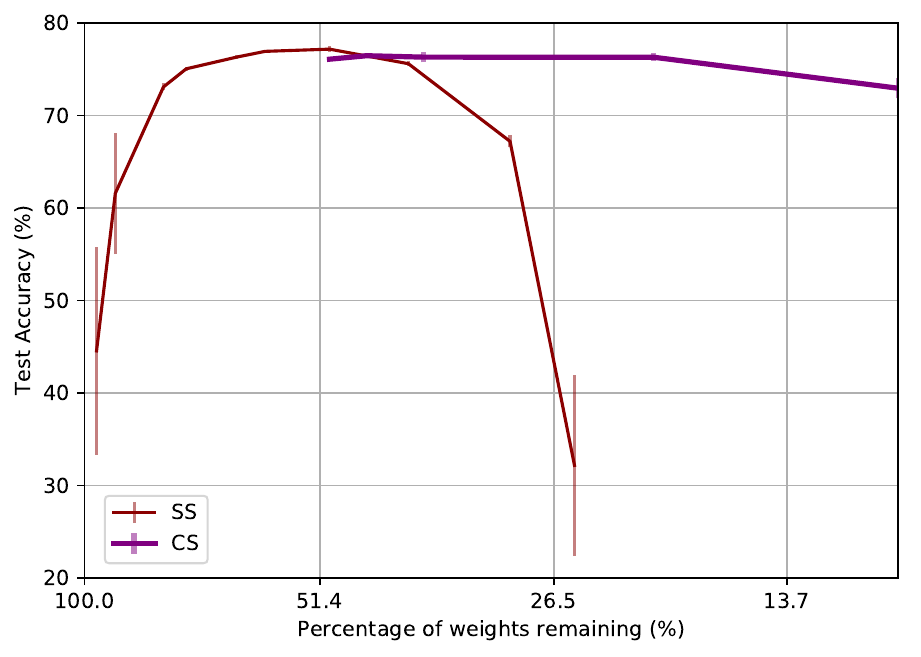}
    \caption{Learning a binary mask with weights frozen at initialization with Stochastic Sparsification (SS, Algorithm \ref{alg-cs-isp} with one iteration) and Continuous Sparsification (CS), on a 6-layer CNN on CIFAR-10. Sparsity and test accuracy of masks learned with different settings for SS and {CS}: our method learns sparser masks while maintaining test performance, while SS is unable to successfully learn masks with over $50\%$ sparsity.}
    \label{fig:supermaskconv6_2}
\end{figure}

We train a neural network with 6 convolutional layers on the CIFAR-10 dataset~\citep{cifar}, following~\cite{lth}. As a first baseline, we consider the task of learning a ``supermask'' \citep{deconstructing}: a binary mask $m$ that aims to maximize the performance of a network with randomly initialized weights once the mask is applied. This task is equivalent to pruning a randomly-initialized network since weights are neither updated during the search for the supermask, nor for the comparison between different methods.

We only compare ISS and CS for this specific experiment: the reason not to consider IMP is that, since the network weights are kept at their initialization values, IMP amounts to removing the weights whose initialization were the smallest. 

Hence, we compare ISS and CS, where each method is run for a single round composed of $100$ epochs. In this case, where it is run for a single round, ISS is equivalent to the algorithm proposed in \cite{deconstructing} to learn a supermask, referred here as simply Stochastic Sparsification (SS). We control the sparsity of the learned masks by varying $s_{init}$ and $\lambda$.

Figure~\ref{fig:supermaskconv6} presents results: CS is capable of finding high performing sparse supermasks (\ie $25\%$ or less remaining weights while yielding $75\%$ test accuracy), while SS fails at finding competitive supermasks for sparsity levels above $50\%$. Moreover, CS makes faster progress in training, suggesting that not relying on gradient estimators indeed results in better optimization and faster progress when measured in epochs or parameter updates.

\begin{figure}[t]
    \centering
    \includegraphics[width=0.8\linewidth]{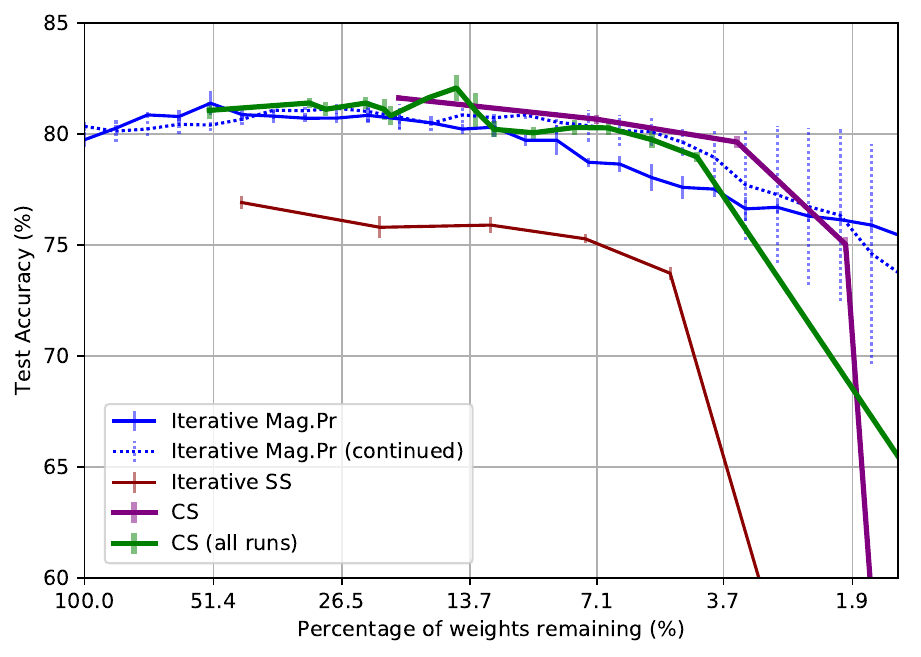}
    \caption{Accuracy and sparsity of tickets produced by IMP, ISS and CS after re-training, starting from initialization. Tickets are extracted from a Conv-6 network trained on CIFAR-10. Purple curves show individual runs of CS, while green curve connects tickets produced after 4 rounds of CS with varying $s_{init}$. Blue and red curves show performance and sparsity of tickets produced by IMP and ISS, respectively. Error bars depict variance across 3 runs.}
    \label{fig:conv6}
\end{figure}

\subsection{Additional Ticket Search Experiments}
\label{sec-cs-analysis-addticket}

\paragraph{6-layer CNN.}

In what follows we compare IMP, ISS and CS in the task of finding winning tickets on the Conv-6 architecture used in the supermask experiments in the previous section. The goal of these experiments is to assess how our deterministic re-parameterization compares to the common stochastic approximations to $\ell_0$-regularization~\citep{l0bernoulli, sparsityl0, deconstructing}. Therefore, we run CS \textbf{with weight rewinding} between rounds, so that we remove any advantages that might be caused by not performing weight rewinding -- in this case, we better isolate the effects caused by our re-parameterization. Following~\cite{lth}, we re-train the produced tickets from their values at initialization (\ie $k=0$ on each algorithm).

We run IMP and ISS for a total of 30 rounds, each consisting of 40 epochs. For IMP, we use pruning rates of $15\%/20\%$ for convolutional/dense layers. We initialize the Bernoulli parameters of ISS with $s_{init} = \vec 1$ and adopt $\ell_1$ regularization of $\lambda = 10^{-8}$. Each run of CS is limited to $4$ rounds, and we perform a total of 16 runs, each with a different value for the mask initialization $s_{init}$, from $-0.2$ up to $0.1$. Runs are repeated with 3 different random seeds so that error bars can be computed.

Figure~\ref{fig:conv6} presents tickets produced by each method, measured by their sparsity and test accuracy when trained from scratch. Even when performing weight rewinding, CS produces tickets that are significantly superior than the ones found by ISS, both in terms of sparsity and test accuracy, showing that our deterministic re-parameterization is fundamental to finding winning tickets.

\paragraph{Learned Sparsity Structure}

\begin{figure}[t]
    \centering
     \includegraphics[width=0.8\linewidth]{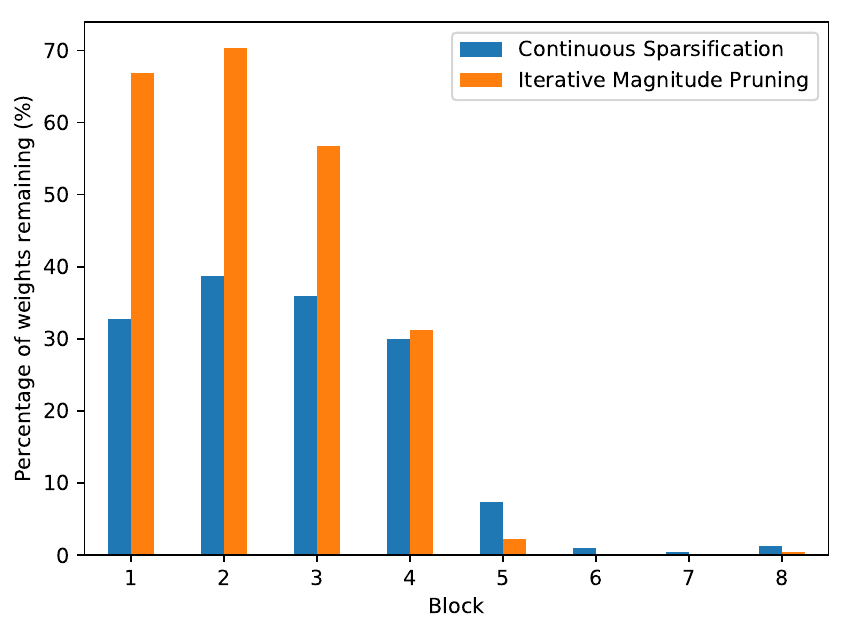}
    \caption{Sparsity patterns learned by CS and IMP for VGG-16 trained on CIFAR-10 -- each block consists of 2 non-overlapping consecutive layers of VGG.}
    \label{fig:learnedsparsities}
\end{figure}

To see how CS differs from magnitude pruning in terms of which layers are more heavily pruned by each method, we force the two to prune VGG to the same sparsity level in a single round. We first run CS with $s_{init}=0$, yielding $94.19\%$ sparsity, and then run IMP with global pruning rate of $94.19\%$, producing a sub-network with the same number of parameters.

Figure~\ref{fig:learnedsparsities} shows the final sparsity of blocks consisting of two consecutive convolutional layers (8 blocks total since VGG has 16 convolutional layers). CS applies a pruning rate that is roughly twice as aggressive as IMP to the first blocks. Both methods heavily sparsify the widest layers of VGG (blocks 5 to 8), while still achieving over $91\%$ test accuracy. More heavily pruning earlier layers in CNNs can offer inference speed benefits: due to the increased spatial size of earlier layers' inputs, each weight is used more times and has a larger contribution in terms of FLOPs.

\subsection{Sequential Search with Continuous Sparsification}

\begin{figure}[t]
    \centering
     \includegraphics[width=0.8\linewidth]{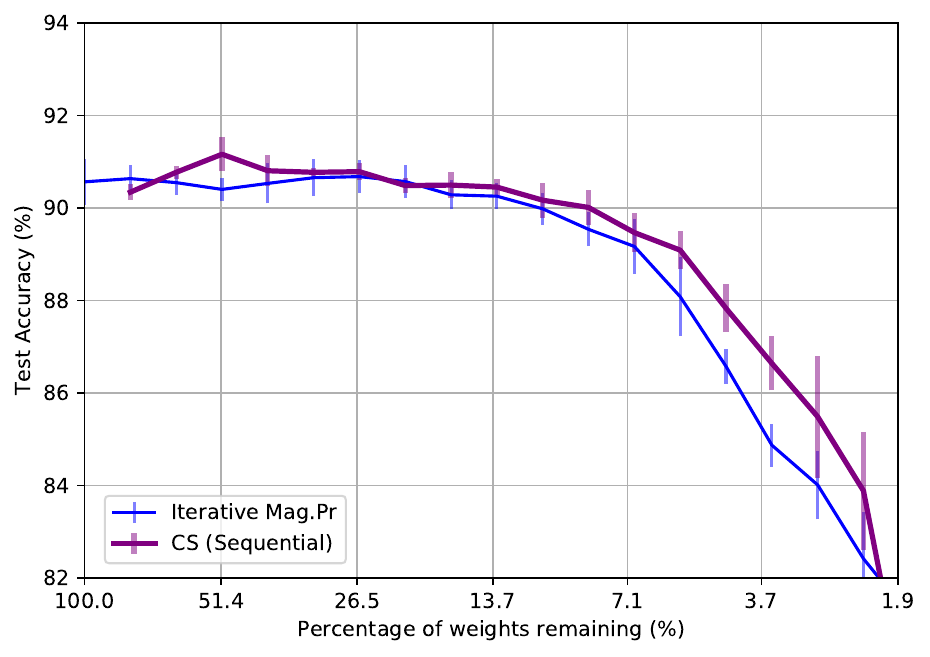}
    \caption{Accuracy and sparsity of tickets produced by IMP and Sequential CS after re-training, starting from weights of epoch 2. Tickets are extracted from a ResNet-20 trained on CIFAR-10.}
    \label{fig:sequential}
\end{figure}

There might be cases where the goal is either to find a ticket with a specific sparsity value or to produce a set of tickets with varying sparsity levels in a single run -- tasks that can be naturally performed with a single run of Iterative Magnitude Pruning. However, CS has no explicit mechanism to control the sparsity of the produced tickets, and, as shown in Sections~\ref{sec-cs-results-ticket} and ~\ref{sec-cs-analysis-addticket}, CS quickly sparsifies the network in the first few rounds and then roughly maintains the number of parameters during the following rounds until the end of the run. In this scenario, IMP has a clear advantage, as a single run suffices to produce tickets with varying, pre-defined sparsity levels.

Here, we present a sequential variant of CS, named Sequential Continuous Sparsification, that removes a fixed fraction of the weights at each round, hence being better suited for the task described above. Unlike IMP, this sequential form of CS removes the weights with lowest mask values $s$ -- note the difference from CS, which, given a large enough temperature $\beta$, removes \emph{all} weights whose corresponding mask parameters are negative.

Following the same experimental protocol from Section~\ref{sec-cs-results-ticket}, we again perform ticket search on ResNet-20 trained on CIFAR-10.  We run Sequential CS and Iterative Magnitude Pruning for a total of 30 rounds each, and with a pruning rate of $20\%$ per round. Note that unlike the experiments with CS (the non-sequential form), we perform a single run with $s_{init} = 0$, \ie no hyperparameters are used to control the sparsity of the produced tickets.

Figure~\ref{fig:sequential} shows the performance of tickets produced by Sequential CS and IMP, indicating that CS might be a competitive method in the sequential search setting. Note that the performance of the tickets produced by Sequential CS is considerably inferior to those found by CS (refer to Section \ref{sec-cs-results-ticket}, Figures \ref{fig-cs-ticketvgg} and \ref{fig-cs-ticketresnet}). Although these results are promising, additional experiments would be required to more thoroughly evaluate the potential of Sequential CS and its comparison to Iterative Magnitude Pruning.
\section{Discussion}
\label{sec-cs-discussion}

In this chapter, we have introduced a novel sparsification method -- Continuous Sparsification -- that approximates the intractable $\ell_0$ regularization objective in a fully deterministic manner. Unlike heuristic-based pruning approaches and stochastic $\ell_0$ approximations, our method leverages a smooth continuation to gradually remove weights from a network. This approach not only integrates sparsification directly into the training process but also offers a formal trade-off between model performance and compression.

\subsection{Contributions}
\label{sec-cs-discussion-contribs}

This chapter makes several key contributions:
\begin{itemize}
    \item We propose a new approximation for the intractable network sparsification problem. Unlike previous works, we design a fully deterministic approximation for $\ell_0$ regularization, which allows for gradient-based optimization without gradient estimators or additional variance.
    \item We present empirical evidence that our method is capable of achieving high sparsity levels while maintaining, and in some cases even improving, the predictive performance of dense models trained on CIFAR-10 and ImageNet. 
    \item Extending our sparsification method to perform ticket search, we successfully discover winning tickets on ResNets trained on CIFAR-10 and on ImageNet at a fraction of the time taken by Iterative Magnitude Pruning. 
\end{itemize}

\subsection{Research Directions}
\label{sec-cs-discussion-directions}

One potential research direction is integrating Continuous Sparsification with architecture search methods. Combining our sparsification framework with NAS or adaptive network design could enable simultaneous optimization of both network topology and sparsity, leading to even more efficient models tailored for specific tasks or hardware constraints. Finally, further theoretical analysis of our deterministic approximation and its convergence properties may provide deeper insights into the trade-offs between sparsity and performance. Such analysis could help direct modifications to Continuous Sparsification to guarantee near-optimal sparsity configurations.

\subsection{Impact}
\label{sec-cs-discussion-impact}

After being proposed, Continuous Sparsification has influenced subsequent works in neural network efficiency. In particular, \cite{xin1} designs an algorithm that dynamically grows and shrinks networks during training, where shrinkage is achieved extending CS to structured sparsity regimes. By starting from a smaller network and continuously growing it during training, the proposed method offers training time reductions at small or no final performance degradation. Furthermore, \cite{csq} propose Continuous Sparsification Quantization, which integrates CS into a bi-level optimization framework alongside mixed-precision quantization. CSQ enhances computational efficiency on resource-constrained settings where aggressive quantization is typically required alongside sparsification. These works punctuate the adaptability of Continuous Sparsification, showing that it serves as a foundational method for network compression that can be applied not only to network sparsification, but also to growing and quantization.

\newpage
\chapter{Quantization}

\section{Introduction}
\label{sec-qt-intro}

\subsection{Motivation \& Strategy}
\label{sec-qt-intro-motivation}

In Chapter 4, we improved model efficiency by gradually removing redundant or unimportant parameters during training: a process known as pruning or sparsification. In this chapter, we take a different approach: rather than eliminating parameters, we optimize the numerical precision at which each parameter is represented. Our goal is to minimize the total number of bits required to represent the network's parameters while preserving its performance.

Traditional quantization-aware optimization fixes the precision beforehand. In contrast, we address a more general problem in which the precision is not predetermined but is instead treated as an optimization variable alongside each parameter's quantized value. By doing so, we aim to achieve a heterogeneous precision assignment that can further reduce inference time and memory costs. Like NAS and network sparsification, this quantization problem is inherently combinatorial and intractable in its exact form. Previous approaches to approximate network quantization are reviewed in Section \ref{sec-qt-intro-related}.

Section \ref{sec-qt-method-formal} formally states the mixed precision quantization problem and discusses its core computational challenges. Then, Section \ref{sec-qt-method-reframing} re-frames the problem as one with infinite constraints, which can be readily approximated via stochastic methods. Section \ref{sec-qt-method-approx} presents our approximation strategy -- the core component of our final mixed precision method. The remaining sections outline our experimental setup, report empirical results assessing our proposed method, and offer a deeper experimental analysis of our approach, including a discussion of its contributions and impacts.

\subsection{Related Work}
\label{sec-qt-intro-related}

Most approaches in the quantization literature assume that a predefined precision is given prior to training \citep{DoReFa, LQNets, DSQ, SLB, QuantizedNets, LogQuant, PACT, ABCNets, DeepCompress, LSQ}, designating the number of bits used to represent each parameter of the network. The focus of these works is typically to circumvent the obstacle posed by the non-differentiability of the quantization function, as this makes first-order methods inapt to train quantized models due to the lack of meaningful gradients.

DoReFa \citep{DoReFa} updates real-valued weights using gradients w.r.t. their quantized forms (\ie a straight-through estimator \citep{STE}), enabling training over quantized parameter values with gradient descent. LQ-Nets \citep{LQNets}, PACT \citep{PACT}, and LSQ \citep{LSQ} introduce further flexibility to quantization by also optimizing the quantization step size, which we refer by \textit{quantization scale} throughout our work, \ie the real value that each bit corresponds to is also learned.

DSQ \citep{DSQ} proposes a smooth approximation to the quantization mapping, which allows weights to be directly trained with SGD without straight-through estimators. SLB \citep{SLB} introduces per-parameter auxiliary variables that induce distributions over values that quantized weights can take -- these variables are trained with gradient descent and a final quantized model is created by approximating each weight's distribution by a point-mass.

More recently, some works on quantization have explored ways to assign different precisions to parameter groups in a neural network \citep{DNAS,HAQ,HAWQ,BSQ} -- such methods, however, remain a minority in the quantization literature. DNAS \citep{DNAS} uses neural architecture search to map candidate precisions to different parameter groups. HAQ \citep{HAQ} uses reinforcement learning to learn a quantization policy. HAWQ \citep{HAWQ} uses second-order information to allocate precisions to different parameter groups. 

Closest to our work is BSQ \citep{BSQ}, which introduces new variables that are trained with gradient-based methods, which are then used to allocate precisions throughout the network. It operates by first mapping each real-valued weight to a bitstring variable whose length is chosen a-priori.
These variables are then iteratively trained with gradient descent and pruned based on their magnitudes, resulting in a decrease in precision whenever a component (a bit) of the bitstring is removed. Finally, the bitstrings are mapped back to weights, and the quantized model undergoes fine-tuning by adopting straight-through estimators for the quantized weights.

Lastly, \citet{QuantNoise} improve the performance of quantized networks by randomly selecting a subset of weights to be quantized during training while keeping the remaining parameters in their original format. While our approach also uses randomness to improve quantization in some fashion, the form of noise samples, along with how and why they are used, are vastly different.



\section{Method}
\label{sec-qt-method}

\subsection{Formalizing the Quantization Problem}
\label{sec-qt-method-formal}

The quantization problem we address involves finding an optimal low-precision representation for the parameters of a neural network. In our formulation, each weight is assigned a precision (number of bits) so that the overall model uses as few bits as possible while preserving its predictive performance. Unlike traditional quantization methods that fix the precision in advance, our approach treats precision as an optimization variable that can vary across individual parameters.


As in the previous chapters, we assume that the dataset $D$ is chosen a-priori and remains fixed, containing samples (and, when applicable, labels). Consequently, our formulation of the quantization problem -- and the design of our proposed heterogeneous quantization method -- does not depend on the nature of the underlying task.

We start by letting $f: \theta \mapsto f(\theta)$ be a neural architecture parameterized by $\Theta \subseteq \mathbb R^d$: for any $\theta \in \Theta$, $f(\theta)$ is a network that maps inputs from a pre-defined dataset $D$ to outputs aligned with the underlying task. We directly define a loss function $\mathcal L : f(\theta) \mapsto \mathcal L(f(\theta)) \in \R$ that measures the loss of $f(\theta)$ on $D$.

Consider the problem of finding a parameter setting $\theta \in \Theta$ where each component $\theta_i$, $i \in [d]$, is represented by exactly $p_i \in \N$ many (signed) bits given a bit-value map $(v_1, v_2, \dots)$ that assigns a scalar $v_j \in \mathbb R$ to each position $j \in \mathbb N$ in a bit string. In other words, we have 
\[\theta_i = \sum_{j=1}^{p_i} b_{i,j} \cdot v_j \,,\]
where $b_{i,j} \in \{\pm 1\}$ is the $j$'th bit used to represent $\theta_i$ and $(v_j)_j$ is the bit-value map.

Let 
\begin{equation}
    \mathbb V = \Big\{ \sum_{j=1}^{\infty} b_j \cdot v_j ~|~ b_j \in \{ \pm 1 \} \Big\}
\end{equation}
be the set of all possible values that each component of $\theta$ can take. We will assume w.l.o.g. that the bit-value map is (countably) infinite and that $\mathbb V^d \subseteq \Theta$.

Next, let $V$ be the function that maps bit strings $b \in \{\pm 1\}^{d \times \infty}$ and precision values $p \in \mathbb N^d$ to parameter values $\theta \in \mathbb V^d$ element-wise:
\begin{equation}
    \label{def-qt-vfunc}
    V(b, p)_i = V(b_i,p_i) = \sum_{j=1}^{p_i} b_{i,j} \cdot v_j \,.
\end{equation}

We can then formalize the mixed precision problem as finding a set of bit precisions $p \in \mathbb N^d$ and bit strings $b \in \{\pm 1\}^{d \times \infty}$ such that the total precision $\sum_{i=1}^d p_i$ is minimized and the parameter setting $\theta = V(b,p) \in \mathbb V^d$ achieves some target suboptimality $\delta$:
\begin{definition} (Mixed Precision Problem)
    \label{def-qt-formal}
    Let $f$ be a neural architecture with weight space $\Theta \subseteq \R^d$. For any $\theta \in \Theta$, $\mathcal L(f(\theta)) \in \R$ denotes the loss incurred by $f(\theta)$ on the fixed dataset $D$. Then, the \emph{mixed precision problem} is:
    \begin{equation}
        \min_{\substack{p \in \mathbb N^d \\ b \in \{\pm 1\}^{d \times \infty}}} \quad \sum_{i=1}^d p_i \quad\quad \text{s.t.} \quad \mathcal L(f(V(b, p))) \leq \mathcal L^* + \delta \,,
    \label{eq-originalmixedproblem}
    \end{equation}
    where $V: \{\pm 1\}^{d \times \infty} \times \mathbb N^d \to \mathbb V^d \subseteq \Theta$ is defined in Equation \eqref{def-qt-vfunc}, $\mathcal L^*$ is the smallest achievable loss by the architecture $f$, and $\delta \geq 0$ is the suboptimality tolerance.
\end{definition}

This problem poses a few obstacles: first, it involves a search over multiple binary values for each of the $d$ parameters of the network; second, and most importantly, the derivative w.r.t.~each integer precision $p_i$ is undefined, making gradient-based methods inapplicable.

An alternative approach is to directly optimize $\theta \in \mathbb V^d$ while adopting its finite-precision representation to evaluate the loss objective:
\begin{definition} (Mixed Quantization Problem)
    \label{def-qt-formal2}
    Let $f$ be a neural architecture with weight space $\Theta \subseteq \R^d$. For any $\theta \in \Theta$, $\mathcal L(f(\theta)) \in \R$ denotes the loss incurred by $f(\theta)$ on the fixed dataset $D$. Then, the \emph{mixed quantization problem} is:
    \begin{equation}
        \min_{\substack{p \in \mathbb N^d \\ \theta \in \mathbb V^d}} \quad \sum_{i=1}^d p_i \quad\quad \text{s.t.} \quad \mathcal L(f(Q(\theta, p))) \leq \mathcal L^* + \delta \,,
    \label{eq-quantizedmixedproblem}
    \end{equation}
    where $Q: \mathbb V^d \times \N^d \to \mathbb V^d$ maps $\theta, p$ to a finite-precision representation of $\theta$ that takes exactly $p$ many bits, $\mathcal L^*$ is the smallest achievable loss by the architecture $f$, and $\delta \geq 0$ is the suboptimality tolerance. 
\end{definition}

 To define $Q$ precisely we will first introduce a notion of inverse for $V$ (a `notion' since $V$ is not generally bijective). For any $\theta_i \in \mathbb V$, we let $V^{-1}(\theta_i)$ denote a setting $(b_i, p_i)$, with $p_i \in \mathbb N$ and $b_i \in \{\pm 1\}^\infty$ such that $V(b_i, p_i) = \theta_i$ and, moreover, for any $(b_i', p_i')$ such that $V(b_i', p_i') = \theta_i$, it follows that $p_i' \geq p_i$. That is, $V^{-1}$ maps each element $\theta_i \in \mathbb V$ to a precision-bitstring pair that perfectly represents $\theta_i$ using the least number of bits. We also define $V^{-1}_p(\theta_i) = p_i$ and $V^{-1}_b(\theta_i) = b_i$ where, as before, $V^{-1}(\theta_i) = (b_i, p_i)$.

Then, we can define $Q$ by 
\begin{equation}
    Q(\theta, p)_i = Q(\theta_i, p_i) = \sum_{j=1}^{p_i} V^{-1}_b(\theta_i)_j \cdot v_j \,,
\end{equation}
\ie the value induced by the \textit{shortest} bit string that represents $\theta$, but truncated to $p_i$ elements. Note that this yields 
\begin{equation}
    Q(\theta, p)_i = Q(\theta_i, p_i) = V(V^{-1}_b(\theta_i), p_i) \,,
\end{equation}
which will be a key property in the proofs below.

While it might be tempting to see $Q$ as a quantization mapping, it not always acts as one. For example, consider the bit-value map given by $v_j = 2^{1-j}$, where $\theta_i=1$ can be represented 
by $p^{(1)} = 1$, $b_i^{(1)} = (1,1,1,1, \dots)$,
by $p^{(2)} = 1$, $b_i^{(2)} = (1,-1,-1,-1, \dots)$,
among other possibilities.
In this case, both $V^{-1}(1) = (b^{(1)}, p^{(1)})$ and $V^{-1}(1) = (b^{(2)}, p^{(2)})$ are valid choices
since $p^{(1)} = p^{(2)} = 1$.
For the former we have $Q(1,2) = \frac32$, while for the latter we get $Q(1,2) = \frac12$, which are \textit{not} quantizations of $\theta_i = 1$ in the common sense since they are not representations of $1$ using \emph{fewer} bits than necessary.

With this definition of $Q$ we can show that the optimization problems in Definitions \ref{def-qt-formal} and \ref{def-qt-formal2} are equivalent (under $V$ and $V^{-1}$). 

\begin{lemma}
For any setting $(b,p)$ that satisfies $\mathcal L(f(V(b,p))) \leq \mathcal L^* + \delta$ in Definition \ref{def-qt-formal}, we have that $(\theta=V(b,p),p)$ satisfies $\mathcal L(f(Q(\theta,p))) \leq \mathcal L^* + \delta$ in Definition \ref{def-qt-formal2}.
\label{lemma-origtoquant}
\end{lemma}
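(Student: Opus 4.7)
The plan is to exhibit a feasible point for the mixed quantization problem in Definition~\ref{def-qt-formal2} constructed directly from $(b, p)$. Concretely, I would take $\theta = V(b, p) \in \mathbb V^d$ and keep $p$ unchanged, then reduce the lemma to proving the identity $Q(V(b, p), p) = V(b, p)$ coordinatewise. Once this identity holds, the loss constraint transfers immediately: $\mathcal L(f(Q(\theta, p))) = \mathcal L(f(V(b, p))) \leq \mathcal L^* + \delta$, which is precisely the feasibility condition in Definition~\ref{def-qt-formal2}.

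To establish the identity componentwise, I would first observe that since $(b_i, p_i)$ is itself a representation of $\theta_i$, the minimality clause in the definition of $V^{-1}$ forces $V^{-1}_p(\theta_i) \leq p_i$. The next step is to show that $V^{-1}_b(\theta_i)$ may be chosen so that its first $p_i$ bits coincide with $b_i$. This is where the latitude built into the definition of $V^{-1}$ enters: $V^{-1}_b(\theta_i)$ is constrained only by the requirement that its first $V^{-1}_p(\theta_i)$ entries realize $\theta_i$ through $V$, while the remaining (infinitely many) entries of the bitstring are free. Under such a choice,
\begin{equation*}
Q(\theta_i, p_i) \;=\; \sum_{j=1}^{p_i} V^{-1}_b(\theta_i)_j \cdot v_j \;=\; \sum_{j=1}^{p_i} b_{i,j} \cdot v_j \;=\; V(b_i, p_i) \;=\; \theta_i \,,
\end{equation*}
and applying this across coordinates yields $Q(\theta, p) = \theta$, from which the lemma follows.

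The main obstacle is reconciling the proposed $V^{-1}_b(\theta_i)$ with the minimality constraint on its first $V^{-1}_p(\theta_i)$ entries, because the prefix of $b_i$ of length $V^{-1}_p(\theta_i)$ need not itself represent $\theta_i$ when $V^{-1}_p(\theta_i) < p_i$. The cleanest way around this is to establish a uniqueness-of-precision property upfront: for the bit-value maps of interest, such as $v_j = 2^{1-j}$ where each element of $\mathbb V$ admits representations of only one length, the equation $V(b, p) = \theta$ forces $p = V^{-1}_p(\theta)$. Under this property the case $V^{-1}_p(\theta_i) < p_i$ simply does not arise, the choice $V^{-1}_b(\theta_i) = b_i$ is simultaneously admissible and minimal, and the identity $Q(\theta_i, p_i) = \theta_i$ holds without further work. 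Absent such a uniqueness property, the statement should be read as an existence claim — that among the admissible choices of $V^{-1}_b(\theta_i)$ at least one extends $b_i$ up through position $p_i$ — whose justification would depend on the specific structure of the bit-value map.
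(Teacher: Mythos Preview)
Your approach is essentially the paper's: reduce to the coordinatewise identity $Q(V(b,p),p)=V(b,p)$ and conclude. The paper dispatches this in a single line --- writing $Q(V(b,p),p)=V\bigl(V^{-1}_b(V(b,p)),p\bigr)=V(b,p)$ ``by definition'' --- without justifying the second equality; your observation that this step tacitly relies on a uniqueness-of-precision property (which holds for the map $v_j=2^{1-j}$ used downstream, but not for arbitrary bit-value maps) is a legitimate clarification of something the paper leaves implicit.
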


\begin{proof}
By definition, we have:
\begin{equation}
    \begin{split}
        Q(V(b,p), p) &= V\left( V^{-1}_b(V(b,p)), p \right) = V(b,p) \,.
    \end{split}
\end{equation}

Therefore, the parameter setting induced by $V(p,b)$ in Definition \ref{def-qt-formal} will be equal to the one induced by $Q(\theta,p)$ in Definition \ref{def-qt-formal2}, and hence the losses will be evaluated at the same point and satisfiability follows.
\end{proof}

\begin{lemma}
For any setting $(\theta,p)$ that is a minimizer of the problem in Definition \ref{def-qt-formal2}, we have that $(b=V^{-1}_b(\theta), p)$ satisfies $\mathcal L(f(V(b,p))) \leq \mathcal L^* + \delta$ in Definition \ref{def-qt-formal}.
\label{lemma-quanttoorig}
\end{lemma}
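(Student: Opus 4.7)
The plan is to mirror the proof of Lemma \ref{lemma-origtoquant}, exploiting the key identity $Q(\theta,p) = V(V^{-1}_b(\theta), p)$ that follows directly from the definition of $Q$. Since the statement only concerns satisfaction of the constraint, being a minimizer of \eqref{eq-quantizedmixedproblem} is used only insofar as it guarantees feasibility; the argument would actually go through for any feasible $(\theta, p)$.

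Concretely, I would proceed as follows. First, since $(\theta, p)$ is a minimizer of the problem in Definition \ref{def-qt-formal2}, it is in particular feasible, so
\begin{equation}
    \mathcal L(f(Q(\theta, p))) \leq \mathcal L^* + \delta \,.
\end{equation}
Second, I would unpack the definition of $Q$ component-wise to obtain
\begin{equation}
    Q(\theta_i, p_i) = \sum_{j=1}^{p_i} V^{-1}_b(\theta_i)_j \cdot v_j = V(V^{-1}_b(\theta_i), p_i) \,,
\end{equation}
which, by element-wise application, yields $Q(\theta, p) = V(V^{-1}_b(\theta), p)$. Setting $b = V^{-1}_b(\theta)$, this reads $V(b, p) = Q(\theta, p)$, and substituting into the feasibility inequality above immediately gives
\begin{equation}
    \mathcal L(f(V(b, p))) = \mathcal L(f(Q(\theta, p))) \leq \mathcal L^* + \delta \,,
\end{equation}
as required.

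There is no real obstacle here, and in particular no obstacle related to the non-uniqueness of $V^{-1}$: the definition of $V^{-1}_b$ fixes a specific choice of inverse bit string among the possibly many representations of $\theta_i$, and the identity $Q(\theta_i, p_i) = V(V^{-1}_b(\theta_i), p_i)$ holds for that choice by construction. The only thing worth being careful about is not claiming anything stronger than constraint satisfaction (for instance, that $\sum_i p_i$ is also minimized), since the lemma is stated solely in terms of satisfiability, which matches what Lemma \ref{lemma-origtoquant} established in the opposite direction.
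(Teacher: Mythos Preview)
Your proof is correct and in fact cleaner than the paper's. You go directly through the definitional identity $Q(\theta,p) = V(V^{-1}_b(\theta), p)$, which immediately gives $V(b,p) = Q(\theta,p)$ and hence constraint satisfaction from feasibility alone. The paper instead invokes the minimizer assumption substantively: it argues that at an optimum no $\theta_i$ can be represented with fewer than $p_i$ bits, so $p = V^{-1}_p(\theta)$, and then concludes $V(b,p) = V(V^{-1}(\theta)) = \theta = Q(\theta,p)$. Both routes arrive at $V(b,p) = Q(\theta,p)$, but the paper's detour through $p = V^{-1}_p(\theta)$ is unnecessary for the lemma as stated, and your observation that any feasible $(\theta,p)$ would suffice is correct. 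The only thing the paper's argument buys is the slightly stronger intermediate fact $V(b,p) = \theta$, which is not used downstream.
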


\begin{proof}
Since $(\theta, p)$ minimizes $\sum_{i=1}^d p_i$ by assumption, we have that no $\theta_i$ can be represented with less than $p_i$ many bits and hence $p = V^{-1}_p(\theta)$. Therefore
\begin{equation}
    \begin{split}
        V(V^{-1}_b(\theta), p) &= V(V^{-1}_b(\theta), V^{-1}_p(\theta)) = V(V^{-1}(\theta)) = \theta \,.
    \end{split}
\end{equation}

It then follows that the parameter setting induced by $Q(\theta,p) = \theta$ in Definition \ref{def-qt-formal2} will be equal to the one induced by $V(b,p)$ in Definition \ref{def-qt-formal}, and hence the losses will be evaluated at the same point and satisfiability follows.
\end{proof}

With the two Lemmas we can prove that the two problems are indeed equivalent:

\begin{corollary}
The optimization problems in Definitions \ref{def-qt-formal} and \ref{def-qt-formal2} are equivalent under $V$ and $V^{-1}$: for any $(b,p)$ that minimizes \eqref{eq-originalmixedproblem}, we have that $(\theta=V(b,p), p)$ minimizes \eqref{eq-quantizedmixedproblem}, and for every $(\theta,p)$ that minimizes \eqref{eq-quantizedmixedproblem} we have that $(b=V^{-1}_b(\theta), p)$ minimizes \eqref{eq-originalmixedproblem}. Moreover, the induced model $f(\theta)$ will be the same in all cases.
\end{corollary}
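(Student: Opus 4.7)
The plan is to combine Lemmas \ref{lemma-origtoquant} and \ref{lemma-quanttoorig} with the observation that both problems share the same objective $\sum_{i=1}^d p_i$ and that both transformations act as the identity on the precision vector $p$. Since the objective value is therefore preserved under either mapping, it suffices to show that the two optima coincide and that the transformations send minimizers to minimizers.

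First I would argue existence of minimizers: the objective is integer-valued and bounded below by a constant, and each feasible set is non-empty as soon as the architecture can achieve loss within $\delta$ of $\mathcal{L}^{\star}$ at some element of $\mathbb{V}^d$ (which happens for sufficiently large precisions, by the assumption $\mathbb{V}^d \subseteq \Theta$). Denote the two optima by $P_1^{\star}$ and $P_2^{\star}$. Taking a minimizer $(b^{\star}, p^{\star})$ of \eqref{eq-originalmixedproblem}, Lemma \ref{lemma-origtoquant} gives that $(V(b^{\star}, p^{\star}), p^{\star})$ is feasible for \eqref{eq-quantizedmixedproblem} with objective $P_1^{\star}$, so $P_2^{\star} \leq P_1^{\star}$. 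Conversely, a minimizer $(\theta^{\dagger}, p^{\dagger})$ of \eqref{eq-quantizedmixedproblem} maps under Lemma \ref{lemma-quanttoorig} to a feasible $(V^{-1}_b(\theta^{\dagger}), p^{\dagger})$ of \eqref{eq-originalmixedproblem} with objective $P_2^{\star}$, so $P_1^{\star} \leq P_2^{\star}$. Combining these inequalities gives $P_1^{\star} = P_2^{\star}$ and establishes that each transformation carries minimizers to minimizers, which proves parts (a) and (b) of the corollary.

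For the claim that the induced model is the same (part c), the key identity is $V(V^{-1}_b(\theta), p) = Q(\theta, p)$, which follows directly from the definitions of $V$ and $Q$. In the forward direction, the proof of Lemma \ref{lemma-origtoquant} shows $Q(V(b^{\star}, p^{\star}), p^{\star}) = V(b^{\star}, p^{\star})$, so the loss in \eqref{eq-quantizedmixedproblem} is evaluated at the same parameter as in \eqref{eq-originalmixedproblem}. In the reverse direction, the proof of Lemma \ref{lemma-quanttoorig} uses the minimizer property to deduce $p^{\dagger} = V^{-1}_p(\theta^{\dagger})$, which yields $V(V^{-1}_b(\theta^{\dagger}), p^{\dagger}) = \theta^{\dagger} = Q(\theta^{\dagger}, p^{\dagger})$; so once again $f$ is evaluated at the same element of $\Theta$ in both problems, giving the same induced network.

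The main subtlety, and the step that requires the most care, is the role of the non-uniqueness of $V^{-1}$ illustrated in the text by the $v_j = 2^{1-j}$ example: different minimum-precision representations of the same $\theta$ can lead to different values of $Q(\theta, p)$ when $p > V^{-1}_p(\theta)$. To use the two lemmas together cleanly one must fix a canonical selection rule for $V^{-1}$ once and apply it consistently in both directions; otherwise the chain of equalities in part (c) could appeal to incompatible choices and break. Once this selection is fixed, the corollary is essentially a bookkeeping exercise pairing the two lemmas with the coordinate-wise preservation of $p$ by both transformations.
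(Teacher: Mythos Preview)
Your proposal is correct and follows essentially the same approach as the paper: both arguments combine Lemmas~\ref{lemma-origtoquant} and~\ref{lemma-quanttoorig}, using that the transformations $(b,p)\mapsto(V(b,p),p)$ and $(\theta,p)\mapsto(V^{-1}_b(\theta),p)$ preserve the precision vector and hence the objective. The paper phrases each direction as a proof by contradiction (if the image were suboptimal, pull back a strictly better point via the other lemma), while you first establish $P_1^\star=P_2^\star$ directly and then conclude; these are equivalent reformulations of the same two-lemma argument, and your version additionally spells out existence of minimizers, the ``same induced model'' claim, and the need to fix a canonical choice of $V^{-1}$---points the paper leaves implicit.
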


\begin{proof}
~
    \begin{itemize}
        \item \eqref{eq-quantizedmixedproblem} $\to$ \eqref{eq-originalmixedproblem}: assume for the sake of contradiction that $(b=V^{-1}_b(\theta), p)$ is suboptimal in \eqref{eq-originalmixedproblem}, then there exists a satisfiable $(b',p')$ with $\|p'\| < \|p\|$, but from Lemma \ref{lemma-origtoquant} it follows that $(\theta=V(b',p'),p')$ is satisfiable in \eqref{eq-quantizedmixedproblem}, which contradicts the optimality of $(\theta,p)$.
        
        \item \eqref{eq-originalmixedproblem} $\to$ \eqref{eq-quantizedmixedproblem}: assume for the sake of contradiction that $(\theta=V(b,p),p)$ is suboptimal in \eqref{eq-quantizedmixedproblem}, then there exists a satisfiable $(\theta',p')$ with $\|p'\| < \|p\|$, but from Lemma \ref{lemma-quanttoorig} it follows that $(b=V^{-1}_b(\theta',p'),p')$ is satisfiable in \eqref{eq-originalmixedproblem}, which contradicts the optimality of $(b,p)$.
    \end{itemize}
\end{proof}

With this, we have shown that optimizing the quantized weights directly, as in Definition \ref{def-qt-formal2}, is equivalent to the original problem of optimizing bit-strings of arbitrary length in Definition \ref{def-qt-formal}. Although we have simplified the search space significantly, note that gradient-based methods remain unfeasible since the gradient w.r.t.~$p$ is still undefined and $Q$ is piece-wise constant,

\subsection{Re-framing the Quantization Problem}
\label{sec-qt-method-reframing}

We now present a sequence of steps to re-frame the original quantization problem in an equivalent way which is amenable to stochastic approximations. A key difference from the previous chapters is that we rely on an additional assumption to guarantee that the re-framed problem remains equivalent to the original one.

We start by defining
\begin{equation}
    R(\theta,p) \coloneqq Q(\theta,p) - \theta \,,
\end{equation}
which allows us to write
\begin{equation}
    Q(\theta,p) = \theta + (Q(\theta,p) - \theta) = \theta + R(\theta,p) \,.
\end{equation}


Next, we consider the following assumption: if $(\theta_i, p_i)$ is satisfiable, then $(\theta_i, p_i')$ is satisfiable for any $p_i' \geq p_i$ (i.e. assigning more bits to a parameter preserves satisfiability). This can be formally written as
\begin{equation}
    \mathcal L(f(Q(\theta,p))) \leq \mathcal L^* + \delta \implies \forall i~p_i' \geq p_i, \quad \underbrace{\mathcal L(f(Q(\theta,p')))}_{= \mathcal L(f(\theta + R(\theta,p')))} \leq \mathcal L^* + \delta \,,
\end{equation}
and is the core assumption for our method.

Using the definition of $R$ and letting $w = Q(\theta,p)$, we get from the above that
\begin{equation}
    \label{as-qt-main}
    \underbrace{\mathcal L(f(Q(w,p)))}_{= \mathcal L(f(w))} \leq \mathcal L^* + \delta \implies \forall i~p_i' \geq V^{-1}_p (w_i), \quad \mathcal L(f(w + R(w,p'))) \leq \mathcal L^* + \delta \,,
\end{equation}
\ie using the trinary system, this means that if a bitstring $(1,1,0,0,0,\dots)$ is satisfiable, then any bitstring $(1,1,\pm1,\pm1,\pm1,\dots)$ will also be satisfiable.

Now, we can re-frame the original mixed-quantization problem as having infinitely many constraints:
\begin{definition}[Re-framed Mixed Quantization Problem]
    \label{def-qt-re1}
    Let $f$ be a neural architecture with weight space $\Theta \subseteq \R^d$. For any $\theta \in \Theta$, $\mathcal L(f(\theta)) \in \R$ denotes the loss incurred by $f(\theta)$ on the fixed dataset $D$. If Equation \eqref{as-qt-main} holds for any $w \in \mathbb V^d \subseteq \Theta$, then the mixed quantization problem can be rewritten as:
\begin{equation}
    \min_{\substack{p \in \mathbb N^d \\ w \in \mathbb V^d}} \quad \sum_{i=1}^d p_i \quad\quad \text{s.t.} \quad \mathcal L(f(w+R(w, p'))) \leq \mathcal L^* + \delta, \quad \forall i ~ p'_i \geq p_i \,,
\end{equation}
    which is equivalent to the problem in Definition \ref{def-qt-formal2}.
\end{definition}

\subsection{Approximating the Quantization Problem}
\label{sec-qt-method-approx}

Now that we have re-framed the problem as one with infinitely many constraints, we are ready to employ our approximation strategy.

Note that, for $p'_i > p_i = V^{-1}_b(w_i)$,
\begin{equation}
    R(w_i, p_i') = \sum_{j=p_i+1}^{p'_i} V^{-1}_b(w_i)_j \cdot v_j \,,
\end{equation}
where $V^{-1}_b(w_i)$ admits \emph{any} configuration for all bits after the $V^{-1}_b(w_i)$-pth one. Therefore, we assume that $\{R(w_i, p_i') ~|~ p_i' > p_i = V^{-1}_p(w_i) \}$ is dense in some interval $I(w_i, p_i) = [l,u] \subset \mathbb V$.

We first approximate the infinitely many constraints in Definition \ref{def-qt-re1} by a single constraint on the expected loss over possible values $\epsilon$ for $R(w, p)$:
\begin{equation}
    \mathbb E \left[ \mathcal L(f(w+\epsilon)) \right] \leq \mathcal L^* + \delta, \quad \forall i, ~ \epsilon_i \sim \mathbb P(I(w_i, p_i)) \,,
\end{equation}
where $\mathbb P$ is some probability distribution with support $I(w_i, p_i)$. 

Computing the expected loss above is still generally intractable, therefore we estimate by using $K$ samples:
\begin{equation}
    \label{eq-qt-est}
    \frac1K \sum_{k=1}^K \mathcal L(f(w+\epsilon^{(k)})), \quad \forall i,k, ~ \epsilon_i^{(k)} \sim \mathbb P(I(w_i, p_i)) \,.
\end{equation}

Our approximation is given by adopting the above estimate for the expected loss, relaxing the discrete domain of $p$, and using Lagrange multipliers to yield an unconstrained problem:
\begin{definition}[Approximate Mixed Quantization Problem]
    \label{def-qt-approx}
    Let $f$ be a neural architecture with weight space $\Theta \subseteq \R^d$. For any $\theta \in \Theta$, $\mathcal L(f(\theta)) \in \R$ denotes the loss incurred by $f(\theta)$ on the fixed dataset $D$. Then the \emph{approximate mixed quantization problem} is
    \begin{equation}
        \min_{\substack{p \in [1, \infty)^d \\ w \in \mathbb V^d}} \quad 
        \frac1K \sum_{k=1}^K \mathcal L(f(w+\epsilon^{(k)})) + \lambda \sum_{i=1}^d p_i, \quad \forall i,k, ~ \epsilon_i^{(k)} \sim \mathbb P(I(w_i, p_i)) \,,
    \end{equation}
    where $\mathbb P$ is some probability distribution with support $I(w_i, p_i)$.
\end{definition}

Note that the derivation so far is agnostic to the underlying bit-value map $(v_j)_j$. Moreover, the approximate problem given above still poses a key obstacle: it is unclear how to choose $\mathbb P$ such that the gradients w.r.t.~$p$ are well-defined and informative.

To circumvent this remaining limitation, we first assume that the bit-value map is given by $v_j = 2^{1-j}$ \ie fixed-point binary representation. In this case, for any $w_i$, the distribution $\mathcal U (\pm 1)$ for all bits after the $V^{-1}_p(w_i)$-th one induces $\mathbb P(I(w_i, p_i)) = \mathcal U(\pm 2^{1-p_i})$. Instead of having $R(w_i, p_i) \sim \mathcal U(\pm 2^{1-p_i})$, we can write $R(w_i, p_i) = 2^{1-p_i} \cdot \epsilon$ with $\epsilon \sim \mathcal U(\pm 1)$ to yield well-defined gradients w.r.t~$p$.

Next, we reparameterize $p$ to avoid having to perform projections. For $p_i \in [1, \infty)$, we have $2^{1-p_i} \in (0,1]$, which we reparameterize as $\sigma(s_i)$ for $s_i \in \R$, \ie $s = \sigma^{-1}(2^{1-p})$. Finally, note that, for the special case $v_j = 2^{1-j}$, we have
\begin{equation}
    \mathbb V = \left[ \pm \sum_{j=1}^\infty 2^{1-j} \right] = [\pm 2] \,,
\end{equation}
which characterizes the domain of $w$.

\begin{definition}[Approximate Mixed Quantization Problem (Fixed-point)]
    \label{def-qt-approx2}
    Let $f$ be a neural architecture with weight space $\Theta \subseteq \R^d$. For any $\theta \in \Theta$, $\mathcal L(f(\theta)) \in \R$ denotes the loss incurred by $f(\theta)$ on the fixed dataset $D$. Then, for the bit-value map $v_j = 2^{1-j}$, the \emph{approximate mixed quantization problem} is
    \begin{equation}
        \min_{\substack{s \in \R^d \\ w \in [\pm 2]^d}} \quad
        \frac1K \sum_{k=1}^K \mathcal L(f(w+\sigma(s) \odot \epsilon^{(k)})) + \lambda \sum_{i=1}^d p_i, \quad \forall i,k, ~ \epsilon_i \sim \mathcal U^d(\pm 1) \,.
    \end{equation}
\end{definition}

The problem above is fully differentiable w.r.t.~$w$ and $s$, where the latter is a reparameterization for the precisions $p$. Like our previous approximations, the objective can be optimized with gradient descent, where the network weights $w$ and the quantization scheme induced by $s$ are trained jointly in an end-to-end fashion. It is a stochastic approximation since we use $K < \infty$ noise samples to approximate the expected loss, but we use new samples at each gradient descent iteration in order to decrease the estimation error without any computational overhead.

\subsection{The Proposed Quantization Method}
\label{sec-qt-method-method}

\begin{figure}[t]
   \centering
      \centerline{\includegraphics[width=\columnwidth]{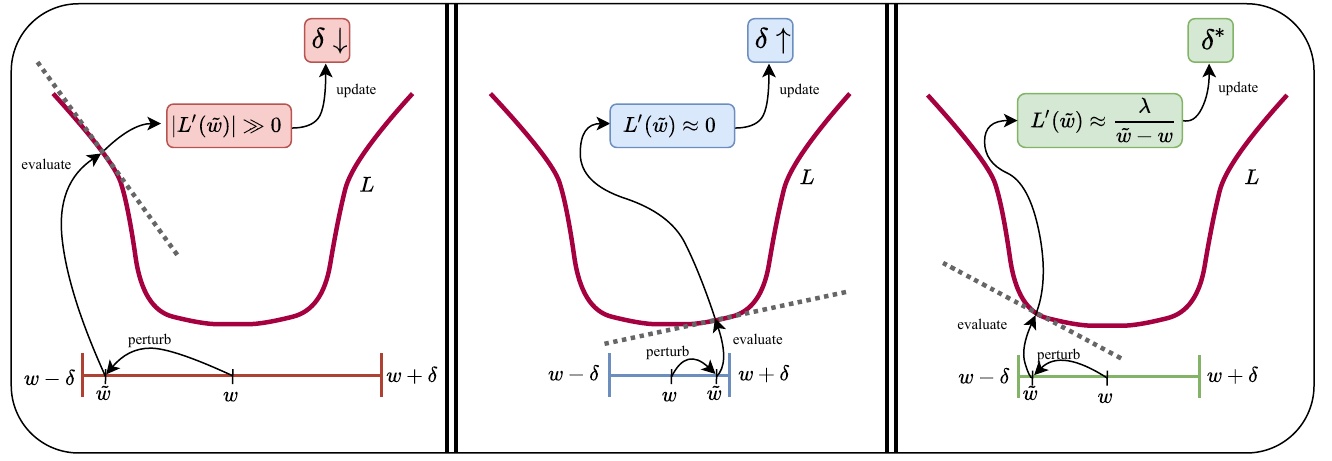}}
      \caption{%
         \textbf{Weight noise as a differentiable proxy for precision.}
         A learnable magnitude $\delta$ scales uniform random noise added to weight $w$ during training.  The width of
         the basin over which it is possible to perturb $w$ without increasing task loss $L$ drives learning of
         $\delta$.  After training, we reduce the bit precision of the numeric representation of $w$ as much as
         possible, with the constraint of remaining in the $(w-\delta,w+\delta)$ range.
         \emph{\textbf{Left:}}
            A random perturbation $\tilde{w}$ increases loss, driving a decrease in $\delta$.
         \emph{\textbf{Middle:}}
            Perturbation leaves loss unchanged, driving an increase in $\delta$.
         \emph{\textbf{Right:}}
            Noise level $\delta$, and the corresponding implied precision, stabilize when matched to the size of the
            basin.%
      }%
      \label{fig-diagram}
\end{figure}

\begin{wrapfigure}[26]{R}{0.5\textwidth}
  \centering
    \includegraphics[width=\linewidth]{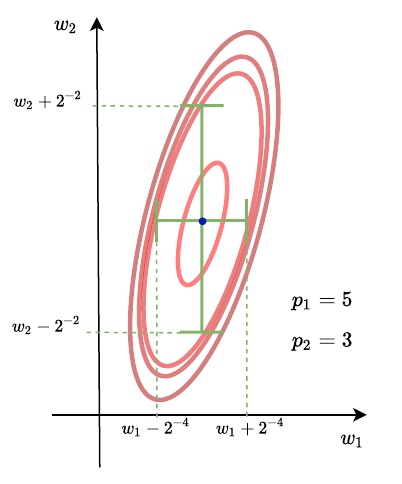}
  \caption{
  Multidimensional example where different precisions should be assigned to each parameter due to their distinct perturbation limits. Note the connection between each parameter's perturbation limit, the width of the level curve in the parameter's axis, and the allocated precision for each parameter.
  }
\end{wrapfigure}

\begin{algorithm}[t]
  \textbf{Input:} Architecture $f$, $p_{init} \in [1,\infty)$, $\lambda \geq 0$, $T \in \N$
  \caption{SMOL}
  \label{alg-method}
  \begin{algorithmic}[1]
        \State Initialize $w \sim \dist_w$, $s \gets - \ln \left( 2^{\vec p_{init}-1} - 1 \right)$
        \For{$t = 1$ to $T_1$}
            \State Sample $\epsilon \sim \mathcal U^d(\pm 1)$
            \State Compute $\mathcal L(w, s) = \mathcal L (f(w + \sigma(s) \odot \epsilon)) + \lambda \|\log_2(1 + e^{-s}) \|_1$ and $\nabla_{w,s} \mathcal L(w,s)$
            \State Update $w$ and $s$ using $\nabla_w \mathcal L(w,s)$ and $\nabla_s \mathcal L(w,s)$
            \State Clip $w$ to $\pm (2 - \sigma(s))$
        \EndFor
        \State Set $p \gets 1 + \text{round}(\log_2(1 + e^{-s}))$
        \State ZPA (Optional): If $|w_i|<|w_i-Q(w_i,p_i)|$ then set $p_i=0$ (element-wise)
        \For{$t = T_1$ to $T_2$}
            \State Compute $\mathcal L (f(w_q))$ and $\nabla_{w_q} \mathcal L(f(w_q))$, where $w_q = Q(w,p)$
            \State Update $w$ using $\nabla_{w_q} \mathcal L(f(w_q))$ instead of $\nabla_{w} \mathcal  L(f(w_q))$
        \EndFor
        \State Output $f(Q(w,p))$
  \end{algorithmic}
\end{algorithm}

Although the problem given in Definition \ref{def-qt-approx2} required a sequence of approximations to be derived, it admits an intuitive description and motivation that relies on a connection between quantization and perturbations. More specifically, if we assume that a weight $w$ can be perturbed in any direction by at least $\epsilon$ without degrading the performance of the network, then we can safely represent $w$ with $p$ bits as long as the quantization error does not exceed $\epsilon$ -- that is, $|w-q(w,p)| \leq \epsilon$.

In other words, if we know the largest perturbation that each weight admits (its \textit{perturbation limit}) without yielding performance degradation, then we can easily assign a precision to each weight: it suffices to choose the smallest number of bits such that the quantization error falls below the perturbation limit of the corresponding weight.

Our method works by estimating the perturbation limit of each parameter to be quantized, which is achieved by directly optimizing the magnitude of the weight perturbations through a novel loss function. We call our method SMOL.

Once the perturbation limit of each weight has been estimated through optimization by our method, we assign per-weight precisions by mapping each perturbation limit to a number of bits.

A key aspect of our proposed loss function lies in the fact that it is fully differentiable w.r.t.~$s$; hence, gradient-based methods like SGD and Adam can be applied off-the-shelf to optimize both the original weights $w$ and the auxiliary variables $s$. This enables the parameters $s$ to be seen as being part of the model itself, allowing for our method to be easily combined with higher-order optimizers, neural architecture search, and other algorithms that operate on networks. After training, we map the final values of $s$ to precisions via some function $\lceil \cdot \rfloor$ that outputs integers \eg rounding or truncation:
\begin{equation}
\label{eq-pmap}
    p = 1 + \lceil \log_2(1 + e^{-s}) \rfloor \,.
\end{equation}

The resulting tensor $p$ will have the same shape as $w$, and its components represent the number of bits assigned to each parameter in $w$. Having allocated a precision to each weight parameter, we can discard the auxiliary variables $s$ and use quantized weights $w_q = Q(w,p)$ to compute the model's predictions instead of applying perturbations. This will typically lead to non-negligible changes in the model's activations, which we circumvent by fine-tuning the model by further optimizing $w$ to minimize the training loss. Following prior work, we use straight-through estimators to optimize $w$ since $Q$ is non-differentiable, \ie we set $\nabla_w \mathcal L = \nabla_{w_q} \mathcal L$ and perform gradient descent on $w$.

Lastly, note that inverting the $s \to p$ mapping offers a way to initialize $s$ given a desired initial precision $p_{init}$, more specifically we set $s_{init} = - \ln \left( 2^{p_{init}-1} - 1 \right)$. For example, adopting an initial precision $p_{init}=8$ results in $s_{init} = -\ln(2^{8-1}-1) = -\ln(127)$, and the perturbation limit estimate will start as $\sigma(s_{init}) = 2^{1-p_{init}} = 2^{-7}$ for all weights $w$ of the network.

Our method is also applicable if groups of parameters must share the same precision value \eg the layer-wise setting where all parameters of each layer are represented with the same number of bits. In this case, we assign each parameter group $i$ to a single component $s_i$ of $s$, and when applying perturbations to the weights in the group we scale all the noise samples by the same scalar $\sigma(s_i)$. At the end of training, we map $s_i$ to a single integer $p_i$ via \eqref{eq-pmap} which is shared across the group.

\paragraph{Zero Precision Allocation.}

A fundamental limitation of having each $i$'th bit map to either $+2^{1-i}$ or $-2^{1-i}$ is that zero weights cannot be represented: the possible values that a 1-bit weight can assume are $\{-1, +1\}$, for a 2-bit weight $\{-1.5, -0.5, +0.5, +1.5\}$, and moreover a quantized weight cannot assume the value $0$ regardless of its precision. However, in our setting -- where we can assign a different precision to each weight -- we can directly re-define the quantization function $q$ to map any $w$ to $0$ whenever $p=0$, hence introducing the notion of \textit{zero-precision weights}. This is analogous to the procedure of assigning zero precision to a filter in BSQ, which leads to the corresponding convolution to be completely skipped when computing a model's outputs.

We propose to assign zero precision to a weight $w$ whenever $|w| \leq |w-q(w,p)|$. Since the quantization function $q$ maps $w$ to the closest value that is representable with $p$ bits, whenever a weight's precision is set to zero due to our proposed strategy, the induced quantization error is guaranteed not to increase. This follows since prior to changing $p$ the quantization error is $|w-q(w,p)|$; but once we set $p=0$, it becomes $|w-q(w,0)| = |w-0| = |w|$, which cannot lead to an increase since the condition is precisely $|w| \leq |w-q(w,p)|$.

For example, $q(w=0.2,p=2)=0.5$ since it is the value in $\{-1.5, -0.5, +0.5, +1.5\}$ that is closest to $w=0.2$, while $q(w=0.2,p=0)=0$ following our re-definition of $q$ yields a quantization error of $|0.2-0|=0.2$. On the other hand, for $p=2$ we have $|0.5-0.2|=0.3$. Hence, assigning zero precision in this case not only frees up $2$ bits but also decreases the quantization error by $0.1$.

One advantage of adopting this procedure to assign zero precisions to weights is that it only changes $q$ and hence the quantized network, therefore not affecting the procedure described in the previous section. In other words, one can allocate precisions to weights by optimizing perturbations as described previously, and then quantize the model separately with and without zero precision allocation.

This results in two quantized models with different precision assignments, hence we can obtain two networks by training the auxiliary variables $s$ only once. The notion of zero precisions also unifies quantization and pruning, since in practice assigning zero precision is equivalent to pruning a weight.

\section{Experimental Setup}
\label{sec-qt-expsetup}

\subsection{Datasets}
\label{sec-qt-expsetup-data}

\paragraph{CIFAR-10.}

The CIFAR-10 dataset~\citep{cifar} consists of $32 \times 32$ color images split into 50,000 and 10,000 training and test samples, each belonging to one out of 10 classes. We pre-process the data by applying channel-wise normalization to all images using statistics computed from the training set. We use the standard data augmentation pipeline from~\cite{resnet1}, which includes random crops and horizontal flips.

\paragraph{ImageNet.}

We employ the ILSVRC 2012 subset of ImageNet~\citep{imagenet}, which contains roughly 1.28 million training images and 50,000 validation images belonging to 1,000 different object classes. We use single $224 \times 224$ center-crop images for both training and validation.
We follow~\cite{gross} for pre-processing and data augmentation, which consists of scale augmentation (random crops of different sizes and aspect ratios are rescaled back to the original size with bicubic interpolation), photometric distortions (random changes to brightness, contrast, and saturation), lighting noise, and horizontal flips. Channel-wise normalization is employed using statistics from a random subset of the training data.

\paragraph{IWSLT’14.}

We use the IWSLT'14 German-to-English dataset, a popular benchmark for low-resource neural machine translation. It contains roughly 160,000 sentence pairs, with 7,283 sentences used for validation and 6,750 for testing. Following prior works, we apply byte-pair encoding with a joint vocabulary of 10,000 tokens.

\subsection{Models}
\label{sec-qt-expsetup-models}

\paragraph{ResNet-20.} For CIFAR-10 classification, we employ ResNet-20, a residual network composed of multiple residual blocks, each containing two $3 \times 3$ convolutions with batch normalization and ReLU activations. This network provides a canonical baseline to evaluate quantization on CIFAR-10.

\paragraph{MobileNetV2.} Also on CIFAR-10, we use MobileNetV2, a lightweight convolutional network commonly adopted in resource-constrained settings. It relies on depthwise separable convolutions and linear bottlenecks, significantly reducing the FLOPs required for inference compared to standard CNNs.

\paragraph{ShuffleNet.} We also evaluate quantization on ShuffleNets for CIFAR-10. These are highly efficient architectures that leverage channel shuffle operations, minimizing computational and memory costs while maintaining accuracy. ShuffleNets provide a complementary benchmark to MobileNetV2 for quantization of highly efficient architectures.

\paragraph{ResNet-18 and ResNet-50.} On ImageNet, we use ResNet-18 and ResNet-50, allowing for direct comparisons with existing quantization methods. These architectures adopt bottleneck residual blocks that consist of a $1 \times 1$ depth-reducing convolution, a $3 \times 3$ dept-preserving convolution, and a $1 \times 1$ depth-increasing convolution, each followed by batch normalization and ReLU activations.

\paragraph{DCGAN.} For image generation on CIFAR-10, we use DCGAN \citep{dcgan}, a widely adopted generative network whose generator consists of four transposed convolutions with batch norm and ReLU activations, while its discriminator adopts four strided convolutional layers with batch norm and LeakyReLU activations.

\paragraph{Transformer.} For IWSLT'14, we adopt a 6-layer encoder-decoder Transformer model \citep{transformers}. Each encoder an decoder block includes multi-head attention followed by token-wise residual feedforward layers and layer normalization.

\paragraph{Quantized Activations on ImageNet.}

We follow BSQ \citep{BSQ} and replace all ReLU modules by ReLU6 throughout the network, resulting in outputs constrained to the $[0,6]$ interval. Activations are not quantized prior to fine-tuning \ie precision training with SMOL uses full-precision activations, which are only quantized once precisions have been allocated and fine-tuning has started. All our experiments that quantize activations consider 4-bit precisions \ie the activation tensors are quantized using $2^4 = 16$ different values.

Also following BSQ, we use straight-through estimators to allow for gradient flow through the activation quantization procedure. The quantization values are uniform over half of the activation range -- typically $[0,3]$, which results in the full-precision activations being quantized to one of the 16 values in the set $\{0, \frac{1}{5}, \frac{2}{5}, \dots, \frac{14}{5}, \frac{15}{5}\}$.

\subsection{Training}
\label{sec-qt-expsetup-training}

We evaluate our method in the tasks of quantizing CNNs trained for image classification and generation. For all experiments, we adopt an initial precision $p_{init} = 8$, which corresponds to initializing each component of the new parameter $s$ as $-\ln(2^7-1) \approx -4.84$.

We use the floor operation to map real-valued precisions to integers: based on our preliminary experiments this more aggressive rounding operation yields more compact models and rarely results in performance degradation.

For all experiments we train the auxiliary parameters $s$ with Adam \citep{adam}, using the default learning rate of $10^{-3}$ and no weight decay -- all its other hyperparameters are set to their default values. 

\paragraph{CIFAR.}

We adopt the standard data augmentation procedure of applying random translations and horizontal flips to training images, and train each network for a total of 650 epochs: the precisions are trained with SMOL for the first 350 while the remaining 300 are used to fine-tune the weights while the precisions remain fixed. Note that this training budget assigned to our method is considerably smaller than BSQ's 1000 total epochs which are split between pre-training, precision allocation, and fine-tuning.

Following prior work, we keep the batch normalization parameters in full-precision as they represent a small fraction of the network's total parameters. Like in BSQ, weights of parameterized shortcut connections are also kept in full-precision -- namely, shortcuts that consist of $1 \times 1$ convolutions followed by normalization in ResNet-20 and MobileNetV2.

To train the weights we use SGD with a momentum of $0.9$ and an initial learning rate of $0.1$, which is decayed at epochs 250, 500, and 600. We use a batch size of 128 and a weight decay of $10^{-4}$ for ResNet-20, $4 \cdot 10^{-5}$ for MobileNetV2, and $5 \cdot 10^{-4}$ for ShuffleNet.

When training ResNet-20 networks on CIFAR-10, we used $\lambda \in \{10^{-6}, 7 \cdot 10^{-7}, 5 \cdot 10^{-7}\}$ when running SMOL, where larger values for $\lambda$ result in less bits per parameter: $\lambda = 10^{-6}$ resulted in a network with $2.1$ bpp while $\lambda = 5 \cdot 10^{-7}$ yielded a model with $2.8$ bpp. For MobileNetV2 we used $\lambda \in \{10^{-7}, 2 \cdot 10^{-7}\}$, while for ShuffleNet we adopted $\lambda \in \{5 \cdot 10^{-8}, 7 \cdot 10^{-10}\}$

\paragraph{ImageNet.}

For both ResNet-18 and ResNet-50 we train the weight parameters with SGD, a momentum of 0.9, a weight decay of $10^{-4}$ and a batch size of 256 which is distributed across 4 GPUs. We follow LQ-Nets in terms of data augmentation.

We train ResNet-18 for a total of 180 epochs: the first 120 are used for precision training and the last 60 for fine-tuning, and SGD has an initial learning rate of 0.1 which is decayed by 10 at epochs 45, 90, 150, and 165.

For ResNet-50, we start from a pre-trained, full-precision model and train for 100 epochs: allocating the first 60 for precision training and the remaining 40 for fine-tuning. An initial learning rate of $0.01$ is decayed by 10 at epoch $30$ for the precision training phase, while fine-tuning starts with the same learning rate of $0.01$ which is decayed by 10 at epochs 15 and 30. Note that, as in the CIFAR-10 experiments, our training budget is considerably smaller than BSQ's, which also starts from a pre-trained model but has a budget of 180 additional epochs.

For our ImageNet experiments, we adopted $\{10^{-6}, 10^{-8}\}$ and $\{10^{-7}, 10^{-8}\}$ as values for $\lambda$ when training ResNet-18 and ResNet-50 networks, respectively. Different values for $\lambda$ were required (compared to the ones we used for CIFAR-10) due to the ImageNet ResNets having considerably more parameters than the ResNet-20 model we adopted for CIFAR-10: note that our regularizer is a sum over all precisions instead of, for example, an average.

\paragraph{Image Generation.}

Models are trained with the binary cross-entropy loss using Adam with a learning rate = $2 \cdot 10^{-4}$ and ($\beta_1$, $\beta_2$) = (0.5, 0.999). We only quantize the weights of the generator since it is the network used for deployment.

For BSQ, we first train a full-precision DCGAN for 30 epochs, quantize its weights to 8-bits, and conduct 100 epochs of precision learning with a pruning interval of 10 epochs and $\lambda = 2 \cdot 10^{-3}$. The models are then fine-tuned for 30 epochs, with a $10 \times$ decayed learning rate. Similarly, for SMOL we train for a total of 160 epochs: 130 for precision training followed by 30 epochs for fine-tuning.

For image generation we used $\lambda \in \{2 \cdot 10^{-5}, 10^{-5}, 5 \cdot 10^{-6}, 2 \cdot 10^{-6}, 10^{-6}\}$, where the most compact model (0.2 bpp) was achieved with $\lambda=2 \cdot 10^{-5}$ along with zero precision allocation. In most cases we saw a decrease of between 1 and 2 bpp when enabling zero precision allocation: training a DCGAN with $\lambda=10^{-5}$ results in 0.5 bpp with zero precision allocation and 1.7 without it.

\paragraph{Neural Machine Translation.}

The Transformer models is trained for a total of 50 epochs, where the first 30 are used by SMOL to optimize precisions and the remaining 20 for fine-tuning. All weights are quantized except for the layer normalization parameters.

\subsection{Evaluation}
\label{sec-qt-expsetup-eval}

\paragraph{Average Number of Bits.}

To compare different methods we measure the performance and the size of the resulting network. We report the average number of bits assigned to the model's weights (the sum of all precisions divided by the number of weights), which we refer to as `average bpp', along with the compression ratio relative to a full-precision model, which equals to 32 divided by the average bpp.

\paragraph{Test Accuracy and Top-1/Top-5 Accuracy.}

For the CIFAR datasets, we measure and report the accuracy on the 10,000 test samples. On ImageNet, we use the top-1 and top-5 accuracy on the 50,000 validation images. Top-5 accuracy is the fraction of samples for which the true label appears among the top five predicted classes.
\section{Results}
\label{sec-qt-results}


\begin{figure}[t]
\centering
  \includegraphics[width=.8\linewidth]{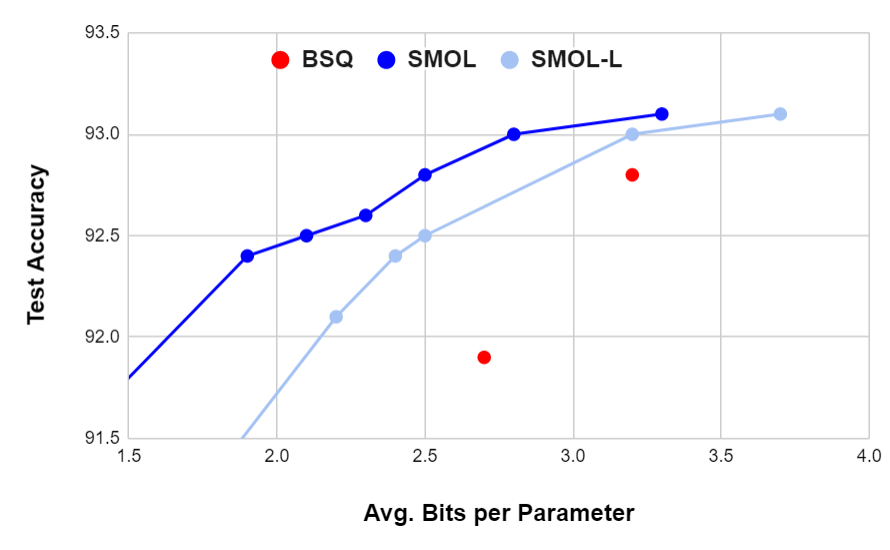}
  \caption{Performance of SMOL and BSQ when quantizing a ResNet-20 trained on CIFAR-10. SMOL-L denotes SMOL with layer-wise precisions.}
  \label{fig-c10}
\end{figure}

\begin{figure}[t]
\centering
  \includegraphics[width=.8\linewidth]{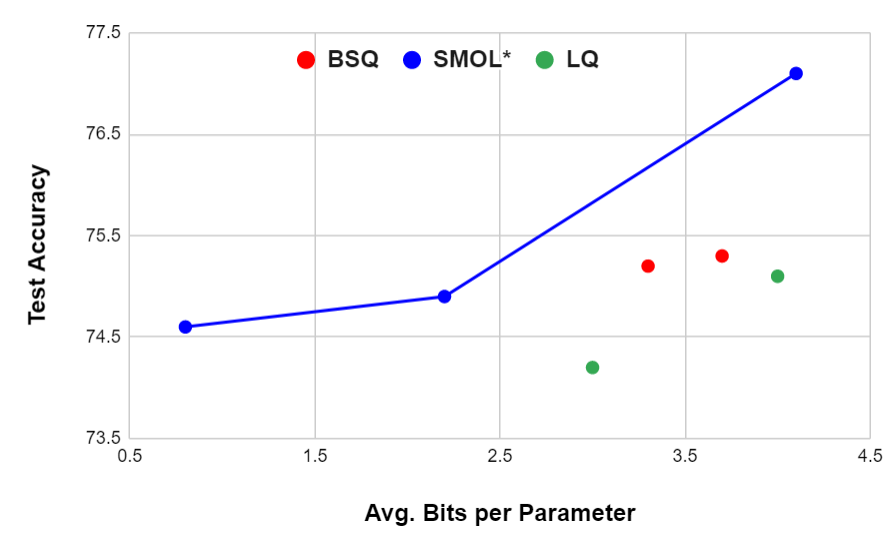}
  \caption{Performance of SMOL, BSQ, and LQ-Nets when quantizing a ResNet-50 trained on ImageNet. SMOL* denotes SMOL with zero-precision allocation.}
  \label{fig-quant-imagenet}
\end{figure}

\begin{table}[t]
\caption{Performance of different quantization methods on ResNet-20 when trained on CIFAR-10. * denotes results with Zero Precision Allocation.}
\label{tab-cifar}
\centering
\begin{tabular}{@{}lccc@{}}
\toprule
                        &             & \multicolumn{2}{c}{ResNet-20}   \\ [-2pt]  \cmidrule{3-4}
\multirow{2}{*}{Method} & Precision   & Avg. Bpp $\downarrow$   & Test         \\ [-1pt]
                        & Granularity & (Ratio $\uparrow$) & Acc. $(\%)$  \\ \midrule
LQ-FP                   &             & 32.0 (1.0)       & 92.1         \\
BSQ-FP                  &             & 32.0 (1.0)       & 92.6         \\ \\[-9pt]\hdashline\\[-8pt]
LQ-Nets                 &  Network    & 1.0  (32.0)      & 90.1         \\
DSQ                     &  Network    & 1.0  (32.0)      & 90.2         \\
SLB                     &  Network    & 1.0  (32.0)      & 90.6         \\ \\[-9pt]\hdashline\\[-8pt]
LQ-Nets                 &  Network    & 2.0  (16.0)      & 91.8         \\
SLB                     &  Network    & 2.0  (16.0)      & 92.0         \\
SMOL*                &  Parameter  & 1.3  (24.6)      & 91.5         \\
SMOL*                &  Parameter  & 1.7  (18.8)      & 92.6         \\ \\[-9pt]\hdashline\\[-8pt]
LQ-Nets                 &  Network    & 3.0  (10.7)      & 92.0         \\
BSQ                     &  Layer      & 2.7  (11.9)      & 91.9         \\
SMOL-L               &  Layer      & 2.4  (13.3)      & 92.4         \\
SMOL                 &  Parameter  & 2.1  (15.2)      & 92.5         \\
SMOL                 &  Parameter  & 2.5  (12.8)      & 92.8         \\ \\[-9pt]\hdashline\\[-8pt]
BSQ                     &  Layer      & 3.2  (10.0)      & 92.8         \\
SMOL-L               &  Layer      & 3.2  (10.0)      & 93.0         \\
SMOL                 &  Parameter  & 2.8  (11.4)      & 93.0         \\ \bottomrule
\end{tabular}
\end{table}

\begin{table}[t]
\caption{Performance of SMOL and BSQ on MobileNetV2 and ShuffleNet when trained on CIFAR-10.}
\label{tab-cifar2}
\centering
\begin{tabular}{@{}llcccccccc@{}}
\toprule
                        &  \multicolumn{2}{c}{MobileNetV2}    & & \multicolumn{2}{c}{ShuffleNet}     \\[-2pt]\cmidrule{2-3}\cmidrule{5-6}
\multirow{2}{*}{Method} &  Avg.             & Test            & & Avg.             & Test            \\[-1pt]
                        &  Bpp $\downarrow$ & Acc.     & & Bpp $\downarrow$ & Acc. \\\midrule
FP                      &  32.0             & 94.4            & & 32.0             & 90.7            \\\\[-9pt]\hdashline\\[-8pt]
BSQ                     &  2.8              & 94.1            & & 3.4              & 91.8            \\
SMOL                 &  1.5              & 94.5            & & 1.8              & 91.6            \\
SMOL                 &  1.7              & 94.8            & & 2.9              & 92.0            \\
\bottomrule
\end{tabular}
\end{table}

\subsection{Quantizing Weights}
\label{sec-qt-results-weights}

\paragraph{CIFAR.}

We first compare SMOL against different quantization methods on the small-scale CIFAR-10 dataset, adopting different networks to evaluate how aggressively each method can quantize weights without degrading the model's generalization performance. Since prior works typically rely on other methods to quantize activations (\eg using PACT \citep{PACT} for activations while applying a new method to the weights), we only consider results with full-precision activations for CIFAR-10 as this isolates each method's performance to its ability to quantize weights, leading to a more direct comparison.

ResNet-20 results are given in Table \ref{tab-cifar}: SMOL comfortably outperforms BSQ and other competing methods by offering higher performance at lower precision. Comparing against BSQ's $91.9\%$ accuracy at $2.7$ bpp, SMOL provides $0.6\%$ higher accuracy at $0.6$ lower bpp ($92.5\%$ at $2.1$ bpp). With zero-precision allocation (SMOL*), our method outperforms BSQ by $0.7\%$ at $1.0$ lower bpp ($92.6\%$ at $1.7$ bpp), and matches the performance of the full-precision model with a $18.8 \times$ compression ratio.

When allocating layer-wise precisions (SMOL-L), we observe a $0.5\%$ higher accuracy at $0.3$ lower bpp compared to BSQ, showing that although per-parameter precisions improve efficiency, our method outperforms the state-of-the-art even when constrained to the less flexible, layer-wise setting (more details in Section \ref{sec-qt-analysis-layer}). Figure \ref{fig-c10} shows efficiency curves for BSQ and SMOL-L. 

Table \ref{tab-cifar2} presents results for MobileNetV2 and ShuffleNet: SMOL also comfortably outperforms BSQ, offering $0.7\%$ higher accuracy at $1.1$ lower bpp on MobileNetV2 ($94.8\%$ at $1.7$ bpp, compared to $94.1\%$ at $2.8$ bpp), and $0.2\%$ higher performance at $0.5$ lower bpp on ShuffleNet ($92.0\%$ at $2.9$ bpp, compared to $91.8\%$ at $3.4$ bpp), while outperforming the full-precision model in both cases.

\begin{table}[t]
\caption{Performance of different quantization methods on ResNet-18 and ResNet-50 models trained on ImageNet. * denotes results with Zero Precision Allocation.}
\label{tab-imagenet}
\centering
\begin{tabular}{@{}llccccccc@{}}
\toprule
                        & & \multicolumn{3}{c}{ResNet-18}                         & & \multicolumn{3}{c}{ResNet-50}                          \\[-2pt]\cmidrule{3-5}\cmidrule{7-9}
\multirow{2}{*}{Method} & & Average          & Compression      & Test            & & Average           & Compression      & Test            \\[-1pt]
                        & & Bpp $\downarrow$ & Ratio $\uparrow$ & Accuracy $(\%)$ & & Bpp $\downarrow$  & Ratio $\uparrow$ & Accuracy $(\%)$ \\\midrule
FP          &     & 32.0  &  1.0  & 69.6             && 32.0 & 1.0 & 76.1               \\\\[-9pt]\hdashline\\[-8pt]
SLB         &     & 2.0/4 &  16.0 & 67.5             && \multicolumn{3}{c}{\Vhrulefill}  \\\\[-9pt]\hdashline\\[-8pt]
LQ-Nets     &     & 3.0/3 &  10.7 & 68.2             && 3.0/3 & 10.7 & 74.2             \\
SMOL*    &     & \multicolumn{3}{c}{\Vhrulefill}  && 0.8/4 & 40.0 & 74.6         \\
SMOL*    &     & 2.3/4 &   13.9 & 69.9             && 2.2/4 & 14.5 & 74.9             \\\\[-9pt]\hdashline\\[-8pt]
LQ-Nets     &     & 4.0/4 &   8.0 & 69.3             && 4.0/4 &  8.0 & 75.1             \\
DSQ         &     & 4.0/4 &   8.0 & 69.6             && \multicolumn{3}{c}{\Vhrulefill}  \\
BSQ         &     & \multicolumn{3}{c}{\Vhrulefill}  && 3.3/4 & 9.7  & 75.2  \\
BSQ         &     & \multicolumn{3}{c}{\Vhrulefill}  && 3.7/4 & 8.6  & 75.3  \\
SMOL*    &     & \multicolumn{3}{c}{\Vhrulefill}  && 4.1/4 & 7.0  & 77.1 \\
SMOL     &     & 4.5/4 &   7.1& 70.6              && \multicolumn{3}{c}{\Vhrulefill} \\
SMOL     &     & 4.2/4 &   7.6& 70.4              && 5.3/4 & 5.9  & 76.9 \\
\bottomrule
\end{tabular}
\end{table}

\paragraph{ImageNet.}

For the large-scale ImageNet classification task, we quantize the ResNet-18 and ResNet-50 models which allow for comparisons against LQ-Nets, DSQ, SLB, and BSQ. 

Results in Table \ref{tab-imagenet} show that SMOL outperforms competing methods while achieving higher performance than the full-precision baselines. On ResNet-18, our method offers $0.6\%$ higher accuracy than LQ-Nets at $1.7$ lower bpp ($69.9\%$ at $2.3$ bpp compared to $69.3\%$ at $4.0$ bpp), while also outperforming the full-precision model by $0.3\%$.

On ResNet-50, SMOL outperforms LQ-Nets at $2.2$ lower bpp, with an average number of bits of $0.8$ -- lower than a binary network. With $4.1$ bpp, our method provides a $1.0\%$ improvement over the full-precision baseline, suggesting that the noise injection adopted by our method has additional regularizing effects that can further improve generalization. Performance by average precision plots for BSQ, SMOL, and LQ are given in Figure \ref{fig-quant-imagenet}.

\begin{table}[t]
\caption{Performance of BSQ and SMOL on image generation on CIFAR-10 with DCGANs.}
\label{tab:image_generation}
\centering
\begin{tabular}{@{}lccc@{}}
    \toprule
                            &  \multicolumn{3}{c}{DCGAN}                                                 \\[-2pt]\cmidrule{2-4} 
    \multirow{2}{*}{Method} &  Avg.              & Inception         & \multirow{2}{*}{FID $\downarrow$} \\[-1pt]
                            &  Bpp $\downarrow$  & Score $\uparrow$  &                                   \\ \midrule
    FP (30 epochs)    &  32.0              & 4.70              & 38.6   \\
    FP (160 epochs)   &  32.0              & 5.67              & 25.8   \\\\[-9pt]\hdashline\\[-8pt]
    BSQ                     &  2.8               & 4.85              & 38.3    \\
    SMOL*                &  0.2               & 4.75              & 35.6    \\
    SMOL*                &  0.5               & 5.03              & 34.1    \\
    SMOL*                &  1.0               & 5.01              & 31.1    \\
    SMOL                 &  1.7               & 5.02              & 34.1    \\
    \\[-9pt]\hdashline\\[-8pt]
    BSQ                     &  3.9               & 4.95              & 37.1    \\
    SMOLs*                &  3.5               & 5.29              & 29.9    \\
    \bottomrule
    \end{tabular}
\end{table}

\begin{figure}[t]
    \centering
    \begin{minipage}[l]{0.49\linewidth}
      \centering
      \tiny{\textsf{BSQ-generated DCGAN with 3.9 bpp}}
      \vspace{1pt}
      \includegraphics[width=\linewidth]{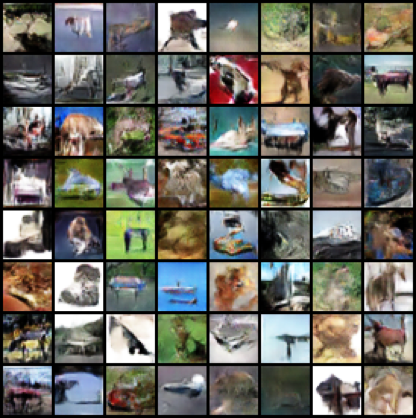}
    \end{minipage}
    \hfill
    \begin{minipage}[l]{0.49\linewidth}
      \centering
      \tiny{\textsf{SMOL-generated DCGAN with 3.9 bpp}}
      \vspace{1pt}
      \includegraphics[width=\linewidth]{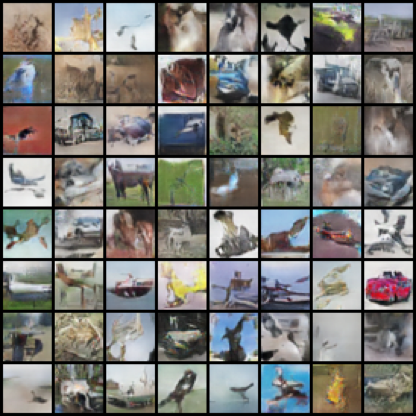}
    \end{minipage}
    \vspace{-5pt}
    \begin{minipage}[l]{0.49\linewidth}
      \centering
      \tiny{\textsf{SMOL-generated DCGAN with 1.2 bpp}}
      \vspace{1pt}
      \includegraphics[width=\linewidth]{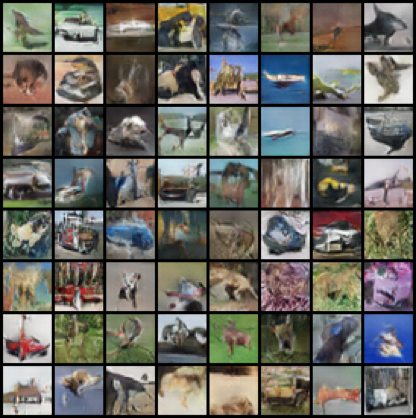}
    \end{minipage}
    \hfill
    \begin{minipage}[l]{0.49\linewidth}
      \centering
      \tiny{\textsf{SMOL-generated DCGAN with 0.2 bpp}}
      \vspace{1pt}
      \includegraphics[width=\linewidth]{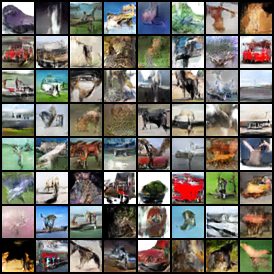}
    \end{minipage}
    \caption{Image generations with a DCGAN trained on CIFAR-10, quantized with BSQ and SMOL.}
    \label{fig:gan}
\end{figure}

\subsection{Quantizing GANs}
\label{sec-qt-results-gan}

We train a DCGAN~\citep{dcgan} model on the CIFAR-10 dataset to perform unconditional image generation on CIFAR-10.

We randomly generate 10,000 samples for all methods, each assessed with Inception Score(IS)~\citep{DBLP:conf/nips/SalimansGZCRCC16} and Fr{\'{e}}chet Inception Distance (FID)~\citep{DBLP:conf/nips/HeuselRUNH17}. As shown in Table~\ref{tab:image_generation}, our method consistently outperforms BSQ, with higher generation quality at lower bpp, 
demonstrating generalization capability to the challenging task of quantizing GANs.

We drastically improve BSQ's FID from $37.1$ at $3.9$ bpp to $29.9$ at only $3.5$ bpp.
For BSQ at $2.8$ bpp, SMOL* achieves $7.2$ better FID with only $1.0$ bpp.
Even when evaluated at an extremely low bpp of $0.2$, our method still generates images with good quality.

\begin{table}[t]
\caption{Performance of PACT and SMOL when quantizing activations of a ResNet-20 trained on CIFAR-10.}
\label{tab-act}
\centering
    \begin{tabular}{@{}lcc@{}}
    \toprule
                            &  \multicolumn{2}{c}{ResNet-20}   \\ [-2pt]  \cmidrule{2-3}
    \multirow{2}{*}{Method} &  Avg.             & Test         \\ [-1pt]
                            &  Bpa $\downarrow$ & Acc. $(\%)$  \\ \midrule
    FP                      &  32.0             & 91.6         \\ \\[-9pt]\hdashline\\[-8pt]
    PACT                    &  2.0              & 89.2         \\
    SMOL-A               &  1.8              & 90.3         \\ \\[-9pt]\hdashline\\[-8pt]
    PACT                    &  3.0              & 91.4         \\
    SMOL-A               &  2.9              & 91.8         \\ 
    SMOL-A               &  2.5              & 91.4         \\ \\[-9pt]\hdashline\\[-8pt]
    PACT                    &  5.0              & 91.6         \\ \bottomrule
    \end{tabular}
\end{table}

\subsection{Quantizing Activations}
\label{sec-qt-results-activations}

 In order to extend SMOL to train precisions for hidden activations, we introduce new trainable parameters to estimate the perturbation limit of activation outputs instead of weights. For an activation tensor $u$, we instantiate a new variable $s$ of the same shape which will be trained jointly with the network's original parameters.

 Similarly to the weight quantization case described in Section \ref{sec-qt-method-method}, at each training iteration $t$ we sample a tensor $\epsilon^{(t)}$ with the same shape as $u$, where each component is drawn uniformly from $[-1, +1]$, and generate perturbed activations $u + \frac{M}{2} \cdot \sigma(s) \odot \epsilon^{(t)}$, which are used in place of $u$ throughout the next layers of the network. The scalar $M$ denotes the range of activation function used to compute $u$, \ie $M=1$ for sigmoid and $M=2$ for tanh activations, and is used to match the magnitudes of the activations and its perturbations. Since ReLU activations are unbounded, we use PACT \citep{PACT} which clips values to $[0, \alpha]$ where $\alpha$ is a new trainable parameter -- this yields $M = \alpha$. Once $s$ has been trained, we map its components to integer precisions which are used to quantize the activations $u$.
 
 Table \ref{tab-act} shows the performance of a ResNet-20 trained on CIFAR-10 whose activations are quantized by PACT and SMOL (referred as SMOL-A): our method provides off-the-shelf improvements over PACT, offering higher accuracy at lower average bits per activation (bpa).

\subsection{Computational Efficiency}
A key question is whether per-parameter precisions can result in inference energy cost reductions. The power required to multiply a 2-bit and 3-bit weight with a 4-bit activation is $2.41 \times$ and $3.83 \times$ higher than what is required for a 1-bit weight; and the latency is $1.91 \times$ and $2.10 \times$ higher, respectively. These numbers are estimated using ripple-carry adder based multiplier designs (which are suitable for low-precision operations) for the corresponding precision settings \citep{hardwarebook}. For the accumulation operations, we assume that the power and latency values are the same for different precision settings.

We take as an example the last convolutional layer of ResNet-20, for which BSQ assigns 2 bits for all parameters while our method assigns 1, 2, and 3 bits to 74\%, 24.5\%, and 1.5\% of the parameters, respectively. In this case, the layer with precisions assigned by SMOL requires only 57.5\% of computation power while improving latency by 36\% compared to BSQ, which amounts to an energy cost reduction (power x latency) of 62.7\%. Note that in real hardware designs, additional control overheads, \eg around 25\% \citep{hardware1}, are required to perform fine-grained mixed-precision operations. Although estimates, these suggest that a fine-grained precision allocation scheme can result in significant energy savings on hardware designed to support the corresponding arithmetic operations.

\section{Experimental Analysis}
\label{sec-qt-analysis}

\begin{figure}[t]
   \centering
      \centerline{\includegraphics[width=0.8\columnwidth]{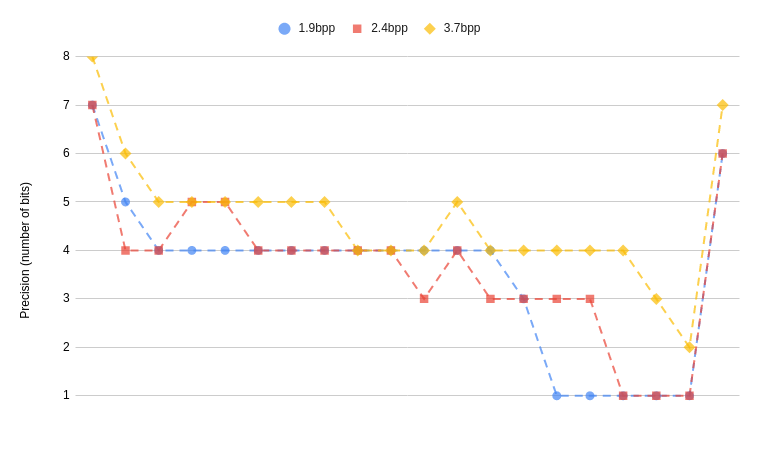}}
      \caption{Layer-wise precisions allocated by SMOL-L on ResNet-20 models trained on CIFAR-10. The x-axis denotes the layer indices: leftmost points denote earlier layers while rightmost denote layers in the later stages of ResNet, which are closer to the final fully-connected layer.}
      \label{fig-layerwise}
\end{figure}

\subsection{Layer-wise Precision Pattern}
\label{sec-qt-analysis-layer}

In Section \ref{sec-qt-results-weights} we present results for SMOL when training layer-wise precisions on a ResNet-20 trained on CIFAR-10 (SMOL-L in Table \ref{tab-cifar}), where higher performance under lower bpp is achieved compared to BSQ. 

Figure \ref{fig-layerwise} shows the precisions allocated to each layer of the ResNet-20 under three different values for $\lambda$: in all cases, later convolutional layers are assigned low precisions compared to earlier stages of the network. The first convolution, along with the fully-connected layer at the end of the network, are assigned significantly higher precision than other layers.

\begin{figure}[t]
   \centering
      \centerline{\includegraphics[width=0.8\columnwidth]{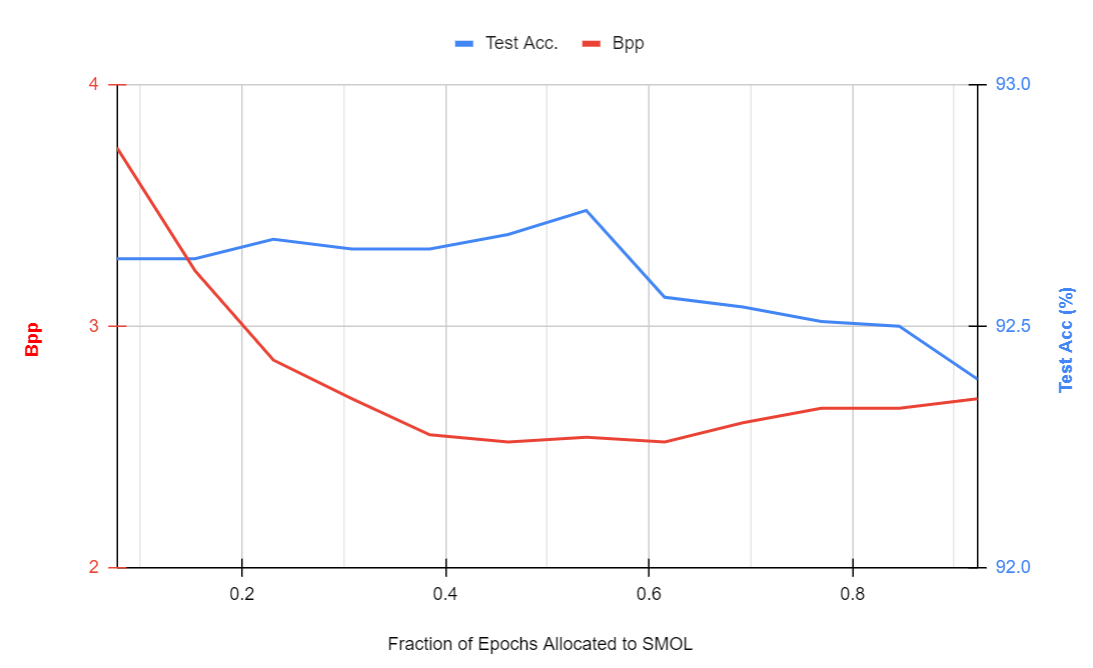}}
      \caption{Performance and average precision of ResNet-20 trained on CIFAR-10 with SMOL when allocating different number of epochs to precision training versus fine-tuning. Curve shows increases of 50 epochs over a total of 650 epochs.}
      \label{fig-ratio}
\end{figure}

\subsection{Balancing Precision Optimization and Fine-Tuning}

In our main experiments we allocate roughly half of the training budget to precision training and the other half to fine-tuning \ie further optimization of the weight parameters while applying quantization and adopting straight-through estimators. Here we show how the performance and size (measured in average number of bits per parameter) of the final model behaves under different ratios for the number of epochs allocated to precision training over the total training budget.

Here, we still train the models with $\lambda=7 \cdot 10^{-7}$, but we change what how many of the total 650 epochs are used for precision training -- the remaining ones are always allocated to fine-tuning the model.

Results are shown in Figure \ref{fig-ratio}: there is significant change in the model's bpp when the fraction of epochs allocated to precision training is lower than $\frac{250}{650}$, indicating that SMOL requires between 200 and 250 epochs to fully optimize the auxiliary variables $s$. For ratios larger than $\frac{250}{650}$ our method yields networks with similar bits-per-parameter (around $2.5$ bpp), but the test accuracy starts to decrease as the ratio increases due to fewer epochs being used to fine-tune the model. The best performance is achieved by allocating 350 epochs for precision training, which results in the remaining 300 epochs being assigned for fine-tuning.

\subsection{Underlying Structure in Precision Patterns}

To evaluate whether the per-parameter precision assignments learned by SMOL have an underlying structure or not, we set up an additional experiment which aims to re-train and evaluate a network after destroying any possible structure in its precision map. In particular, we start from a ResNet-20 model that has been fully trained and fine-tuned by SMOL, and proceed to re-train it under two different settings.

For the first, we re-initialize the weight parameters (using a different random seed) but fully preserve the precision assignments generated by SMOL \ie each $i$-th weight $w_i$ with precision $p_i$ is randomly re-initialized ($w_i \sim \mathcal D_{init}$), but $p_i$ is kept the same. As for the second setting, we also apply a random permutation on the precision tensor of each layer \ie weights $w_i$ and $w_j$ of the same layer have their precisions $p_i$ and $p_j$ swapped -- with the goal of destroying any possible structure that the tensor $p$ might have.

We then re-train the two networks, but skipping precision training and hence allocating a budget of 300 epochs to weight training under quantization and straight-through estimators, akin to fine-tuning.

Table \ref{tab-shuffle} and Figure \ref{fig-shuffle} present results: randomly permuting (shuffling) the precisions yields significantly lower performance ($89.3\%$ compared to $91.5\%$), suggesting that the precision assignments generated by SMOL do have an underlying structure, which can be some form of per-layer organization \ie allocating high or low precision to all elements in the same convolutional filter, or cross-layer structure \ie feature maps (layer outputs) have low or high precision connections to both the previous and the next layer. We leave further the characterization of such structure to future work.

\begin{table}[t]
\caption{Comparison between a ResNet-20 trained on CIFAR-10 with SMOL and its performance when re-trained, either when the per-weight precision assignments are maintained or randomly permuted (shuffled).}
\label{tab-shuffle}
\centering
\begin{tabular}{@{}llcccccccc@{}}
\toprule
\multicolumn{2}{c}{}                        && \multicolumn{3}{c}{ResNet-20} &&  \\    [-2pt]  \cmidrule{4-6}
\multicolumn{2}{c}{\multirow{2}{*}{Method}} && Average &  Compression      & \multicolumn{1}{c}{Test}             && \\[-1pt]
\multicolumn{2}{c}{}                        && Bpp $\downarrow$    &  Ratio $(\times)$ $\uparrow$ & \multicolumn{1}{c}{Accuracy $(\%)$}  &&  \\
\midrule
Original (SMOL)                &     && 2.5 &  12.8& 92.8          &&  \\
Re-trained with same precisions     &     && 2.5 &  12.8& 91.5          &&  \\
Re-trained with shuffled precisions &     && 2.5 &  12.8& 89.3          &&  \\
\bottomrule
\end{tabular}
\end{table}

\begin{figure}[t]
   \centering
      \centerline{\includegraphics[width=0.8\columnwidth]{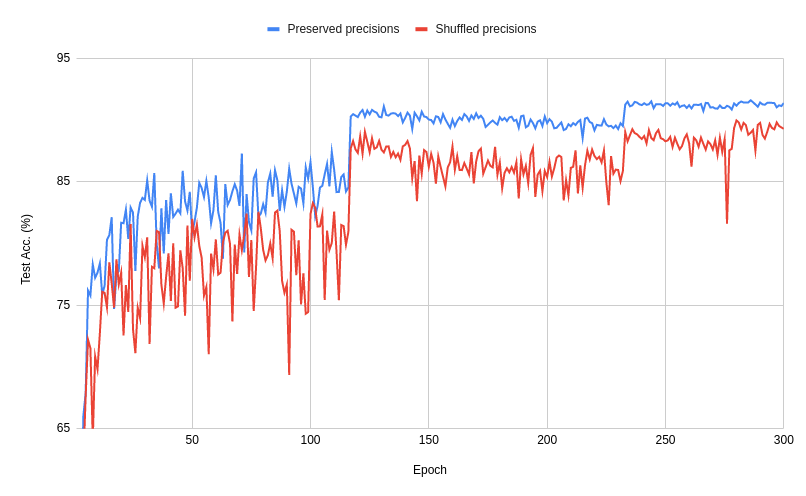}}
      \caption{Generalization performance when re-training networks, with and without randomly permuted (shuffled) precisions. The model whose precisions have been shuffled has visibly lower performance across all stages of training and achieves significantly lower final performance.}
      \label{fig-shuffle}
\end{figure}

\begin{table}[t]
\caption{Performance of SMOL when quantizing a Transformer on IWSLT'14.}
\label{tab-nmt}
\centering
\begin{tabular}{@{}lccc@{}}
\toprule
                        &  \multicolumn{2}{c}{Transformer}         \\[-2pt]\cmidrule{2-3}
\multirow{2}{*}{Method} &  Avg.             & BLEU                 \\[-1pt]
                        &  Bpp $\downarrow$ & Score $\uparrow$     \\\midrule
FP                      &  32.0             & 34.9                 \\
SMOL                 &  3.9              & 34.7                 \\
SMOL                 &  5.8              & 34.9                 \\
\bottomrule
\end{tabular}
\end{table}

\subsection{Quantizing Transformers}

An immediate question is whether SMOL is able to efficiently quantize architectures of different families, for example Transformers which do not use convolutions and whose main component is the attention mechanism.

To answer this question, we run preliminary experiments where we train a standard encoder-decoder Transformer with 6 attention blocks on the neural machine translation IWSLT'14 German to English task, where we quantize all weights except for the layer normalization parameters. A total of 50 epochs were used for training: for SMOL, the first 30 were used to train the precisions and the remaining 20 for fine-tuning.

Results in Table~\ref{tab-nmt} show that SMOL can match the performance of the full-precision model at $5.8$ bpp, which amounts to a $5.5 \times$ compression ratio. At a lower bpp of $3.9$ ($8.2 \times$ ratio), the BLEU score is lower by only $0.2$. Note that we use the same training budget that is commonly employed when training a full-precision model for this task, which is likely suboptimal for quantization. Therefore, although promising, these results are preliminary and can likely be improved by simply training for more epochs.

\section{Discussion}
\label{sec-qt-discussion}

In this chapter, we proposed a novel strategy to optimize precision assignments and quantized weights of a neural network. The core of our method, SMOL, lies in a stochastic approximation for the intractable mixed precision problem. Unlike prior works that require a global precision to be given a-priori, we allow for precisions to be trained jointly with a network's weights in an end-to-end fashion.

\subsection{Contributions}
\label{sec-qt-discussion-contribs}

This chapter makes several key contributions:
\begin{itemize}
    \item We develop a novel approximation for the intractable mixed precision problem, where parameter-wise precisions are free variables to be optimized. Our strategy provides proxy gradients w.r.t.~precisions, allowing them to be trained jointly with the network's parameters with gradient descent.
    \item We show that SMOL is capable of achieving state-of-the-art compression across different domains -- from image classification with ResNets to machine translation with Transformers -- while matching, and sometimes even surpassing, the performance of full-precision models.
\end{itemize}

\subsection{Research Directions}
\label{sec-qt-discussion-directions}

Future research directions could focus on making our proposed method more hardware-friendly. Rather than assigning a different precision for each weight, which is difficult to leverage on accelerators and GPUs, one could restrict the precisions to hardware supported bitwidths (\eg 1, 2, 4, 8) and apply them to continuous parameter blocks. Another direction is to expand the pool of adopted architectures and activations, such as applying SMOL to recurrent networks or vision transformers, as well as using it to quantize softmax activations in attention blocks. Finally, developing a fully deterministic approximation for the original quantization problem may yield an alternative method that is more stable, especially at very low precisions.

\subsection{Impact}
\label{sec-qt-discussion-impact}

Following its publication, SMOL has inspired further work in heterogeneous and mixed-precision quantization for deep learning. Notably, \cite{sysmol} incorporate SMOL into a hardware co-design framework, constraining the precision assignment values and structure so it can be leveraged by existing hardware. By aligning the training procedure with hardware constraints, their SySMOL algorithm achieves practical gains in terms of compression and inference efficiency on CPUs and GPUs. In parallel, \cite{nipq} propose NIPQ, which further improves the idea of using additive noise as a proxy for quantization robustness by optimizing truncation thresholds for weights and activations. Together, these works demonstrate how our framework can be adopted to further advance algorithmic and system-level approaches to mixed-precision quantization.

\newpage
\chapter{Leveraging Parameter Structure}

\section{Introduction}
\label{sec-ava-intro}

\subsection{Motivation \& Strategy}
\label{sec-ava-intro-motivation}

In the previous chapters, we addressed network efficiency through compression methods, where we optimized the network's architecture, sparsity schemes, and precision assignments for quantization. In this chapter, we focus on a different but equally important dimension of efficiency in deep learning: the efficiency of the training process itself. Optimizing neural networks can be often prohibitive due to its heavy computational requirements, and the choice of optimizer plays a key role not only in training speed but also in final generalization performance.

Stochastic Gradient Descent (SGD) remains the dominant optimizer when training simple architectures for computer vision such as ResNets~\citep{resnet1, resnet2}, where components such as batch normalization~\citep{bn} suffice to alleviate obstacles of training networks with many layers. On the other hand, adaptive methods such as Adam~\citep{adam} are typically adopted in natural language processing, where the state-of-the-art consists of sequence-to-sequence models \citep{lstm, transformers}. This division persists despite the fact that adaptive methods should improve training efficiency across domains. A key reason is the belief that adaptive methods converge faster but yield models with worse generalization compared to SGD, particularly on vision tasks.

We revisit this conventional wisdom by analyzing adaptive methods both theoretically and empirically, and by proposing modifications based on rigorous analysis. Section \ref{sec-ava-intro-related} discusses previous works that have analyzed or proposed new adaptive optimizers, typically with the goal of closing the generalization gap. In Section \ref{sec-ava-method-nonconv}, we discuss the stochastic non-convex optimization framework and formally define adaptive methods. Next, Section \ref{sec-ava-method-role} revisits the role of adaptivity and shows that properly controlling it changes Adam's behavior from non-convergent to offering a SGD-like convergence rate. Finally, Section \ref{sec-ava-method-ava} introduces AvaGrad, a novel adaptive optimizer inspired by our theoretical analysis that guarantees better convergence rates and improved hyperparameter separability. Subsequent sections present the adopted experimental setup, discuss empirical results comparing different adaptive methods, and provide further analysis, along with a final discussion on our contributions and impacts.

\subsection{Related Work}
\label{sec-ava-intro-related}

Although many prior works have focused on analyzing adaptive methods under the online or stochastic convex frameworks, we focus on stochastic non-convex optimization which best captures neural network training. \cite{yogi} show that Adam and newly-proposed Yogi converge as $\bO(1 / T)$ given a batch size of $\Theta(T)$, a setting that neither captures the small batch sizes used in practice nor fits in the fully stochastic non-convex optimization framework -- their analysis does not yield convergence for a batch size of $1$. \cite{adamlike} prove a rate of $\bO(\log T / \sqrt T)$ for AdaGrad and AMSGrad given a decaying learning rate. \cite{adabound} proposes AdaBound, whose adaptability is decreased during training, but its convergence is only shown for convex problems \citep{adabound2}.

The correlation between the optimizer's velocity and its parameter-wise learning rates is studied in \cite{adashift}, which proposes making both independent of the current sample: the proposed method, AdaShift, is guaranteed to converge in the convex case but at an unknown rate. \cite{adagradconv} provide convergence rates for a form of AdaGrad without parameter-wise adaptation, also showing that AdaGrad converges but at an unknown rate. \cite{padam} propose PAdam, which matches or outperforms SGD given proper tuning of a newly-introduced hyperparameter -- in contrast to their work, we show that even Adam can match SGD given proper tuning and without introducing new hyperparameters.

\section{Method}
\label{sec-ava-method}

\subsection{Non-convex Optimization}
\label{sec-ava-method-nonconv}

 We consider problems of the form
\begin{equation}
   \min_{\w \in \R^d} f(w) \coloneqq \expec{s \sim \dist}{\fs(\w)}\,,
\end{equation}
where $\dist$ is a probability distribution over a set $\supp$ of
``data points'', $f_s: \R^d \to \R$ are not necessarily convex and indicate the instant loss for each data point $s \in \supp$. As is typically done in non-convex optimization, we assume throughout the paper that $f$ is $\smooth$-smooth, \ie there exists $\smooth$ such that
\begin{equation}
	\normed{\nabla f(w) - \nabla f(w')} \leq \smooth \|w - w'\|
\end{equation}
for all $w,w' \in \R^d$.

We also assume that the instant losses have bounded gradients, \ie $\normed{\nabla \fs (\w)}_\infty \leq \gradb$ for some $\gradb$ and all $s \in \supp$, $\w \in \R^d$.

Following the literature on stochastic non-convex optimization \cite{nonconvex}, we evaluate optimization methods in terms of number
of gradient evaluations required to achieve small loss gradients. We assume that the
algorithm takes a sequence of data points $S = (s_1, \ldots, s_T)$ from which
it sequentially and deterministically computes iterates
$w_1, \ldots, w_T$, using a single gradient evaluation per iterate.

The algorithm then constructs a distribution ${\cal P}(t|S)$ over
$t \in \{1,\ldots,T\}$, samples $t' \sim {\cal P}$ and outputs $w_{t'}$.  We say an algorithm has a convergence rate of $\bO(g(T))$ if
\begin{equation}
	\expec{}{\normed{\nabla f(w_t)}^2} \leq \bO(g(T)) \,,
\end{equation}
where the expectation is over the draw of the $T$ data points $S \sim \mathcal D^T$ and the chosen iterate $w_t$, $t \sim {\cal P}(t|S)$.

\paragraph{Adaptive Methods.}
\label{sec-ava-method-adaptive}

We consider methods which, at each
iteration $t$, receive or compute a gradient estimate $\currg \coloneqq \nabla \fst(\currw)$
and perform an update
\begin{equation}
    \nextw = \currw - \curra \cdot \curre \odot \currm \,,
\label{eq-update}
\end{equation}
where $\curra \in \R$ is the \emph{global learning rate},
$\curre \in \R^d$ are the \emph{parameter-wise learning rates}, and
$\currm \in \R^d$ is the update direction, typically defined in terms of momentum
\begin{equation}
   \currm = \bonet \prevm + (1 - \bonet) \currg
   \quad \textrm{and} \quad m_0 = 0 \,.
\end{equation}
Note that this definition includes non-momentum methods such as AdaGrad and RMSProp, since setting $\bonet = 0$ yields $\currm = \currg$. While in \eqref{eq-update} $\curra$ can always be
absorbed into $\curre$, our representation will be convenient throughout the paper. SGD is a special case of \eqref{eq-update} when $\curre = \vec 1$, and although it offers no adaptation, it enjoys a convergence rate of $\bO(1 / \sqrt T)$ with either constant, increasing, or decreasing learning rates \citep{nonconvex}. It is widely used when training relatively simple networks such as feedforward CNNs \citep{resnet1, densenet}.

Adaptive methods, \eg RMSProp \citep{rmsprop}, AdaGrad
\citep{adagrad}, Adam \citep{adam} use
\begin{equation}
    \curre = \frac{1}{\sqrt{\currv} + \epsilon} \,,
\end{equation}
with $\currv \in \R^d$ as an exponential moving average of second-order gradient statistics:
\begin{equation}
   \currv = \btwot \prevv + (1-\btwot) \currg^2
   \quad \textrm{and} \quad v_0 = 0 \,.
\label{eq-vupdate}
\end{equation}
Here, $\currm$ and $\curre$ are functions of $\currg$ and can be non-trivially correlated, causing the update direction $\curre \odot \currm$ \textbf{not} to be an unbiased estimate of the expected update. Precisely this ``bias'' causes RMSProp and Adam to present nonconvergent behavior even in the stochastic convex setting \citep{amsgrad}.

\subsection{The Role of Adaptivity}
\label{sec-ava-method-role}

We start with a key observation to motivate our studies on how adaptivity affects the behavior of adaptive methods like Adam in both theory and practice: if we let $\curra = \gamma \epsilon$ for some positive scalar $\gamma$, then as $\epsilon$ goes to $\infty$ we have
\begin{equation}
    \frac{\curra}{\sqrt{\currv} + \epsilon} \to \vec \gamma\,,
\end{equation}
where $\vec \gamma$ is the $d$-dimensional vector with all components equal to $\gamma$, and $d$ is the dimensionality of $\currv$ (\ie the total number of parameters in the system). This holds as long as $\currv$ does not explode as $\epsilon \to \infty$, which is guaranteed under the assumption of bounded gradients.

In other words, we have that adaptive methods such as AdaGrad and Adam lose their adaptivity as $\epsilon$ increases, and \emph{behave like SGD} in the limit where $\epsilon \to \infty$ \ie all components of the parameter-wise learning rate vector $\curre$ converge to the same value. This observation raises two questions which are central in our work:
\begin{enumerate}
    \item \textbf{How does $\boldsymbol{\epsilon}$ affect the convergence behavior of Adam?} It has been shown that Adam does not generally converge even in the linear case \citep{amsgrad}. However, as $\epsilon$ increases it behaves like SGD, which in turn has well-known convergence guarantees, suggesting that $\epsilon$ plays a key, although overlooked, role in the convergence properties of adaptive methods.
    \item \textbf{Is the preference towards SGD for computer vision tasks purely due to insufficient tuning of $\boldsymbol{\epsilon}$?} SGD is de-facto the most adopted method when training convolutional networks \citep{vgg, googlenet, resnet1, resnet2, wide, resnext}, and it is believed that it offers better generalization than adaptive methods \citep{marginal}. Moreover, recently proposed adaptive methods such as AdaBelief \citep{adabelief} and RAdam \citep{radam} claim success while underperforming SGD on ImageNet.  However, it is not justified to view SGD as naturally better suited for computer vision, because SGD itself can be seen as a special case of Adam.
\end{enumerate}

\paragraph{Adam's Non-convergence: from Optimality to Stationarity.}

We focus on the first question regarding how the convergence behavior of Adam changes with $\epsilon$. As mentioned previously, ~\cite{amsgrad} has shown that Adam can fail to converge in the stochastic convex setting. The next Theorem, stated informally, shows that Adam's nonconvergence also holds in the stochastic non-convex case, when convergence is measured in terms of stationarity instead of suboptimality:

\begin{theorem}
  For any $\epsilon\geq0$ and constant $\btwot = \btwo \in [0,1)$, there is a
  stochastic optimization problem for which Adam does not converge to
  a stationary point.
  \label{thm:adamdiv}
\end{theorem}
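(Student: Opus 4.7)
My plan is to construct an explicit one-dimensional stochastic optimization problem, with parameters chosen as functions of $\epsilon$ and $\btwo$, for which Adam's iterates fail to achieve a vanishing expected gradient norm. The construction adapts the counterexamples of \citet{amsgrad} from the convex regret setting to the non-convex stationarity setting. Let $s \in \{1,2\}$ with $\mathbb{P}(s=1) = p$, and take $\fs(w) = g_s w + (\delta/2) w^2$ with $g_1 = C > 0$, $g_2 = -1$, so that $f(w) = \bar g w + (\delta/2) w^2$ is $\delta$-smooth and has a unique stationary point $w^\star = -\bar g / \delta$, where $\bar g \coloneqq pC - (1-p)$. The scalars $p$, $C$, and $\delta$ are chosen as functions of $\epsilon$ and $\btwo$ so that the expected Adam step at iterates near $w^\star$ points away from $w^\star$ rather than toward it.

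The heart of the argument will be to expose the systematic bias induced by the correlation between $\currg$ and $\currv$. After a large-gradient sample, $\sqrt{\currv}$ is of order $C$, so the raw step $\currg/(\sqrt{\currv}+\epsilon)$ has magnitude roughly $C/(C+\epsilon)$; after a small-gradient sample it has magnitude roughly $1/(1+\epsilon)$. Neglecting momentum (or folding it in via a geometric series), the expected signed step is approximately $\curra \bigl[-pC/(C+\epsilon) + (1-p)/(1+\epsilon)\bigr]$. Choosing $p$ slightly above $1/(C+1)$ guarantees $\bar g > 0$, while choosing $C$ sufficiently large relative to $\epsilon$ makes the bracketed quantity positive. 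Hence Adam's expected step is \emph{opposite in sign} to $-\nabla f(w)$ in a neighborhood of $w^\star$. This is the step in which the dependence on $\epsilon$ is absorbed: for every $\epsilon \geq 0$, a suitably large $C = C(\epsilon)$ restores the bias inequality.

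Given this drift, a standard Lyapunov-style argument on $\currw - w^\star$ would show that, for $\delta$ sufficiently small, the restoring force $\delta w$ cannot cancel the systematic bias and the iterates remain trapped in a region $R$ on which $\lvert \nabla f(w) \rvert \geq c$ for some constant $c > 0$ independent of $T$. Since the iterates of Adam are deterministic functions of the sample sequence $S$ and $c$ depends neither on $T$ nor on the resampling distribution $\mathcal P(t|S)$, this yields $\expec{}{\normed{\nabla f(w_t)}^2} \geq c^2$ for every $T$, establishing non-convergence to a stationary point. If global $\smooth$-smoothness with a uniform constant or uniformly bounded stochastic gradients are required for compatibility with the assumptions of Section~\ref{sec-ava-method-nonconv}, one can compose $\fs$ with a smooth saturation that is linear on $R$ and capped outside, without affecting the trapping argument.

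The main obstacle will be to make the drift analysis rigorous, since $\currv$ is a moving average that depends on past gradients and is therefore coupled to $\currw$ through the update rule. Tightly bounding $\expec{}{\currg/(\sqrt{\currv}+\epsilon)}$ requires either solving for, or carefully approximating, the stationary distribution of $\currv$ under the i.i.d.\ Bernoulli sampling; this distribution further depends on $\btwo$, with the variance of $\currv$ shrinking as $\btwo \to 1$, which weakens the bias. Accordingly, the construction must be uniform in $(\epsilon, \btwo)$: the parameters $p$, $C$, and $\delta$ will need to be tuned jointly so that a strictly positive bias survives both the $\epsilon$-normalization in the denominator and the $\btwo$-smoothing of $\currv$ for every $\btwo \in [0,1)$.
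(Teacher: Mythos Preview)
Your construction and overall strategy match the paper's closely: a one-dimensional problem with a rare large-gradient arm and a common small-gradient arm, $p$ slightly above $1/(C+1)$ so that the true drift points one way while Adam's normalized drift points the other, and $C$ taken large as a function of $\epsilon$ and $\btwo$. Two simplifying choices the paper makes that you might adopt: it works on the compact interval $[0,1]$ with arms $Cw^2/2$ and $-w$ (so bounded gradients come for free without any saturation, and the ratio $Cw/\sqrt{(1-\btwo)(Cw)^2}=1/\sqrt{1-\btwo}$ is independent of $w$), and it takes $\bone=0$ outright rather than summing a geometric series.

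The genuine gap is your plan to ``solve for, or carefully approximate, the stationary distribution of $\currv$''; this is both unnecessary and the reason your heuristic scaling is off. The paper gets the expected normalized step by two crude one-sided estimates. For the large-gradient arm it simply drops $\btwo\prevv\geq 0$ from the denominator to obtain the \emph{pointwise} bound
\[
\frac{-Cw}{\sqrt{\btwo\prevv+(1-\btwo)(Cw)^2}+\epsilon}\;\geq\;\frac{-1}{\sqrt{1-\btwo}},
\]
valid for every realization---this is where the $\btwo$-dependence you worried about enters, cleanly and uniformly. For the small-gradient arm it applies Jensen to the convex map $x\mapsto 1/(\sqrt{x}+\epsilon)$, so only $\expec{}{\prevv}$ is needed, and since $\currv$ is an EMA of $\currg^2\in\{1,(Cw)^2\}$ with $w\in[0,1]$ one has $\expec{}{\prevv}\leq 1-p+pC^2$ directly from the recursion---no distributional analysis at all. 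Your heuristic ``$1/(1+\epsilon)$ after a small-gradient sample'' is wrong precisely because $\prevv$ may carry an $O(C^2)$ residual from a past large-gradient draw; Jensen sidesteps this by averaging $\prevv$ rather than the ratio. Combining the two bounds gives $\expec{}{\Delta_t}/\curra\geq -p/\sqrt{1-\btwo}+(1-p)/\bigl(\sqrt{\btwo(1-p+pC^2)+1-\btwo}+\epsilon\bigr)$, which is nonnegative for $C$ large enough. With this, no Lyapunov or trapping-region argument is needed either: $\expec{}{\currw}$ is monotone nondecreasing from $w_1>w^\star$, so $\expec{}{\normed{\nabla f(\currw)}^2}\geq (pC\,\expec{}{\currw}-(1-p))^2$ stays bounded away from zero by Jensen again.
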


Note that the problem is constructed \emph{adversarially} in terms of $\epsilon$. The problem considered in Theorem 3 of ~\cite{amsgrad}, used to show Adam's nonconvergence, has no dependence on $\epsilon$ because the proof assumes that $\epsilon=0$.

\paragraph{Adaptive Methods with Controlled Adaptivity.}

The next result shows that for stochastic non-convex problems \emph{that do not depend on $\epsilon$}, Adam actually converges like SGD as long as $\epsilon$ is large enough (or, alternatively, increases during training):

\begin{theorem}
	Assume that $f$ is smooth and $f_s$ has bounded gradients. If $\epsilon_t \geq \epsilon_{t-1} > 0$ for all $t \in [T]$, then for the iterates $\{w_1, \dots, w_T\}$ produced by Adam we have
  \begin{equation}
      \expec{}{\normed{\nabla f(\currw)}^2} \leq \bO \left(
\frac{1 + \sum_{t=1}^T \frac{\curra}{\epsilon_{t-1}^2}\left(1 + \curra + \epsilon_t - \epsilon_{t-1} \right) }{\sum_{t=1}^T \frac{\curra}{1+\epsilon_{t-1}}} \right) \,,
  \label{eq-adamrate}
  \end{equation}
  where $\currw$ is sampled from $p(t) \propto \frac{\curra}{\gradb + \epsilon_{t-1}}$.
  \label{thm:adamconv}
\end{theorem}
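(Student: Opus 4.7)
The plan is to adapt the standard descent-lemma template for non-convex stochastic optimization, while carefully handling the bias introduced by the correlation between the parameter-wise learning rate $\curre$ and the stochastic gradient $\currg$. First, I would apply $\smooth$-smoothness of $f$ along one step of Adam to obtain
\[
f(\nextw) - f(\currw) \leq -\curra \langle \nabla f(\currw), \curre \odot \currm \rangle + \tfrac{\smooth \curra^2}{2}\|\curre \odot \currm\|^2 \,.
\]
Conditioning on the history $\mathcal{F}_{t-1}$ is complicated because $\curre$ depends on $\currg$ via $\currv$. To decouple them, I would write $\curre = \preve + (\curre - \preve)$, noting that $\preve$ is $\mathcal{F}_{t-1}$-measurable.

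Under this decomposition, taking the conditional expectation of the inner product (for clarity, first in the $\bone = 0$ case where $\currm = \currg$) gives $\sum_i \prevei (\nabla f(\currw))_i^2$. Using the bounded-gradient assumption, $\prevei \geq 1/(\gradb + \epsilon_{t-1})$, so this term dominates $\|\nabla f(\currw)\|^2 / (\gradb + \epsilon_{t-1})$. The residual $\langle \nabla f(\currw), (\curre - \preve)\odot \currm \rangle$ can be controlled entry-wise: expanding
\[
\currei - \prevei = \frac{(\sqrt{\prevvi} - \sqrt{\currvi}) + (\epsilon_{t-1} - \epsilon_t)}{(\sqrt{\currvi} + \epsilon_t)(\sqrt{\prevvi} + \epsilon_{t-1})}\,,
\]
the monotonicity $\epsilon_t \geq \epsilon_{t-1}$ lower-bounds the denominator by $\epsilon_{t-1}^2$, while $|\currvi - \prevvi| \leq (1-\btwot)\gradb^2$ plus Lipschitzness of $\sqrt{\cdot}$ on the bounded interval $[0,\gradb^2]$ bounds the numerator. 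This yields $|\currei - \prevei| \leq C(1 + \epsilon_t - \epsilon_{t-1})/\epsilon_{t-1}^2$ for some constant depending on $\gradb$ and $\btwot$. Similarly, $\|\curre \odot \currm\|^2$ is controlled by $\|\curre\|_\infty^2 \|\currm\|^2 = \bO(1/\epsilon_t^2)$ using bounded gradients again.

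Combining these estimates inside the descent inequality, taking total expectations, and summing over $t=1,\dots,T$ with $f(w_1) - f^\star < \infty$ on the left, I would obtain
\[
\sum_{t=1}^T \tfrac{\curra}{\gradb + \epsilon_{t-1}} \mathbb{E}[\|\nabla f(\currw)\|^2] \leq \bO\!\left(1 + \sum_{t=1}^T \tfrac{\curra}{\epsilon_{t-1}^2}\bigl(1 + \curra + \epsilon_t - \epsilon_{t-1}\bigr)\right)\,,
\]
where the $\curra$ inside the right-hand side tracks the smoothness term, the $\epsilon_t - \epsilon_{t-1}$ tracks the drift in $\curre$, and the constant $1$ collects gradient-variance contributions. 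Dividing by the normalization $\sum_t \curra/(\gradb+\epsilon_{t-1})$ of the sampling distribution $p(t)$, and absorbing the constant $\gradb$ via $\gradb + \epsilon_{t-1} = \bO(1+\epsilon_{t-1})$, recovers the stated rate. The main obstacle will be the momentum case $\bonet > 0$, since then $\currm$ depends on the entire gradient history and the simple conditional-expectation step above fails; I would handle this by introducing an auxiliary ``delayed'' sequence (as in the AMSGrad-style analysis) and bounding the gap between $\mathbb{E}[\currm \mid \mathcal{F}_{t-1}]$ and a momentum-smoothed version of $\nabla f(\currw)$, absorbing the resulting lower-order terms into the big-$\bO$ constant.
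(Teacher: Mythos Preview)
Your proposal is correct and follows essentially the same route as the paper: smoothness descent, the splitting $\curre = \preve + (\curre - \preve)$ to exploit $\mathcal F_{t-1}$-measurability of $\preve$, a pointwise bound on $|\currei - \prevei|$ of order $(\gradb + \epsilon_t - \epsilon_{t-1})/\epsilon_{t-1}^2$, then telescoping and normalizing by $p(t) \propto \curra/(\gradb+\epsilon_{t-1})$. Two minor remarks: the paper bounds $|\currei - \prevei|$ more crudely via the range $\currei \in [1/(\gradb+\epsilon_t),\,1/\epsilon_t]$, which sidesteps your appeal to Lipschitzness of $\sqrt{\cdot}$ on $[0,\gradb^2]$ (which fails at $0$; use H\"older continuity or the trivial bound $\leq \gradb$ instead); and the momentum case is handled more simply than you anticipate---just expand $\currm = \bonet \prevm + (1-\bonet)\currg$, Cauchy--Schwarz the $\prevm$ term, and take $\bonet = \bone/\sqrt t$ (the paper in fact sets $\bone=0$ for the final display), so no auxiliary sequence is needed.
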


\begin{corollary}
    \label{cor:adamconv}
	Setting $\epsilon_t = \Theta(T^{p_1} t^{p_2})$ for any $p_1, p_2 > 0$ such that $p_1 + p_2 \geq \frac12$ (e.g. $\epsilon_t = \Theta(\sqrt T), \epsilon_t = \Theta(\sqrt[\leftroot{-2}\uproot{2}4] {Tt}), \epsilon_t = \Theta(\sqrt t)$) and $\curra = \Theta\left(\frac{\epsilon_t}{\sqrt T}\right)$ in Theorem \ref{thm:adamconv} yields a bound of $\bO(1 / \sqrt T)$ for Adam.
\end{corollary}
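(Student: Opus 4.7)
The plan is to substitute the proposed schedules directly into the bound from Theorem \ref{thm:adamconv} and bound the numerator and denominator sums separately. Write $\epsilon_t = \Theta(T^{p_1} t^{p_2})$ and $\alpha_t = \Theta(\epsilon_t / \sqrt{T}) = \Theta(T^{p_1 - 1/2} t^{p_2})$. A useful preliminary observation is that for $p_2 \geq 0$ the ratio $\epsilon_t/\epsilon_{t-1} = (t/(t-1))^{p_2} = 1 + \Theta(1/t)$, so $\epsilon_t$ and $\epsilon_{t-1}$ are of the same order and $\epsilon_t - \epsilon_{t-1} = \Theta(\epsilon_{t-1}/t)$.

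First, I would handle the denominator. Since $p_1 + p_2 \geq 1/2$ implies $\epsilon_{t-1} \to \infty$ with $T$, we have $1 + \epsilon_{t-1} = \Theta(\epsilon_{t-1})$ for large $T$, so
\begin{equation*}
\sum_{t=1}^T \frac{\alpha_t}{1 + \epsilon_{t-1}} \;=\; \Theta\!\left( \sum_{t=1}^T \frac{1}{\sqrt{T}} \right) \;=\; \Theta(\sqrt{T}).
\end{equation*}
Next, I would split the numerator into three pieces corresponding to the factors $1$, $\alpha_t$, and $\epsilon_t - \epsilon_{t-1}$ inside the parenthesis. Using $\alpha_t/\epsilon_{t-1}^2 = \Theta(1/(\sqrt{T}\,\epsilon_{t-1}))$, the first piece contributes $\Theta(\sum_t 1/(\sqrt{T}\,\epsilon_{t-1}))$, which is $O(\sqrt{T}/\epsilon_1) = O(1)$ since $\epsilon_t$ is nondecreasing and $\epsilon_1 = \Theta(T^{p_1})$ with $p_1 + p_2 \geq 1/2$. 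The $\alpha_t$ piece gives $\Theta(\sum_t \epsilon_t^2/(T\epsilon_{t-1}^2)) = \Theta(1)$. The $\epsilon_t - \epsilon_{t-1}$ piece gives, via the preliminary observation,
\begin{equation*}
\sum_{t=1}^T \frac{\alpha_t(\epsilon_t - \epsilon_{t-1})}{\epsilon_{t-1}^2} = \Theta\!\left( \frac{1}{\sqrt{T}} \sum_{t=1}^T \frac{1}{t} \right) = O\!\left(\frac{\log T}{\sqrt{T}}\right).
\end{equation*}

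Combining the pieces, the numerator is $O(1)$ (absorbing the $\log T/\sqrt T$ term), while the denominator is $\Theta(\sqrt{T})$, giving the claimed rate $O(1/\sqrt{T})$. The step I expect to be the trickiest is verifying the $\alpha_t(\epsilon_t - \epsilon_{t-1})/\epsilon_{t-1}^2$ piece cleanly across the boundary cases (e.g.\ $p_2 = 0$ where $\epsilon_t - \epsilon_{t-1} = 0$ trivially, versus $p_1 = 0$ where $\epsilon_1 = \Theta(1)$ makes the first numerator piece borderline); I would handle these by case analysis on whether $p_1 > 0$ or $p_1 = 0$ (forcing $p_2 \geq 1/2$), and by using an integral comparison $\sum_{t=1}^T t^{-1} \leq 1 + \log T$ rather than sharper estimates, since an extra logarithmic factor is harmless relative to the $\sqrt{T}$ gap between numerator and denominator.
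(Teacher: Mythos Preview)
Your overall strategy---substitute the schedules into the bound of Theorem~\ref{thm:adamconv} and control numerator and denominator separately---is exactly right, and the paper does not spell out the corollary beyond stating it. However, your bound on the first numerator piece has a real gap. You write that
\[
\sum_{t} \frac{1}{\sqrt{T}\,\epsilon_{t-1}} \;=\; O\!\left(\frac{\sqrt{T}}{\epsilon_1}\right) \;=\; O(1)
\]
because $\epsilon_1 = \Theta(T^{p_1})$ and $p_1+p_2\geq \tfrac12$. But $\sqrt{T}/\epsilon_1 = T^{1/2-p_1}$, which is $O(1)$ only when $p_1\geq \tfrac12$; the constraint $p_1+p_2\geq \tfrac12$ cannot help once $p_2$ has been discarded from the bound. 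For the example $\epsilon_t=\Theta(\sqrt t)$ (i.e.\ $p_1=0$, $p_2=\tfrac12$), your estimate gives $O(\sqrt T)$, not $O(1)$, and the final rate would degrade to $O(1)$ rather than $O(1/\sqrt T)$. Your later remark about case analysis on $p_1=0$ versus $p_1>0$ does not fix this, since the issue arises for every $p_1<\tfrac12$.

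The repair is to keep the $t$-dependence rather than bounding by the smallest term: with $\epsilon_{t-1}=\Theta(T^{p_1}t^{p_2})$, integral comparison on $\sum_{t=1}^T t^{-p_2}$ gives $\Theta(T^{1-p_2})$ for $p_2<1$ (and $O(\log T)$ for $p_2\geq 1$), so the first piece is
\[
\Theta\!\left(\frac{1}{\sqrt T}\cdot\frac{1}{T^{p_1}}\cdot T^{1-p_2}\right)=\Theta\!\left(T^{\,1/2-p_1-p_2}\right)=O(1)
\]
exactly by the hypothesis $p_1+p_2\geq \tfrac12$. This is in fact the only place in the argument where that hypothesis is genuinely used---your pieces two and three are already $O(1)$ and $O(\log T/\sqrt T)$ without it.
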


Together, the two theorems above give a precise characterization of how $\epsilon$ affects the theoretical behavior of Adam and other adaptive methods: not only is convergence ensured but a SGD-like rate of $\bO(1 / \sqrt T)$ is guaranteed as long as $\epsilon$ is large enough. While Adam behaves like SGD in the limit $\epsilon \to \infty$, we show that it suffices for $\epsilon$ to be $\bO(\sqrt T)$ to guarantee a SGD-like convergence rate. We believe Theorem~\ref{thm:adamconv} is more informative than Theorem~\ref{thm:adamdiv} for characterizing Adam's behavior, as convergence analyses in the optimization literature typically consider non-adversarial examples.

At a first glance, the above statement seems to contradict Theorem 1 of \cite{amsgrad}, but note the difference in the order of quantifies. In Theorem 1 the optimization problem is designed according to a given fixed $\epsilon$, while in Theorem \ref{thm:adamconv} $\epsilon$ is either increasing or fixed as a function of $T$, causing an implicit dependence on the underlying problem -- in particular, achieving gradient norm less than some positive constant requires setting $T$ large enough in terms of $M$, $\gradb$ and $\delta$.

Indeed, there is no contradiction since Theorem \ref{thm:adamdiv} sets $\gradb$ to be large w.r.t.~a given, fixed $\epsilon$, while Theorem \ref{thm:adamconv} only guarantees small gradient norms with a fixed $\epsilon$ if its value is large w.r.t.~$T$ (and consequently, also $\gradb$). Note that $\epsilon_t \propto \sqrt t$ avoids implicit dependencies between $\epsilon$ and problem-specific constants $\smooth$ and $\gradb$.

For natural, non-adversarial problems, Theorem \ref{thm:adamconv} better captures the behavior of Adam and suggests that adopting a large enough $\epsilon$  not only avoids nonconvergence, but also yields a $\bO(1 / \sqrt T)$ rate similar to SGD. Note that $\alpha / (\sqrt \currv + \epsilon) \to \vec 1$ for $\alpha=\epsilon \to \infty$, meaning that Adam degenerates to SGD (excluding the bias correction) as $\epsilon$ becomes sufficiently large, and thus \emph{must} yield a similar empirical performance on non-adversarial tasks.

\paragraph{Adaptive Methods with Delayed Adaptivity.}

\begin{algorithm}[t]
    \caption{\textsc{Delayed Adam}}
    \label{alg:dadam}
   \textbf{Input:}
      $\w_1 \in \R^d$, $\curra, \epsilon>0$, $\bonet, \btwot \in [0,1)$
   \begin{algorithmic}[1]
      \State Set $m_0 = 0, v_0 = 0$
      \For{$t = 1$ \textbf{to} $T$}
         \State Draw $s_t \sim \dist$
         \State Compute $\currg = \nabla \fst(\currw)$
         \State $\currm = \bonet \prevm + (1-\bonet) \currg$
         \State $\curre = \frac{1}{\sqrt{\prevv} + \epsilon}$
         \State $\nextw = \currw - \curra \cdot \curre \odot \currm$
         \State $\currv = \btwot \prevv + (1-\btwot) \currg^2$
      \EndFor
    \end{algorithmic}
\end{algorithm}

We present a theoretical analysis on the convergence of adaptive methods, but instead of analyzing how $\epsilon$ affects the convergence of Adam, here we focus on better understanding \emph{why} Adam can fail to converge for problems that depend on its hyperparameter settings.

We first note that the `adversarial' problems designed to show Adam's nonconvergence in Theorem 3 of \cite{amsgrad} and our Theorem~\ref{thm:adamdiv} exploit the correlation between $\currm$ and $\curre$ to guarantee that Adam takes overly conservative steps when presented with rare samples that contribute significant to the objective.

While \cite{amsgrad} already propose a modification for Adam that guarantees its convergence, it relies on explicitly constraining the parameter-wise learning rates $\curre$ to be point-wise decreasing which can harm the method's adaptiveness. We present a simple way to directly circumvent the fact that $\currm$ and $\curre$ are correlated without constraining $\curre$, guaranteeing a $\bO(1 / \sqrt T)$ rate for stochastic non-convex problems while being applicable to virtually any adaptive method.

Our modification consists of employing a 1-step delay in the update of $\curre$, or equivalently replacing $\curre$ by $\preve$ in the method's update rule for $\w_{t+1}$. Although there is still statistical dependence between $\currm$ and $\currv$, this ensures that $\curre$ is independent of the current sample $s_t$, which the following result shows to suffice for SGD-like convergence:

\begin{theorem}
  \label{thm:dadamconv}
	Assume that $f$ is smooth and $f_s$ has bounded gradients for all $s \in \supp$. For any optimization method that performs updates following \eqref{eq-update} such that $\curre$ is independent of $s_t$ and $\lowinf \leq \currei \leq \highinf$ for positive constants $\lowinf$ and $\highinf$, setting $\curra = \alpha' / \sqrt T$ yields
\begin{equation}
\begin{split}
     \expec{}{\normed{\nabla f(\currw)}^2} & \leq \bO \left( \frac{1}{\lowinf \sqrt T} \left(\frac{1}{\alpha'} + \alpha' \highinf^2 \right) \right) \,,
\end{split}
\label{eq-simplerate}
\end{equation}
where $\currw$ is uniformly sampled from $\{w_1, \dots, w_T\}$. 

Moreover, if $s_t$ is independent of $Z \coloneqq \sum_{t=1}^T \alpha_t \min_i \currei$, then setting $\curra = \alpha'_t / \sqrt T$ yields
  \begin{equation}
      \expec{}{\normed{\nabla f(\currw)}^2} \leq \bO \left(  \frac{1}{\sqrt T}
            \expec{}{\frac{\sum_{t=1}^T 1 +
              {\alpha'_t}^2 \norm{\curre}^2}{\sum_{t=1}^T \alpha'_t \min_i \currei}} \right) \,,
	\label{eq-refrate}
  \end{equation}
  where $\currw$ is sampled from $p(t) \propto \curra \cdot \min_i \currei$.
\end{theorem}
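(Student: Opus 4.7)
My plan is to derive both bounds via the standard $L$-smoothness descent lemma, leveraging the key property that $\curre$ is independent of $s_t$ — this makes $\E[\curre \odot \currg \mid \mathcal F_{t-1}] = \curre \odot \nabla f(\currw)$ (modulo a momentum contribution handled separately), which is precisely what breaks the correlation exploited in the non-convergence construction of Theorem~\ref{thm:adamdiv}. I would proceed in three stages.

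First, I would apply $L$-smoothness to the update $\nextw = \currw - \curra \cdot \curre \odot \currm$, obtaining
\[
    f(\nextw) \leq f(\currw) - \curra \langle \nabla f(\currw), \curre \odot \currm \rangle + \tfrac{L \curra^2}{2} \|\curre \odot \currm\|^2 \,.
\]
Taking conditional expectation given $\mathcal F_{t-1}$ (so $\currw$ and $\curre$ are measurable, and $s_t$ is the only source of randomness in $\currm$), the cross term becomes $\curra \langle \nabla f(\currw), \curre \odot \nabla f(\currw) \rangle$ plus a momentum contribution from $\prevm$. The quadratic term is bounded by $\tfrac{L \curra^2 \gradb^2}{2} \|\curre\|^2$ using $\|\currg\|_\infty \leq \gradb$. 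Using $\langle \nabla f(\currw), \curre \odot \nabla f(\currw) \rangle \geq \min_i \currei \cdot \|\nabla f(\currw)\|^2$ and telescoping from $t=1$ to $T$ yields
\[
    \sum_{t=1}^T \curra \E[\min_i \currei \cdot \|\nabla f(\currw)\|^2] \leq f(w_1) - f^* + \tfrac{L \gradb^2}{2} \sum_{t=1}^T \curra^2 \E[\|\curre\|^2] \,.
\]

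Second, for the uniform-sampling bound, I would plug in the deterministic bounds $\min_i \currei \geq \lowinf$ and $\|\curre\|^2 \leq d\highinf^2$, substitute $\curra = \alpha'/\sqrt T$, and divide by $\alpha' \lowinf \sqrt T$; the $(1/\alpha' + \alpha' \highinf^2)$ dependence falls out from balancing the constant and quadratic terms. For the refined bound, I would instead divide the pathwise inequality by $Z = \sum_t \curra \min_i \currei$ \emph{before} taking expectations. The weighted distribution $p(t) \propto \curra \min_i \currei$ was chosen precisely so that $\E_{t \sim p(\cdot|S)}[\|\nabla f(w_t)\|^2] = Z^{-1} \sum_t \curra \min_i \currei \|\nabla f(w_t)\|^2$. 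Under the assumption $s_t \perp Z$ (interpreted as $s_t \perp Z \mid \mathcal F_{t-1}$), the tower property lets $\nabla f(w_t)$ factor out of the conditional expectation, giving
\[
    \E\!\left[\frac{\sum_t \curra \langle \nabla f(w_t), \curre \odot \currg \rangle}{Z}\right] \geq \E\!\left[\frac{\sum_t \curra \min_i \currei \|\nabla f(w_t)\|^2}{Z}\right] \,.
\]
Combining with the pathwise telescoping inequality, substituting $\curra = \alpha'_t/\sqrt T$, and collecting constants yields the stated $\bO(T^{-1/2})$ factor times $\E[Z^{-1}(\sum_t 1 + {\alpha'_t}^2 \|\curre\|^2)]$.

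The main obstacle I anticipate is the handling of momentum when $\bonet > 0$: then $\E[\currm \mid \mathcal F_{t-1}] = \bonet \prevm + (1-\bonet) \nabla f(\currw)$, so the cross term picks up a $\bonet \langle \nabla f(\currw), \curre \odot \prevm \rangle$ remainder. The standard remedy is to work with the shifted iterate $z_t = \currw + \frac{\bonet}{1-\bonet}(\currw - w_{t-1})$, whose smoothness recursion removes the momentum bias at the cost of extra $\bO(\curra^2)$ terms that can be absorbed into the $\|\curre\|^2 \gradb^2$ bound; carefully tracking these to preserve the stated rate is the delicate bookkeeping step. The second subtlety is justifying the interchange of expectation and division by $Z$ in the refined bound — the independence hypothesis on $s_t$ and $Z$ is doing the work that would otherwise require an application of Jensen's inequality (which goes the wrong way for $\E[X/Y]$), and the proof needs to invoke it at exactly the point where $\E[\currg \mid \mathcal F_{t-1}, Z] = \nabla f(\currw)$ is used.
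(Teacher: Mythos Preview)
Your proposal is correct and follows the same backbone as the paper: descent lemma, use independence of $\curre$ and $s_t$ to replace $\currg$ by $\nabla f(\currw)$ in the cross term, telescope, then normalize (uniformly for the first bound, by $Z$ for the refined bound). The handling of the refined bound---dividing pathwise by $Z$ and invoking the independence assumption at the point where $\E[\currg \mid \mathcal{F}_{t-1}, Z] = \nabla f(\currw)$ is needed---matches the paper's strategy, though the paper phrases it as conditioning on $Z$ \emph{after} the per-step expectation over $s_t$ has already been taken, using $\E\bigl[\E_{s_t}[\,\cdot\,] \,\big|\, Z\bigr] = \E[\,\cdot \mid Z]$.

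The one meaningful divergence is momentum. The paper does \emph{not} use the shifted-iterate technique. Instead it bounds the remainder $\bonet\langle \nabla f(\currw), \curre \odot \prevm\rangle$ crudely via Cauchy--Schwarz as $\bonet \gradbtwo \,\|\curre \odot \prevm\| \leq \bonet \gradbtwo^2 \highinft$, and makes this summable by \emph{assuming a decaying schedule} $\bonet = \bone/\sqrt{t}$, so that $\sum_t \curra \bonet \leq 2\alpha'\bone$; the resulting $\highinf$ constant is then absorbed into the $\alpha' \highinf^2$ term via Young's inequality $\highinf \leq \tfrac{1}{\alpha'} + \alpha'\highinf^2$. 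For the refined bound the paper simply takes $\bone = 0$. Your shifted-iterate route would handle constant momentum without the decaying-schedule assumption, at the cost of heavier bookkeeping; the paper's route is shorter but silently imposes that extra schedule (it appears only in the intermediate lemma, not in the theorem statement).
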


\subsection{AvaGrad}
\label{sec-ava-method-ava}

\begin{algorithm}[t]
    \caption{\textsc{AvaGrad}}
    \label{alg:avagrad}
   \textbf{Input:}
      $\w_1 \in \R^d$, $\curra, \epsilon>0$, $\bonet, \btwot \in [0,1)$
   \begin{algorithmic}[1]
      \State Set $m_0 = 0, v_0 = 0$
      \For{$t = 1$ \textbf{to} $T$}
         \State Draw $s_t \sim \dist$
         \State Compute $\currg = \nabla \fst(\currw)$
         \State $\currm = \bonet \prevm + (1-\bonet) \currg$
         \State $\curre = \frac{1}{\sqrt{\prevv} + \epsilon}$
         \State $\nextw = \currw - \curra \cdot \frac{\curre}{ \norm{\curre / \sqrt d}_2} \odot \currm$
         \State $\currv = \btwot \prevv + (1-\btwot) \currg^2$
      \EndFor
    \end{algorithmic}
\end{algorithm}

The bound in Equation \eqref{eq-refrate} depends on the learning rate $\curra$ and on both the squared norm and smallest value of the parameter-wise learning rate $\curre$, namely $\norm{\curre}^2$ and $\min_i \currei$, enabling us to analyze how the relation between $\curra$ and $\curre$ affects the convergence rate, including how the rate can be improved by adopting a learning rate $\curra$ that depends on $\curre$.

Setting $\curra' = \norm{\curre}^{-1}$ yields a bound on the convergence rate of
\begin{equation}
	\bO \left( \frac{\sqrt T}{\sum_{t=1}^T \frac{\min_i \currei}{\norm{\curre}}} \right)
\end{equation}
Note that such bound is stronger than the one in Equation \eqref{eq-simplerate}: given constants $\lowinf$ and $\highinf$ as in Theorem \ref{thm:dadamconv}, we have $\lowinf \leq \min_i \currei$ and $\norm{\curre} \leq \sqrt d \highinf$, yielding an upper bound of $\bO(H / L \sqrt T)$ that matches $\eqref{eq-simplerate}$ when $\alpha' = H^{-1}$.

Note that, for $\curre = 1/(\sqrt \currv + \epsilon)$, having $\curra' = \norm{\curre}^{-1}$ is equivalent to normalizing $\curre$ prior to each update step, which is precisely how we arrived at AvaGrad's update rule (with the exception of accounting for $d$, the dimensionality of $\curre$, when performing normalization). 

Additionally, Theorem~\ref{thm:dadamconv} predicts the existence of two distinct regimes in the behavior of Adam-like methods. Taking $\alpha' = \Theta(H^{-1})$ minimizes the bound in Equation \eqref{eq-simplerate} and yields a rate of $\bO(\frac1{\sqrt T} \frac{H}{L}) = \bO \left( \frac1{\sqrt T}\max \left(1, \frac{\gradb}{\epsilon} \right) \right)$ once we take $\lowinf = (\gradb+\epsilon)^{-1}$ and $\highinf = \epsilon^{-1}$, which can be shown to satisfy $\lowinf \leq \currei \leq \highinf$ for Adam-like methods.

In this case, the convergence rate depends on $\frac{\gradb}{\epsilon}$, or, informally, how $\currv$ compares to $\epsilon$ ($\gradb$ is an upper bound on the magnitude of the gradients, hence directly connected to $\currv$).

We now introduce AvaGrad, a novel adaptive method presented as pseudo-code in Algorithm \ref{alg:avagrad}. The key difference between AvaGrad and Adam lies in how the parameter-wise learning rates $\curre$ are computed and their influence on the optimization dynamics. In particular, AvaGrad adopts a \emph{normalized} vector of parameter-wise learning rates, which we later show to be advantageous in multiple aspects: it yields better performance and easier hyperparameter tuning in practice, while in theory it results in better convergence rate guarantees.

For convenience, we also account for the dimensionality $d$ of $\curre$ (\ie the total number of parameters in the system) when performing normalization: more specifically, we divide $\curre$ by $\lVert \curre / \sqrt d \rVert_2$ in the update rule, which is motivated by fact that the norm of random vectors increases as $\sqrt d$, and also observed to be experimentally robust to changes in $d$ (\eg networks with different sizes). Alternatively, this normalization can be seen as acting on the global learning rate $\curra$ instead, in which case AvaGrad can be seen as adding an internal, dynamic learning rate schedule for Adam.

Lastly, AvaGrad also differs from Adam in the sense that it updates $\currv$, the exponential moving average of gradients' second moments, \emph{after} the update step. This implies that parameters are updated according to the second-order estimate of the previous step \ie there is a 1-step \emph{delay} between second-order estimates and parameter updates. Such delay is fully motivated by theoretical analyses, and our preliminary experiments suggest that it does not impact AvaGrad's performance on natural tasks.

\section{Experimental Setup}
\label{sec-ava-expsetup}

~

\subsection{Datasets}
\label{sec-ava-expsetup-data}

\paragraph{CIFAR-10 and CIFAR-100.}

Each of the CIFAR datasets~\citep{cifar} consist of $32 \times 32$ color images. CIFAR-10 is split into 50,000 and 10,000 training and test samples, each belonging to one out of 10 classes. CIFAR-100 has 100 classes with the same training/test set split. We pre-process the data by applying channel-wise normalization to all images using statistics computed from the training set. We use the standard data augmentation pipeline from~\cite{resnet1}, which includes random crops and horizontal flips.

\paragraph{ImageNet.}

We employ the ILSVRC 2012 subset of ImageNet~\citep{imagenet}, which contains roughly 1.28 million training images and 50,000 validation images belonging to 1,000 different object classes. We use single $224 \times 224$ center-crop images for both training and validation.
We follow~\cite{gross} for pre-processing and data augmentation, which consists of scale augmentation (random crops of different sizes and aspect ratios are rescaled back to the original size with bicubic interpolation), photometric distortions (random changes to brightness, contrast, and saturation), lighting noise, and horizontal flips. Channel-wise normalization is employed using statistics from a random subset of the training data.

\paragraph{Penn Treebank.}

For language modeling, we use the Penn Treebank (PTB) corpus~\citep{ptb}, a widely adopted benchmark consisting of roughly one million words drawn from the Wall Street Journal articles. Rare words are replaced with a special unknown token, and the vocabulary is limited to 10,000 words. The dataset is split into 930k, 74k, and 82k tokens for training, validation, and testing, respectively.

\subsection{Models}
\label{sec-ava-expsetup-models}

\paragraph{Wide ResNet 28.}

We train Wide ResNets~\citep{wide} with 28 layers on CIFAR-10 and CIFAR-100. Wide ResNets (WRNs) build upon pre-activation Residual Networks by increasing the number of channels by a factor. We adopt both WRNs 28-10 and WRNs 28-4, where the number of channels per convolutional layer are scaled by factors of 4 and 10, respectively.

\paragraph{ResNet-50.} For experiments on ImageNet, we use ResNet-50, a widely adopted benchmark architecture that achieves a strong balance between depth, computational efficiency, and performance. ResNet-50 is built from bottleneck residual blocks -- each block consists of a $1 \times 1$ depth-reducing convolution, a $3 \times 3$ dept-preserving convolution, and a $1 \times 1$ depth-increasing convolution, each followed by batch normalization and ReLU activations. Our ResNet-50 experiments are designed to test the scalability of sparsification methods to large-scale networks and complex datasets.

\paragraph{GGAN.}

We use a Geometric GAN (GGAN, \cite{ggan}) for image generation on CIFAR-10. It follows the same architecture as a DCGAN, where the generator has four transposed convolutions with batch norm and ReLU activations and the discriminator uses four strided convolutional layers with batch norm and LeakyReLU activations. Unlike standard DCGANs, GGANs are trained with the hinge loss.

\paragraph{LSTM.}

For language modeling on Penn Treebank, we adopt the 3-layer LSTM~\citep{lstm} model from \cite{awd} with 300 hidden units per LSTM layer.

\subsection{Training}
\label{sec-ava-expsetup-training}

\paragraph{CIFAR.}

We adopt the same learning rate schedule as \cite{wide}, decaying $\alpha$ by a factor of 5 at epochs 60, 120 and 160 -- each network is trained for a total of 200 epochs on a single GPU. We use a weight decay of $0.0005$, a batch size of 128, a momentum of $0.9$ for SGD, and $\bone = 0.9, \btwo = 0.999$ for each adaptive method.

We select a random subset of 5,000 samples from CIFAR-10 to use as the validation set when tuning $\alpha$ and $\epsilon$ of each adaptive method. We perform grid search over a total of 441 hyperparameter settings, given by all combinations of $\epsilon \in \{10^{-8}, 2 \cdot 10^{-8}, 10^{-7}, \dots, 100\}$ and $\alpha \in \{5 \cdot 10^{-7}, 10^{-6}, 5 \cdot 10^{-6}, \dots, 5000\}$.

Next, we fix the best $(\alpha, \epsilon)$ values found for each method and train a Wide ResNet-28-10 on both CIFAR-10 and CIFAR-100, this time evaluating the test performance. We do not re-tune $\alpha$ and $\epsilon$ for adaptive methods due to the practical infeasibility of training a Wide ResNet-28-10 roughly 8000 times. We tune the learning rate $\alpha$ of SGD using the same search space as before, and confirm that the learning rate $\alpha=0.1$ commonly adopted when training ResNets \cite{resnet1, resnet2, wide} performs best in this setting.

\paragraph{ImageNet.}

We transfer the hyperparameters from our CIFAR experiments for all methods. The network is trained for 100 epochs with a batch size of $256$, split between 4 GPUs, where the learning rate $\alpha$ is decayed by a factor of 10 at epochs 30, 60 and 90, and we also adopt a weight decay of $10^{-4}$. Note that the learning rate of $0.1$ adopted for SGD agrees with prior work that established new state-of-the-art results on ImageNet with residual networks \citep{resnet1, resnet2, resnext}.

\paragraph{Language Modeling.}

We first perform hyperparameter tuning over all combinations of $\epsilon \in \{10^{-8}, 5 \cdot 10^{-7}, \dots, 100\}$ and $\alpha \in \{2 \cdot 10^{-4}, 10^{-3}, \dots, 20\}$, training each model for 500 epochs and decaying $\alpha$ by $10$ at epochs 300 and 400. Since $\epsilon$ affects AdaBelief differently and its official codebase recommends values as low as $10^{-16}$ for some tasks \footnote{\scriptsize{\url{github.com/juntang-zhuang/Adabelief-Optimizer}, ver.~9b8bb0a}}, we adopt a search space where candidate values for $\epsilon$ are smaller by a factor of $10^{-8}$ \ie starting from $10^{-16}$ instead of $10^{-8}$.

We use a batch size of 128, BPTT length of 150, and weight decay of $1.2 \times 10^{-6}$. We also employ dropout with the recommended settings for this model \citep{awd}. Not surprisingly, our tuning procedure returned small values for $\epsilon$ as being superior for adaptive methods, with Adam, AMSGrad, and AvaGrad performing optimally with $\epsilon=10^{-8}$.

Next, we train the same 3-layer LSTM but with 1000 hidden units, transferring the $(\alpha, \epsilon)$ configuration found by our tuning procedure. For SGD, we again confirmed that the transferred learning rate performed best on the validation set when training the wider model.

\paragraph{GANs.}

We adopt a batch size of 64 and train the networks for a total of 60,000 steps, where the discriminator is updated twice for each generator update. We train the GAN model with the same optimization methods considered previously, performing hyperparameter tuning over $\epsilon \in \{10^{-8}, 10^{-6}, 10^{-4}\}$ and $\alpha \in \{10^{-5}, 2 \cdot 10^{-5}, 10^{-4}, \dots, 0.1 \}$ for each adaptive method, and $\alpha \in \{10^{-6}, 2 \cdot 10^{-6}, 10^{-5}, \dots, 1.0\}$ for SGD.

\subsection{Evaluation}
\label{sec-ava-expsetup-eval}

\paragraph{Test Accuracy and Top-1/Top-5 Accuracy.}

For the CIFAR datasets, we measure and report the accuracy on the 10,000 test samples. On ImageNet, we use the top-1 and top-5 accuracy on the 50,000 validation images. Top-5 accuracy is the fraction of samples for which the true label appears among the top five predicted classes.

\paragraph{FID.}

For our GAN experiments, the performance of each model is measured in terms of the Fréchet Inception Distance (FID) \citep{fid} computed from a total of 10,000 generated images.
\section{Results}
\label{sec-ava-results}

Unlike other works in the literature, we perform extensive hyperparameter tuning on $\epsilon$ (while also tuning the learning rate $\alpha$): following our observation that Adam behaves like SGD when $\epsilon$ is large, we should expect adaptive methods to perform comparably to SGD if hyperparameter tuning explores large values for $\epsilon$.

For all experiments we consider the following popular adaptive methods: Adam \citep{adam}, AMSGrad \citep{amsgrad}, AdaBound \citep{adabound}, AdaShift \citep{adashift}, RAdam \citep{radam}, AdaBelief \citep{adabelief}, and AdamW \citep{adamw}. We also report results of AvaGrad, our newly-proposed adaptive method, along with its variant with decoupled weight decay \citep{adamw}, which we refer to as AvaGradW.

\begin{table}
\caption{
   Test performance of standard models on benchmark tasks, when trained with
   different optimizers.  Gray background indicates the optimization method
   (baseline) adopted by the paper that proposed the corresponding network
   model.  The best task-wise results are in bold, while other improvements
   over the baselines are underlined.  Numbers in parentheses indicate standard
   deviation over three runs.  Across tasks, AvaGrad closely matches or exceeds
   the results delivered by existing optimizers, and offers notable improvement
   in FID when training GANs.
}%
\label{tab:results}
\begin{center}
\begin{tabular}{@{}lcccc@{}}
\toprule
& \begin{tabular}{@{}c@{}c@{}}CIFAR-10\\Test Err\%\end{tabular}
& \begin{tabular}{@{}c@{}}CIFAR-100\\Test Err \%\end{tabular}
& \begin{tabular}{@{}c@{}}ImageNet\\Val Err \%\end{tabular}
& \begin{tabular}{@{}c@{}}CIFAR-10\\FID $\downarrow$\end{tabular}
\\ 
Model       & WRN 28-10          & WRN 28-10          & ResNet-50    &  GGAN      \\ \hline
SGD         & \pb{3.86 (0.08)}   & \pb{19.05 (0.24)}  & \pb{24.01}   & 133.0     \\
Adam        & \textbf{3.64 (0.06)}   & \underline{18.96 (0.21)}  & \textbf{23.45}  & \pb{43.0} \\
AMSGrad     & 3.90 (0.17)        & \underline{18.97 (0.09)}  & \underline{23.46}   & \underline{41.3} \\
AdaBound    & 5.40 (0.24)        & 22.76 (0.17)       & 27.99        & 247.3     \\
AdaShift    & 4.08 (0.11)        & \underline{18.88 (0.06)}  & OOM          & 43.7      \\
RAdam       & 3.89 (0.09)        & 19.15 (0.13)       & \underline{23.60}   & \underline{42.5} \\
AdaBelief   & 3.98 (0.07)        & 19.08 (0.09)       & 24.11        & 44.8      \\
AdamW       & 4.11 (0.17)        & 20.13 (0.22)       & 26.70        & ---       \\ \\[-9pt]\hdashline\\[-8pt]
AvaGrad     & \underline{3.80 (0.02)}   & \textbf{18.76 (0.20)}  & \underline{23.58}   & \textbf{35.3} \\
AvaGradW    & 3.97 (0.02)        & \underline{19.04 (0.37)}  & \underline{23.49}     & ---       \\
\bottomrule
\end{tabular}
\end{center}
\end{table}

\subsection{Improving Performance}
\label{sec-ava-results-perf}

\paragraph{CIFAR.}

We train a Wide ResNet-28-4 \citep{wide} on the CIFAR dataset \citep{cifar}, which consists of 60,000 RGB images with $32 \times 32$ pixels, and comes with a standard train/test split of 50,000 and 10,000 images. Following \citet{wide}, we normalize images prior to training. We augment the training data with horizontal flips and by sampling $32 \times 32$ random crops after applying a $4$-pixel padding to the images.

The leftmost columns of Table~\ref{tab:results} present results: on CIFAR-10, SGD ($3.86\%$) is outperformed by Adam ($3.64\%$) and AvaGrad ($3.80\%$), while on CIFAR-100 Adam ($18.96\%$), AMSGrad ($18.97\%$), AdaShift ($18.88\%$), AvaGrad ($18.76\%$), and AvaGradW ($19.04\%$) all outperform SGD ($19.05\%$). 
These results disprove the conventional wisdom that adaptive methods are not suited for computer vision tasks such as image classification. While tuning $\epsilon$, a step typically overlooked or skipped altogether in practice, suffices for adaptive methods to outperform SGD (and hence can be a confounding factor in comparative studies), our results also suggest that adaptive methods might require large compute budgets for tuning to perform optimally on some tasks.

\paragraph{ImageNet.}

To further validate that adaptive methods can indeed outperform SGD in settings where they have not been historically successful, we consider the challenging task of training a ResNet-50 \cite{resnet2} on the ImageNet dataset \citep{imagenet}.

SGD yields $24.01\%$ top-1 validation error, underperforming Adam ($23.45\%$), AMSGrad ($23.46\%$), RAdam ($23.60\%$), AvaGrad ($23.58\%$) and AvaGradW ($23.49\%$), \ie 5 out of the 8 adaptive methods evaluated on this task. We were unable to train with AdaShift due to memory constraints: since it keeps a history of past gradients, our GPUs ran out of memory even with a reduced batch size of $128$, meaning that circumventing the issue with gradient accumulation would result in considerably longer training time.

The third column of Table~\ref{tab:results} summarizes the results. In contrast to numerous papers that surpassed the state-of-the-art on ImageNet by training networks with SGD \citep{vgg, googlenet, resnet1, resnet2, wide, resnext}, our results show that adaptive methods can yield superior results in terms of generalization performance as long as $\epsilon$ is appropriately chosen. Most strikingly, Adam outperforms AdaBound, RAdam, and AdaBelief: sophisticated methods whose motivation lies in improving the performance of adaptive methods.

\begin{table}
\caption{
   Test performance of standard models on benchmark tasks, when trained with
   different optimizers.  Gray background indicates the optimization method
   (baseline) adopted by the paper that proposed the corresponding network
   model.  The best task-wise results are in bold, while other improvements
   over the baselines are underlined.  Numbers in parentheses indicate standard
   deviation over three runs.  Across tasks, AvaGrad closely matches or exceeds
   the results delivered by existing optimizers, and offers notable improvement
   in FID when training GANs.
}%
\label{tab:results2}
\begin{center}
\begin{tabular}{@{}lcc@{}}
\toprule
& \begin{tabular}{@{}c@{}}Penn Treebank\\Test BPC $\downarrow$\end{tabular}
& \begin{tabular}{@{}c@{}}Penn Treebank\\Test BPC $\downarrow$\end{tabular}
\\ 
Model       & 3xLSTM(300)  & 3xLSTM(1000)     \\ \hline
SGD         & 1.403 (0.000)& 1.237 (0.000)     \\
Adam        & \pb{1.378 (0.001)}& \pb{1.182 (0.000)} \\
AMSGrad     & 1.384 (0.001)& 1.187 (0.001) \\
AdaBound    & 4.346 (0.000)& 2.891 (0.041)     \\
AdaShift    & 1.399 (0.006)& 1.199 (0.001)      \\
RAdam       & 1.401 (0.002)& 1.349 (0.003) \\
AdaBelief   & 1.379 (0.001)& 1.198 (0.000)      \\
AdamW       & 1.398 (0.002)& 1.227 (0.003)       \\ \\[-9pt]\hdashline\\[-8pt]
AvaGrad     & \textbf{1.375 (0.000)}& \underline{1.179 (0.000)} \\
AvaGradW    & \textbf{1.375 (0.001)}& \textbf{1.175 (0.000)}       \\
\bottomrule
\end{tabular}
\end{center}
\end{table}

\paragraph{Language Modeling.}

We now consider a task where state-of-the-art results are achieved by adaptive methods with small values for $\epsilon$ and where SGD has little success: character-level language modeling with LSTMs \citep{lstm} on the Penn Treebank dataset \citep{ptb,ptbc}. We adopt the 3-layer LSTM \citep{lstm} model from \citet{awd} with 300 hidden units per LSTM layer. 

Results in Table~\ref{tab:results} show that only AvaGrad and AvaGradW outperform Adam, achieving test
BPCs of $1.179$ and $1.175$ compared to $1.182$. Combined with the previous results, we validate that, depending on the underlying task, adaptive methods might require vastly different values for $\epsilon$ to perform optimally, but, given enough tuning, are indeed capable of offering best overall results across domains.

We also observe that AdaBound, RAdam, and AdaBelief all visibly underperform Adam in this setting where adaptivity (small $\epsilon$) is advantageous, even given extensive hyperparameter tuning. RAdam, and more noticeably AdaBound, perform poorly in this task. We hypothesize that this is result of RAdam incorporating learning rate warmup, which is not typically employed when training LSTMs, and AdaBound's adoption of SGD-like dynamics early in training \citep{adabound2}.

\paragraph{Generative Adversarial Networks.}

Finally, we consider a task where adaptivity is not only advantageous, but often seen as necessary for successful training: generative modeling with GANs. We train a Geometric GAN \citep{ggan}, \ie a DCGAN model \citep{dcgan} with the hinge loss, on the CIFAR-10 dataset to perform image generation. We do not apply gradient penalties.

Results are summarized in Table~\ref{tab:results}, showing that AvaGrad offers a significant improvement in terms of FID over all other methods, achieving an improvement of 7.7 FID over Adam (35.3 against 43.0). Note that the performance achieved by Adam matches other sources\footnote{\scriptsize{\url{github.com/POSTECH-CVLab/PyTorch-StudioGAN}}} \citep{contragan}, and Adam performed best with $\alpha=0.0002, \epsilon = 10^{-6}$ in our experiments, closely matching the commonly-adopted values in the literature.

\begin{figure}[t]
   \centering
      \includegraphics[width=0.4\linewidth]{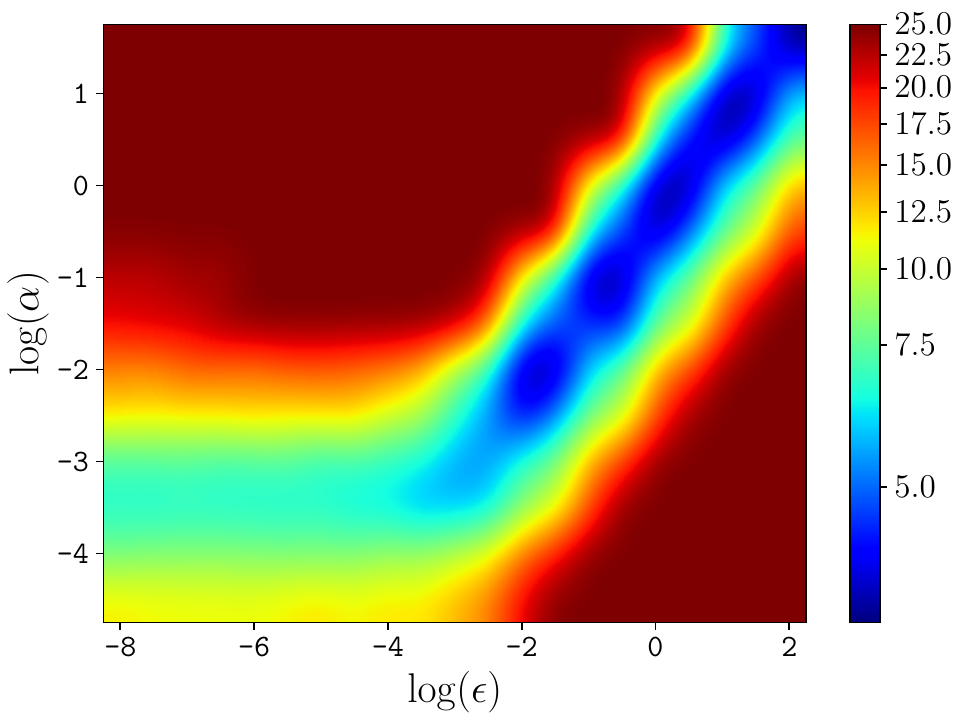}
   \caption{
	  Performance of Adam with different learning rate $\alpha$ and adaptability parameter $\epsilon$, measured in terms of validation error on CIFAR-10 of Wide ResNet 28-4. Best performance is achieved with low adaptability/large $\epsilon$.
   }
\label{fig-heat1}
\end{figure}

\begin{figure}[t]
   \centering
      \includegraphics[width=0.4\linewidth]{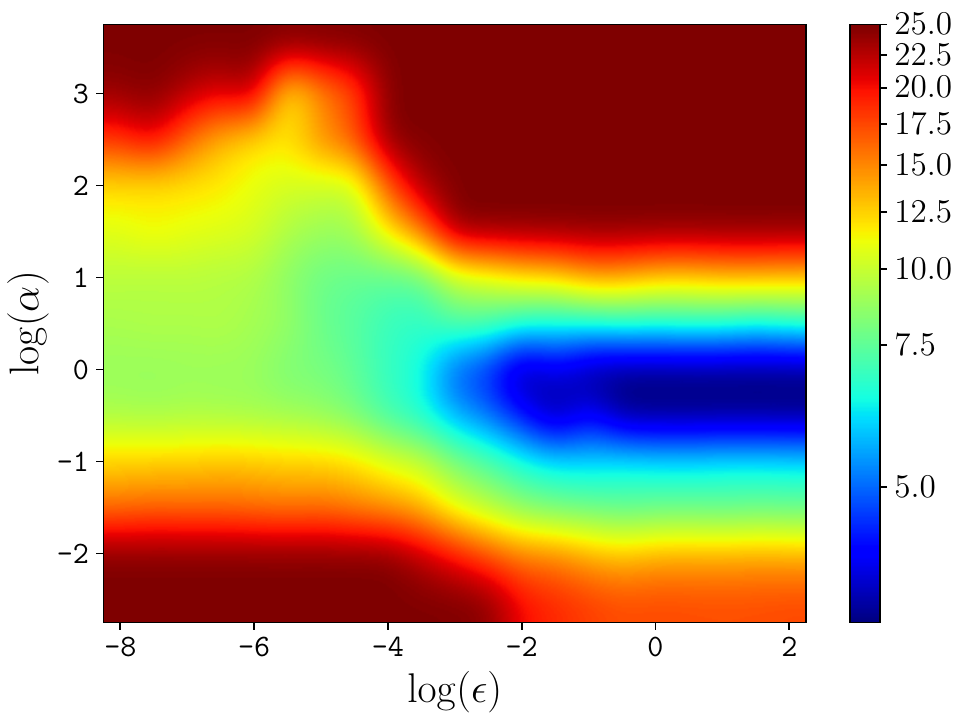}
   \caption{
	  Performance of AvaGrad with different learning rate $\alpha$ and adaptability parameter $\epsilon$, measured in terms of validation error on CIFAR-10 of Wide ResNet 28-4. Best performance is achieved with low adaptability/large $\epsilon$.
   }
\label{fig-heat12}
\end{figure}

\begin{figure}[t]
   \centering
      \includegraphics[width=0.4\linewidth]{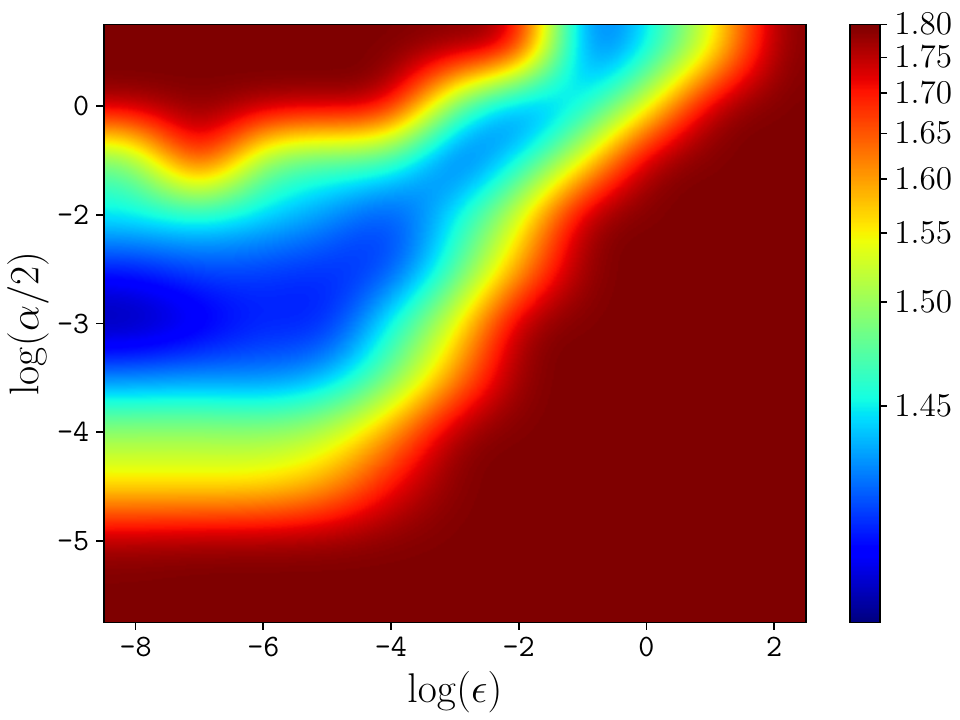}
   \caption{
	  Performance of Adam with different learning rate $\alpha$ and adaptability parameter $\epsilon$, measured in terms of validation BPC (\emph{lower is better}) on PTB of a 3-layer LSTM. Best performance is achieved with high adaptability/small $\epsilon$.
   }
\label{fig-heat2}
\end{figure}

\begin{figure}[t]
   \centering
      \includegraphics[width=0.4\linewidth]{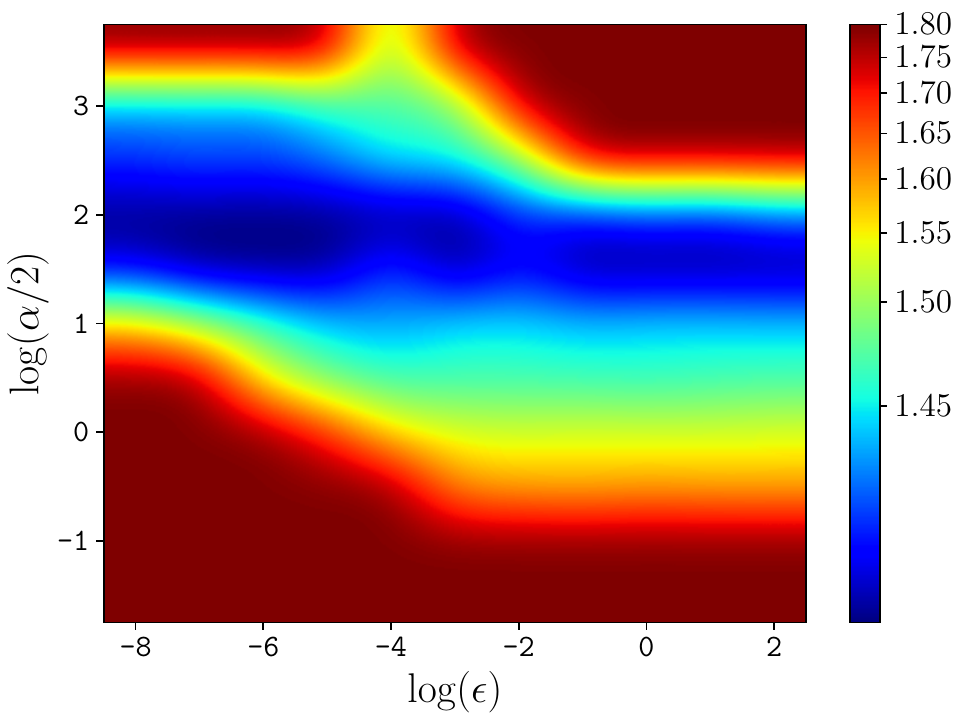}
   \caption{
	  Performance of AvaGrad with different learning rate $\alpha$ and adaptability parameter $\epsilon$, measured in terms of validation BPC (\emph{lower is better}) on PTB of a 3-layer LSTM. Best performance is achieved with high adaptability/small $\epsilon$.
   }
\label{fig-heat22}
\end{figure}

\subsection{Hyperparameter Separability}

The results in the previous section establish the importance of optimizing $\epsilon$ when using adaptive methods, and how not tuning $\epsilon$ can be a confounding factor when comparing different adaptive optimizers.

A key obstacle to proper tuning of $\epsilon$ is its interaction with the learning rate $\alpha$: as discussed in Section~\ref{sec-ava-method-role}, `emulating' SGD with a learning rate $\gamma$ can be done by setting $\alpha = \gamma \epsilon$ in Adam and then increasing $\epsilon$: once its value is large enough (compared to $\currv$), scaling up $\epsilon$ any further will not affect Adam's behavior as long as $\alpha$ is scaled up by the same multiplicative factor. Conversely, when $\epsilon$ is small (compared to components of $\currv$), we have that $\sqrt{\currv} + \epsilon \approx \sqrt{\currv}$, hence decreasing $\epsilon$ even further will not affect the optimization dynamics as long as $\alpha$ remains fixed.

This suggests the existence of two distinct regimes for Adam (and other adaptive methods): an adaptive regime, when $\epsilon$ is small and there is no interaction between $\alpha$ and $\epsilon$, and a non-adaptive regime, when $\epsilon$ is large and the learning rate $\alpha$ must scale linearly with $\epsilon$ to preserve the optimization dynamics. The exact phase transition is governed by $\currv$ \ie the second moments of the gradients, which depends not only on the task but also on the model.

By normalizing the parameter-wise learning rates $\curre$ at each iteration, AvaGrad guarantees that the magnitude of the effective learning rates is independent of $\epsilon$, essentially decoupling it from $\alpha$. With AvaGrad, $\alpha$ governs optimization dynamics in both regimes: when $\epsilon$ is small, changing its value has negligible impact on $\curre$ and $\|\curre\|$, hence the updates will be the same, while in the non-adaptive regime we have that $\curre \approx [\frac1{\epsilon}, \frac1{\epsilon}, \dots]$ and $\|\curre / \sqrt {d}\|_2 \approx \frac{1}{\epsilon}$, hence normalizing $\curre$ yields an all-ones vector regardless of $\epsilon$ (as long as it remains large enough compared to all components of $\currv$).

Figures~\ref{fig-heat1} and \ref{fig-heat12} show the performance of a Wide ResNet 28-4 on CIFAR-10 when trained with Adam and AvaGrad, for different $(\alpha, \epsilon)$ configurations \ie the grid search employed for CIFAR, described in Section~\ref{sec-ava-expsetup-training}. For Adam, the non-adaptive regime is indeed characterized by a linear relation between $\alpha$ and $\epsilon$, while its performance in the adaptive regime depends mostly on $\alpha$ alone. AvaGrad offers decoupling between the two parameters, with $\alpha$ precisely characterizing the non-adaptive regime (\ie the performance is independent of $\epsilon$) while almost fully describing the adaptive regime as well, except for regions close to the phase transition. For each of the $21$ different values of $\epsilon$, AvaGrad performed best with $\alpha=1.0$.

\begin{figure}[t]
   \centering
      \includegraphics[width=0.7\linewidth]{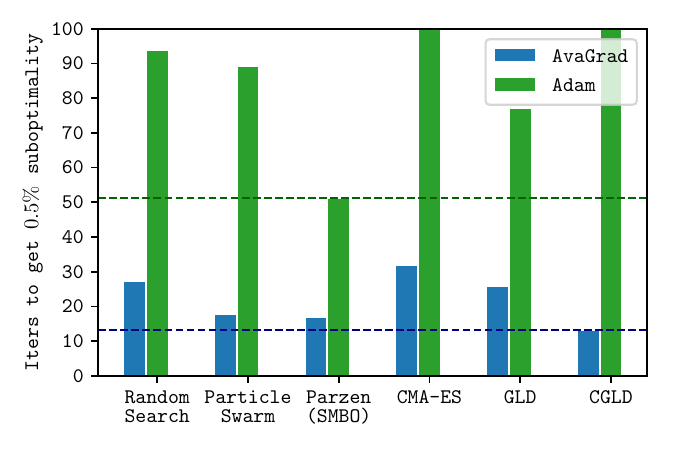}
   \caption{
	Iterations to achieve $0.5\%$ suboptimality, measured in terms of validation accuracy on CIFAR-10, for Adam and AvaGrad when tuning $\alpha$ and $\epsilon$ with various standard hyperparameter optimizers.
   }
\label{fig-hpo}
\end{figure}

\begin{figure}[t]
   \centering
      \includegraphics[width=0.7\linewidth]{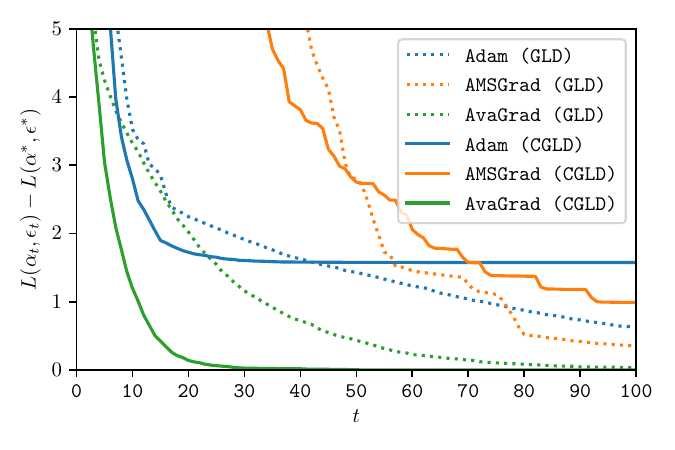}
   \caption{
	  Suboptimality (gap in validation accuracy) when optimizing $\alpha$ and $\epsilon$ with GLD/CGLD, as a function of trials (\ie validation accuracy evaluations for a value of $(\alpha, \epsilon)$): AvaGrad is significantly cheaper to tune than Adam, being especially efficient when adopting Coordinate GLD due to its hyperparameter separability.
   }
\label{fig-hpo2}
\end{figure}

\subsection{Hyperparameter Optimization}
\label{sec-ava-results-hpo}

To assess our hypothesis that AvaGrad offers hyperparameter decoupling, which enables $\alpha$ and $\epsilon$ to be tuned \emph{independently} via two line-search procedures instead of a grid search, we compare tuning costs of Adam and AvaGrad with prominent hyperparameter optimization methods such as Parzen Trees and CMA-ES. We also consider Gradientless Descent (GLD) \citep{gld}, a powerful zeroth-order method.

We frame the task of tuning $\alpha$ and $\epsilon$ as a 2D optimization problem with a $21 \times 21$ discrete domain representing all $(\alpha,\epsilon)$ configurations for CIFAR discussed in Section~\ref{sec-ava-expsetup-training}, with the minimization objective being the error of a Wide ResNet-28-4 on CIFAR-10 when trained with the corresponding $(\alpha,\epsilon)$ values.

Figure \ref{fig-hpo} shows the number of iterations required by different hyperparameter optimizers to achieve $0.5\%$ suboptimality \ie an error at most $0.5\%$ higher than the lowest achieved in the grid. AvaGrad is significantly cheaper to tune than Adam, regardless of the adopted tuning algorithm, including random search -- showing that AvaGrad is able to perform well with a wider range of hyperparameter values.

We also consider a variant of GLD where descent steps on $\alpha$ and $\epsilon$ are performed separately in an alternating manner, akin to coordinate descent \citep{cd1,cd2}. This variant, which we denote by CGLD, is in principle well-suited for problems where variables have independent contributions to the objective, as is approximately the case for AvaGrad. Results are given in Figure \ref{fig-hpo2}: AvaGrad achieves less than $1\%$ suboptimality in 13 iterations when tuned with CGLD, while Adam requires $74$ with GLD. As expected, coordinate-wise updates result in considerably faster tuning for AvaGrad.

\section{Analysis}
\label{sec-ava-analysis}

\begin{figure}[t]
   \centering
      \includegraphics[width=0.7\linewidth]{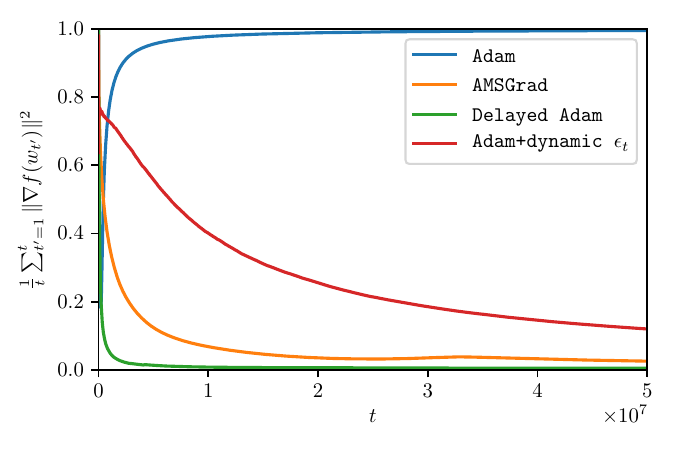}
   \caption{
	  The mean gradient norm as function of the iteration $t$ when optimizing Equation~\eqref{eq:synth}. Matching our theoretical results, Delayed Adam and Adam with dynamic $\epsilon_t$ both converge, while Adam fails to converge.
   }
\label{fig-synth}
\end{figure}

\subsection{Convergence}

Lastly, we demonstrate our convergence results in Theorems~\ref{thm:adamconv} and \ref{thm:dadamconv} experimentally by employing Adam, AMSGrad, Adam with a 1-step delay (Delayed Adam), and Adam with $\epsilon_t=\sqrt{t^3}$, on a synthetic problem with the same form as the one used in the proof of Theorem \ref{thm:adamdiv}:
\begin{equation}
\begin{split}
   \min_{\w \in [0, 1]} f(\w), \quad\quad \fs(\w) =
   \begin{cases}
      999 \frac{\w^2}2, \text{ w.p. } \frac{1}{500} \\
      -\w, \text{ otherwise}
   \end{cases}
   \label{eq:synth}
\end{split}
\end{equation}
where $f(\w) \coloneqq \expec{}{\fs(\w)}$.

This problem admits a stationary point $\w^\star \approx 0.5$, and satisfies
Theorem \ref{thm:adamconv} for $\bone = 0, \btwo = 0.99, \epsilon=10^{-8}$. Figure~\ref{fig-synth} presents $\frac1t \sum_{t'=1}^t \normed{\nabla f(\w_{t'})}^2$ during training, and shows that Adam fails to converge (Theorem \ref{thm:adamdiv}), while both Delayed Adam and dynamic Adam converge successfully (Theorems \ref{thm:dadamconv} and \ref{thm:adamconv}). We attribute the faster convergence of Delayed Adam to the lack of constraints on the parameter-wise learning rates.
\section{Discussion}
\label{sec-ava-discussion}

\subsection{Contributions}
\label{sec-ava-discussion-contribs}

This chapter makes several key contributions:
\begin{itemize}
   \item{We show that Adam can \textit{provably converge} for non-convex problems given a proper tuning of its adaptability parameter $\epsilon$. We address the apparent contradiction with \cite{amsgrad}, providing new insights on the role of $\epsilon$ in terms of convergence and performance of adaptive methods.}
   \item{
      Extensive experiments show that Adam can outperform SGD in tasks where adaptive methods have found little success. As suggested by our theoretical results, tuning $\epsilon$ is key to achieving optimal results with adaptive methods.%
   }%
   \item{
		We propose \textbf{AvaGrad}, a theoretically-motivated adaptive method that decouples the learning rate $\alpha$ and the adaptability parameter $\epsilon$. Quantifying the hyperparameter tuning cost using a zeroth-order method, we observe that AvaGrad is significantly cheaper to tune than Adam.
	 Matching the generalization accuracy of SGD and other adaptive methods across tasks and domains, AvaGrad offers performance dominance given low tuning budgets.
   }
\end{itemize}

\subsection{Research Directions}
\label{sec-ava-discussion-directions}

Our analysis and empirical results suggest several promising directions for future research. Further investigating optimizer design with a focus on reducing or simplifying hyperparameter tuning costs, for example via adaptive schedules or meta-learned strategies, could further decrease the computational costs of finding optimal configurations for adaptive methods. Another direction is to analyze the generalization properties of adaptive methods and SGD. This could lead to a better understanding of how adaptivity affects generalization gaps, and how to best control it during training in order to balance training speed and final generalization performance.

\subsection{Impact}
\label{sec-ava-discussion-impact}

\cite{autoeps} propose a method to directly estimate the optimal value for $\epsilon$ and hence control an optimizer's adaptivity by inspecting gradient statistics during training. Their approach circumvents the need to design a hand-crafted schedule for $\epsilon_t$, and balances training speed and final generalization performance dynamically and in a task-dependent fashion.
\section{Proofs}
\label{sec-ava-proofs}

\section{Full Statement and Proof of Theorem \ref{thm:adamdiv}}
\label{app-adamdiv}

\begin{theorem*}
  For any $\epsilon\geq0$ and constant $\btwot = \btwo \in [0,1)$, there is a
  stochastic optimization problem for which Adam does not converge to
  a stationary point.
\end{theorem*}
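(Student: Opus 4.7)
The plan is to adapt the stochastic counterexample of \cite{amsgrad} (their Theorem 3) to the non-convex stationarity setting, with the problem instance allowed to depend on the given $\epsilon$ and $\beta_2$ so that the construction succeeds uniformly. I will exhibit a one-dimensional stochastic problem on the compact domain $w \in [-1,1]$ whose population loss has a unique interior stationary point but on which Adam's iterates drift to, and concentrate near, a boundary at which $\nabla f$ is bounded away from zero.

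First, I construct the problem. Take a two-outcome random loss: with probability $p$, $f_s(w) = \tfrac{1}{2} C w^2$ (gradient $Cw$); with probability $1-p$, $f_s(w) = -cw$ (gradient $-c$). Then $\nabla f(w) = pCw - (1-p)c$, which vanishes only at $w^\star = (1-p)c/(pC)$. I choose $p, C, c$ as functions of $\epsilon$ and $\beta_2$ so that (i) $w^\star \in (0,1)$; (ii) $pC - (1-p)c$ is a positive constant bounded below independent of $\epsilon$; and (iii) both $c$ and the typical magnitudes of $Cw_t$ dominate $\epsilon$ throughout the analysis. A concrete parameterization, mirroring the $(p, C, c) = (1/500,\, 999,\, 1)$ instance used in the experimental section but rescaled so that $c = \Theta(1 + \epsilon)$ and $C = \Theta(c/p) \cdot (1 + \delta)$ for a small constant $\delta > 0$, suffices.

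Next, I analyze the Adam trajectory over a typical epoch of $1/p$ iterations. After a rare event the second-moment estimate $v_t$ spikes to $\Theta(C^2 w_t^2)$ and then decays at rate $\beta_2$, so for all but $O(1/(1-\beta_2))$ iterations in the epoch $v_t$ is dominated by the common-event contribution of order $c^2$. Under the scaling above, $\sqrt{v_t} + \epsilon = \Theta(\sqrt{v_t})$, so $\eta_t$ is essentially independent of $\epsilon$, and the tight correlation between $m_t$ and $\eta_t$ immediately after a rare event produces an approximate per-iteration displacement of $+\alpha_t$ on each common step and only $-\alpha_t\,\mathrm{sign}(w_t)$ on the single rare step (since the rare gradient gets normalized by its own large magnitude). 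Summing over an epoch, the net drift satisfies $\mathbb{E}[w_{t + 1/p} - w_t \mid w_t] \ge \Omega(\alpha/p)$ whenever $w_t$ is bounded away from $+1$; a standard potential/stopping-time argument then shows the iterates spend a constant fraction of time in any fixed neighborhood of $w = 1$.

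The main obstacle will be making the correlation between $m_t$ and $\eta_t$ precise enough to nail down the sign and order of the per-epoch drift uniformly in \emph{any} $\epsilon \ge 0$ and any $\beta_2 \in [0,1)$: for $\beta_2$ close to $1$ the $v_t$-spike after a rare event persists over many iterations and must be averaged over a longer window, while for $\beta_2$ small the argument collapses to an RMSProp-style bias calculation; ensuring $w^\star$ remains strictly interior after rescaling $c$ and $C$ with $\epsilon$ is a second bookkeeping hurdle. Once the drift is established and $w_t$ is shown to concentrate near $+1$, the conclusion follows from $\nabla f(1) = pC - (1-p)c > 0$: indeed $\mathbb{E}[\|\nabla f(w_t)\|^2] \ge \tfrac{1}{2}(pC - (1-p)c)^2 > 0$ for all sufficiently large $t$, contradicting convergence to a stationary point.
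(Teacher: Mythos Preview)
Your construction is essentially the paper's (same two-outcome quadratic/linear problem, parameters scaled with $\epsilon$ and $\beta_2$), but your drift analysis takes a harder road than necessary. The paper avoids any epoch-level bookkeeping by bounding the \emph{single-step} expected displacement $\mathbb{E}[\Delta_t]/\alpha$ directly, splitting on the current sample $s_t$. The key observation you are missing is that on a rare step the large gradient $C w_t$ appears in both numerator and denominator of the Adam update, so
\[
\frac{-C w_t}{\sqrt{\beta_2 v_{t-1} + (1-\beta_2) C^2 w_t^2} + \epsilon} \;\ge\; \frac{-C w_t}{\sqrt{(1-\beta_2) C^2 w_t^2}} \;=\; \frac{-1}{\sqrt{1-\beta_2}}
\]
uniformly in the history $v_{t-1}$ and in $\epsilon$. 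This eliminates the need to track the post-spike decay of $v_t$ or to worry about the $\beta_2 \to 1$ regime separately. For the common step, the paper simply applies Jensen's inequality to $\mathbb{E}[(\sqrt{\beta_2 v_{t-1} + (1-\beta_2)} + \epsilon)^{-1}]$ together with the crude bound $\mathbb{E}[v_{t-1}] \le 1 - p + pC^2$. Combining the two gives a one-line lower bound on $\mathbb{E}[\Delta_t]/\alpha$ that is made positive by taking $C$ large enough (depending on $\epsilon, \beta_2, \delta$), whence $\mathbb{E}[w_t]$ is non-decreasing and cannot settle at $w^\star$.

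Your rescaling $c = \Theta(1+\epsilon)$ is a valid alternative to the paper's choice of scaling $C$ alone, and your epoch argument would ultimately work, but the ``main obstacle'' you identify---handling the $v_t$ spike uniformly in $\beta_2$---is entirely sidestepped by the cancellation above. The concluding step (that non-decreasing $\mathbb{E}[w_t]$ with $w_1$ chosen above $w^\star$ implies $\mathbb{E}[\|\nabla f(w_t)\|^2]$ stays bounded away from zero) is handled loosely in both your proposal and the paper; your stopping-time sketch is not more or less rigorous than what the paper does.
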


\begin{proof}
Consider the following stochastic optimization problem:
\begin{equation}
   \min_{\w \in [0,1]} f(\w) \coloneqq \expec{s \sim \dist}{\fs(\w)} \qquad
   \fs(\w) =
   \begin{cases}
      C \frac{\w^2}2,
         \quad \text{with probability }
         \quad p \coloneqq \frac{1+\delta}{C+1} \\
      -\w, \quad \text{otherwise}
   \end{cases} \,,
\end{equation}
where $\delta$ is a positive constant to be specified later, and $C > \frac{1-p}p > 1 + \frac{\epsilon}{\w_1 \sqrt{1 - \btwo}}$ is another constant that can depend on $\delta, \btwo$ and $\epsilon$, and will also be determined later. Note
that $\nabla f(\w) = p C \w - (1-p)$, and $f$ is minimized at
$\w^\star = \frac{1-p}{Cp} = \frac{C-\delta}{C(1+\delta)}$.

The proof follows closely from \citet{amsgrad}.  We assume w.l.o.g. that $\bone = 0$. We first consider the difference between two consecutive iterates computed by Adam with a constant learning rate $\alpha$:
\begin{equation}
   \Delta_t
      = \nextw - \currw
      = - \alpha \frac{\currg}{\sqrt{\currv} + \epsilon}
      = - \alpha \frac{\currg}{\sqrt{\btwo \prevv + (1-\btwo) \currg^2} +
                             \epsilon} \,,
\end{equation}
and then we proceed to analyze the expected change in iterates divided by the learning rate. First, note that with probability $p$ we have $\currg = \nabla (C \frac{\currw^2}{2}) = C \currw$, and while $\currg = \nabla (-w) = -1$ with probability $1-p$. Therefore, we have
\begin{equation}
\begin{split}
   \frac{\expec{}{\Delta_t}}{\alpha}
     &= \frac{\expec{}{\nextw - \currw} }{\alpha}
      = - \expec{}{\frac{g_t}{\sqrt{\btwo \prevv + (1-\btwo) g_t^2} + \epsilon}} \\
     &=   p  \expec{}{ \underbrace{\frac{- C \currw}{\sqrt{\btwo \prevv + (1-\btwo) C^2 \currw^2} + \epsilon}}_{T_1}} +
       (1-p) \expec{}{ \underbrace{\frac{1}{\sqrt{\btwo \prevv + (1-\btwo)} + \epsilon}}_{T_2}} \,,
\end{split}
\label{eq:onlinebound}
\end{equation}
where the expectation is over all the randomness in the algorithm up to time
$t$, as all expectations to follow in the proof. We will proceed by computing lower bounds for the terms $T_1$ and $T_2$ above. Note that $T_1 = 0$ for
$\currw = 0$, while for $\currw > 0$ we can bound $T_1$ by
\begin{equation}
\begin{split}
   T_1  = \frac{- C \currw}{\sqrt{\btwo \prevv + (1-\btwo) C^2 \currw^2} + \epsilon}\geq \frac{- C \currw}{\sqrt{(1-\btwo) C^2 \currw^2}}
          = \frac{- 1 }{\sqrt{1-\btwo}} \,.
\end{split}
\end{equation}
Combining the cases $\currw = 0$ and $\currw > 0$ (note that the feasible region is $w \in [0,1]$), we have that, generally,
$T_1 \geq \min(0, \frac{- 1 }{\sqrt{1-\btwo}}) = \frac{- 1 }{\sqrt{1-\btwo}}$.

Next, we bound the expected value of $T_2$ using Jensen's inequality coupled with the convexity of $x^{-1/2}$ as
\begin{equation}
\begin{split}
   \expec{}{T_2} = \expec{}{\frac{1}{\sqrt{\btwo \prevv + 1-\btwo} + \epsilon}} \geq
      \frac{1}{\sqrt{\btwo \expec{}{\prevv} + 1-\btwo} + \epsilon}
\end{split} \,.
\label{eq:thm1_1}
\end{equation}

Let us consider $\expec{}{\prevv}$ now. Note that
\begin{equation}
\begin{split}
  \prevv &= \btwo v_{t-2} + (1-\btwo) \prevg^2 \\ 
  &= \btwo \left( \btwo v_{t-3} + (1-\btwo) g_{t-2}^2 \right) + (1-\btwo) \prevg^2  \\
  & \quad = \btwo^2 v_{t-3} + \btwo (1-\btwo) g_{t-2}^2 + (1-\btwo) \prevg^2\\ 
  & = \btwo^2 \left( \btwo v_{t-4} + (1-\btwo) g_{t-3}^2\right) + \btwo (1-\btwo) g_{t-2}^2 + (1-\btwo) \prevg^2 \\
  &\quad = \btwo^3 v_{t-4} + \btwo^2 (1-\btwo) g_{t-3}^2 + \btwo (1-\btwo) g_{t-2}^2 + (1-\btwo) \prevg^2 \\
  & \quad\quad\quad \vdots \\
  & = \btwo^{t-1} v_0 + \btwo^{t-2} (1-\btwo) g_1^2 + \btwo^{t-3} (1-\btwo) g_2^2 + \dots + (1-\btwo) \prevg^2 \\
  & = (1-\btwo) \sum_{i=1}^{t-1} \btwo^{t-i-1} g_i^2 \,,
\end{split}
\end{equation}
where we used the fact that $v_0 = 0$ (i.e. the second-moment estimate is initialized as zero).

Taking the expectation of the above expression for $\prevv$, we get
\begin{equation}
\begin{split}
   \expec{}{\prevv}
   &= (1-\btwo) \sum_{i=1}^{t-1} \btwo^{t-i-1} \expec{}{g_i^2} \\
   &= (1-\btwo) \sum_{i=1}^{t-1} \btwo^{t-i-1} \left(1 -p + p C^2 \expec{}{\currw^2} \right) \,,
\end{split}
\end{equation}
where we can use the fact that $\currw \in [0,1]$, so $\currw^2 \leq 1$ to get
\begin{equation}
\begin{split}
   \expec{}{\prevv} &\leq (1-\btwo) \sum_{i=1}^{t-1} \btwo^{t-i-1} \left(1 -p + p C^2 \right) \\
   &= (1-\btwo) \left(1 -p + p C^2 \right) \sum_{i=1}^{t-1} \btwo^{t-i-1}  \\
   &= (1-\btwo) \left(1 -p + p C^2 \right) \sum_{i=0}^{t-2} \btwo^{i}  \\
   &= \left(1 -p + p C^2 \right) \sum_{i=0}^{t-2} \left(\btwo^{i} - \btwo^{i+1}\right)  \\
   &= \left(1 -p + p C^2 \right) \left(1 - \btwo^{t-1}\right)  \\
   &\leq (1+\delta) C^2 \,,
\end{split}
\end{equation}
where $\sum_{i=0}^{t-2} \left(\btwo^{i} - \btwo^{i+1}\right) = 1 - \btwo^{t-1}$ follows from the fact that the sum telescopes.

Plugging the above bound in \eqref{eq:thm1_1} yields
\begin{equation}
\begin{split}
   \expec{}{T_2} \geq \frac{1}{\sqrt{\btwo (1+\delta) C + 1-\btwo} + \epsilon}
\end{split}
\end{equation}

Combining the bounds for $T_1$ and $T_2$ in \eqref{eq:onlinebound} gets us that
\begin{equation}
\begin{split}
   \frac{\expec{}{\Delta_t}}{\alpha}
   &\geq
      \frac{1+\delta}{C+1} \frac{- 1 }{\sqrt{1-\btwo}} +
      \left(1-\frac{1+\delta}{C+1}\right)
      \frac{1}{\sqrt{\btwo (1+\delta) C + 1-\btwo} + \epsilon}
\end{split}
\end{equation}

Now, recall that $\w^\star = \frac{C-\delta}{C(1+\delta)}$, so for $C$ sufficiently large in comparison to $\delta$ we get $\w^\star \approx \frac{1}{1+\delta}$. On the other hand, the above quantity can be made non-negative for large enough $C$, so $\expec{}{\currw} \geq \expec{}{w_{t-1}} \geq \dots \geq \w_1$.  In other words, Adam
will, in expectation, update the iterates towards $\w = 1$ even though the stationary point is $w^* \approx \frac{1}{1+\delta}$ and we have $\normed{\nabla f(1)}^2 = \delta$ at $w=1$. Setting $\delta=1$, for example,
implies that
$\lim_{T \to \infty}
   \frac1T \sum_{t=1}^T \expec{}{\normed{\nabla f(\currw)}^2} = 1$, and hence Adam presents nonconvergence in terms of stationarity. To see that $w=1$ is not a stationary point due to the feasibility constraints, check that $\nabla f(1) = 1 > 0$: that is, the negative gradient points \textit{towards} the feasible region.
\end{proof}


\section{Technical Lemmas}

This section presents intermediate results that are used in the proofs given in the next sections.

For simplicity we adopt the following notation for all following results: 
\begin{equation}
	\highinft = \max_i \currei \quad\quad \lowinft = \min_i \currei \,,
\end{equation}
where $\curre \in \R^d$ denotes the parameter-wise learning rates computed at iteration $t$ (the method being considered and consequently the exact expression for $\curre$ will be specified in each result).

For the following Lemmas we rely extensively on the assumption that $\norm{\nabla f_s(\w)}_\infty \leq \gradb$ for some constant $\gradb$, and also that this assumption implies that there exists $\gradbtwo$ such that $\norm{\nabla f_s(\w)} \leq \gradbtwo$ for all $s \in \supp$ and $w \in \R^d$, which can be seen by noting that
\begin{equation}
	\norm{\nabla f_s(\w)} = \left( \sum_{i=1}^d (\nabla f_s(\w))_i^2 \right)^{\frac12} \leq \left( d \norm{\nabla f_s (\w)}_\infty^2  \right)^{\frac12} = \sqrt d \cdot \norm{\nabla f_s(\w)}_\infty \leq \sqrt d \cdot \gradb \,,
\end{equation}
hence such constant $\gradbtwo$ must exist as any $\gradbtwo \geq \sqrt d \cdot \gradb$ satisfies $\norm{\nabla f_s(\w)} \leq \gradbtwo$.


\begin{lemma}
	Assume that there exists a constant $\gradb$ such that $\norm{\nabla f_s(\w)}_\infty \leq \gradb$ for all $s \in \supp$ and $w \in \R^d$, and let $\gradbtwo$ be a constant such that $\norm{\nabla f_s(\w)} \leq \gradbtwo$ for all $s \in \supp$ and $w \in \R^d$. Moreover, assume that $\bonet \in [0,1)$ for all $t \in \N$.
	
	 Let $\currm \in \R^d$ be given by 
\[\currm = \bonet \prevm + (1 - \bonet) \currg
   \quad \textrm{and} \quad m_0 = 0 \,,\]
   where $\bonet \in [0,1)$ for all $t \in \N$.
	
	Then, we have \[ \norm{\currm}_\infty \leq \gradb \quad \text{ and } \quad \norm{\currm} \leq \gradbtwo \] for all $t \in \N$ and all possible sample sequences $(s_1, \dots, s_t) \in \supp^t$.
\label{lem:mbound}
\end{lemma}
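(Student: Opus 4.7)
The plan is to prove both bounds simultaneously by induction on $t$, exploiting the fact that the momentum update is a convex combination of the previous momentum and the current gradient.

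For the base case $t=0$, we have $m_0 = \vec 0$, so both $\|m_0\|_\infty = 0 \leq G$ and $\|m_0\| = 0 \leq G_2$ trivially. For the inductive step, I would assume $\|m_{t-1}\|_\infty \leq G$ and $\|m_{t-1}\| \leq G_2$, and apply the triangle inequality together with absolute homogeneity of norms to the update rule $m_t = \beta_{1,t} m_{t-1} + (1-\beta_{1,t}) g_t$. Since $\beta_{1,t} \in [0,1)$, both coefficients are non-negative, so for any norm $\|\cdot\|_\star$ we get
\begin{equation}
    \|m_t\|_\star \leq \beta_{1,t} \|m_{t-1}\|_\star + (1-\beta_{1,t}) \|g_t\|_\star \,.
\end{equation}
Applying this with $\|\cdot\|_\star = \|\cdot\|_\infty$ and the inductive hypothesis together with the assumption $\|g_t\|_\infty = \|\nabla f_{s_t}(w_t)\|_\infty \leq G$ yields $\|m_t\|_\infty \leq \beta_{1,t} G + (1-\beta_{1,t}) G = G$. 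The identical argument with $\|\cdot\|_\star = \|\cdot\|$ and the bound $\|g_t\| \leq G_2$ yields $\|m_t\| \leq G_2$, completing the induction.

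I do not anticipate any substantial obstacle: the argument is essentially a statement that convex combinations cannot escape a norm ball containing the combined points. Since $m_0 = \vec 0$ lies in both norm balls (radius $G$ for $\ell_\infty$ and radius $G_2$ for $\ell_2$) and each gradient $g_t$ lies in them by assumption, all iterates $m_t$ must remain inside. The result holds uniformly across all sample sequences $(s_1,\dots,s_t) \in \mathcal S^t$ because the triangle inequality bound is pathwise and does not rely on the distribution over $s_t$.
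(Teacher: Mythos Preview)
Your proposal is correct and takes essentially the same approach as the paper: both exploit that $m_t$ is a convex combination of $m_{t-1}$ and $g_t$ together with the triangle inequality. The paper phrases it as a proof by contradiction on the first time the bound is violated, while you use direct induction, but the underlying argument is identical.
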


\begin{proof}

Assume for the sake of contradiction that $\norm{\currm}_\infty > \gradb$ for some $t \in \N$ and some sequence of samples $(s_1, \dots, s_t)$. Moreover, assume w.l.o.g. that $\norm{m_{t'}}_\infty \leq \gradb$ for all $t' \in \{1, \dots, t-1\}$ and note that there is no loss of generality since $(m_{t'})_{t'=0}^t$ must indeed have a first element that satisfies $\norm{m_{t'}}_\infty > \gradb$, which cannot be $m_0$ since we have $m_0 = 0$ by definition.

Then, we have that $m_{t,i} > \gradb$ for some $i \in [d]$, but
\begin{equation}
\begin{split}
  m_{t,i} &= \bonet m_{t-1,i} + (1-\bonet) \currgi \\ 
  & \leq \bonet \norm{m_{t-1}}_\infty + (1-\bonet) \norm{\currg}_\infty \\
  & \leq \bonet \gradb + (1-\bonet) \gradb \\
  & = \gradb \,,
\end{split}
\end{equation}
where we used $\bonet \in [0,1)$ and the assumptions $\norm{\prevm}_\infty \leq \gradb$ and $\norm{\currg}_\infty \leq \gradb$.

To show that $\norm{\currm} \leq \gradbtwo$, note that if we assume $\norm{\currm} > \gradbtwo$ and $\norm{m_{t'}} \leq \gradbtwo$ for all $t' \in \{1, \dots, t-1\}$, we again get a contradiction since, by the triangle inequality,
\begin{equation}
\begin{split}
  \norm{\currm} &= \norm{\bonet \prevm + (1 - \bonet) \currg}  \\ 
  & \leq \bonet \norm{\prevm} + (1-\bonet) \norm{\currg} \\
  & \leq \bonet \gradbtwo + (1-\bonet) \gradbtwo \\
  & = \gradbtwo \,,
\end{split}
\end{equation}
therefore it must indeed follow that $\norm{\currm} \leq \gradbtwo$.

\end{proof}


\begin{lemma}
	Assume that there exists a constant $\gradb$ such that $\norm{\nabla f_s(\w)}_\infty \leq \gradb$ for all $s \in \supp$ and $w \in \R^d$, and let $\gradbtwo$ be a constant such that $\norm{\nabla f_s(\w)} \leq \gradbtwo$ for all $s \in \supp$ and $w \in \R^d$. Moreover, assume that $\btwot \in [0,1)$ for all $t \in \N$.
	
	 Let $\currv \in \R^d$ be given by 
\[\currv = \btwot \prevv + (1-\btwot) \currg^2
   \quad \textrm{and} \quad v_0 = 0 \,,\]
   where $\btwot \in [0,1)$ for all $t \in \N$.
	
	Then, we have \[ \norm{\currv}_\infty \leq \gradb^2 \quad \text{ and } \quad \norm{\currv} \leq \gradbtwo^2 \] for all $t \in \N$ and all possible sample sequences $(s_1, \dots, s_t) \in \supp^t$.
\label{lem:vbound}
\end{lemma}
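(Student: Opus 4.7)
The plan is to follow the contradiction-based strategy used in the proof of Lemma~\ref{lem:mbound}, exploiting that $\currv$ is an exponential moving average (i.e., a convex combination) of $\prevv$ and $\currg \odot \currg$, so that if both quantities are bounded above by some constant $M$ then $\currv$ must be as well. The base case $v_0 = 0$ trivially satisfies both claimed bounds, and $\btwot \in [0,1)$ ensures that convex combinations stay bounded by the same constant, making the inductive step work cleanly.

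For the $\norm{\currv}_\infty \leq \gradb^2$ bound, I would suppose for contradiction that there exists some smallest $t$ with $\norm{\currv}_\infty > \gradb^2$, so that $\norm{v_{t'}}_\infty \leq \gradb^2$ for all $t' < t$ (such a smallest $t$ exists since $v_0 = 0$ satisfies the bound trivially). Then there must be some $i \in [d]$ with $\currvi > \gradb^2$. However, using $\btwot \in [0,1)$ together with the inductive hypothesis and the assumption $\norm{\currg}_\infty \leq \gradb$,
\begin{equation*}
    \currvi = \btwot \prevvi + (1-\btwot) \currgi^2 \leq \btwot \norm{\prevv}_\infty + (1-\btwot) \norm{\currg}_\infty^2 \leq \btwot \gradb^2 + (1-\btwot) \gradb^2 = \gradb^2,
\end{equation*}
which contradicts $\currvi > \gradb^2$.

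The $\norm{\currv} \leq \gradbtwo^2$ bound follows the same template: proceed by contradiction with a smallest violating $t$, then apply the triangle inequality to the recursion to obtain
\begin{equation*}
    \norm{\currv} \leq \btwot \norm{\prevv} + (1-\btwot) \norm{\currg \odot \currg}.
\end{equation*}
The main subtlety, and the only place this proof differs meaningfully from Lemma~\ref{lem:mbound}, is that one must bound $\norm{\currg \odot \currg}$ by $\gradbtwo^2$ rather than by $\gradbtwo$, since squared gradients appear in the update instead of raw gradients. This holds via the elementary inequality
\begin{equation*}
    \norm{\currg \odot \currg}^2 = \sum_i \currgi^4 \leq \Big(\sum_i \currgi^2\Big)^2 = \norm{\currg}^4 \leq \gradbtwo^4,
\end{equation*}
where the middle step drops the nonnegative cross terms when expanding the square. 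Combining $\norm{\currg \odot \currg} \leq \gradbtwo^2$ with the inductive hypothesis $\norm{\prevv} \leq \gradbtwo^2$ yields $\norm{\currv} \leq \gradbtwo^2$, contradicting the assumption. The only real obstacle is recognizing that $\norm{\currg \odot \currg} \neq \norm{\currg}^2$ in general, so the bound must be routed through the sum-of-fourth-powers inequality rather than asserted directly.
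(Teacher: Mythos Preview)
Your proposal is correct and follows essentially the same approach as the paper: both parts use the contradiction/minimal-counterexample template from Lemma~\ref{lem:mbound}, and for the $\ell_2$ bound the paper likewise reduces to showing $\norm{\currg \odot \currg} \leq \gradbtwo^2$ via $\sum_i \currgi^4 \leq \big(\sum_i \currgi^2\big)^2$. Your presentation of that last inequality (dropping nonnegative cross terms) is in fact slightly cleaner than the paper's, which writes the expansion with a minor indexing slip, but the idea is the same.
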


\begin{proof}
The proof follows closely from the one of Lemma \ref{lem:mbound}. Assume for the sake of contradiction that there exists $t \in \N$ and some sequence of samples $(s_1, \dots, s_t)$ such that $\norm{\currv}_\infty > \gradb^2$ and $\norm{v_{t'}}_\infty \leq \gradb^2$ for all $t' \in \{1, \dots, t-1\}$.

Then $v_{t,i} > \gradb^2$ for some $i \in [d]$ but
\begin{equation}
\begin{split}
  v_{t,i} &= \btwot v_{t-1,i} + (1-\btwot) \currgi^2 \\ 
  & \leq \btwot \norm{\prevv}_\infty + (1-\btwot) \norm{\currg}_\infty^2 \\
  & \leq \btwot \gradb^2 + (1-\btwot) \gradb^2 \\
  & = \gradb^2 \,,
\end{split}
\end{equation}
where we used $\btwot \in [0,1)$ and the assumptions $\norm{\prevv}_\infty \leq \gradb^2$ and $\norm{\currg}_\infty \leq \gradb$, which raises a contradiction and shows that indeed $\norm{\currv}_\infty \leq \gradb^2$.

For the $\ell_2$ case, assume that $\norm{\currv} > \gradbtwo^2$ and $\norm{v_{t'}} \leq \gradbtwo^2$ for all $t' \in \{1, \dots, t-1\}$, which yields
\begin{equation}
\begin{split}
  \norm{\currv} &= \norm{\btwot \prevv + (1 - \btwot) \currg^2}  \\ 
  & \leq \btwot \norm{\prevv} + (1-\btwot) \norm{\currg^2} \\
  & \leq \btwot \gradbtwo^2 + (1-\btwot) \gradbtwo^2 \\
  & = \gradbtwo^2 \,,
\label{eq-ph1}
\end{split}
\end{equation}
where we used the assumption $\norm{\currg} \leq \gradbtwo$ which also implies that
\begin{equation}
\begin{split}
	\norm{\currg^2} &= \left[\sum_{i=1}^d \currgi^4 \right]^{\frac12}\\
	& \leq \left[\sum_{i=1}^d \currgi^4 + \sum_{i=1}^d \sum_{j=1}^d \currgi^2 g_{t,j}^2 \right]^{\frac12} \\
	&= \left[ \left( \sum_{i=1}^d \currgi^2 \right)^2 \right]^{\frac12} \\
	&= \left[ \left( \sum_{i=1}^d \currgi^2 \right)^{\frac12} \right]^2 \\
	&\leq \gradbtwo^2 \,.
\end{split}
\end{equation}

Checking that \eqref{eq-ph1} yields a contradiction completes the argument.

\end{proof}


\begin{lemma}
	Under the same assumptions of Lemma \ref{lem:mbound}, we have
	\begin{equation}
		\norm{m_{t'} \odot \curre} \leq \min \left(\gradb \norm{\curre}, \gradbtwo \highinft  \right) \,,
	\end{equation}	
for all $t, t' \in \N$ and all possible sample sequences $(s_1, \dots, s_{\max(t,t')})$.
\label{lem:mebound}
\end{lemma}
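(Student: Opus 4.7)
The plan is to establish the two bounds inside the minimum separately by manipulating the squared Euclidean norm of the element-wise product, and then simply combine them. Starting from
\begin{equation}
    \norm{m_{t'} \odot \curre}^2 = \sum_{i=1}^d m_{t',i}^2 \currei^2 \,,
\end{equation}
I can factor out either the largest component of $m_{t'}$ (in the $\ell_\infty$ sense) or the largest component of $\curre$, depending on which of the two target bounds I want to derive.

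For the first bound, I would upper bound each $m_{t',i}^2$ by $\norm{m_{t'}}_\infty^2$, pull it out of the sum, recognize the remaining sum as $\norm{\curre}^2$, and then invoke Lemma \ref{lem:mbound} to obtain $\norm{m_{t'}}_\infty \leq \gradb$. This gives
\begin{equation}
    \norm{m_{t'} \odot \curre}^2 \leq \norm{m_{t'}}_\infty^2 \norm{\curre}^2 \leq \gradb^2 \norm{\curre}^2 \,,
\end{equation}
so taking square roots yields $\norm{m_{t'} \odot \curre} \leq \gradb \norm{\curre}$.

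For the second bound, I would instead upper bound each $\currei^2$ by $\highinft^2 = \max_i \currei^2$, pull it out of the sum, recognize the remaining sum as $\norm{m_{t'}}^2$, and then invoke Lemma \ref{lem:mbound} to obtain $\norm{m_{t'}} \leq \gradbtwo$. This gives
\begin{equation}
    \norm{m_{t'} \odot \curre}^2 \leq \highinft^2 \norm{m_{t'}}^2 \leq \highinft^2 \gradbtwo^2 \,,
\end{equation}
so taking square roots yields $\norm{m_{t'} \odot \curre} \leq \gradbtwo \highinft$. Combining the two bounds by taking their minimum concludes the proof. There is no significant obstacle here: the entire argument is a direct application of elementary H\"older-type inequalities combined with the already-established bounds on $\norm{m_{t'}}_\infty$ and $\norm{m_{t'}}$ from Lemma \ref{lem:mbound}.
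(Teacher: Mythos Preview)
Your proposal is correct and essentially identical to the paper's proof: both derive the two bounds by factoring out $\norm{m_{t'}}_\infty^2$ and $\highinft^2$ respectively from the sum $\sum_i m_{t',i}^2 \currei^2$, then apply Lemma~\ref{lem:mbound} and take square roots.
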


\begin{proof}
By definition,
\begin{equation}
\begin{split}
  \norm{m_{t'} \odot \curre}^2 &= \sum_{i=1}^d m_{t,i}^2 \cdot \currei^2 \\
  & \leq \sum_{i=1}^d (\max_{j \in [d]} m_{t',j}^2) \cdot \currei^2 \\
  & \leq \norm{m_{t'}}_\infty^2 \sum_{i=1}^d \currei^2 \\
  & = \norm{m_{t'}}_\infty^2 \cdot \norm{\curre}^2 \,,
\end{split}
\end{equation}
hence invoking Lemma \ref{lem:mbound} and taking the square root yields $\norm{m_{t'} \odot \curre} \leq \gradb \norm{\curre}$.

Additionally, we have
\begin{equation}
\begin{split}
  \norm{m_{t'} \odot \curre}^2 & \leq \sum_{i=1}^d m_{t',i}^2 (\max_{j \in [d]} \eta_{t,j}^2) \\
  & \leq \norm{\curre}_\infty^2 \sum_{i=1}^d m_{t',i}^2 \\
  & = \norm{\curre}_\infty^2 \cdot \norm{m_{t'}}^2 \,,
\end{split}
\end{equation}
hence recalling that $\norm{\curre}_\infty = \highinft$ and by Lemma \ref{lem:mbound} we get $\norm{m_{t'} \odot \curre} \leq \gradbtwo \highinft$.

Combining the two bounds completes the proof.
\end{proof}


\begin{lemma}
	Under the same assumptions of Lemma \ref{lem:mbound}, we have
	\begin{equation}
		\left\langle \nabla f(\currw), \currm \odot \curre \right\rangle \geq (1 - \bonet) \left\langle \nabla f(\currw), \currg \odot \curre \right\rangle - \bonet \gradbtwo \normed{\prevm \odot \curre} \,,
	\end{equation}
	for all $t \in \N$ and all possible sample sequences $(s_1, \dots, s_t) \in \supp^t$.
\label{lem:dotbound}
\end{lemma}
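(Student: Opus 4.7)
The plan is to expand the inner product using the recursive definition of $\currm$ and then apply Cauchy--Schwarz to the term involving $\prevm$. Specifically, since $\currm = \bonet \prevm + (1-\bonet) \currg$, the inner product decomposes cleanly by linearity in the first slot of the elementwise product:
\begin{equation}
    \langle \nabla f(\currw), \currm \odot \curre \rangle = \bonet \langle \nabla f(\currw), \prevm \odot \curre \rangle + (1-\bonet) \langle \nabla f(\currw), \currg \odot \curre \rangle \,.
\end{equation}
The second term matches exactly what appears on the right-hand side of the target inequality, so the whole argument reduces to producing a lower bound on the first term.

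For the first term, I would apply Cauchy--Schwarz to get $\langle \nabla f(\currw), \prevm \odot \curre \rangle \geq -\|\nabla f(\currw)\| \cdot \|\prevm \odot \curre\|$. Then I need to bound $\|\nabla f(\currw)\|$ by $\gradbtwo$. This follows from the bounded-gradient assumption on the instance losses together with Jensen's inequality (or the triangle inequality for integrals): since $\nabla f(w) = \mathbb{E}_{s \sim \dist}[\nabla f_s(w)]$ and $\|\nabla f_s(w)\| \leq \gradbtwo$ uniformly, we have $\|\nabla f(w)\| \leq \mathbb{E}_{s \sim \dist}[\|\nabla f_s(w)\|] \leq \gradbtwo$. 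Combining gives $\bonet \langle \nabla f(\currw), \prevm \odot \curre \rangle \geq -\bonet \gradbtwo \|\prevm \odot \curre\|$ because $\bonet \geq 0$, and substituting back yields the claim.

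This proof is essentially a one-line calculation, so I do not anticipate any substantive obstacle. The only subtle point is justifying $\|\nabla f(\currw)\| \leq \gradbtwo$ from the pointwise bound on $\|\nabla f_s\|$, which the excerpt already flags in the introduction to its technical lemmas. No use of Lemma~\ref{lem:mbound} beyond the setup assumptions (bounded gradients, $\bonet \in [0,1)$) is actually required here; the reference to Lemma~\ref{lem:mbound}'s assumptions is simply to inherit the same regularity conditions on $\bonet$ and the instance gradients.
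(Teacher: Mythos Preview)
Your proposal is correct and matches the paper's proof essentially step for step: expand $\currm$ via its recursion, apply Cauchy--Schwarz to the $\prevm$ term, and bound $\normed{\nabla f(\currw)} \leq \gradbtwo$ via Jensen's inequality applied to $\nabla f(w) = \mathbb{E}_s[\nabla f_s(w)]$. Your observation that Lemma~\ref{lem:mbound} is only invoked for its standing assumptions is also accurate.
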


\begin{proof}
Using the definition of $\currm$, we have
\begin{equation}
\begin{split}
   \left\langle \nabla f(\currw), \currm \odot \curre \right\rangle 
   & = \Big\langle \nabla f(\currw), \big(\bonet \prevm + (1-\bonet) \currg\big) \odot \curre \Big\rangle  \\ 
   &= (1 - \bonet) \left\langle \nabla f(\currw), \currg \odot \curre \right\rangle + \bonet \left\langle \nabla f(\currw), \prevm \odot \curre \right\rangle \\
   &\geq (1 - \bonet) \left\langle \nabla f(\currw), \currg \odot \curre \right\rangle - \bonet \normed{\nabla f(\currw)} \cdot \normed{\prevm \odot \curre}\,,
\end{split}
\label{eq-ph2}
\end{equation}
where we used Cauchy-Schwarz in the last step.

Next, by Jensen's inequality and the fact that $\norm{\cdot}$ is convex we have, for all $w \in \R^d$,
\begin{equation}
	\norm{\nabla f(\w)} = \norm{\expec{s}{\nabla f_s(\w)}} \leq \expec{s}{\normed{\nabla f_s(\w)}} \leq \expec{s}{\gradbtwo} = \gradbtwo \,.
\end{equation}

Applying this bound in \eqref{eq-ph2} yields the desired inequality.

\end{proof}

\begin{lemma}
	Assume that there exists a constant $\gradb$ such that $\norm{\nabla f_s(\w)}_\infty \leq \gradb$ for all $s \in \supp$ and $w \in \R^d$, and let $\gradbtwo$ be a constant such that $\norm{\nabla f_s(\w)} \leq \gradbtwo$ for all $s \in \supp$ and $w \in \R^d$. Moreover, assume that $\bonet \in [0,1)$ and $\bonet \leq \beta_{1,t-1}$ for all $t \in \N$.
	
	If $\curre$ is independent of $s_t$ for all $t \in \N$, i.e. $P(\curre = \eta, s_t = s) = P(\curre = \eta) P(s_t = s)$ for all $\eta \in \R^d, s \in \supp$, then
	\begin{equation}
		\expec{s_t}{\left\langle \nabla f(\currw), \currm \odot \curre \right\rangle} \geq (1 - \bone) \lowinft \norm{\nabla f (\currw)}^2 - \bonet \gradbtwo \normed{\prevm \odot \curre} \,,
	\end{equation}
	for all $t \in \N$ and all possible sample sequences $(s_1, \dots, s_t) \in \supp^t$.
\label{lem:edotbound}
\end{lemma}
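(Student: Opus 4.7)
The plan is to start from the deterministic inequality already established in Lemma \ref{lem:dotbound} and take the expectation with respect to $s_t$ on both sides, exploiting the independence hypothesis to push the expectation inside the inner product. Concretely, Lemma \ref{lem:dotbound} gives
\[
  \left\langle \nabla f(\currw), \currm \odot \curre \right\rangle
  \geq (1-\bonet)\left\langle \nabla f(\currw), \currg \odot \curre \right\rangle - \bonet\gradbtwo \normed{\prevm \odot \curre} \,.
\]
Since $\currw$ is a deterministic function of $s_1,\dots,s_{t-1}$ (hence independent of $s_t$), and $\curre$ is independent of $s_t$ by assumption, I would first argue that in the first term we can pull the expectation inside to obtain
\[
  \expec{s_t}{\left\langle \nabla f(\currw), \currg \odot \curre \right\rangle}
  = \left\langle \nabla f(\currw), \expec{s_t}{\currg} \odot \curre \right\rangle
  = \left\langle \nabla f(\currw), \nabla f(\currw) \odot \curre \right\rangle \,,
\]
using $\expec{s_t}{\nabla f_{s_t}(\currw)} = \nabla f(\currw)$.

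Second, I would expand the resulting inner product coordinate-wise and bound each component of $\curre$ from below by $\lowinft$ to get
\[
  \left\langle \nabla f(\currw), \nabla f(\currw) \odot \curre \right\rangle
  = \sum_{i=1}^d (\nabla f(\currw))_i^2 \cdot \currei
  \geq \lowinft \norm{\nabla f(\currw)}^2 \,.
\]
For the second term in the inequality, $\prevm$ depends only on $s_1,\dots,s_{t-1}$ and $\curre$ is independent of $s_t$, so $\normed{\prevm \odot \curre}$ is also independent of $s_t$ and the expectation acts trivially. Combining the two pieces yields
\[
  \expec{s_t}{\left\langle \nabla f(\currw), \currm \odot \curre \right\rangle}
  \geq (1-\bonet)\lowinft \norm{\nabla f(\currw)}^2 - \bonet\gradbtwo \normed{\prevm \odot \curre} \,.
\]

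Finally, I would use the monotonicity hypothesis $\bonet \leq \beta_{1,t-1} \leq \dots \leq \bone$ together with nonnegativity of $\lowinft$ and $\norm{\nabla f(\currw)}^2$ to replace the coefficient $(1-\bonet)$ by the (smaller) constant $(1-\bone)$, obtaining the stated bound. The only subtlety to watch is the independence bookkeeping when moving the expectation through the inner product; once that is justified cleanly, the rest of the argument is essentially algebraic and should not present any real obstacle.
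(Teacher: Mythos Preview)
Your proposal is correct and follows essentially the same approach as the paper's own proof: invoke Lemma~\ref{lem:dotbound}, take expectation over $s_t$, use independence of $\curre$ and $s_t$ to push the expectation onto $\currg$ alone, lower-bound the resulting quadratic form by $\lowinft\norm{\nabla f(\currw)}^2$, and finally replace $(1-\bonet)$ by $(1-\bone)$ via the monotonicity hypothesis. The only cosmetic difference is that the paper writes out the intermediate equalities \eqref{eq-ph4}--\eqref{eq-ph3} explicitly before combining them.
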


\begin{proof}
From Lemma \ref{lem:dotbound} we have that
	\begin{equation}
		\left\langle \nabla f(\currw), \currm \odot \curre \right\rangle \geq (1 - \bonet) \left\langle \nabla f(\currw), \currg \odot \curre \right\rangle - \bonet \gradbtwo \normed{\prevm \odot \curre} \,.
	\end{equation}
	
Then, taking the expectation over the draw of $s_t \in \supp$ and recalling that $\currw$, and hence also $\nabla f (\currw)$, is computed from $(s_1, \dots, s_{t-1})$,
	\begin{equation}
		\expec{s_t}{\left\langle \nabla f(\currw), \currm \odot \curre \right\rangle} \geq (1 - \bonet) \left\langle \nabla f(\currw), \expec{s_t}{\currg \odot \curre} \right\rangle - \bonet \gradbtwo \expec{s_t}{\normed{\prevm \odot \curre}} \,.
	\label{eq-ph5}
	\end{equation}

Now, note that since we assume that $\curre$ is independent of $s_t$, we get
\begin{equation}
	\expec{s_t}{\currg \odot \curre} = \curre \odot \expec{s_t}{\currg} = \curre \odot \expec{s_t}{\nabla f_{s_t} (\currw)} = \curre \odot \nabla f (\currw) \,,
\label{eq-ph4}
\end{equation}
and also
\begin{equation}
	\expec{s_t}{\normed{\prevm \odot \curre}} = \normed{\prevm \odot \curre} \,.
\label{eq-ph3}
\end{equation}

Combining \eqref{eq-ph3} and \eqref{eq-ph4} into \eqref{eq-ph5} yields
	\begin{equation}
		\expec{s_t}{\left\langle \nabla f(\currw), \currm \odot \curre \right\rangle} \geq (1 - \bonet) \left\langle \nabla f(\currw), \nabla f(\currw) \odot \curre \right\rangle - \bonet \gradbtwo \normed{\prevm \odot \curre} \,.
	\label{eq-ph6}
	\end{equation}
	
	Moreover, we have
	\begin{equation}
	\begin{split}
		\left\langle \nabla f(\currw), \nabla f(\currw) \odot \curre \right\rangle 
		&= \sum_{i=1}^d (\nabla f(\currw))_i (\nabla f(\currw))_i \currei \\
		&= \sum_{i=1}^d (\nabla f(\currw))^2_i \currei \\
		&\geq \sum_{i=1}^d (\nabla f(\currw))^2_i (\min_j \eta_{t,j}) \\
		&= \lowinft \sum_{i=1}^d (\nabla f(\currw))^2_i \\
		&= \lowinft \norm{\nabla f(\currw)}^2 \,,
	\end{split}
	\end{equation}
	which, when applied to \eqref{eq-ph6} yields
	\begin{equation}
		\expec{s_t}{\left\langle \nabla f(\currw), \currm \odot \curre \right\rangle} \geq (1 - \bonet)  \lowinft \norm{\nabla f(\currw)}^2 - \bonet \gradbtwo \normed{\prevm \odot \curre} \,,
	\end{equation}
	where we also used that $\bonet \in [0,1)$ and $\lowinft \geq 0$. Using the fact that $\bonet \leq \beta_{1,t-1} \leq \bone$ for all $t \in \N$ and hence $1 - \bonet \geq 1 - \bone$ yields the desired inequality.

\end{proof}

\section{Proof of \thmref{thm:dadamconv}}
\label{app:dadamconv}

We organize the proof as follows: we first prove an intermediate result (Lemma \ref{lem:intproof2}) and split the proof of the bounds in \eqref{eq-simplerate} and \eqref{eq-refrate} in two, where the latter can be seen as a refinement of \eqref{eq-simplerate} given the additional assumption that $Z \coloneqq \sum_{t=1}^T \alpha_t \min_i \currei$ is independent of each $s_t$.

Throughout the proof we use the following notation for clarity: 
\begin{equation}
	\highinft = \max_i \currei \quad\quad \lowinft = \min_i \currei \,.
\end{equation}

\begin{lemma}
	Assume that $f$ is $\smooth$-smooth, lower-bounded by some $f^*$ (i.e. $f^* \leq f(\w)$ for all $\w \in \R^d$), and that there exists a constant $\gradb$ such that $\norm{\nabla f_s(\w)}_\infty \leq \gradb$ for all $s \in \supp$ and $w \in \R^d$, and let $\gradbtwo$ be a constant such that $\norm{\nabla f_s(\w)} \leq \gradbtwo$ for all $s \in \supp$ and $w \in \R^d$.
	
	Consider any optimization method that performs updates following \begin{equation}
    \nextw = \currw - \curra \cdot \curre \odot \currm \,,
\end{equation}
where we further assume assume that for all $t \in \N$ we have $\curra \geq 0$, $\bonet = \frac{\bone}{\sqrt t}$ for some $\bone \in [0,1)$, and $\currei \geq 0$ for all $i \in [d]$.

If $\curre$ is independent of $s_t$ for all $t \in \N$, i.e. $P(\curre = \eta, s_t = s) = P(\curre = \eta) P(s_t = s)$ for all $\eta \in \R^d, s \in \supp$, then
	\begin{equation}
\begin{split}
   \sum_{t=1}^T \curra \lowinft \normed{\nabla f(\currw)}^2 \leq \frac{1}{1-\bone} \Bigg( \sum_{t=1}^T \left( f(\currw) -  \expec{s_t}{f(\nextw)} \right) &+ \sum_{t=1}^T \curra \bonet \gradbtwo \normed{\prevm \odot \curre} \\
   & + \frac{\smooth}2 \sum_{t=1}^T \curra^2 \expec{s_t}{\normed{ \currm \odot \curre}^2} \Bigg) \,,
\end{split}
	\end{equation}
	for all $T \in \N$ and all possible sample sequences $(s_1, \dots, s_T) \in \supp^T$.
\label{lem:intproof2}
\end{lemma}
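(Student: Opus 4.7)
The plan is to derive the bound from the $\smooth$-smoothness of $f$ combined with the assumption that $\curre$ is independent of $s_t$, which allows invoking Lemma \ref{lem:edotbound} to convert the expected directional inner product involving $\currm$ into a term proportional to $\normed{\nabla f(\currw)}^2$. First, I would apply the standard smoothness descent inequality at iteration $t$: since $\nextw - \currw = -\curra \cdot \curre \odot \currm$, it yields
\begin{equation}
f(\nextw) \leq f(\currw) - \curra \left\langle \nabla f(\currw), \curre \odot \currm \right\rangle + \frac{\smooth}{2} \curra^2 \normed{\curre \odot \currm}^2 \,.
\end{equation}

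Next, I would take the expectation over $s_t$. Because $\currw$ and $\prevm$ depend only on $s_1, \dots, s_{t-1}$, and $\curre$ is independent of $s_t$ by assumption, applying Lemma \ref{lem:edotbound} to lower-bound $\expec{s_t}{\left\langle \nabla f(\currw), \curre \odot \currm \right\rangle}$ gives
\begin{equation}
\expec{s_t}{f(\nextw)} \leq f(\currw) - \curra (1-\bone) \lowinft \normed{\nabla f(\currw)}^2 + \curra \bonet \gradbtwo \normed{\prevm \odot \curre} + \frac{\smooth}{2} \curra^2 \expec{s_t}{\normed{\curre \odot \currm}^2} \,.
\end{equation}
Isolating $\curra (1-\bone) \lowinft \normed{\nabla f(\currw)}^2$ on the left-hand side, summing from $t=1$ to $T$, and dividing by the $t$-independent constant $(1-\bone)$ then produces the claimed inequality, with the three remaining terms matching the three summands on the right-hand side of the lemma.

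The main obstacle, already handled by Lemma \ref{lem:edotbound}, is controlling the bias of the momentum direction $\currm$: since $\currm$ aggregates past stochastic gradients, $\expec{s_t}{\currm \odot \curre}$ is not itself a scalar multiple of $\nabla f(\currw) \odot \curre$, and absorbing this bias is precisely what produces the $\curra \bonet \gradbtwo \normed{\prevm \odot \curre}$ correction. The uniform bound $\bonet \leq \bone$ is what allows the coefficient $(1-\bonet)$ arising from the definition of $\currm$ to be replaced by the $t$-independent $(1-\bone)$, so that the factor pulls cleanly out of the sum. Everything else reduces to routine algebraic rearrangement, with no further nontrivial estimates required beyond the lemmas already established earlier in the paper.
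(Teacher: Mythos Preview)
Your proposal is correct and follows essentially the same approach as the paper's proof: apply the $\smooth$-smoothness descent inequality with the update rule substituted, take expectation over $s_t$, invoke Lemma~\ref{lem:edotbound} (enabled by the independence of $\curre$ and $s_t$) to lower-bound the inner product term, then rearrange, sum over $t$, and divide by $1-\bone$. The steps and their ordering match the paper's argument exactly.
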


\begin{proof}

We start from the assumption that $f$ is $\smooth$-smooth, which yields
\begin{equation}
   f(\nextw) \leq
      f(\currw) +
      \langle \nabla f(\currw), \nextw - \currw \rangle +
      \frac{\smooth}2 \normed{\nextw - \currw}^2 \,.
\end{equation}

Plugging the update expression $\nextw = \currw - \curra \cdot \curre \odot \currm$,
\begin{equation}
\begin{split}
   f(\nextw) & \leq f(\currw) - \curra \left\langle \nabla f(\currw), \currm \odot \curre \right\rangle + \frac{\curra^2 \smooth}2 \normed{ \currm \odot \curre}^2 \,.
\end{split}
\end{equation}

Now, taking the expectation over the random sample $s_t \in \supp$, we get
\begin{equation}
\begin{split}
   \expec{s_t}{f(\nextw)} & \leq f(\currw) - \curra \expec{s_t}{\left\langle \nabla f(\currw), \currm \odot \curre \right\rangle} + \frac{\curra^2 \smooth}2 \expec{s_t}{\normed{ \currm \odot \curre}^2} \,,
\end{split}
\end{equation}
where we used the fact that $\currw$ and $\curra$ are not functions of of $s_t$ -- in particular, recall that $\currw$ is deterministically computed from $(s_1, \dots, s_{t-1})$.

Using the assumption that $\curre$ is independent of $s_t$ and applying Lemma \ref{lem:edotbound}, we get
\begin{equation}
\begin{split}
   \expec{s_t}{f(\nextw)} \leq f(\currw) - \curra (1 - \bone) \lowinft \norm{\nabla f (\currw)}^2 &+ \curra \bonet \gradbtwo \normed{\prevm \odot \curre} \\
   &+ \frac{\curra^2 \smooth}2 \expec{s_t}{\normed{ \currm \odot \curre}^2} \,,
\end{split}
\end{equation}
which can be re-arranged into
\begin{equation}
\begin{split}
   \curra \lowinft \normed{\nabla f(\currw)}^2 \leq \frac{1}{1-\bone} \Bigg( f(\currw) -  \expec{s_t}{f(\nextw)} &+ \curra \bonet \gradbtwo \normed{\prevm \odot \curre} \\
   & + \frac{\curra^2 \smooth}2 \expec{s_t}{\normed{ \currm \odot \curre}^2} \Bigg)\,,
\end{split}
\end{equation}
where we used the assumption that $\bone \in [0,1)$, hence $1-\bone > 0$ which was used to divide both sides of the inequality.

Now, summing over $t=1$ to $T$,
\begin{equation}
\begin{split}
   \sum_{t=1}^T \curra \lowinft \normed{\nabla f(\currw)}^2 \leq \frac{1}{1-\bone} \Bigg( \sum_{t=1}^T \left( f(\currw) -  \expec{s_t}{f(\nextw)} \right) &+ \sum_{t=1}^T \curra \bonet \gradbtwo \normed{\prevm \odot \curre} \\
   & + \sum_{t=1}^T \frac{\curra^2 \smooth}2 \expec{s_t}{\normed{ \currm \odot \curre}^2} \Bigg) \,,
\end{split}
\end{equation}
which yields the desired result.

\end{proof}

\subsection{Proof of the first guarantee (Equation \ref{eq-simplerate})}

\begin{proof}

We start from the bound given in Lemma \ref{lem:intproof2}:
\begin{equation}
\begin{split}
   \sum_{t=1}^T \curra \lowinft \normed{\nabla f(\currw)}^2 \leq \frac{1}{1-\bone} \Bigg( \sum_{t=1}^T \left( f(\currw) -  \expec{s_t}{f(\nextw)} \right) & + \sum_{t=1}^T \curra \bonet \gradbtwo \normed{\prevm \odot \curre} \\
   & + \frac{\smooth}2 \sum_{t=1}^T \curra^2 \expec{s_t}{\normed{ \currm \odot \curre}^2} \Bigg) \,.
\end{split}
\end{equation}

Now, using Lemma \ref{lem:mebound} to upper bound both $\normed{ \prevm \odot \curre}$ and $\normed{ \currm \odot \curre}$ by $\gradbtwo \highinft$,
\begin{equation}
\begin{split}
   \sum_{t=1}^T \curra \lowinft \normed{\nabla f(\currw)}^2 \leq \frac{1}{1-\bone} \Bigg( \sum_{t=1}^T \left( f(\currw) -  \expec{s_t}{f(\nextw)} \right) & + \sum_{t=1}^T \curra \bonet \gradbtwo^2 \highinft \\
   & + \frac{\smooth}2 \sum_{t=1}^T \curra^2 \gradbtwo^2 \highinft^2 \Bigg) \,,
\end{split}
\end{equation}
where we used that $\expec{s_t}{\highinft^2} = \highinft^2$ since $\highinft$ is deterministically computed from $\curre$, which in turn is independent of $s_t$.

Next, from the assumption in \thmref{thm:dadamconv} that there are positive constants $\lowinf$ and $\highinf$ such that $\lowinf \leq \currei \leq \highinf$ for all $t \in \N, i \in [d]$ and sample sequences $(s_1, \dots, s_t)$, it follows that \[\lowinf \leq \lowinft = \min_{i \in [d]} \currei \quad \text{ and } \quad \highinf \geq \highinft = \max_{i \in [d]} \currei\] for all $t \in \N$, therefore
\begin{equation}
\begin{split}
   \lowinf \sum_{t=1}^T \curra \normed{\nabla f(\currw)}^2 \leq \frac{1}{1-\bone} \Bigg( \sum_{t=1}^T \left( f(\currw) -  \expec{s_t}{f(\nextw)} \right) & + \gradbtwo^2 \highinf \sum_{t=1}^T \curra \bonet  \\
   & + \frac{\smooth \gradbtwo^2 \highinf^2 }2 \sum_{t=1}^T \curra^2  \Bigg) \,.
\end{split}
\end{equation}

Dividing both sides by $\lowinf \geq 0$ and letting $\curra = \alpha' / \sqrt T$ yields
\begin{equation}
\begin{split}
   \sum_{t=1}^T \frac{\alpha'}{\sqrt T} \normed{\nabla f(\currw)}^2 \leq \frac{1}{\lowinf(1-\bone)} \Bigg( \sum_{t=1}^T \left( f(\currw) -  \expec{s_t}{f(\nextw)} \right) & + \gradbtwo^2 \highinf \sum_{t=1}^T \frac{\alpha'}{\sqrt T} \bonet \\
   & + \frac{\smooth \gradbtwo^2 \highinf^2 }2 \sum_{t=1}^T \frac{\alpha'^2}{T}\Bigg) \,,
\end{split}
\end{equation}
and, rearranging and using the fact that \[\sum_{t=1}^T \bonet = \bone \sum_{t=1}^T \frac{1}{\sqrt t} \leq \bone \int_0^T \frac{1}{\sqrt t} dt \leq  2 \bone \sqrt T \,, \] which implies that $\sum_{t=1}^T \frac{\alpha'}{\sqrt T} \bonet \leq 2 \alpha' \bone$, we get
\begin{equation}
\begin{split}
   \frac{\alpha'}{\sqrt T} \sum_{t=1}^T \normed{\nabla f(\currw)}^2 \leq \frac{1}{\lowinf(1-\bone)} \Bigg( \sum_{t=1}^T \left( f(\currw) -  \expec{s_t}{f(\nextw)} \right) & + 2 \alpha' \bone \gradbtwo^2 \highinf \\
   & + \frac{\alpha'^2 \smooth \gradbtwo^2 \highinf^2 }2\Bigg) \,.
\end{split}
\end{equation}

Now, taking the expectation over the full sample sequence $(s_1, \dots, s_T)$ yields
\begin{equation}
\begin{split}
   \frac{\alpha'}{\sqrt T} \sum_{t=1}^T \expec{}{\normed{\nabla f(\currw)}^2} \leq \frac{1}{\lowinf(1-\bone)} \Bigg( \sum_{t=1}^T \left( \expec{}{f(\currw)} -  \expec{}{f(\nextw)} \right) & + 2 \alpha' \bone \gradbtwo^2 \highinf \\
   & + \frac{\alpha'^2 \smooth \gradbtwo^2 \highinf^2 }2\Bigg) \,.
\end{split}
\label{eq-ph8}
\end{equation}

Note that, by telescoping sum,
\begin{equation}
	\sum_{t=1}^T \expec{}{f(\currw)} -  \expec{}{f(\nextw)} = \expec{}{f(\w_1)} - \expec{}{f(\w_{T+1})} \leq f(\w_1) - f^* \,,
\end{equation}
where the last step follows since $\w_1$ (the parameters at initialization) is independent of the drawn samples, and also from the assumption that $f^*$ lower bounds $f$.

Combining the above with \eqref{eq-ph8} and dividing both sides by $\alpha' \cdot \sqrt T$,
\begin{equation}
\begin{split}
   \frac{1}{T} \sum_{t=1}^T \expec{}{\normed{\nabla f(\currw)}^2} 
   & \leq \frac{1}{\lowinf \sqrt T (1-\bone)} \Bigg( \frac{f(\w_1) - f^*}{\alpha'} + 2 \bone \gradbtwo^2 \highinf + \frac{\alpha' \smooth \gradbtwo^2 \highinf^2 }2\Bigg),.
\end{split}
\label{eq-ph10}
\end{equation}

Finally, we will use Young's inequality with $p=2$ and the conjugate exponent $q=2$, which states that $xy \leq \frac{x^2}{2} + \frac{y^2}{2}$ for any nonnegative numbers $x,y$.
	
	In that context, let
	\begin{equation}
		x = \frac{1}{\sqrt{ \alpha'}} \quad \text{ and } \quad y = \sqrt{\alpha'} \highinf \,,
	\end{equation}
	which yields
	\begin{equation}
		\highinf = xy \leq \frac{x^2}{2} + \frac{y^2}{2} = \frac{1}{\alpha'} + \alpha' \highinf^2 \,,
	\end{equation}
and hence
	\begin{equation}
		2 \bone \gradbtwo^2 \cdot \highinf \leq \frac{2 \bone \gradbtwo^2}{\alpha'} + 2 \bone \gradbtwo^2 \cdot \alpha'\highinf^2 \,.
	\end{equation}

Plugging the above in \eqref{eq-ph10} yields
\begin{equation}
\begin{split}
   \frac{1}{T} \sum_{t=1}^T \expec{}{\normed{\nabla f(\currw)}^2} 
   & \leq \frac{1}{\lowinf \sqrt T (1-\bone)} \Bigg( \frac{2 \bone \gradbtwo^2 + f(\w_1) - f^*}{\alpha'} + \alpha' \highinf^2 \frac{\gradbtwo^2 (\smooth + 2 \bone)}2\Bigg),.
\end{split}
\end{equation}

Verifying that the above is $\bO \left(\frac{1}{L \sqrt T} \left(\frac{1}{\alpha'} + \alpha' \highinf^2 \right) \right)$ in terms of $T, \alpha', \lowinf$ and $\highinf$ finishes the proof.

\end{proof}

\subsection{Proof of the second guarantee (Equation \ref{eq-refrate})}

\begin{proof}

As before, we start from Lemma \ref{lem:intproof2}, which states that
\begin{equation}
\begin{split}
   \sum_{t=1}^T \curra \lowinft \normed{\nabla f(\currw)}^2 \leq \frac{1}{1-\bone} \Bigg( \sum_{t=1}^T \left( f(\currw) -  \expec{s_t}{f(\nextw)} \right) & + \sum_{t=1}^T \curra \bonet \gradbtwo \normed{\prevm \odot \curre} \\
   & + \frac{\smooth}2 \sum_{t=1}^T \curra^2 \expec{s_t}{\normed{ \currm \odot \curre}^2} \Bigg) \,.
\end{split}
\end{equation}

We then invoke Lemma \ref{lem:mebound} to upper bound $\normed{ \prevm \odot \curre}$ by $\gradbtwo \highinft$ and $\normed{ \currm \odot \curre}$ by $\gradb \norm{\curre}^2$:

\begin{equation}
\begin{split}
   \sum_{t=1}^T \curra \lowinft \normed{\nabla f(\currw)}^2 \leq \frac{1}{1-\bone} \Bigg( \sum_{t=1}^T \left( f(\currw) -  \expec{s_t}{f(\nextw)} \right) & + \sum_{t=1}^T \curra \bonet \gradbtwo^2 \highinft \\
   & + \frac{\smooth}2 \sum_{t=1}^T \curra^2 \gradb^2 \expec{s_t}{\normed{ \curre}^2} \Bigg) \,.
\end{split}
\end{equation}



Next, define the unormalized probability distribution $\tilde p(t) = \curra \lowinft$, so that $p(t) = \tilde p(t) / Z$ with $Z = \sum_{t=1}^T \tilde p(t) = \sum_{t=1}^T \curra \lowinft$ is a valid distribution over $t \in \{1, \dots T\}$. Dividing both sides by $Z$ yields
\begin{equation}
\begin{split}
   \sum_{t=1}^T p(t) \normed{\nabla f(\currw)}^2 & \leq \frac1{Z(1-\bone)} \sum_{t=1}^T  \left( f(\currw) - \expec{s_t}{f(\nextw)} + \curra \bonet \gradbtwo^2 \highinft + \frac{\curra^2 \smooth \gradb^2 \expec{s_t}{\normed{ \curre}^2}}2 \right)
\end{split}
\end{equation}

Now, taking the conditional expectation over all samples $S$ given $Z$:
\begin{equation}
\begin{split}
   \expec{}{\sum_{t=1}^T p(t) \normed{\nabla f(\currw)}^2 \Big| Z}
   &\leq \frac1{Z(1-\bone)} \Big( \sum_{t=1}^T \big( \expec{}{f(\currw) | Z} - \expec{}{ \expec{s_t}{f(\nextw)} | Z} \big) \\
   &\qquad + \sum_{t=1}^T \expec{}{\curra \bonet \gradbtwo^2 \highinft + \frac{\curra^2 \smooth \gradb^2 \expec{s_t}{\normed{ \curre}^2}}2 \Big| Z} \Big) \\
   &\leq \frac1{Z(1-\bone)} \Big( \sum_{t=1}^T \big( \expec{}{f(\currw) | Z} - \expec{}{f(\nextw) | Z} \big) \\
   &\qquad + \sum_{t=1}^T \expec{}{\curra \bonet \gradbtwo^2 \highinft + \frac{\curra^2 \smooth \gradb^2 \norm{\curre}^2}2 \Big| Z} \Big) \\
   &= \frac1{Z(1-\bone)} \Big(f(w_1) - f^* \\
   &\qquad + \sum_{t=1}^T \expec{}{\curra \bonet \gradbtwo^2 \highinft + \frac{\curra^2 \smooth \gradb^2 \norm{\curre}^2}2 \Big| Z} \Big) \,.
\end{split}
\end{equation}
where in the second step we used $\expec{}{ \expec{s_t}{\cdot} | Z} = \expec{}{\cdot | Z}$ which follows from the assumption that $p(Z|s_t) = p(Z)$, and the third step follows from telescoping sum and the assumption that $f^*$ lower bounds $f$.

Then, taking the expectation over $Z$ and re-arranging:
\begin{equation}
\begin{split}
   \expec{}{\sum_{t=1}^T p(t) \normed{\nabla f(\currw)}^2}
   &\leq \expec{}{\frac1{Z(1-\bone)} \sum_{t=1}^T \left( \frac{f(w_1) - f^*}{T} + \curra \bonet \gradbtwo^2 \highinft + \frac{\curra^2 \smooth \gradb^2 \norm{\curre}^2}2 \right) } \,.
\end{split}
\end{equation}

Setting $\bone = 0$ for simplicity yields
\begin{equation}
\begin{split}
   \expec{}{\sum_{t=1}^T p(t) \normed{\nabla f(\currw)}^2}
   &\leq \expec{}{\frac1{Z} \sum_{t=1}^T \left( \frac{f(w_1) - f^*}{T} + \frac{\curra^2 \smooth \gradb^2 \norm{\curre}^2}2 \right) } \,.
\end{split}
\end{equation}

Now, let $\curra = \curra' / \sqrt T$
\begin{equation}
\begin{split}
   \expec{}{\sum_{t=1}^T p(t) \normed{\nabla f(\currw)}^2}
   &\leq \expec{}{\frac1{Z} \sum_{t=1}^T \left( \frac{f(w_1) - f^*}{T} + \frac{\curra'^2 \smooth \gradb^2 \norm{\curre}^2}{2T} \right) } \\
   & = \frac1T \cdot \expec{}{\frac1{Z} \sum_{t=1}^T \left( f(w_1) - f^* + \frac12 \curra'^2 \smooth \gradb^2 \norm{\curre}^2 \right) }  \,.
\end{split}
\end{equation}

Now, recall that $Z = \sum_{t=1}^T \curra \lowinft = \frac{1}{\sqrt T} \sum_{t=1}^T \curra' \lowinft$, hence
\begin{equation}
\begin{split}
   \expec{}{\sum_{t=1}^T p(t) \normed{\nabla f(\currw)}^2}
   &\leq \frac1{\sqrt T} \cdot \expec{}{\frac{\sum_{t=1}^T  f(w_1) - f^* + \frac12 \curra'^2 \smooth \gradb^2 \norm{\curre}^2}{\sum_{t=1}^T \curra' \lowinft} }  \\
   &\leq \bO \left( \frac1{\sqrt T} \cdot  \expec{}{\frac{\sum_{t=1}^T  1 + \curra'^2 \norm{\curre}^2}{\sum_{t=1}^T \curra' \lowinft} }  \right) \,.
\end{split}
\end{equation}

Finally, checking that $\sum_{t=1}^T p(t) \normed{\nabla f(\currw)}^2 = \expec{t \sim \mathcal P (t|S)}{\normed{\nabla f(\currw)}^2}$:
\begin{equation}
\begin{split}
     \expec{\algrand}{\normed{\nabla f(\currw)}^2} & \leq \bO \left( \frac1{\sqrt T} \cdot  \expec{}{\frac{\sum_{t=1}^T  1 + \curra'^2 \norm{\curre}^2}{\sum_{t=1}^T \curra' \lowinft} }  \right) \,.
\end{split}
\label{eq-proof2final}
\end{equation}

Recalling that $\lowinft = \min_i \currei$ completes the proof.
\end{proof}

\section{Full Statement and Proof of Theorem \ref{thm:adamconv}}
\label{app-adamconv}

We organize the formal statement and proof of Theorem \ref{thm:adamconv} as follows: we first state a general convergence result for Adam which depends on the step-wise adaptivity parameter $\epsilon_t$ and the learning rates $\curra$ in Theorem \ref{thm:adamconv}, and then present a Corollary that shows how a $\bO(1/ \sqrt T)$ rate follows from such result (Corollary~\ref{cor:adamconv}). This section proceeds the proof of Theorem \ref{thm:dadamconv} (Appendix~\ref{app:dadamconv}) as the proof presented here is more easily seen as a small variant (although overall simpler) of the analysis given in the previous section. Steps which also appear in the proof of Theorem \ref{thm:dadamconv} are not necessarily described in full detail, hence the following arguments can be better understood with the previous section in context. 

Throughout the proof we use the following notation for clarity: 
\begin{equation}
	\highinft = \max_i \currei \quad\quad \lowinft = \min_i \currei \,.
\end{equation}

\begin{theorem*}
	Assume that $f$ is smooth and $f_s$ has bounded gradients. If $\epsilon_t \geq \epsilon_{t-1} > 0$ for all $t \in [T]$, then for the iterates $\{w_1, \dots, w_T\}$ produced by Adam we have
  \begin{equation}
  \begin{split}
      \expec{}{\normed{\nabla f(\currw)}^2} \leq \bO \left(
\frac{1 + \sum_{t=1}^T \frac{\curra}{\epsilon_{t-1}^2}\left(1 + \curra + \epsilon_t - \epsilon_{t-1} \right) }{\sum_{t=1}^T \frac{\curra}{1+\epsilon_{t-1}}} \right) \,,
  \end{split}
	\label{eq-adamrate}
  \end{equation}
  where $\currw$ is sampled from $p(t) \propto \frac{\curra}{\gradb + \epsilon_{t-1}}$.
\end{theorem*}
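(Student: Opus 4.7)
The plan is to adapt the proof of Theorem~\ref{thm:dadamconv} to standard (non-delayed) Adam. The obstacle is that $\eta_t = 1/(\sqrt{v_t}+\epsilon_t)$ depends on $g_t$, so Lemma~\ref{lem:edotbound} cannot be invoked directly. I would resolve this by introducing the proxy
\begin{equation*}
\eta_t^- := \frac{1}{\sqrt{v_{t-1}}+\epsilon_{t-1}},
\end{equation*}
which is measurable with respect to $(s_1,\ldots,s_{t-1})$ and therefore independent of $s_t$, and splitting $\eta_t = \eta_t^- + (\eta_t-\eta_t^-)$. The proxy term inherits the delayed-Adam analysis, while the remainder is a perturbation controlled using the monotonicity assumption $\epsilon_t \geq \epsilon_{t-1}$.

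First I would apply $L$-smoothness to $w_{t+1}-w_t = -\alpha_t\,m_t\odot\eta_t$ and take the conditional expectation over $s_t$. Cauchy--Schwarz on the inner product with $m_t\odot(\eta_t-\eta_t^-)$ gives
\begin{equation*}
\mathbb{E}_{s_t}\langle\nabla f(w_t), m_t\odot\eta_t\rangle \geq \mathbb{E}_{s_t}\langle\nabla f(w_t), m_t\odot\eta_t^-\rangle - G_2\,\mathbb{E}_{s_t}\|m_t\odot(\eta_t-\eta_t^-)\|.
\end{equation*}
Since $\eta_t^-$ is independent of $s_t$, Lemma~\ref{lem:edotbound} lower-bounds the proxy inner product by $(1-\beta_1)\,(\min_i \eta_{t,i}^-)\,\|\nabla f(w_t)\|^2$ modulo the standard momentum residual. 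Because $\min_i \eta_{t,i}^- = 1/(\max_i\sqrt{v_{t-1,i}}+\epsilon_{t-1})\geq 1/(G_\infty+\epsilon_{t-1})$ by Lemma~\ref{lem:vbound}, this produces the sampling distribution $p(t)\propto \alpha_t/(G_\infty+\epsilon_{t-1})$ prescribed by the theorem.

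The main technical step, and the main obstacle, is bounding the correction. Writing
\begin{equation*}
\eta_{t,i}-\eta_{t,i}^- = \frac{(\sqrt{v_{t-1,i}}-\sqrt{v_{t,i}}) + (\epsilon_{t-1}-\epsilon_t)}{(\sqrt{v_{t,i}}+\epsilon_t)(\sqrt{v_{t-1,i}}+\epsilon_{t-1})},
\end{equation*}
the assumption $\epsilon_t\geq\epsilon_{t-1}$ provides the clean denominator lower bound $\epsilon_{t-1}^2$, while the numerator splits into a stochastic piece $|\sqrt{v_{t,i}}-\sqrt{v_{t-1,i}}|\leq \sqrt{1-\beta_{2,t}}\,G_\infty$ (from $v_t=\beta_{2,t}v_{t-1}+(1-\beta_{2,t})g_t^2$ together with $|\sqrt{a}-\sqrt{b}|\leq\sqrt{|a-b|}$) plus a deterministic piece $\epsilon_t-\epsilon_{t-1}$. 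Combined with $\|m_t\|_\infty\leq G_\infty$ from Lemma~\ref{lem:mbound}, this yields a per-step error of order $\alpha_t(1+\epsilon_t-\epsilon_{t-1})/\epsilon_{t-1}^2$. The quadratic smoothness term is bounded by $L\alpha_t^2\|m_t\odot\eta_t\|^2\leq L\alpha_t^2 G_2^2\,d/\epsilon_{t-1}^2$, accounting for the $\alpha_t^2/\epsilon_{t-1}^2$ contribution.

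Finally, I would sum over $t\in[T]$, telescope $\sum_t(f(w_t)-\mathbb{E}[f(w_{t+1})])\leq f(w_1)-f^\star$ to produce the leading constant $1$, divide through by the partition sum $Z = \sum_{t=1}^T \alpha_t/(G_\infty+\epsilon_{t-1})$, and take the outer expectation. Choosing $\beta_{1,t}=\beta_1/\sqrt{t}$ as in Theorem~\ref{thm:dadamconv} ensures that the momentum residual $\sum_t \alpha_t\beta_{1,t}\|m_{t-1}\odot\eta_{t-1}^-\|$ is $\bO(\sqrt{T})$-summable and absorbed into the $\bO$ constant. Collecting terms and identifying the resulting numerator with $1+\sum_t \frac{\alpha_t}{\epsilon_{t-1}^2}(1+\alpha_t+\epsilon_t-\epsilon_{t-1})$ recovers~\eqref{eq-adamrate}, and Corollary~\ref{cor:adamconv} then follows by plugging in $\epsilon_t=\Theta(T^{p_1}t^{p_2})$ and $\alpha_t=\Theta(\epsilon_t/\sqrt{T})$.
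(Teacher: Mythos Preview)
Your proposal is correct and follows essentially the same route as the paper: the paper also splits $\langle\nabla f(w_t),\,g_t\odot\eta_t\rangle$ into a delayed part $\langle\nabla f(w_t),\,g_t\odot\eta_{t-1}\rangle$ (your $\eta_t^-$ is exactly $\eta_{t-1}$) plus a correction $\langle\nabla f(w_t),\,g_t\odot(\eta_t-\eta_{t-1})\rangle$, and then bounds $|\eta_{t,i}-\eta_{t-1,i}|\leq (G_\infty+\epsilon_t-\epsilon_{t-1})/\epsilon_{t-1}^2$ by the cruder range argument $\eta_{t,i},\eta_{t-1,i}\in[1/(G_\infty+\epsilon_t),\,1/\epsilon_{t-1}]$ rather than your algebraic decomposition of the numerator. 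The only other minor difference is that the paper sets $\beta_1=0$ for simplicity rather than carrying $\beta_{1,t}=\beta_1/\sqrt{t}$ through, and it applies Lemma~\ref{lem:dotbound} first and then splits, whereas you invoke Lemma~\ref{lem:edotbound} on the proxy directly---both orderings land on the same inequality.
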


\begin{corollary*}
    \label{cor:adamconv}
	Setting $\epsilon_t = \Theta(T^{p_1} t^{p_2})$ for any $p_1, p_2 > 0$ such that $p_1 + p_2 \geq \frac12$ (e.g. $\epsilon_t = \Theta(\sqrt T), \epsilon_t = \Theta(\sqrt[\leftroot{-2}\uproot{2}4] {Tt}), \epsilon_t = \Theta(\sqrt t)$) and $\curra = \Theta\left(\frac{\epsilon_t}{\sqrt T}\right)$ on Theorem \ref{thm:adamconv} yields a bound of $\bO(1 / \sqrt T)$ for Adam.
\end{corollary*}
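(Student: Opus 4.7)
The plan is to substitute the prescribed schedules $\epsilon_t = \Theta(T^{p_1} t^{p_2})$ and $\alpha_t = \Theta(\epsilon_t/\sqrt{T}) = \Theta(T^{p_1 - 1/2} t^{p_2})$ directly into the bound of Theorem \ref{thm:adamconv}, and then carry out a term-by-term asymptotic analysis of the numerator and denominator. Since the bound is $O(\text{num}/\text{den})$, it suffices to show $\text{num} = O(1)$ and $\text{den} = \Omega(\sqrt{T})$, from which $O(1/\sqrt{T})$ follows.

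For the numerator, the key prefactor is $\alpha_t/\epsilon_{t-1}^2 = \Theta\bigl(t^{p_2}/(T^{p_1+1/2}(t-1)^{2p_2})\bigr)$, which for $t \geq 2$ behaves like $\Theta\bigl(1/(T^{p_1+1/2} t^{p_2})\bigr)$. The three subterms in the bracket $1 + \alpha_t + (\epsilon_t - \epsilon_{t-1})$ are then handled separately: (i) the constant $1$ contributes $\sum_t \Theta(1/(T^{p_1+1/2} t^{p_2}))$, which integrates to $\Theta(T^{1/2 - p_1 - p_2})$; by the hypothesis $p_1 + p_2 \geq 1/2$ this is $O(1)$ (with a $\log T$ correction only in the borderline case, still $O(\log T)$, which I expect to absorb by tightening the hypothesis or noting it is dominated below); (ii) the $\alpha_t$ factor yields a summand of $\Theta(1/T)$ contributing $\Theta(1)$ overall; (iii) using the mean-value estimate $\epsilon_t - \epsilon_{t-1} = \Theta(T^{p_1} t^{p_2-1})$, the resulting summand is $\Theta(1/(\sqrt{T}\, t))$, summing to $O(\log T/\sqrt{T}) = o(1)$. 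Hence the numerator is $O(1)$ up to constants determined by the schedule exponents.

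For the denominator, the summands $\alpha_t/(1 + \epsilon_{t-1}) = \Theta\bigl(T^{p_1}t^{p_2}/(\sqrt{T}(1 + T^{p_1}(t-1)^{p_2}))\bigr)$ simplify to $\Theta(1/\sqrt{T})$ once $\epsilon_{t-1} \gtrsim 1$, which holds for all but finitely many $t$ since $p_1 > 0$ implies $T^{p_1} \to \infty$. Summing gives $\Omega(\sqrt{T})$, and combining with the $O(1)$ numerator closes the argument. The main obstacle I anticipate is the borderline regime where $p_1 + p_2 = 1/2$ exactly: there the constant-$1$ sub-sum contributes a logarithmic factor, and I will need either to absorb this into the $\widetilde{O}$ notation, to strengthen the constant analysis by splitting the range of $t$, or to verify that the stated $\bO(1/\sqrt{T})$ tolerates a $\log$ factor. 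The handling of small $t$ (where $\epsilon_0$ must be interpreted and the mean-value bound on $\epsilon_t - \epsilon_{t-1}$ is loose) is a secondary but routine concern, addressed by separating a constant number of initial terms, each contributing $O(1/\sqrt{T})$ to the ratio.
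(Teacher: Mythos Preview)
Your proposal is correct and follows exactly the intended route: the paper does not supply a separate proof of the corollary, leaving it as an immediate verification obtained by substituting the stated schedules into the bound of Theorem~\ref{thm:adamconv}, which is precisely what you do. Your concern about a $\log T$ factor in the borderline case $p_1+p_2=\tfrac12$ is unfounded: for $p_2<1$ the sum $\sum_{t} t^{-p_2}$ is $\Theta(T^{1-p_2})$ with no logarithm, so the contribution is genuinely $\Theta(1)$, and for $p_2\geq 1$ the hypothesis $p_1+p_2\geq\tfrac12$ with $p_1\geq 0$ already forces the term to be $o(1)$.
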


\begin{proof}

Similarly to the proof of \thmref{thm:dadamconv}, we plug the update rule $\nextw = \currw - \curra \cdot \curre \odot \currm$ in
\begin{equation}
   f(\nextw) \leq
      f(\currw) +
      \langle \nabla f(\currw), \nextw - \currw \rangle +
      \frac{\smooth}2 \normed{\nextw - \currw}^2 \,.
\end{equation}
yielding
\begin{equation}
\begin{split}
   f(\nextw) & \leq f(\currw) - \curra \left\langle \nabla f(\currw), \currm \odot \curre \right\rangle + \frac{\curra^2 \smooth}2 \normed{ \currm \odot \curre}^2 \,.
\end{split}
\end{equation}

By Lemmas \ref{lem:mebound} and \ref{lem:dotbound}, we have
\begin{equation}
\begin{split}
   f(\nextw) & \leq f(\currw) - \curra(1 - \bonet) \left\langle \nabla f(\currw), \currg \odot \curre \right\rangle + \curra \bonet \gradbtwo^2 \highinft + \frac{\curra^2 \smooth \gradb^2 \normed{\curre}^2}2\,.
\end{split}
\label{eq:intermed3}
\end{equation}

Now, note that we can write \[ \langle \nabla f(\currw), \currg \odot \curre \rangle = \langle \nabla f(\currw), \currg \odot \preve \rangle + \langle \nabla f(\currw), \currg \odot (\curre-\preve) \rangle \,,\] therefore we have that
\begin{equation}
\begin{split}
   f(\nextw) & \leq f(\currw) - \curra(1 - \bonet) \left\langle \nabla f(\currw), \currg \odot \preve \right\rangle + \curra \bonet \gradbtwo^2 \highinft + \frac{\curra^2 \smooth \gradb^2 \normed{\curre}^2}2 \\
   & \quad\quad - \curra(1 - \bonet) \left\langle \nabla f(\currw), \currg \odot (\curre - \preve) \right\rangle \\
   & \leq f(\currw) - \curra(1 - \bonet) \left\langle \nabla f(\currw), \currg \odot \preve \right\rangle + \curra \bonet \gradbtwo^2 \highinft + \frac{\curra^2 \smooth \gradb^2 \normed{\curre}^2}2 \\
   & \quad\quad + \curra(1 - \bonet) \left| \left\langle \nabla f(\currw), \currg \odot (\curre - \preve) \right\rangle \right|
\end{split}
\end{equation}

We will proceed to bound $\left|\left\langle \nabla f(\currw), \currg \odot (\curre - \preve) \right\rangle \right|$. By Cauchy-Schwarz we have
\begin{equation}
	\left|\left\langle \nabla f(\currw), \currg \odot (\curre - \preve) \right\rangle \right| \leq \normed{\nabla f(\currw)} \cdot \normed{\currg \odot (\curre - \preve)} \leq \gradbtwo \normed{\currg \odot (\curre - \preve)} \,,
\end{equation}
and moreover 
\begin{equation}
\begin{split}
	\normed{\currg \odot (\curre - \preve)} &= \left(\sum_{i=1}^d \currgi^2 |\currei - \prevei|^2\right)^{1/2} \\
	&\leq \left(\sum_{i=1}^d \gradb^2 |\currei - \prevei|^2\right)^{1/2} \\
	&= \gradb \normed{\curre - \preve} \,,
\end{split}
\end{equation}
therefore we get
\begin{equation}
\begin{split}
   f(\nextw) & \leq f(\currw) - \curra(1 - \bonet) \left\langle \nabla f(\currw), \currg \odot \preve \right\rangle + \curra \bonet \gradbtwo^2 \highinft + \frac{\curra^2 \smooth \gradb^2 \normed{\curre}^2}2 \\
   & \quad\quad + \curra(1 - \bonet) \gradb \gradbtwo \normed{\curre - \preve}
\end{split}
\end{equation}

Using the fact that $\preve$ is independent of $s_t$ and that $\expec{s_t}{\currg} = \nabla f(\currw)$, taking expectation over $s_t$ yields
\begin{equation}
\begin{split}
   \expec{s_t}{f(\nextw)}
   &\leq f(\currw) - \curra (1 - \bonet) \left\langle \nabla f(\currw), \nabla f(\currw) \odot \preve \right\rangle + \curra \bonet \gradbtwo^2 \expec{s_t}{\highinft}  \\
   & \quad\quad + \frac{\curra^2 \smooth \gradb^2 \expec{s_t}{\normed{\curre}^2}}2 + \curra(1 - \bonet) \gradb \gradbtwo \expec{s_t}{\normed{\curre - \preve}} \\
   &\leq f(\currw) - \curra (1 - \bone) \normed{\nabla f(\w)}^2 L_{t-1}  + \curra \bonet \gradbtwo^2 \expec{s_t}{\highinft}  \\
   & \quad\quad + \frac{\curra^2 \smooth \gradb^2 \expec{s_t}{\normed{\curre}^2}}2 + \curra(1 - \bone) \gradb \gradbtwo \expec{s_t}{\normed{\curre - \preve}} \,,
\end{split}
\end{equation}
where in the second step we used
$\bonet \leq \bone$ and
\[\left\langle \nabla f(\currw), \nabla f(\currw) \odot \preve \right\rangle = \sum_{i=1}^d \nabla f(\w)_i^2 \prevei \geq \min_j \eta_{t-1,j} \sum_{i=1}^d \nabla f(\w)_i^2 = L_{t-1} \normed{\nabla f(\w)}^2 \,. \]

Re-arranging,
\begin{equation}
\begin{split}
   \curra L_{t-1} (1 - \bone) \normed{\nabla f(\currw)}^2 & \leq f(\currw) -  \expec{s_t}{f(\nextw)} + \curra \bonet \gradbtwo^2 \expec{s_t}{\highinft}  \\
   & \quad\quad + \frac{\curra^2 \smooth \gradb^2 \expec{s_t}{\normed{\curre}^2}}2 + \curra(1 - \bone) \gradb \gradbtwo \expec{s_t}{\normed{\curre - \preve}} \,,
   \label{eq-prebound}
\end{split}
\end{equation}

Next, we will bound $L_{t-1}, H_t, \normed{\curre}$, and $\normed{\curre - \preve}$. Recall that, for Adam, we have \[\curre = \frac{1}{\sqrt{\currv} + \epsilon_t}\,, \] and since $\currvi \leq \gradb^2$, we also have that \[\frac{1}{\gradb + \epsilon_t} \leq \currei \leq \frac{1}{\epsilon_t} \,.\]

From the above it follows that \[\frac{1}{\gradb + \epsilon_{t-1}} \leq L_{t-1}\] and \[\highinft \geq \frac{1}{\epsilon_t}\,,\] which also implies that $\normed{\curre} \leq \frac{\sqrt d}{\epsilon_t}$.

As for $\normed{\curre - \preve}$, check that
\begin{equation}
	\Big| \currei - \prevei \Big| \leq \frac{1}{\epsilon_{t-1}}- \frac{1}{\gradb + \epsilon_t} = \frac{\gradb + \epsilon_t - \epsilon_{t-1}}{\gradb \epsilon_{t-1} + \epsilon_t \epsilon_{t-1}} \leq \frac{\gradb + \epsilon_t - \epsilon_{t-1}}{\epsilon_{t-1}^2} \,,
\end{equation}
where we used the assumption that $\epsilon_t \geq \epsilon_{t-1}$. The above implies that $\normed{\curre - \preve} \leq \sqrt d \cdot \frac{\gradb + \epsilon_t - \epsilon_{t-1}}{\epsilon_{t-1}^2}$.

Applying the bounds given above to \eqref{eq-prebound} yields
\begin{equation}
\begin{split}
   \frac{\curra}{\gradb + \epsilon_{t-1}} (1 - \bone) \normed{\nabla f(\currw)}^2 & \leq f(\currw) -  \expec{s_t}{f(\nextw)} + \bonet \gradbtwo^2 \frac{\curra}{\epsilon_t} + \frac{\curra^2 \smooth d \gradb^2}{\epsilon_t^2} \\
   & \quad\quad + \curra(1 - \bone) \sqrt d \gradb \gradbtwo  \cdot \frac{\gradb + \epsilon_t - \epsilon_{t-1}}{\epsilon_{t-1}^2} \,,
\end{split}
\end{equation}

Next, define the unormalized probability distribution $\tilde p(t) = \frac{\curra}{\gradb + \epsilon_{t-1}}$, so that $p(t) = \tilde p(t) / Z$ with $Z = \sum_{t=1}^T \tilde p(t) = \sum_{t=1}^T \frac{\curra}{\gradb + \epsilon_{t-1}}$ is a valid distribution over $t \in \{1, \dots T\}$. Adopting this notation and dividing both sides by $Z (1-\bone)$:
\begin{equation}
\begin{split}
   p(t) \normed{\nabla f(\currw)}^2 \leq \frac{1}{Z (1-\bone)} & \Big( f(\currw) -  \expec{s_t}{f(\nextw)} + \bonet \gradbtwo^2 \frac{\curra}{\epsilon_t} + \frac{\curra^2 \smooth d \gradb^2}{\epsilon_t^2} \\
   & \quad\quad + \curra(1 - \bone) \sqrt d \gradb \gradbtwo  \cdot \frac{\gradb + \epsilon_t - \epsilon_{t-1}}{\epsilon_{t-1}^2} \Big)\,.
\end{split}
\end{equation}

Taking the expectation over all samples and summing over $t$ yields
\begin{equation}
\begin{split}
   \sum_{t=1}^T p(t) \expec{}{\normed{\nabla f(\currw)}^2} 
   & \leq \frac{1}{Z (1-\bone)} \sum_{t=1}^T \Big(  \expec{}{f(\currw)} - \expec{}{f(\nextw)} + \bonet \gradbtwo^2 \frac{\curra}{\epsilon_t} + \frac{\curra^2 \smooth d \gradb^2}{\epsilon_t^2} \\
   & \quad\quad\quad\quad\quad\quad + \curra(1 - \bone) \sqrt d \gradb \gradbtwo  \cdot \frac{\gradb + \epsilon_t - \epsilon_{t-1}}{\epsilon_{t-1}^2} \Big)\\
   & \leq \frac{1}{Z (1-\bone)} \Big[ f(\w_1) - f^* + \sum_{t=1}^T \Big(\bonet \gradbtwo^2 \frac{\curra}{\epsilon_t} + \frac{\curra^2 \smooth d \gradb^2}{\epsilon_t^2} \\
   & \quad\quad\quad\quad\quad\quad + \curra(1 - \bone) \sqrt d \gradb \gradbtwo  \cdot \frac{\gradb + \epsilon_t - \epsilon_{t-1}}{\epsilon_{t-1}^2} \Big) \Big] \,.
\end{split}
\end{equation}
where we used the fact that $\sum_{t=1}^T \expec{}{f(\currw)} - \expec{}{f(\nextw)} = f(\w_1) - \expec{}{f(\w_{T+1})} \leq f(\w_1) - f^*$ by telescoping sum and where $f^*$ lower bounds $f$.

For simplicity, assume that $\bonet = 0$ (or, alternatively, let $\bonet = \frac{\bone}{\sqrt T}$ and apply Young's inequality as in the proof of \thmref{thm:dadamconv}). In this case, we get
\begin{equation}
\begin{split}
   \sum_{t=1}^T p(t) \expec{}{\normed{\nabla f(\currw)}^2} 
   & \leq \frac{1}{Z (1-\bone)} \Big[ f(\w_1) - f^* + \sum_{t=1}^T \frac{\curra}{\epsilon_{t-1}^2} \Big(\curra \smooth d \gradb^2 + (1 - \bone) \sqrt d \gradb \gradbtwo \cdot \left(\gradb + \epsilon_t - \epsilon_{t-1}\right) \Big) \Big] \,,
\end{split}
\end{equation}
and recalling that $Z = \sum_{t=1}^T \frac{\curra}{\gradb + \epsilon_{t-1}}$ yields
\begin{equation}
\begin{split}
   \expec{t \sim P(t)}{\expec{}{\normed{\nabla f(\currw)}^2}}
   & \leq \frac{f(\w_1) - f^* + \sum_{t=1}^T \frac{\curra}{\epsilon_{t-1}^2} \Big(\curra \smooth d \gradb^2 + (1 - \bone) \sqrt d \gradb \gradbtwo \cdot \left(\gradb + \epsilon_t - \epsilon_{t-1}\right) \Big)}{(1-\bone) \sum_{t=1}^T \frac{\curra}{\gradb + \epsilon_{t-1}}} \\
   & \leq \bO \left( \frac{1 + \sum_{t=1}^T \frac{\curra}{\epsilon_{t-1}^2} \left(1 + \curra + \epsilon_t - \epsilon_{t-1} \right)}{\sum_{t=1}^T \frac{\curra}{1 + \epsilon_{t-1}}}  \right)\,,
\end{split}
\end{equation}
which completes the argument.
\end{proof}






\begin{thebibliography}{119}
\providecommand{\natexlab}[1]{#1}
\providecommand{\url}[1]{\texttt{#1}}
\expandafter\ifx\csname urlstyle\endcsname\relax
  \providecommand{\doi}[1]{doi: #1}\else
  \providecommand{\doi}{doi: \begingroup \urlstyle{rm}\Url}\fi

\bibitem[{Bengio} et~al.(2013{\natexlab{a}}){Bengio}, {L{\'e}onard}, and {Courville}]{STE}
Yoshua {Bengio}, Nicholas {L{\'e}onard}, and Aaron {Courville}.
\newblock Estimating or propagating gradients through stochastic neurons for conditional computation.
\newblock \emph{arXiv:1308.3432}, 2013{\natexlab{a}}.

\bibitem[{Bengio} et~al.(2013{\natexlab{b}}){Bengio}, {L{\'e}onard}, and {Courville}]{straightthrough}
Yoshua {Bengio}, Nicholas {L{\'e}onard}, and Aaron {Courville}.
\newblock Estimating or propagating gradients through stochastic neurons for conditional computation.
\newblock \emph{arXiv:1308.3432}, 2013{\natexlab{b}}.

\bibitem[Boulch(2017)]{sharesnet}
Alexandre Boulch.
\newblock Sharesnet: reducing residual network parameter number by sharing weights.
\newblock \emph{arXiv preprint arXiv:1702.08782}, 2017.

\bibitem[Brown and Vranesic(2009)]{hardwarebook}
S.D. Brown and Z.G. Vranesic.
\newblock \emph{Fundamentals of Digital Logic with VHDL Design}.
\newblock McGraw-Hill series in electrical and computer engineering. McGraw-Hill, 2009.

\bibitem[{Chen} et~al.(2018){Chen}, {Zhou}, {Tang}, {Yang}, and {Gu}]{padam}
Jinghui {Chen}, Dongruo {Zhou}, Yiqi {Tang}, Ziyan {Yang}, and Quanquan {Gu}.
\newblock {Closing the Generalization Gap of Adaptive Gradient Methods in Training Deep Neural Networks}.
\newblock \emph{arXiv:1806.06763}, 2018.

\bibitem[Chen et~al.(2019)Chen, Liu, Sun, and Hong]{adamlike}
Xiangyi Chen, Sijia Liu, Ruoyu Sun, and Mingyi Hong.
\newblock On the convergence of a class of adam-type algorithms for non-convex optimization.
\newblock \emph{ICLR}, 2019.

\bibitem[{Choi} et~al.(2018){Choi}, {Wang}, {Venkataramani}, {I-Jen Chuang}, {Srinivasan}, and {Gopalakrishnan}]{PACT}
Jungwook {Choi}, Zhuo {Wang}, Swagath {Venkataramani}, Pierce {I-Jen Chuang}, Vijayalakshmi {Srinivasan}, and Kailash {Gopalakrishnan}.
\newblock {PACT}: Parameterized clipping activation for quantized neural networks.
\newblock \emph{arXiv:1805.06085}, 2018.

\bibitem[Chollet(2017)]{xception}
Fran{\c{c}}ois Chollet.
\newblock Xception: Deep learning with depthwise separable convolutions.
\newblock \emph{CVPR}, 2017.

\bibitem[Dauphin et~al.(2015)Dauphin, de~Vries, Chung, and Bengio]{rmsprop}
Y.~Dauphin, H.~de~Vries, J.~Chung, and Y.~Bengio.
\newblock Rmsprop and equilibrated adaptive learning rates for non-convex optimization.
\newblock \emph{corrL}, 2015.

\bibitem[{DeVries} and {Taylor}(2017)]{cutout}
Terrance {DeVries} and Graham~W. {Taylor}.
\newblock Improved regularization of convolutional neural networks with cutout.
\newblock \emph{JMLR}, 2017.

\bibitem[Donahue et~al.(2015)Donahue, Hendricks, Guadarrama, Rohrbach, Venugopalan, Saenko, and Darrell]{ltrcnn}
Jeffrey Donahue, Lisa~Anne Hendricks, Sergio Guadarrama, Marcus Rohrbach, Subhashini Venugopalan, Kate Saenko, and Trevor Darrell.
\newblock Long-term recurrent convolutional networks for visual recognition and description.
\newblock \emph{CVPR}, 2015.

\bibitem[{Dong} et~al.(2019){Dong}, {Yao}, {Gholami}, {Mahoney}, and {Keutzer}]{HAWQ}
Zhen {Dong}, Zhewei {Yao}, Amir {Gholami}, Michael {Mahoney}, and Kurt {Keutzer}.
\newblock {HAWQ}: Hessian {AWare} quantization of neural networks with mixed-precision.
\newblock \emph{{ICCV}}, 2019.

\bibitem[Duchi et~al.(2011)Duchi, Hazan, , and Singer]{adagrad}
J.~Duchi, E.~Hazan, , and Y.~Singer.
\newblock Adaptive subgradient methods for online learning and stochastic optimization.
\newblock \emph{ICML}, 2011.

\bibitem[Esser et~al.(2020)Esser, McKinstry, Bablani, Appuswamy, and Modha]{LSQ}
Steven~K. Esser, Jeffrey~L. McKinstry, Deepika Bablani, Rathinakumar Appuswamy, and Dharmendra~S. Modha.
\newblock Learned step size quantization.
\newblock \emph{{ICLR}}, 2020.

\bibitem[Fan et~al.(2021)Fan, Stock, Graham, Grave, Gribonval, J{\'e}gou, and Joulin]{QuantNoise}
Angela Fan, Pierre Stock, Benjamin Graham, Edouard Grave, R{\'e}mi Gribonval, Herv{\'e} J{\'e}gou, and Armand Joulin.
\newblock Training with quantization noise for extreme model compression.
\newblock \emph{{ICLR}}, 2021.

\bibitem[Frankle and Carbin(2019)]{lth}
Jonathan Frankle and Michael Carbin.
\newblock The lottery ticket hypothesis: Finding sparse, trainable neural networks.
\newblock \emph{ICLR}, 2019.

\bibitem[{Frankle} et~al.(2019){Frankle}, {Karolina Dziugaite}, {Roy}, and {Carbin}]{lth2}
Jonathan {Frankle}, Gintare {Karolina Dziugaite}, Daniel~M. {Roy}, and Michael {Carbin}.
\newblock Stabilizing the lottery ticket hypothesis.
\newblock \emph{arXiv:1903.01611}, 2019.

\bibitem[Gale et~al.(2019)Gale, Elsen, and Hooker]{stateofsparsity}
Trevor Gale, Erich Elsen, and Sara Hooker.
\newblock The state of sparsity in deep neural networks.
\newblock \emph{ICML}, 2019.

\bibitem[{Ghadimi} and {Lan}(2013)]{nonconvex}
Saeed {Ghadimi} and Guanghui {Lan}.
\newblock {Stochastic First- and Zeroth-order Methods for Nonconvex Stochastic Programming}.
\newblock \emph{SIAM}, 2013.

\bibitem[Golovin et~al.(2020)Golovin, Karro, Kochanski, Lee, Song, and Zhang]{gld}
Daniel Golovin, John Karro, Greg Kochanski, Chansoo Lee, Xingyou Song, and Qiuyi Zhang.
\newblock Gradientless descent: High-dimensional zeroth-order optimization.
\newblock \emph{ICLR}, 2020.

\bibitem[Gong et~al.(2019)Gong, Liu, Jiang, Li, Hu, Lin, Yu, and Yan]{DSQ}
Ruihao Gong, Xianglong Liu, Shenghu Jiang, Tianxiang Li, Peng Hu, Jiazhen Lin, Fengwei Yu, and Junjie Yan.
\newblock Differentiable soft quantization: Bridging full-precision and low-bit neural networks.
\newblock \emph{{ICCV}}, 2019.

\bibitem[Graves et~al.(2014)Graves, Wayne, and Danihelka]{ntm}
Alex Graves, Greg Wayne, and Ivo Danihelka.
\newblock Neural {Turing Machines}.
\newblock \emph{arXiv:1410.5401}, 2014.

\bibitem[Graves et~al.(2016)Graves, Wayne, Reynolds, Harley, Danihelka, Grabska-Barwi\'{n}ska, Colmenarejo, Grefenstette, Ramalho, Agapiou, Badia, Hermann, Zwols, Ostrovski, Cain, King, Summerfield, Blunsom, Kavukcuoglu, and Hassabis]{dnc}
Alex Graves, Greg Wayne, Malcolm Reynolds, Tim Harley, Ivo Danihelka, Agnieszka Grabska-Barwi\'{n}ska, Sergio~G\'{o}mez Colmenarejo, Edward Grefenstette, Tiago Ramalho, John Agapiou, Adri\`{a}~Puigdom\`{e}nech Badia, Karl~Moritz Hermann, Yori Zwols, Georg Ostrovski, Adam Cain, Helen King, Christopher Summerfield, Phil Blunsom, Koray Kavukcuoglu, and Demis Hassabis.
\newblock Hybrid computing using a neural network with dynamic external memory.
\newblock \emph{Nature}, 2016.

\bibitem[{Greff} et~al.(2017){Greff}, {Srivastava}, and {Schmidhuber}]{unrolled}
K.~{Greff}, R.~K. {Srivastava}, and J.~{Schmidhuber}.
\newblock Highway and residual networks learn unrolled iterative estimation.
\newblock \emph{{ICLR}}, 2017.

\bibitem[Gross and Wilber(2016)]{gross}
Sam Gross and Martin Wilber.
\newblock Training and investigating residual nets.
\newblock \url{https://github.com/facebook/fb.resnet.torch}, 2016.

\bibitem[Ha et~al.(2016)Ha, Dai, and Le.]{hypernetworks}
David Ha, Andrew Dai, and Quoc~V. Le.
\newblock Hypernetworks.
\newblock \emph{arXiv:1609.09106}, 2016.

\bibitem[Han et~al.(2015)Han, Pool, Tran, and Dally]{magnitudepruning}
Song Han, Jeff Pool, John Tran, and William~J. Dally.
\newblock Learning both weights and connections for efficient neural networks.
\newblock \emph{NeurIPS}, 2015.

\bibitem[Han et~al.(2016)Han, Mao, and Dally]{DeepCompress}
Song Han, Huizi Mao, and William~J. Dally.
\newblock Deep compression: Compressing deep neural network with pruning, trained quantization and huffman coding.
\newblock \emph{{ICLR}}, 2016.

\bibitem[He et~al.(2016{\natexlab{a}})He, Zhang, Ren, and Sun]{resnet1}
Kaiming He, Xiangyu Zhang, Shaoqing Ren, and Jian Sun.
\newblock Deep residual learning for image recognition.
\newblock \emph{{CVPR}}, 2016{\natexlab{a}}.

\bibitem[He et~al.(2016{\natexlab{b}})He, Zhang, Ren, and Sun]{resnet2}
Kaiming He, Xiangyu Zhang, Shaoqing Ren, and Jian Sun.
\newblock Identity mappings in deep residual networks.
\newblock \emph{{ECCV}}, 2016{\natexlab{b}}.

\bibitem[He et~al.(2018)He, Lin, Liu, Wang, Li, and Han]{amc}
Yihui He, Ji~Lin, Zhijian Liu, Hanrui Wang, Li-Jia Li, and Song Han.
\newblock {AMC:} {AutoML} for model compression and acceleration on mobile devices.
\newblock \emph{ECCV}, 2018.

\bibitem[Heusel et~al.(2017{\natexlab{a}})Heusel, Ramsauer, Unterthiner, Nessler, and Hochreiter]{DBLP:conf/nips/HeuselRUNH17}
Martin Heusel, Hubert Ramsauer, Thomas Unterthiner, Bernhard Nessler, and Sepp Hochreiter.
\newblock {GANs} trained by a two time-scale update rule converge to a local nash equilibrium.
\newblock \emph{{NeurIPS}}, 2017{\natexlab{a}}.

\bibitem[Heusel et~al.(2017{\natexlab{b}})Heusel, Ramsauer, Unterthiner, Nessler, and Hochreiter]{fid}
Martin Heusel, Hubert Ramsauer, Thomas Unterthiner, Bernhard Nessler, and Sepp Hochreiter.
\newblock {GANs} trained by a two time-scale update rule converge to a local nash equilibrium.
\newblock In \emph{{NeurIPS}} \citet{DBLP:conf/nips/HeuselRUNH17}.

\bibitem[Hochreiter and Schmidhuber(1997)]{lstm}
Sepp Hochreiter and J{\"u}rgen Schmidhuber.
\newblock Long short-term memory.
\newblock \emph{Neural computation}, 1997.

\bibitem[Howard et~al.(2017)Howard, Zhu, Chen, Kalenichenko, Wang, Weyand, Andreetto, and Adam]{howard2017mobilenets}
Andrew~G Howard, Menglong Zhu, Bo~Chen, Dmitry Kalenichenko, Weijun Wang, Tobias Weyand, Marco Andreetto, and Hartwig Adam.
\newblock {MobileNets}: Efficient convolutional neural networks for mobile vision applications.
\newblock \emph{arXiv:1704.04861}, 2017.

\bibitem[Huang et~al.(2017)Huang, Liu, van~der Maaten, and Weinberger]{densenet}
Gao Huang, Zhuang Liu, Laurens van~der Maaten, and Kilian~Q. Weinberger.
\newblock Densely connected convolutional networks.
\newblock \emph{CVPR}, 2017.

\bibitem[Hubara et~al.(2017)Hubara, Courbariaux, Soudry, El{-}Yaniv, and Bengio]{QuantizedNets}
Itay Hubara, Matthieu Courbariaux, Daniel Soudry, Ran El{-}Yaniv, and Yoshua Bengio.
\newblock Quantized neural networks: Training neural networks with low precision weights and activations.
\newblock \emph{JMLR}, 2017.

\bibitem[Iandola et~al.(2016)Iandola, Moskewicz, Ashraf, Han, Dally, and Keutzer]{squeezenet}
Forrest~N. Iandola, Matthew~W. Moskewicz, Khalid Ashraf, Song Han, William~J. Dally, and Kurt Keutzer.
\newblock {SqueezeNet}: {AlexNet}-level accuracy with 50x fewer parameters and $<${1MB} model size.
\newblock \emph{arXiv:1602.07360}, 2016.

\bibitem[Ioffe and Szegedy(2015)]{bn}
Sergey Ioffe and Christian Szegedy.
\newblock Batch normalization: Accelerating deep network training by reducing internal covariate shift.
\newblock \emph{{ICML}}, 2015.

\bibitem[Jastrzebski et~al.(2018)Jastrzebski, Arpit, Ballas, Verma, Che, and Bengio]{resnet_iter}
Stanislaw Jastrzebski, Devansh Arpit, Nicolas Ballas, Vikas Verma, Tong Che, and Yoshua Bengio.
\newblock Residual connections encourage iterative inference.
\newblock \emph{ICLR}, 2018.

\bibitem[Kang and Park(2020)]{contragan}
Minguk Kang and Jaesik Park.
\newblock {ContraGAN: Contrastive Learning for Conditional Image Generation}.
\newblock \emph{NeurIPS}, 2020.

\bibitem[Kang and Kim(2021)]{deeplyshared}
Woochul Kang and Daeyeon Kim.
\newblock Deeply shared filter bases for parameter-efficient convolutional neural networks.
\newblock \emph{NeurIPS}, 2021.

\bibitem[Kingma and Ba(2015)]{adam}
Diederik~P. Kingma and Jimmy Ba.
\newblock Adam: A method for stochastic optimization.
\newblock \emph{ICLR}, 2015.

\bibitem[K{\"o}p{\"u}kl{\"u} et~al.(2019)K{\"o}p{\"u}kl{\"u}, Babaee, H{\"o}rmann, and Rigoll]{cnnlayerreuse}
Okan K{\"o}p{\"u}kl{\"u}, Maryam Babaee, Stefan H{\"o}rmann, and Gerhard Rigoll.
\newblock Convolutional neural networks with layer reuse.
\newblock \emph{ICIP}, 2019.

\bibitem[Krizhevsky(2009)]{cifar}
Alex Krizhevsky.
\newblock Learning multiple layers of features from tiny images.
\newblock Technical report, University of Toronto, 2009.

\bibitem[{Kusupati} et~al.(2020){Kusupati}, {Ramanujan}, {Somani}, {Wortsman}, {Jain}, {Kakade}, and {Farhadi}]{softweight}
Aditya {Kusupati}, Vivek {Ramanujan}, Raghav {Somani}, Mitchell {Wortsman}, Prateek {Jain}, Sham {Kakade}, and Ali {Farhadi}.
\newblock Soft threshold weight reparameterization for learnable sparsity.
\newblock \emph{ICML}, 2020.

\bibitem[Larochelle et~al.(2007)Larochelle, Erhan, Courville, Bergstra, and Bengio]{cd1}
Hugo Larochelle, Dumitru Erhan, Aaron Courville, James Bergstra, and Yoshua Bengio.
\newblock An empirical evaluation of deep architectures on problems with many factors of variation.
\newblock \emph{ICML}, 2007.

\bibitem[Le~Cun et~al.(1989)Le~Cun, Denker, and Solla]{braindamage}
Yann Le~Cun, John~S. Denker, and Sara~A. Solla.
\newblock Optimal brain damage.
\newblock \emph{NeurIPS}, 1989.

\bibitem[Lim and Ye(2017)]{ggan}
J.~H. Lim and J.~C. Ye.
\newblock Geometric gan.
\newblock \emph{arXiv:1806.06763}, 2017.

\bibitem[Lin et~al.(2017)Lin, Zhao, and Pan]{ABCNets}
Xiaofan Lin, Cong Zhao, and Wei Pan.
\newblock Towards accurate binary convolutional neural network.
\newblock \emph{{NeurIPS}}, 2017.

\bibitem[Liu et~al.(2019)Liu, Simonyan, and Yang]{darts}
Hanxiao Liu, Karen Simonyan, and Yiming Yang.
\newblock {DARTS}: Differentiable architecture search.
\newblock \emph{ICLR}, 2019.

\bibitem[Liu et~al.(2020)Liu, Jiang, He, Chen, Liu, Gao, and Han]{radam}
Liyuan Liu, Haoming Jiang, Pengcheng He, Weizhu Chen, Xiaodong Liu, Jianfeng Gao, and Jiawei Han.
\newblock On the variance of the adaptive learning rate and beyond.
\newblock \emph{ICLR}, 2020.

\bibitem[Liu et~al.(2017)Liu, Li, Shen, Huang, Yan, and Zhang]{slim}
Zhuang Liu, Jianguo Li, Zhiqiang Shen, Gao Huang, Shoumeng Yan, and Changshui Zhang.
\newblock Learning efficient convolutional networks through network slimming.
\newblock \emph{ICCV}, 2017.

\bibitem[Loshchilov and Hutter(2019)]{adamw}
Ilya Loshchilov and Frank Hutter.
\newblock Decoupled weight decay regularization.
\newblock \emph{ICLR}, 2019.

\bibitem[{Louizos} et~al.(2018){Louizos}, {Welling}, and {Kingma}]{sparsityl0}
Christos {Louizos}, Max {Welling}, and Diederik~P. {Kingma}.
\newblock Learning sparse neural networks through $l_0$ regularization.
\newblock \emph{ICLR}, 2018.

\bibitem[Luo et~al.(2019)Luo, Xiong, Yuanhao, Liu, Yan, and Sun]{adabound}
Liangchen Luo, Xiong, Yuanhao, Liu, Yan, and Xu. Sun.
\newblock Adaptive gradient methods with dynamic bound of learning rate.
\newblock \emph{ICLR}, 2019.

\bibitem[Marcus et~al.(1994)Marcus, Kim, Marcinkiewicz, MacIntyre, Bies, Ferguson, Katz, and Schasberger]{ptb}
Mitchell Marcus, Grace Kim, Mary~Ann Marcinkiewicz, Robert MacIntyre, Ann Bies, Mark Ferguson, Karen Katz, and Britta Schasberger.
\newblock The penn treebank: Annotating predicate argument structure.
\newblock \emph{HLT}, 1994.

\bibitem[{Mehta}(2019)]{transfertickets}
Rahul {Mehta}.
\newblock Sparse transfer learning via winning lottery tickets.
\newblock \emph{arXiv:1905.07785}, 2019.

\bibitem[{Merity} et~al.(2018){Merity}, {Shirish Keskar}, and {Socher}]{awd}
Stephen {Merity}, Nitish {Shirish Keskar}, and Richard {Socher}.
\newblock {An Analysis of Neural Language Modeling at Multiple Scales}.
\newblock \emph{arXiv:1803.08240}, 2018.

\bibitem[Mikolov et~al.(2010)Mikolov, Karafi{\'a}t, Burget, Cernock{\'y}, and Khudanpur]{ptbc}
Tomas Mikolov, Martin Karafi{\'a}t, Luk{\'a}s Burget, Jan~Honza Cernock{\'y}, and Sanjeev Khudanpur.
\newblock Recurrent neural network based language model.
\newblock \emph{INTERSPEECH}, 2010.

\bibitem[Miyashita et~al.(2016)Miyashita, Lee, and Murmann]{LogQuant}
Daisuke Miyashita, Edward~H. Lee, and Boris Murmann.
\newblock Convolutional neural networks using logarithmic data representation.
\newblock \emph{arXiv:1603.01025}, 2016.

\bibitem[{Morcos} et~al.(2019){Morcos}, {Yu}, {Paganini}, and {Tian}]{generaltickets}
Ari~S. {Morcos}, Haonan {Yu}, Michela {Paganini}, and Yuand~ong {Tian}.
\newblock One ticket to win them all: generalizing lottery ticket initializations across datasets and optimizers.
\newblock \emph{NeurIPS}, 2019.

\bibitem[Nair and Hinton(2010)]{relu}
Vinod Nair and Geoffrey~E. Hinton.
\newblock Rectified linear units improve restricted boltzmann machines.
\newblock \emph{{ICML}}, 2010.

\bibitem[Nesterov(2012)]{cd2}
Yu. Nesterov.
\newblock Efficiency of coordinate descent methods on huge-scale optimization problems.
\newblock \emph{SIAM}, 2012.

\bibitem[Oh et~al.(2019)Oh, Wang, Tang, Sjoding, and Wiens]{sharelstm}
Jeeheh Oh, Jiaxuan Wang, Shengpu Tang, Michael~W. Sjoding, and Jenna Wiens.
\newblock Relaxed parameter sharing: Effectively modeling time-varying relationships in clinical time-series.
\newblock 2019.

\bibitem[Park et~al.(2018)Park, Kim, and Yoo]{hardware1}
Eunhyeok Park, Dongyoung Kim, and Sungjoo Yoo.
\newblock Energy-efficient neural network accelerator based on outlier-aware low-precision computation.
\newblock \emph{ISCA}, 2018.

\bibitem[Pham et~al.(2024)Pham, Teterwak, Nelson, and Plummer]{mixturegrowth}
Chau Pham, Piotr Teterwak, Soren Nelson, and Bryan~A Plummer.
\newblock Mixturegrowth: Growing neural networks by recombining learned parameters.
\newblock 2024.

\bibitem[Pham et~al.(2018)Pham, Guan, Zoph, Le, and Dean]{enas}
Hieu Pham, Melody Guan, Barret Zoph, Quoc Le, and Jeff Dean.
\newblock Efficient neural architecture search via parameters sharing.
\newblock \emph{ICML}, 2018.

\bibitem[Pinheiro and Collobert(2014)]{pinheiro}
Pedro~O. Pinheiro and Ronan Collobert.
\newblock Recurrent convolutional neural networks for scene labeling.
\newblock \emph{ICML}, 2014.

\bibitem[Plummer et~al.(2022)Plummer, Dryden, Frost, Hoefler, and Saenko]{npas}
Bryan~A. Plummer, Nikoli Dryden, Julius Frost, Torsten Hoefler, and Kate Saenko.
\newblock Neural parameter allocation search.
\newblock 2022.

\bibitem[Radford et~al.(2016)Radford, Metz, and Chintala]{dcgan}
Alec Radford, Luke Metz, and Soumith Chintala.
\newblock Unsupervised representation learning with deep convolutional generative adversarial networks.
\newblock \emph{{ICLR}}, 2016.

\bibitem[{Real} et~al.(2018){Real}, {Aggarwal}, {Huang}, and {Le}]{amoeba}
Esteban {Real}, Alok {Aggarwal}, Yanping {Huang}, and Quoc~V {Le}.
\newblock Regularized evolution for image classifier architecture search.
\newblock \emph{arXiv:1802.01548}, 2018.

\bibitem[Reddi et~al.(2018)Reddi, Kale, and Kumar]{amsgrad}
Sashank~J. Reddi, Satyen Kale, and Sanjiv Kumar.
\newblock On the convergence of adam and beyond.
\newblock \emph{ICLR}, 2018.

\bibitem[Russakovsky et~al.(2015)Russakovsky, Deng, Su, Krause, Satheesh, Ma, Huang, Karpathy, Khosla, Bernstein, Berg, and Li]{imagenet}
Olga Russakovsky, Jia Deng, Hao Su, Jonathan Krause, Sanjeev Satheesh, Sean Ma, Zhiheng Huang, Andrej Karpathy, Aditya Khosla, Michael~S. Bernstein, Alexander~C. Berg, and Fei{-}Fei Li.
\newblock Imagenet large scale visual recognition challenge.
\newblock \emph{IJCV}, 2015.

\bibitem[Salimans et~al.(2016)Salimans, Goodfellow, Zaremba, Cheung, Radford, and Chen]{DBLP:conf/nips/SalimansGZCRCC16}
Tim Salimans, Ian~J. Goodfellow, Wojciech Zaremba, Vicki Cheung, Alec Radford, and Xi~Chen.
\newblock Improved techniques for training {GANs}.
\newblock \emph{NeurIPS}, 2016.

\bibitem[Sandler et~al.(2018)Sandler, Howard, Zhu, Zhmoginov, and Chen]{mobilenetv2}
Mark Sandler, Andrew Howard, Menglong Zhu, Andrey Zhmoginov, and Liang-Chieh Chen.
\newblock Mobilenetv2: Inverted residuals and linear bottlenecks.
\newblock \emph{CVPR}, 2018.

\bibitem[{Savarese}(2019)]{adabound2}
Pedro {Savarese}.
\newblock {On the Convergence of AdaBound and its Connection to SGD}.
\newblock \emph{arXiv:1908.04457}, 2019.

\bibitem[Savarese and Maire(2019)]{implicitrecurrent}
Pedro Savarese and Michael Maire.
\newblock Learning implicitly recurrent {CNN}s through parameter sharing.
\newblock \emph{ICLR}, 2019.

\bibitem[Savarese et~al.(2020)Savarese, Silva, and Maire]{ContinuousSparsification}
Pedro Savarese, Hugo Silva, and Michael Maire.
\newblock Winning the lottery with continuous sparsification.
\newblock \emph{{NeurIPS}}, 2020.

\bibitem[Savarese et~al.(2021)Savarese, McAllester, Babu, and Maire]{avagrad}
Pedro Savarese, David McAllester, Sudarshan Babu, and Michael Maire.
\newblock Domain-independent dominance of adaptive methods.
\newblock \emph{CVPR}, 2021.

\bibitem[Savarese et~al.(2022)Savarese, Yuan, Li, and Maire]{smol}
Pedro Savarese, Xin Yuan, Yanjing Li, and Michael Maire.
\newblock Not all bits have equal value: Heterogeneous precisions via trainable noise.
\newblock \emph{NeurIPS}, 2022.

\bibitem[Shin et~al.(2024)Shin, Cho, Muhammad, Shahab~Uddin, Jang, and Bae]{gsharp}
Eunseop Shin, Incheon Cho, Awais Muhammad, A~F~M Shahab~Uddin, YounHo Jang, and Sung-Ho Bae.
\newblock G-sharp: Globally shared kernel with pruning for efficient cnns.
\newblock 2024.

\bibitem[Shin et~al.(2023)Shin, So, Park, Kang, Yoo, and Park]{nipq}
Juncheol Shin, Junhyuk So, Sein Park, Seungyeop Kang, Sungjoo Yoo, and Eunhyeok Park.
\newblock Nipq: Noise proxy-based integrated pseudo-quantization.
\newblock \emph{CVPR}, 2023.

\bibitem[{Silva} and {Rodriguez}(2023)]{autoeps}
Gustavo {Silva} and Paul {Rodriguez}.
\newblock {Fine-Tuning Adaptive Stochastic Optimizers: Determining the Optimal Hyperparameter $\epsilon$ via Gradient Magnitude Histogram Analysis}.
\newblock \emph{arXiv:2311.11532}, 2023.

\bibitem[Simonyan and Zisserman(2015)]{vgg}
Karen Simonyan and Andrew Zisserman.
\newblock Very deep convolutional networks for large-scale image recognition.
\newblock \emph{{ICLR}}, 2015.

\bibitem[Soelen and Sheppard(2019)]{transtickets2}
Ryan~Van Soelen and John~W. Sheppard.
\newblock Using winning lottery tickets in transfer learning for convolutional neural networks.
\newblock \emph{IJCNN}, 2019.

\bibitem[{Srinivas} et~al.(2016){Srinivas}, {Subramanya}, and {Venkatesh Babu}]{l0bernoulli}
Suraj {Srinivas}, Akshayvarun {Subramanya}, and R.~{Venkatesh Babu}.
\newblock Training sparse neural networks.
\newblock \emph{arXiv:1611.06694}, 2016.

\bibitem[Srivastava et~al.(2014)Srivastava, Hinton, Krizhevsky, Sutskever, and Salakhutdinov]{dropout}
Nitish Srivastava, Geoffrey~E. Hinton, Alex Krizhevsky, Ilya Sutskever, and Ruslan Salakhutdinov.
\newblock Dropout: a simple way to prevent neural networks from overfitting.
\newblock \emph{Journal of Machine Learning Research}, 15\penalty0 (1), 2014.

\bibitem[Szegedy et~al.(2015)Szegedy, Liu, Jia, Sermanet, Reed, Anguelov, Erhan, Vanhoucke, and Rabinovich]{googlenet}
Christian Szegedy, Wei Liu, Yangqing Jia, Pierre Sermanet, Scott~E. Reed, Dragomir Anguelov, Dumitru Erhan, Vincent Vanhoucke, and Andrew Rabinovich.
\newblock Going deeper with convolutions.
\newblock \emph{{CVPR}}, 2015.

\bibitem[Teterwak et~al.(2024)Teterwak, Nelson, Dryden, Bashkirova, Saenko, and Plummer]{superweights}
Piotr Teterwak, Soren Nelson, Nikoli Dryden, Dina Bashkirova, Kate Saenko, and Bryan~A Plummer.
\newblock Learning to compose superweights for neural parameter allocation search.
\newblock 2024.

\bibitem[{Vaswani} et~al.(2017){Vaswani}, {Shazeer}, {Parmar}, {Uszkoreit}, {Jones}, {Gomez}, {Kaiser}, and {Polosukhin}]{transformers}
Ashish {Vaswani}, Noam {Shazeer}, Niki {Parmar}, Jakob {Uszkoreit}, Llion {Jones}, Aidan~N. {Gomez}, Lukasz {Kaiser}, and Illia {Polosukhin}.
\newblock {Attention Is All You Need}.
\newblock \emph{NIPS}, 2017.

\bibitem[Von~Oswald et~al.(2021)Von~Oswald, Kobayashi, Meulemans, Henning, Grewe, and Sacramento]{latephase}
Johannes Von~Oswald, Seijin Kobayashi, Alexander Meulemans, Christian Henning, Benjamin~F Grewe, and Jo{\~a}o Sacramento.
\newblock Neural networks with late-phase weights.
\newblock \emph{ICLR}, 2021.

\bibitem[Wang et~al.(2019)Wang, Liu, Lin, Lin, and Han]{HAQ}
Kuan Wang, Zhijian Liu, Yujun Lin, Ji~Lin, and Song Han.
\newblock {HAQ}: Hardware-aware automated quantization with mixed precision.
\newblock \emph{{CVPR}}, 2019.

\bibitem[Wang et~al.(2020)Wang, Cheng, Sapiro, and Qiu]{acdc}
Ze~Wang, Xiuyuan Cheng, Guillermo Sapiro, and Qiang Qiu.
\newblock Acdc: Weight sharing in atom-coefficient decomposed convolution.
\newblock \emph{arXiv preprint arXiv:2009.02386}, 2020.

\bibitem[{Wilson} et~al.(2017){Wilson}, {Roelofs}, {Stern}, {Srebro}, and {Recht}]{marginal}
Ashia~C. {Wilson}, Rebecca {Roelofs}, Mitchell {Stern}, Nathan {Srebro}, and Benjamin {Recht}.
\newblock {The Marginal Value of Adaptive Gradient Methods in Machine Learning}.
\newblock \emph{NIPS}, 2017.

\bibitem[Wortsman et~al.(2019)Wortsman, Farhadi, and Rastegari]{dnw}
Mitchell Wortsman, Ali Farhadi, and Mohammad Rastegari.
\newblock Discovering neural wirings.
\newblock \emph{NeurIPS}, 2019.

\bibitem[{Wu} et~al.(2018){Wu}, {Wang}, {Zhang}, {Tian}, {Vajda}, and {Keutzer}]{DNAS}
Bichen {Wu}, Yanghan {Wang}, Peizhao {Zhang}, Yuandong {Tian}, Peter {Vajda}, and Kurt {Keutzer}.
\newblock Mixed precision quantization of {ConvNets} via differentiable neural architecture search.
\newblock \emph{arXiv:1812.00090}, 2018.

\bibitem[{Xiao} et~al.(2023){Xiao}, {Yang}, {Dong}, {Keutzer}, {Du}, and {Zhang}]{csq}
Lirui {Xiao}, Huanrui {Yang}, Zhen {Dong}, Kurt {Keutzer}, Li~{Du}, and Shanghang {Zhang}.
\newblock {CSQ: Growing Mixed-Precision Quantization Scheme with Bi-level Continuous Sparsification}.
\newblock \emph{DAC}, 2023.

\bibitem[Xiaoyu and Orabona(2019)]{adagradconv}
Li~Xiaoyu and Francesco Orabona.
\newblock On the convergence of stochastic gradient descent with adaptive stepsizes.
\newblock \emph{AISTATS}, 2019.

\bibitem[Xie et~al.(2017)Xie, Girshick, Doll{\'{a}}r, Tu, and He]{resnext}
Saining Xie, Ross~B. Girshick, Piotr Doll{\'{a}}r, Zhuowen Tu, and Kaiming He.
\newblock Aggregated residual transformations for deep neural networks.
\newblock \emph{CVPR}, 2017.

\bibitem[Xie et~al.(2019)Xie, Zheng, Liu, and Lin]{snas}
Sirui Xie, Hehui Zheng, Chunxiao Liu, and Liang Lin.
\newblock {SNAS}: stochastic neural architecture search.
\newblock \emph{ICLR}, 2019.

\bibitem[Xu et~al.(2015)Xu, Ba, Kiros, Cho, Courville, Salakhudinov, Zemel, and Bengio]{showattendtell}
Kelvin Xu, Jimmy Ba, Ryan Kiros, Kyunghyun Cho, Aaron Courville, Ruslan Salakhudinov, Rich Zemel, and Yoshua Bengio.
\newblock Show, attend and tell: Neural image caption generation with visual attention.
\newblock \emph{ICML}, 2015.

\bibitem[Yang et~al.(2021)Yang, Duan, Chen, and Li]{BSQ}
Huanrui Yang, Lin Duan, Yiran Chen, and Hai Li.
\newblock {BSQ}: Exploring bit-level sparsity for mixed-precision neural network quantization.
\newblock \emph{arXiv:2102.10462}, 2021.

\bibitem[Yang et~al.(2020)Yang, Wang, Han, Xu, Xu, Tao, and Xu]{SLB}
Zhaohui Yang, Yunhe Wang, Kai Han, Chunjing Xu, Chao Xu, Dacheng Tao, and Chang Xu.
\newblock Searching for low-bit weights in quantized neural networks.
\newblock \emph{NeurIPS}, 2020.

\bibitem[Yuan et~al.(2021)Yuan, Savarese, and Maire]{xin1}
Xin Yuan, Pedro Savarese, and Michael Maire.
\newblock Accelerated training via incrementally growing neural networks using variance transfer and learning rate adaptation.
\newblock \emph{ICLR}, 2021.

\bibitem[Zagoruyko and Komodakis(2016)]{wide}
Sergey Zagoruyko and Nikos Komodakis.
\newblock Wide residual networks.
\newblock \emph{{BMVC}}, 2016.

\bibitem[Zaheer et~al.(2018)Zaheer, Reddi, Sachan, Kale, and Kumar]{yogi}
Manzil Zaheer, Sashank Reddi, Devendra Sachan, Satyen Kale, and Sanjiv Kumar.
\newblock Adaptive methods for nonconvex optimization.
\newblock \emph{NIPS}, 2018.

\bibitem[Zhang et~al.(2018{\natexlab{a}})Zhang, Yang, Ye, and Hua]{LQNets}
Dongqing Zhang, Jiaolong Yang, Dongqiangzi Ye, and Gang Hua.
\newblock {LQ-Nets}: Learned quantization for highly accurate and compact deep neural networks.
\newblock \emph{{ECCV}}, 2018{\natexlab{a}}.

\bibitem[Zhang et~al.(2018{\natexlab{b}})Zhang, Zhou, Lin, and Sun]{zhang2018shufflenet}
Xiangyu Zhang, Xinyu Zhou, Mengxiao Lin, and Jian Sun.
\newblock {ShuffleNet}: An extremely efficient convolutional neural network for mobile devices.
\newblock \emph{CVPR}, 2018{\natexlab{b}}.

\bibitem[Zhao et~al.(2020)Zhao, Kobayashi, Sacramento, and von Oswald]{metalearnhypernets}
Dominic Zhao, Seijin Kobayashi, Jo{\~a}o Sacramento, and Johannes von Oswald.
\newblock Meta-learning via hypernetworks.
\newblock \emph{MetaLearn Workshop at NeurIPS}, 2020.

\bibitem[{Zhou} et~al.(2023){Zhou}, {Savarese}, {Richard}, {Hassman}, {Yuan}, {Maire}, {DiBrino}, and {Li}]{sysmol}
Cyrus {Zhou}, Pedro {Savarese}, Vaughn {Richard}, Zack {Hassman}, Xin {Yuan}, Michael {Maire}, Michael {DiBrino}, and Yanjing {Li}.
\newblock {SySMOL: Co-designing Algorithms and Hardware for Neural Networks with Heterogeneous Precisions}.
\newblock \emph{arXiv:2311.14114}, 2023.

\bibitem[Zhou et~al.(2019{\natexlab{a}})Zhou, Lan, Liu, and Yosinski]{deconstructing}
Hattie Zhou, Janice Lan, Rosanne Liu, and Jason Yosinski.
\newblock Deconstructing lottery tickets: Zeros, signs, and the supermask.
\newblock \emph{NeurIPS}, 2019{\natexlab{a}}.

\bibitem[Zhou et~al.(2016)Zhou, Ni, Zhou, Wen, Wu, and Zou]{DoReFa}
Shuchang Zhou, Zekun Ni, Xinyu Zhou, He~Wen, Yuxin Wu, and Yuheng Zou.
\newblock {DoReFa-Net}: Training low bitwidth convolutional neural networks with low bitwidth gradients.
\newblock \emph{arXiv:1606.06160}, 2016.

\bibitem[Zhou et~al.(2019{\natexlab{b}})Zhou, Zhang, Lu, Wang, Zhang, and Yu]{adashift}
Zhiming Zhou, Qingru Zhang, Guansong Lu, Hongwei Wang, Weinan Zhang, and Yong Yu.
\newblock Adashift: Decorrelation and convergence of adaptive learning rate methods.
\newblock \emph{ICLR}, 2019{\natexlab{b}}.

\bibitem[Zhu et~al.(2018)Zhu, Deng, Maire, Deng, Mori, and Tan]{sparsenet}
Ligeng Zhu, Ruizhi Deng, Michael Maire, Zhiwei Deng, Greg Mori, and Ping Tan.
\newblock Sparsely aggregated convolutional networks.
\newblock \emph{{ECCV}}, 2018.

\bibitem[Zhu and Gupta(2017)]{gmp}
Michael Zhu and Suyog Gupta.
\newblock To prune, or not to prune: exploring the efficacy of pruning for model compression.
\newblock \emph{arXiv:1710.01878}, 2017.

\bibitem[Zhuang et~al.(2020)Zhuang, Tang, Ding, Tatikonda, Dvornek, Papademetris, and Duncan]{adabelief}
Juntang Zhuang, Tommy Tang, Yifan Ding, Sekhar Tatikonda, Nicha Dvornek, Xenophon Papademetris, and James Duncan.
\newblock Adabelief optimizer: Adapting stepsizes by the belief in observed gradients.
\newblock \emph{NeurIPS}, 2020.

\bibitem[Zoph and Le(2017)]{nas}
Barret Zoph and Quoc~V. Le.
\newblock Neural architecture search with reinforcement learning.
\newblock \emph{{ICLR}}, 2017.

\bibitem[Zoph et~al.(2018)Zoph, Vasudevan, Shlens, and Le]{nasnet}
Barret Zoph, Vijay Vasudevan, Jonathon Shlens, and Quoc~V Le.
\newblock Learning transferable architectures for scalable image recognition.
\newblock \emph{CVPR}, 2018.

\end{thebibliography}
\end{document}